\def\eqref#1{equation~\ref{#1}}
\def\1{\bm{1}}
\def\ra{{\textnormal{a}}}
\def\rf{{\textnormal{f}}}
\def\rt{{\textnormal{t}}}
\def\rx{{\textnormal{x}}}
\def\ry{{\textnormal{y}}}
\def\rz{{\textnormal{z}}}
\def\rva{{\mathbf{a}}}
\def\rvb{{\mathbf{b}}}
\def\rve{{\mathbf{e}}}
\def\rvs{{\mathbf{s}}}
\def\rvv{{\mathbf{v}}}
\def\rvx{{\mathbf{x}}}
\def\rvy{{\mathbf{y}}}
\def\rvz{{\mathbf{z}}}
\def\ervs{{\textnormal{s}}}
\def\ervv{{\textnormal{v}}}
\def\vtheta{{\bm{\theta}}}
\def\vnu{{\bm{\nu}}}
\def\vupsilon{{\bm{\upsilon}}}
\def\va{{\bm{a}}}
\def\vb{{\bm{b}}}
\def\ve{{\bm{e}}}
\def\vf{{\bm{f}}}
\def\vk{{\bm{k}}}
\def\vl{{\bm{l}}}
\def\vs{{\bm{s}}}
\def\vv{{\bm{v}}}
\def\vx{{\bm{x}}}
\def\vy{{\bm{y}}}
\def\vz{{\bm{z}}}
\def\mA{{\bm{A}}}
\def\mB{{\bm{B}}}
\def\mD{{\bm{D}}}
\def\mI{{\bm{I}}}
\def\mK{{\bm{K}}}
\def\mQ{{\bm{Q}}}
\def\mS{{\bm{S}}}
\def\mX{{\bm{X}}}
\def\mY{{\bm{Y}}}
\def\mZ{{\bm{Z}}}
\def\mSigma{{\bm{\Sigma}}}
\DeclareMathAlphabet{\mathsfit}{\encodingdefault}{\sfdefault}{m}{sl}
\SetMathAlphabet{\mathsfit}{bold}{\encodingdefault}{\sfdefault}{bx}{n}
\def\gA{{\mathcal{A}}}
\def\gB{{\mathcal{B}}}
\def\gD{{\mathcal{D}}}
\def\gE{{\mathcal{E}}}
\def\gG{{\mathcal{G}}}
\def\gH{{\mathcal{H}}}
\def\gI{{\mathcal{I}}}
\def\gN{{\mathcal{N}}}
\def\gO{{\mathcal{O}}}
\def\gP{{\mathcal{P}}}
\def\gS{{\mathcal{S}}}
\def\gV{{\mathcal{V}}}
\def\gX{{\mathcal{X}}}
\def\gY{{\mathcal{Y}}}
\def\gZ{{\mathcal{Z}}}
\def\sP{{\mathbb{P}}}
\newcommand{\E}{\mathbb{E}}
\newcommand{\R}{\mathbb{R}}
\newcommand{\Var}{\mathbb{V}\textup{ar}}
\newcommand{\post}{\textup{post}}
\newcommand{\entropy}{\mathrm{H}}
\newcommand{\mi}{\mathrm{I}}
\newcommand{\indep}{\perp\!\!\!\perp}
\newcommand{\DO}{\textup{do}}
\newcommand{\ATE}{\textup{ATE}}
\newcommand{\CATE}{\textup{CATE}}
\newcommand{\ITE}{\textup{ITE}}
\newcommand{\ATT}{\textup{ATT}}
\newcommand{\DS}{\textup{DS}}
\newcommand{\CQ}{\textup{CQ}}
\newcommand{\IG}{\textup{IG}}
\newcommand{\TVR}{\textup{TVR}}
\DeclareMathOperator*{\argmax}{arg\,max}
\DeclareMathOperator*{\argmin}{arg\,min}
\DeclareMathOperator{\Tr}{Tr}
\definecolor{mygreen}{HTML}{79BF41}
\definecolor{myblue}{HTML}{4DBCC9}
\definecolor{myred}{named}{FireBrick} 
\definecolor{codegreen}{rgb}{0,0.6,0}
\definecolor{codegray}{rgb}{0.5,0.5,0.5}
\definecolor{codepurple}{rgb}{0.58,0,0.82}
\definecolor{backcolour}{rgb}{0.95,0.95,0.92}
\definecolor{blue1}{HTML}{2E86AB}
\lstdefinestyle{mystyle}{
    backgroundcolor=\color{backcolour},   
    commentstyle=\color{codegreen},
    keywordstyle=\color{magenta},
    numberstyle=\tiny\color{codegray},
    stringstyle=\color{codepurple},
    basicstyle=\ttfamily\scriptsize,
    breakatwhitespace=false,         
    breaklines=true,                 
    captionpos=b,                    
    keepspaces=true,                                 
    numbersep=1pt,                  
    showspaces=false,                
    showstringspaces=false,
    showtabs=false,                  
    tabsize=1
}
\newtcolorbox{greybox}[1]{
  colframe=black!15!white,
  base={#1},
  breakable
}
\newtcolorbox{promptbox}[1]{
  colframe=black!15!white,
  base={#1},
  leftrule=0mm,
  breakable,
}
\newtcolorbox{bluebox}[1]{
  colframe=myblue!50!white,
  colback=myblue!15!white,
  base={#1},
  breakable
}
\newtcolorbox{greenbox}[1]{
  colframe=mygreen!50!white,
  colback=mygreen!15!white,
  base={#1},
  breakable
}
\newtcolorbox{redbox}[1]{
  colframe=myred!50!white,
  colback=myred!15!white,
  base={#1},
  breakable
}
\newtheorem{theorem}{Theorem}
\newtheorem{lemma}[theorem]{Lemma}
\newtheorem{definition}{Definition}
\newtheorem{assumption}{Assumption}
\newtheorem{remark}{Remark}
\newtheorem{proposition}{Proposition}
\newtheorem{intuition}{Intuition}
\newtheorem{justification}{Justification}
\title{ActiveCQ: Active Estimation of Causal Quantities}
\author{
  Erdun Gao \& Dino Sejdinovic \\
  Australian Institute for Machine Learning, The University of Adelaide \\
  \texttt{\{erdun.gao,dino.sejdinovic\}@adelaide.edu.au}
}
\begin{document}

\maketitle

\doparttoc 
\faketableofcontents

\begin{abstract}
Estimating causal quantities (CQs) typically requires large datasets, which can be expensive to obtain, especially when measuring individual outcomes is costly. This challenge highlights the importance of sample-efficient active learning strategies. To address the narrow focus of prior work on the conditional average treatment effect, we formalize the broader task of Actively estimating Causal Quantities (ActiveCQ) and propose a unified framework for this general problem. Built upon the insight that many CQs are integrals of regression functions, our framework models the regression function with a Gaussian Process. For the distribution component, we explore both a baseline using explicit density estimators and a more integrated method using conditional mean embeddings in a reproducing kernel Hilbert space. This latter approach offers key advantages: it bypasses explicit density estimation, operates within the same function space as the GP, and adaptively refines the distributional model after each update. Our framework enables the principled derivation of acquisition strategies from the CQ's posterior uncertainty; we instantiate this principle with two utility functions based on information gain and total variance reduction. A range of simulated and semi-synthetic experiments demonstrate that our principled framework significantly outperforms relevant baselines, achieving substantial gains in sample efficiency across a variety of CQs.
\end{abstract}
\section{Introduction}

Causality~\citep{pearl2009causality, imbens2015causal, hernan2023causal} aims to understand how interventions influence outcomes by modeling data-generating processes. Within this domain, causal inference~\citep{hernan2023causal, ding2024first} focuses on estimating treatment effects, which are vital for decision-making in fields like economics~\citep{heckman2000causal} and healthcare~\citep{foster2011subgroup}. While randomized controlled trials (RCTs) are the gold standard, they are often impractical~\citep{benson2017comparison}, prompting reliance on observational data~\citep{rubin2005causal}. In particular, we are often interested in how causal effects differ across various subpopulations, and there are several distinct causal quantities (CQs) such as the (conditional) average treatment effect ((C)ATE) and the average treatment effect on the treated (ATT)~\citep{singh2024kernel}. Even in observational studies, measuring individual outcomes for causal estimation can be costly and burdensome~\citep{nwaimo2024transforming, kallus2024role}. In personalized medicine, this may involve invasive procedures or expensive tests~\citep{bi2019artificial, turk2002clinical, nwankwo2025batch}, while in economics, tracking outcomes like income changes often requires labor-intensive follow-ups~\citep{mckenzie2012beyond}. These challenges highlight the need for efficient methods to identify which instances (e.g., patients or respondents) to prioritize for outcome measurement, enabling accurate estimation of CQs while minimizing data collection costs.

Active learning (AL) offers a promising approach to addressing this challenge by improving accuracy with fewer labeled samples, specifically by strategically selecting the most informative data points~\citep{lindley1956measure, chaloner1995bayesian, settles1994active, rainforth2024modern}. Prior research on active causal inference has primarily focused on two fronts: developing active strategies for learning generalizable CATE estimators, often conditioning on all available covariates~\citep{jesson2021causal}, and designing adaptable, loss-targeting acquisition functions~\citep{connolly2023task}. Beyond these, many scientific fields emphasize identifying causal effects within specific population subgroups~\citep{jaber2019identification}. For example, in a healthcare context like the one illustrated in Fig.~\ref{fig:causal_graphs} (left), researchers might investigate whether Statin benefits older or younger patients, a question that is directly tied to CATE, which examines how causal effects differ across subgroups defined by age~\citep{abrevaya2015estimating}. Another common focus is predicting outcomes for individuals who are currently receiving a fixed dosage of Statin, had their dosage been increased, a scenario that pertains to the ATT. Furthermore, estimating treatment effects for a target population using observations collected from a different source population presents a significant challenge, commonly referred to as average treatment effect under distribution shift (ATEDS)~\citep{singh2024kernel}. Clearly, for each distinct CQ, the choice of what constitutes the most informative data differs, and thus necessitates a tailored active learning strategy. This observation leads us to our key research question:
\begin{center}
\begin{bluebox}{}
\faThumbtack\quad\textbf{Key Question}: What are the guiding principles for strategically selecting items from a pool dataset in order to build accurate estimators for a given causal quantity of interest?
\end{bluebox}
\label{key_question}
\end{center}
\textbf{Challenges.} Traditional information-theoretic active learning approaches typically aim to maximize the information gained about the model parameters or, equivalently, minimize posterior uncertainty \textit{over the unlabeled pool of data}. However, when estimating different CQs, the focus often shifts to the interventional distribution for some specific subpopulations. This shift naturally induces a distribution mismatch, where data samples are drawn from one distribution, but the outcome regression function needs to generalize to another. Consequently, conventional AL methods, such as BALD and total variance reduction (TVR), frequently fail to align with the goal of constructing accurate regressors for the intended target distribution. These limitations highlight the necessity for CQ-aware acquisition strategies that explicitly account for the target interventional objective~\citep{smith2023prediction}.

\begin{figure}[t]
\centering

\resizebox{0.85\textwidth}{!}{%
\begin{minipage}[b]{0.58\textwidth} 
    \centering
    \begin{tikzpicture}[scale=0.7, line width=0.5pt, inner sep=0.2mm, shorten >=.1pt, shorten <=.1pt]
        \draw (0, 0) node(B) [circle, draw=black, draw=purple, minimum size=1cm, inner sep=0pt] {{\small $\textcolor{purple}{\text{Age}}$}};
        \draw (2.5, 0) node(C) [circle, draw=black, draw=red, minimum size=1cm, inner sep=0pt] {{\small $\textcolor{red}{\text{Statin}}$}};
        \draw (2.5, 2) node(A) [circle, draw=black, minimum size=1cm, inner sep=0pt] {{\small$\text{Aspirin}$}};
        \draw (5, 2) node(E) [circle, draw=black, minimum size=1cm, inner sep=0pt] {{\small $\text{PSA}$}};
        \draw (5, 0) node(D) [circle, draw=black, minimum size=1cm, inner sep=0pt] {{\small $\text{BMI}$}};
        \draw (7.7, 1) node(Y) [circle, draw=black, draw=blue, minimum size=1cm, inner sep=0pt] {{\small $\textcolor{blue}{\text{Cancer}}$}};

        \draw[->, black, line width=0.7pt] (B) -- (C);
        \draw[->, black, line width=0.7pt] (C) -- (E);
        \draw[->, black, line width=0.7pt] (A) -- (E);
        \draw[->, black, line width=0.7pt] (C) -- (D);
        \draw[->, black, line width=0.7pt] (D) -- (Y);
        \draw[->, black, line width=0.7pt] (E) -- (Y);

        \draw[<->, dashed, line width=0.7pt, bend left=15] (B) to (Y);
        \draw[<->, dashed, line width=0.7pt, bend left=55] (A) to (Y);
    \end{tikzpicture}
\end{minipage}
\qquad 
\begin{minipage}[b]{0.32\textwidth}
    \centering
    \begin{tikzpicture}[scale=0.85, line width=0.5pt, inner sep=0.1mm, shorten >=.1pt, shorten <=.1pt]
        \draw (0, 0) node(X) [circle, draw, minimum size=1cm]  {{\Large\,$\ra$\,}};
        \draw (0, 2) node(Z) [circle, draw, minimum size=1cm]  {{\Large\,$\rvz$\,}};
        \draw (2, 0) node(Y) [circle, draw, minimum size=1cm]  {{\Large\,$\ry$\,}};
        \draw (2, 2) node(S) [circle, draw, minimum size=1cm]  {{\Large\,$\rvs$\,}};

        \draw[->, black, line width=0.7pt] (S) to (Y);
        \draw[->, black, line width=0.7pt] (S) to (X);
        \draw[->, black, line width=0.7pt] (X) to (Y);
        \draw[->, black, line width=0.7pt] (Z) to (Y);
        \draw[->, black, line width=0.7pt] (Z) to (X);
        \draw[->, black, line width=0.7pt](Z) to (S);

        \fill[gray!30, rounded corners=0.5cm, opacity=0.3] 
            (-0.6, 2.6) -- (2, 2.6)  -- (2.6, 2.6) -- (2.6, 1.6) -- (0.4, -0.6)--
            (0, -0.6) -- (-0.6, -0.6) -- (-0.6, 0) -- cycle;

        \node at (-0.6, 0.6) {\Large\faHammer[regular]};
    \end{tikzpicture}
\end{minipage}%
}
\caption{Causal graphs illustrating (Left) a motivating health example~\citep{aglietti2020causal} and (Right) the general data generation model used in our framework.}
\label{fig:causal_graphs}
\vspace{-1em}
\end{figure}
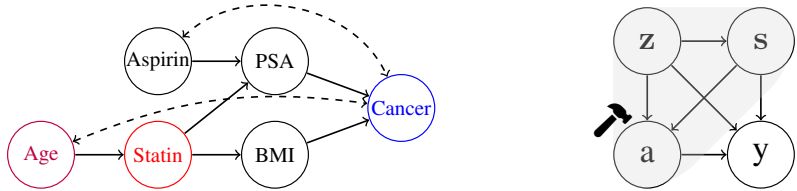

\textbf{Contributions.} Our primary contribution is a unified framework that both formalizes the task of \textit{Active Estimation of Causal Quantities (ActiveCQ)} (Sec.~\ref{sec:problem}) and provides a principled solution for it. This addresses a broad class of CQs often overlooked by the literature's focus on CATE. The cornerstone of our framework is a novel integral representation that unifies these disparate CQs. We model the required regression function with a Gaussian Process (GP) (Sec.~\ref{subsec:regression_modeling}) and demonstrate that using Conditional Mean Embeddings (CMEs) for the target distribution provides a powerful synergy, sidestepping the difficult challenge of direct density estimation (Sec.~\ref{subsec:final_estimator}). This unified representation subsequently enables the principled derivation of bespoke utility functions. We show how classic strategies like information gain and total variance reduction (Sec.~\ref{subsec:uncertainty_reduction}) can be instantiated in elegant, closed-form expressions that are automatically and analytically tailored to the CQ of interest. We support our framework with the convergence guarantees for this class of CQs (Sec.~\ref{subsec:uncertainty_decay}). Finally, we evaluate our method on the ActiveCQ task through several simulations and a semi-synthetic dataset, demonstrating that our approach significantly outperforms baseline methods (Sec.~\ref{sec:experiments}).

\section{Preliminaries}
\label{sec:preliminaries}

\textbf{Causal DAG.} We represent a directed acyclic graph (DAG) as $\gG = (\rvv, \gE)$, where $\rvv$ is a set of nodes corresponding to random variables, and $\gE$ is a set of directed edges. A DAG is termed a \textit{causal DAG} if each edge $\ervv_i \rightarrow \ervv_j$ indicates a direct causal effect. For a node $\ra \in \rvv$, an intervention, denoted by the \textbf{do-operator} $\text{do}(\ra = a)$ or simply $\DO(a)$, corresponds to an external action that sets $\ra$ to a fixed value $a$. This intervention modifies the data-generating process, leading to a post-intervention distribution $\sP^*_{\rvv|\DO(a)}$ with density $p(\vv|\DO(a)) = \prod_{\ervv_i\in \rvv \setminus \ra} p(v_i|\text{pa}(v_i, \gG))\mathds{1}(\ra=a)$, where $\text{pa}(v_i, \gG)$ denotes the parents of $\ervv_i$ in $\gG$. Our analysis is based on the causal graph in Fig.~\ref{fig:causal_graphs} (right), which considers a treatment $\ra\in\gA$, adjustment variables/confounders $\rvs\in\gS$, optional effect modifiers $\rvz\in\gZ$, and an outcome $\ry\in\R$. For simplicity, we assume scalar treatment and outcome, though the framework extends to the multivariate case. The corresponding observational and interventional distributions are:
\begin{equation}
\begin{alignedat}{2}
    &\textbf{(Observational)} \quad &&p(a,\vz,\vs,y) = p(\vz)p(\vs|\vz)p(a|\vz,\vs)p(y|a,\vz,\vs), \\
    &\textbf{(Interventional)} \quad &&p^*(a,\vz,\vs,y) = p(\vz)p(\vs|\vz)p^*(a)p(y|a,\vz,\vs),
\end{alignedat}
\label{eq:interventional_distribution}
\end{equation}
where $p^*(a)$ is the interventional treatment distribution. We occasionally denote the full set of input covariates as $\rvx=(\ra,\rvz,\rvs)$. Further details on DAGs are provided in App.~\ref{app_subsec:DAG_and_graph}.

\textbf{Causal Quantities Estimation.} In the causal inference literature, CQs typically refer to the expected effect of a do-Operation, expressed as $\E[\ry|\DO(a)]$ over some specific populations. In what follows, we outline some commonly studied CQs, focusing on their estimation throughout this paper.

\begin{definition}
(1) ATE: $\tau_{\ATE}(a) := \E[\ry|\DO(a)]$ represents the average effect over the whole population under the intervention $a$. (2) CATE: $\tau_{\CATE}(a, \vz) := \E[\ry|\DO(a), \rvz=\vz]$ represents the average effect over the subpopulation with $\rvz = \vz$ under the intervention $a$. (3) ATT: $\tau_{\ATT}(a, \tilde{a}) := \E[\ry|\DO(a), \ra=\tilde{a}]$ represents the average effect over the subpopulation who received treatment $\tilde{a}$ had they instead received the intervention $a$. (4) ATE with distribution shift (DS): $\tau_{\DS}(a) := \E[\ry \mid \DO(a), \tilde{\sP}]$ denotes the average treatment effect under intervention $a$, evaluated over a target population whose covariate distribution $\tilde{p}(\vz, \vs)$ differs from the observational distribution $p(\vz, \vs)$ used in Eq.~\ref{eq:interventional_distribution}.
\label{def:CQs}
\end{definition}
To estimate these CQs, certain assumptions are necessary for identifiability. In this paper, we focus on scenarios where no unmeasured confounders exist between the cause and effect, and where the SUTVA and the positivity condition hold. More detailed discussions can be found in App.~\ref{app_subsec:DAG_and_graph}.
\begin{lemma} 
Under these assumptions of selection on observables and covariate shift, we have  
\allowdisplaybreaks  
\begin{align}
\hat{\tau}_{\ATE}(a) &= \int_{\gS} \E[\ry | \ra = a, \rvs=\vs] \, \sP_{\rvs}(d\vs) \; &
\hat{\tau}_{\ATT}(a, \tilde{a}) &= \int_{\gS} \E[\ry | \ra = a, \rvs=\vs] \, \sP_{\rvs\mid\ra}(d\vs|\tilde{a}) \notag \\
\hat{\tau}_{\CATE}(a, \vz) &= \int_{\gS} \E[\ry | \ra = a, \rvs=\vs, \rvz=\vz] \, \sP_{\rvs|\rvz}(d\vs|\vz) \; &
\hat{\tau}_{\DS}(a, \tilde{\sP}) &= \int_{\gS} \E[\ry | \ra = a, \rvs=\vs] \, \tilde{\sP}_{\rvs}(d\vs) \notag
\end{align}
\label{lemma:Identification_CQs}
\vspace{-7pt}
\end{lemma}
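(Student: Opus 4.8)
The plan is to derive all four identities from a single template built on the potential-outcomes calculus, invoking the three structural assumptions (consistency/SUTVA, selection on observables, and positivity) plus the covariate-shift assumption for the last case. Writing $\ry(a)$ for the potential outcome under $\DO(a)$, the consistency part of SUTVA gives $\ry = \ry(\ra)$, so that an observational conditional mean of $\ry$ at $\ra=a$ agrees with the corresponding conditional mean of $\ry(a)$. This is the device that turns interventional estimands into the observational regressions $m(a,\vs) := \E[\ry \mid \ra=a, \rvs=\vs]$ (and $m(a,\vz,\vs) := \E[\ry\mid \ra=a,\rvz=\vz,\rvs=\vs]$ for CATE) that actually appear in the lemma and can be fit from data.

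First I would isolate the single \emph{bridge identity} that does the real work. Under selection on observables we have $\ry(a) \indep \ra \mid \rvs$, so that for any $\tilde a$ in the support (positivity guaranteeing the conditioning events have positive mass),
\begin{equation}
\E[\ry(a)\mid \ra=\tilde a,\rvs=\vs] = \E[\ry(a)\mid \rvs=\vs] = \E[\ry\mid \ra=a,\rvs=\vs],
\end{equation}
where the first equality drops the conditioning on $\ra=\tilde a$ by unconfoundedness and the second reinstates $\ra=a$ (again by unconfoundedness) before replacing $\ry(a)$ with $\ry$ via consistency. The identical chain holds with $\rvz=\vz$ appended to every conditioning set, producing the analogous bridge for the richer regression $m(a,\vz,\vs)$.

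Next I would treat each CQ as an expectation of $\ry(a)$ over a designated subpopulation and expand it with the law of iterated expectations, conditioning on $\rvs$ (and on $\rvz$ for CATE). For the ATE I condition on $\rvs$ and integrate against its marginal $\sP_{\rvs}$; for CATE I additionally fix $\rvz=\vz$ and integrate against $\sP_{\rvs\mid\rvz}(\cdot\mid\vz)$; for the ATT I expand $\E[\ry(a)\mid \ra=\tilde a]$ over $\sP_{\rvs\mid\ra}(\cdot\mid\tilde a)$; and for the distribution-shift estimand I expand over the target covariate law $\tilde{\sP}_{\rvs}$. In every case the inner conditional mean is replaced by the observational regression via the bridge identity, leaving exactly the four stated integrals. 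Equivalently, the same formulas follow by marginalizing the remaining covariates out of the interventional factorization $p^*$ in Eq.~\ref{eq:interventional_distribution}, which makes the invariance of the mechanism $p(y\mid a,\vz,\vs)$ under $\DO(a)$ explicit.

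The main obstacle is the ATT, where the conditioning event $\ra=\tilde a$ and the intervention $\DO(a)$ disagree: one cannot substitute the regression directly, and the argument must route through the counterfactual mean $\E[\ry(a)\mid \rvs]$, spending unconfoundedness twice (once to remove $\ra=\tilde a$, once to install $\ra=a$) before consistency applies. A secondary subtlety is the $\DS$ estimand, which is identified \emph{only} because the covariate-shift assumption forces $m(a,\vs)$ to be shared between the source and target populations; transporting the estimand then reduces to re-weighting the same $m(a,\vs)$ by $\tilde{\sP}_{\rvs}$. I would flag this transportability explicitly, since without it the inner expectation taken under $\tilde{\sP}$ need not equal the source regression.
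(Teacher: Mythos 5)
Your proposal is correct, but note that the paper itself never writes out a proof of this lemma: it states the identification formulas as standard backdoor-adjustment results, deferring to the assumptions collected in App.~\ref{app_subsec:DAG_and_graph} (causal Markov compatibility, the backdoor criterion, positivity, and the covariate-shift condition) and to the truncated factorization in Eq.~\ref{eq:interventional_distribution}, i.e.\ the do-calculus/g-formula route. Your derivation is the potential-outcomes rendering of the same argument: consistency plus $\ry(a)\indep\ra\mid\rvs$ replaces the truncated-factorization computation, and you correctly isolate the two places where the argument is not a one-line substitution, namely the ATT (where unconfoundedness must be spent twice, once to strip $\ra=\tilde a$ and once to install $\ra=a$) and the DS estimand (where the covariate-shift assumption is what licenses transporting the source regression $m(a,\vs)$ to the target law $\tilde{\sP}_{\rvs}$). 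The two routes prove the same thing; yours makes the role of each assumption explicit per estimand, while the paper's graphical framing makes the invariance of $p(y\mid a,\vz,\vs)$ and of $p(\vs\mid\vz)$ under $\DO(a)$ immediate from Eq.~\ref{eq:interventional_distribution}. One small point worth flagging: your blanket bridge identity $\ry(a)\indep\ra\mid\rvs$ for the ATE, ATT and DS cases tacitly assumes that $\rvs$ alone is a sufficient adjustment set; this matches the paper's convention (outside the CATE case it folds $\rvz$ into $\rvs$, cf.\ the Remark following the lemma and the simplified graph in App.~\ref{app_subsec:DAG_and_graph}), but in the full graph of Fig.~\ref{fig:causal_graphs} (right) one would need to condition on $(\rvz,\rvs)$ jointly.
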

\begin{remark}
To unify the estimation of various CQs, we can express all of them as  $\hat{\tau}_{\CQ} = \int_{\gS} \E[\ry \mid \ra = a, \overline{\rvs} = \overline{\vs}] \, \sP^*_{\CQ}$, where the adjustment set is augmented as $\overline{\rvs} = (\rvz, \rvs)$ for CATE and $\overline{\rvs} = \rvs$ for the other cases. The distribution $\sP^*_{\CQ}$ represents a CQ-specific distribution that depends only on variables other than $\ra$ and $\ry$. In this unified view, we can see that estimating each CQ involves three steps: (1) estimating the regression function $\E[\ry \mid \ra = a, \overline{\rvs} = \overline{\vs}]$, (2) estimating or representing the CQ-specific distribution $\sP^*_{\CQ}$, and (3) integrating the estimated regression function over $\sP^*_{\CQ}$. In practice, conditional or marginal distributions such as $P_{\rvs \mid \rvz}$ for CATE or $\sP_{\rvs \mid a}$ for ATT, can be estimated from the unlabeled pool dataset using kernel mean embeddings or density estimation, while $\sP_{\rvs}$ and $\tilde{\sP}_{\rvs}$ for ATE and DS can be approximated by the empirical distribution of pool samples.
\end{remark}

\section{The ActiveCQ Problem Formulation}
\label{sec:problem}

The task of sample-efficient, or more precisely, outcome-efficient, CQ estimation is naturally situated within an active learning (AL) framework~\citep{settles1994active}. We begin with two key datasets: an initial, often small, labeled\footnote{Following standard AL terminology, "labeled" indicates that the outcome $y$ has been observed.} training set $\gD_T = \{(\vz^{(i)}, \vs^{(i)}, a^{(i)}, y^{(i)})\}$ and a large unlabeled pool $\gD_P = \{(\vz^{(i)}, \vs^{(i)}, a^{(i)})\}$. The central challenge is to devise a principled selection strategy, formalized through a \textit{utility function}, $U$, to acquire a budget-constrained set of $n_B$ outcomes. This is achieved via an iterative, batch-mode process: in each round, a model trained on the current $\gD_T$ informs the utility function $U$, which selects a batch of $n_b$ individuals from $\gD_P$. After their outcomes are queried, these newly labeled points are added to $\gD_T$, and the process repeats until the budget is exhausted. As established in our preliminaries (cf. Lem.~\ref{lemma:Identification_CQs}), each target CQ, $\tau_{\CQ}$, is an integral over a CQ-specific distribution, $\sP^*_{\CQ}$. The ultimate objective is thus to design $U$ to learn an estimator, $\hat{\tau}_{\CQ}$, that is maximally accurate with respect to this target distribution, which may differ from the pool's empirical distribution.

\begin{justification}
Note that our setting is purely observational. We can only query an individual's pre-existing, factual outcome rather than intervene to assign a new treatment and observe a counterfactual. This fundamentally distinguishes our work from active experimental design, which requires the ability to perform interventions~\citep{toth2022active, katoactive2024active, klein2025towards}.
\end{justification}
\section{Uncertainty Quantification of CQs}
\label{sec:model} 

To efficiently estimate CQs with minimal labeling cost, we propose a Bayesian active learning framework~\citep{li2023bayesian}. Our approach is guided by an information-theoretic principle: at each round, we select data points that are expected to maximally reduce the posterior uncertainty over the target causal quantity. We develop and illustrate this methodology for the CATE, $\hat{\tau}_{\CATE}(a,\vz)$, abbreviated as $\hat{\tau}(a,\vz)$ for clarity, as it represents the most general and complex case among the causal quantities identified in Lem.~\ref{lemma:Identification_CQs}. While our main exposition focuses on CATE, the framework is general, and its detailed application to other quantities is provided in App.~\ref{app:different_modelling_methods}.

\subsection{Regression Function Modeling}
\label{subsec:regression_modeling}
We assume that the outcome follows the model $\ry = \E[\ry|\ra, \rvz, \rvs] + \varepsilon,$ where $\varepsilon \overset{\textup{i.i.d.}}{\sim} \gN(0, \sigma^2)$~\citep{singh2024kernel}. The observations $\{(a_i, \vz_i, \vs_i, y_i)\}_{i=1}^{n_T}$ are independently and identically sampled from $\sP_{\ra\rvz\rvs\ry}$. To quantify the epistemic uncertainty of the causal effect, we model the outcome using a Gaussian Process (GP), where $\rf(a,\vz,\vs)=\E[\ry|\ra, \rvz, \rvs]$~\citep{williams2006gaussian, kanagawa2025gaussian}. The function $\rf$ is assigned a GP prior, $\rf \sim \gG\gP(m=0,k)$, where $m$ is the mean function, and $k$ is the covariance kernel, which is constructed as a product kernel to handle multiple inputs. Given training dataset $\gD_T = \{\va_T, \mZ_T, \mS_T, \vy_T\}$, we compute the posterior GP. For new inputs $\vx$ and $\vx'$, the posterior mean and variance of $\rf$ can be calculated as follows:
\begin{equation}
m(\vx) = \vk_{\vx\mX_T}(\mK_{\mX_T\mX_T}+\sigma^2\mI)^{-1}\vy_T; \
k_{\post}(\vx,\vx') = k_{\vx\vx'} - \vk_{\vx\mX_T}(\mK_{\mX_T\mX_T}+\sigma^2\mI)^{-1}\vk_{\mX_T\vx'},
\label{eq:trained_gp_posterior}
\end{equation}
where $k_{\vx\vx'}=k_{aa'}k_{\vz\vz'}k_{\vs\vs'}$, $\vk_{\vx\mX_T} = \vk_{a\va_T} \odot \vk_{\vz\mZ_T} \odot \vk_{\vs\mS_T}$, and $\mK_{\mX_T\mX_T} = \mK_{\va_T\va_T}\odot\mK_{\mZ_T\mZ_T}\odot\mK_{\mS_T\mS_T}$. For brevity, we use $k(\cdot, \cdot)$ as a general kernel notation, where specific kernels follow from their arguments, e.g., $k(\vx, \vx')$ and  $\phi(\vx)$ denote the kernel and feature map on $\gX$, respectively. We define $k_{\vx\vx'} := k(\vx, \vx')$, $\vk_{\vx\mX_T} := [k(\vx, \mX_1), \dots, k(\vx, \mX_{n_T})]$, and $\mK_{\mX_T\mX_T} = \left[ k(\mX_i, \mX_j) \right]_{i,j=1}^{n_T}$.
\begin{remark}[On Scalability and Complexity]
While the standard GP formulation in Eq.~\ref{eq:trained_gp_posterior} has $O(n_T^3)$ complexity, our framework's core contribution is orthogonal to this specific implementation. The proposed framework is modular and readily accommodates various scalable approximations, such as sparse variational GPs, Random Fourier Features, and Nyström methods, to handle larger datasets.
\end{remark}

\subsection{Conditional Density Estimation}
\label{subsec:conditional_density_estimation}
Next, we turn our attention to modeling the conditional distribution $\sP_{\rvs|\rvz}$ and explore two distinct approaches: conditional density estimator (CDE) and conditional kernel mean embedding (CME).

\textbf{Conditional Density Estimator.} A natural approach to estimate the conditional distribution is to use a CDE, such as a mixture density network~\citep{bishop2006pattern}, a least squares density ratio estimator~\citep{sugiyama2010conditional}, a conditional normalizing flow~\citep{trippe2018conditional}, or a squared neural family ~\citep{TsuOngSej2023}. Importantly, estimating $\sP_{\rvs|\rvz}$ requires only paired observations $(\rvs, \rvz)$, and does not rely on outcome labels, allowing it to be done using both training and pool datasets before label acquisition. Since $(\mS, \mZ)$ is always available, incorporating the pool significantly enhances estimation accuracy with minimal overhead.

\textbf{Conditional Embeddings in RKHS.} We introduce an alternative representation of the conditional distribution in a reproducing kernel Hilbert space (RKHS). As we model the regression function using a GP, the input features $\ra$, $\rvz$, and $\rvs$ are mapped into a joint feature space. The product kernel used in GP regression corresponds to the tensor product RKHS $\gH_{\gA\gZ\gS} := \gH_{\gA} \otimes \gH_{\gZ} \otimes \gH_{\gS}$. We can then represent the conditional distribution $\sP_{\rvs|\rvz=\vz}$ in $\gH_{\gS}$ using the CME, which is defined as:
\begin{equation}
\mu_{\rvs|\rvz=\vz}:= \E_{\rvs|\rvz=\vz}[\phi(\vs)] = \int_{\gS} \phi(\vs) \sP_{\rvs|\rvz}(d\vs|\vz).
\end{equation}
The CME serves as the embedding of $\sP_{\rvs|\rvz=\vz}$ in $\gH_{\gS}$. As discussed in \citep{song2013kernel, muandet2017kernel}, the CME can be associated with a Hilbert-Schmidt operator $C_{\rvs|\rvz}: \gH_\gZ\rightarrow \gH_\gS$, referred to as the conditional mean embedding operator (CMO). This operator satisfies $\mu_{\rvs|\rvz=\vz}=C_{\rvs|\rvz}\phi(\vz)$, and can be understood as $C_{\rvs|\rvz}=C_{\rvs\rvz}C_{\rvz\rvz}^{-1}$ with $C_{\rvs\rvz}:=\E_{\rvs,\rvz}[\phi(\vs)\otimes\phi(\vz)]$ and $C_{\rvz\rz}:=\E_{\rvz,\rz}[\phi(\vz)\otimes\phi(\vz)]$ representing the covariance operators. Similarly to using the CDE, we can also use all paired observations $(\mZ,\mS)$ in $\gD$ to empirically estimate $C_{\rvs|\rvz}$ as $\hat{C}_{\rvs|\rvz} = \Phi_{\mS}(\mK_{\mZ\mZ}+\lambda\mI)^{-1}\Phi_{\mZ}^T$, where $\lambda>0$ is a regularization parameter. The kernel for $\rvz$ here need not be the same as the one employed in the GP, and the kernel for $\rvs$ will be discussed later.

\subsection{Integral Over Conditional Distribution}
\label{subsec:final_estimator}
Leveraging these two representations of $\sP_{\rvs|\rvz=\vz}$, we then proceed to derive the final estimator for the CQ $\hat{\tau}(a,\vz)$. As a linear functional of the regression function $\rf$, $\hat{\tau}(a,\vz)$ follows a distribution determined by the underlying GP, which is also a GP with mean and covariance functions derived from the conditional structure as \citep{chau2021deconditional}. For all $a,a' \in \gA$ and $\vz,\vz' \in \gZ$, we have:
\begin{equation}
\allowdisplaybreaks
\begin{aligned}
    \nu(a,\vz) &= \E_{\rvs\sim\sP_{\rvs|\vz}}\left[m(a,\vz,\rvs) \mid \ra=a,\rvz=\vz \right], \\
    q\left((a,\vz),(a',\vz')\right) &= \E_{\rvs\sim\sP_{\rvs|\vz},\rvs'\sim\sP_{\rvs|\vz'}}\left[ k_{\post}\left((a,\vz,\rvs),(a',\vz',\rvs') \right) \mid \ra=a,\ra'=a',\rvz=\vz,\rvz'=\vz' \right].
\end{aligned}
\end{equation}

\paragraph{CDE-based method.} To approximate the posterior mean and covariance, we employ a sampling-based method. Specifically, for the posterior mean $\nu(a, \vz)$, we sample $n_\rvs$ points $\{\vs^{(1)}, \cdots, \vs^{(n_\rvs)}\} $ from the conditional distribution $\sP_{\rvs|\vz}$. After obtaining these samples, the posterior mean is then approximated by averaging the values of the function $m(a, \vz, s)$ evaluated at each sampled point, and the posterior covariance is estimated by considering pairwise combinations of the sampled points and averaging the kernel values. Specifically, we first sample $n_\rvs$ points $\{\vs^{(1)}, \cdots, \vs^{(n_\rvs)}\}$ from $\sP_{\rvs|\vz}$ and $\{s'^{(1)}, \cdots, s'^{(n_\rvs)}\}$ from $\sP_{\rvs|\vz'}$, then compute the posterior mean and covariance as follows:
\begin{equation}
\small
\nu(a, \vz) = \frac{1}{n_\rvs} \sum_{i=1}^{n_I} m(a, \vz, \vs^{(i)}), \quad q\left((a, \vz), (a', \vz')\right) = \frac{1}{n_\rvs^2} \sum_{i=1}^{n_\rvs} \sum_{j=1}^{n_\rvs} k_{\post}\left( (a, \vz, \vs^{(i)}), (a', \vz', \vs'^{(j)}) \right).
\end{equation}
The expressions for $m$ and $k_{\post}$ are provided in Eq.~\ref{eq:trained_gp_posterior}. This sampling approach marginalizes out $\rvs$, approximating the posterior distribution and thereby enabling CATE estimation with uncertainty. The estimator will be used for sample-efficient estimation in the next section.

\paragraph{CME-based method.} Leveraging the CME $\mu_{\rvs|\vz}$, we can express the CATE estimation as the joint estimation of both $\mu_{\rvs|\vz}$ and the regression function $\rf$. Specifically, the CATE is formulated as:
\begin{equation}
\hat{\tau}(a, \vz) = \langle \rf, \phi(a)\otimes\phi(\vz)\otimes\hat{\mu}_{\rvs|\vz} \rangle_{\gH_{\gA\gZ\gS}}.
\end{equation}
We estimate the CME using the CMO. However, since GP sample paths almost surely lie outside their native RKHS, a careful selection of compatible kernels is required for this inner product to be well-defined (see \citep{chau2021bayesimp} for kernel construction).
\begin{proposition} Given the dataset $\gD_T = \{\va_T, \mZ_T, \mS_T, \vy_T\}$ and $\gD= \{\mA, \mZ, \mS\}$, if $\rf$ is the posterior GP learned from $\gD_T$, then $\hat{\tau}_{\CATE}$ is a functional of $\rf$ defined on $(\ra, \rvz)$ with the following mean and covariance estimated using $\phi_{\bar{\rvx}}:= \phi_{a}\otimes\phi_{\vz}\otimes\hat{\mu}_{\rvs|\vz}$ and $\phi_{\bar{\rvx}'}:= \phi_{a'}\otimes\phi_{\vz'}\otimes\hat{\mu}_{\rvs|\vz'}$,
\begin{equation}
\begin{aligned}
    \nu(a,\vz) &= \langle \phi_{\bar{\rvx}}, m_\rf \rangle_{\gH_{\gA\gZ\gS}} = \vk_{\bar{\vx}\mX_T}(\mK_{\mX_T\mX_T} + \lambda\mI)^{-1}\vy_T, \\
    q\left((a,\vz),(a',\vz')\right) &= k_{\bar{\vx}\bar{\vx}'} - \vk_{\bar{\vx}\mX_T}(\mK_{\mX_T\mX_T}+ \lambda_\rf\mI)^{-1}\vk_{\mX_T\bar{\vx}'},
\end{aligned}
\label{eq:cate_cme_estimator}
\end{equation}
where the effective kernel terms incorporating the CME are defined as: $k_{\bar{\vx}\bar{\vx}'}=k_{aa'}k_{\vz\vz'}(\vk_{\vz\mZ}(\mK_{\mZ\mZ}+\lambda\mI)^{-1}\mK_{\mS\mS}(\mK_{\mZ\mZ}+\lambda\mI)^{-1}\vk_{\mZ\vz'})$, $\vk_{\bar{\vx}\mX_T}=\vk_{a\va_T}\odot\vk_{\vz\mZ_T}\odot(\vk_{\vz\mZ}(\mK_{\mZ\mZ}+\lambda\mI)^{-1}\mK_{\mS\mS_T})$, $\mK_{\mX_T\mX_T} = \mK_{\va_T\va_T}\odot\mK_{\mZ_T\mZ_T}\odot\mK_{\mS_T\mS_T}$, and $\vk_{\mX_T\bar{\vx}'}=\vk_{\va_Ta'}\odot\vk_{\mZ_T\vz'}\odot(\mK_{\mS_T\mS}(\mK_{\mZ\mZ}+\lambda\mI)^{-1}\vk_{\mZ\vz'})$. $\lambda>0$ is the regularization of the CME. $\lambda_\rf>0$ is the noise term for $\rf$.
\label{prop:cate}
\end{proposition}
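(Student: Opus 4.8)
The plan is to exploit the fact that $\hat\tau(a,\vz)=\langle \rf,\phi_{\bar{\rvx}}\rangle_{\gH_{\gA\gZ\gS}}$ is a \emph{linear} functional of the posterior GP $\rf$; since any continuous linear image of a Gaussian is Gaussian, $\hat\tau$ is itself a GP indexed by $(\ra,\rvz)$, and the whole claim reduces to computing its mean and cross-covariance. The central manoeuvre is to render the empirical CME explicit so that this functional collapses into a finite linear combination of ordinary point evaluations of $\rf$. This simultaneously legitimises the inner-product notation (sidestepping the fact that GP sample paths almost surely escape the RKHS) and lets me read the answer directly off the posterior formulas in Eq.~\ref{eq:trained_gp_posterior}.

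\textbf{Step 1 (make the CME explicit).} Writing $\hat C_{\rvs|\rvz}=\Phi_{\mS}(\mK_{\mZ\mZ}+\lambda\mI)^{-1}\Phi_{\mZ}^{\top}$ and using $\Phi_{\mZ}^{\top}\phi(\vz)=\vk_{\mZ\vz}$, the empirical CME is $\hat\mu_{\rvs|\vz}=\hat C_{\rvs|\rvz}\phi(\vz)=\Phi_{\mS}\,\bm\beta(\vz)$ with weight vector $\bm\beta(\vz):=(\mK_{\mZ\mZ}+\lambda\mI)^{-1}\vk_{\mZ\vz}$. Substituting into $\phi_{\bar{\rvx}}=\phi(a)\otimes\phi(\vz)\otimes\hat\mu_{\rvs|\vz}$ and using bilinearity of $\otimes$ gives $\phi_{\bar{\rvx}}=\sum_{j}\beta_j(\vz)\,\phi(a,\vz,\vs_j)$, where $\vs_j$ ranges over the pool columns of $\Phi_{\mS}$. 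Hence
\begin{equation}
\hat\tau(a,\vz)=\sum_{j}\beta_j(\vz)\,\langle\rf,\phi(a,\vz,\vs_j)\rangle=\sum_{j}\beta_j(\vz)\,\rf(a,\vz,\vs_j),
\end{equation}
a finite weighted sum of point evaluations that is unambiguously defined for the GP.

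\textbf{Steps 2--3 (propagate and collapse).} Applying the posterior mean and covariance of Eq.~\ref{eq:trained_gp_posterior} to this finite combination yields $\nu(a,\vz)=\sum_{j}\beta_j(\vz)\,m(a,\vz,\vs_j)$ and $q((a,\vz),(a',\vz'))=\sum_{j,l}\beta_j(\vz)\beta_l(\vz')\,k_{\post}((a,\vz,\vs_j),(a',\vz',\vs_l))$, the double sum ranging over the common pool $\mS$ used for both $\hat\mu_{\rvs|\vz}$ and $\hat\mu_{\rvs|\vz'}$. I would then substitute the explicit $m$ and $k_{\post}$ and use the product-kernel factorisation $k_{\vx\vx'}=k_{aa'}k_{\vz\vz'}k_{\vs\vs'}$ together with the Hadamard structure of the cross-kernel vectors. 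The $a$- and $\vz$-factors are independent of $j,l$ and pull outside the sums, while the surviving $\vs$-sums against $\bm\beta$ assemble into matrix products: $\sum_j\beta_j(\vz)\vk_{\vs_j\mS_T}=\vk_{\vz\mZ}(\mK_{\mZ\mZ}+\lambda\mI)^{-1}\mK_{\mS\mS_T}$ yields the $\rvs$-component of the cross-covariance vector $\vk_{\bar{\vx}\mX_T}$, and $\sum_{j,l}\beta_j(\vz)\beta_l(\vz')k_{\vs_j\vs_l}=\vk_{\vz\mZ}(\mK_{\mZ\mZ}+\lambda\mI)^{-1}\mK_{\mS\mS}(\mK_{\mZ\mZ}+\lambda\mI)^{-1}\vk_{\mZ\vz'}$ yields the effective prior kernel $k_{\bar{\vx}\bar{\vx}'}$. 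Collecting these recovers exactly Eq.~\ref{eq:cate_cme_estimator}, with $\lambda$ the CME regulariser and $\lambda_\rf$ the GP noise term.

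\textbf{Main obstacle.} The one genuinely delicate point is well-definedness: $\langle\rf,\phi_{\bar{\rvx}}\rangle$ is a priori meaningless because $\rf$ almost surely lies outside $\gH_{\gA\gZ\gS}$, so I cannot manipulate it as a literal RKHS inner product. The finite-combination rewriting of Step~1 is precisely what resolves this, reducing the functional to point evaluations on which the GP is honestly Gaussian (the kernel compatibility needed to justify the original inner-product viewpoint is the content of the construction in \citep{chau2021bayesimp}). Beyond this, the work is careful but routine linear-algebra bookkeeping: keeping the Hadamard products $\odot$ distinct from the ordinary matrix products, and verifying that $\lambda$ versus $\lambda_\rf$ land in the correct factors.
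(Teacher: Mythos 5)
Your proposal is correct and arrives at exactly the algebra of Eq.~\ref{eq:cate_cme_estimator}; the bookkeeping in Steps 2--3 (pulling the $a$- and $\vz$-factors out of the sums, assembling $\sum_j\beta_j(\vz)\vk_{\vs_j\mS_T}$ into $\vk_{\vz\mZ}(\mK_{\mZ\mZ}+\lambda\mI)^{-1}\mK_{\mS\mS_T}$, and the bilinear form $\bm{\beta}(\vz)^{\top}\mK_{\mS\mS}\bm{\beta}(\vz')$ into the effective prior kernel) matches the paper's derivation in App.~\ref{app:derivation_cate} term for term. The one genuine difference is the entry point. The paper manipulates $\langle\phi_{\bar{\rvx}}, m_\rf\rangle_{\gH_{\gA\gZ\gS}}$ and $\langle\phi_{\bar{\rvx}},\Phi_{\mX_T}\rangle$ directly as formal RKHS inner products and defers the question of whether these pairings are meaningful for a GP sample path to the kernel-compatibility construction of \citep{chau2021bayesimp}. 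You instead first expand the empirical CME as $\hat{\mu}_{\rvs|\vz}=\Phi_{\mS}\bm{\beta}(\vz)$, which turns $\hat{\tau}(a,\vz)$ into a \emph{finite} linear combination $\sum_j\beta_j(\vz)\,\rf(a,\vz,\vs_j)$ of point evaluations; Gaussianity and the posterior mean/covariance then follow from Eq.~\ref{eq:trained_gp_posterior} without any appeal to $\rf$ lying in the RKHS. This buys a self-contained justification of well-definedness for the \emph{empirical} estimator (the nuclear-dominance issue only resurfaces if one wants the population-level CME, where the integral is no longer a finite sum), at the cost of obscuring slightly the "effective feature map" viewpoint that the paper uses to treat all four CQs uniformly. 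Both routes are valid and yield the identical closed form, including the correct placement of $\lambda$ versus $\lambda_\rf$.
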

To ensure that the CME corresponds to the RKHSs that appropriately match the kernels learned in the GP regression, the features $\phi(\ra),\phi(\rvz),\phi(\rvs)$ need to be updated after each batch acquisition round. This approach effectively obviates the need for explicitly estimating the conditional distribution by leveraging the CME, which shifts the task of estimating CATE from computationally intensive integration integrating a regression function over an estimated distribution to directly working with a distribution embedding, thereby streamlining computation and improving efficiency. Additionally, uncertainty from the CME itself can be introduced and propagated alongside the GP's inherent uncertainty, as explored in methods like BayesIMP~\citep{chau2021bayesimp} and IMPspec~\citep{dance2024spectral}, although the detailed exploration lies beyond the scope of the current paper. Further advantages that make CME particularly well-suited to our AL process are detailed in App.~\ref{app_subsec:advantage_CME}.

\section{Active Estimation of CQs}
\label{sec:method}

Currently, we have two representations of $\hat{\tau}(a,\vz)$: one using CDE with MC sampling and the other using CME for a closed-form expression. Let $\vnu_{(\va,\mZ)}:=\nu(\va,\mZ)$ and $\mQ_{(\va,\mZ)}:=q\left((\va,\mZ),(\va,\mZ)\right)$ denote the posterior mean vector and covariance matrix for the input points $(\va,\mZ)$, regardless of the method used. This section illustrates how our estimator enables sample-efficient CQ estimation.

\subsection{Specifying the subpopulation of interest}
\label{subsec:specifying_subpopulation}

To evaluate the estimator $\hat{\tau}(a, \vz)$, it is crucial to specify the subpopulation and treatments of interest, as they determine where estimation accuracy matters most. We consider a set of treatment–effect modifier pairs $(\va_I, \mZ_I) = \{(a_i, \vz_i)\}_{i=1}^{n_I}$ over which performance is assessed. For instance, if the goal is to understand the response of a specific subpopulation defined by $\rvz = \vz$ to varying treatments, one may fix each $\vz_i = \vz$ and draw $a_i$ uniformly from a finite treatment set $\gA$, i.e., $a_i \sim \text{Uniform}(\gA)$ and $(a_i, \vz_i) = (a_i, \vz)$. Alternatively, if the interest lies in evaluating responses to a fixed treatment $a$ across different effect modifiers, each $a_i$ is fixed to $a$ and $\vz_i$ is sampled uniformly from a set $\gZ$, i.e., $\vz_i \sim \text{Uniform}(\gZ)$ and $(a_i, \vz_i) = (a, \vz_i)$.These scenarios reflect different inferential goals and ensure that the assessment of $\hat{\tau}(a, \vz)$ is aligned with the intended application. The resulting sample $\{(a_i, \vz_i)\}_{i=1}^{n_I}$ forms the basis for measuring the estimator’s accuracy in the region of interest.

\subsection{Uncertainty Reduction}
\label{subsec:uncertainty_reduction}

We are now ready to address the \textbf{KQ}.~\ref{key_question}. To this end, we propose the following guiding principle for selecting individuals from the pool dataset $\gD_P$ whose outcomes should be acquired.
\begin{center}
\begin{bluebox}{}
\faKey\quad\textbf{Key Principle}: Select a subset of samples $\mX_B$ from the pool $\gD_P$ in a manner that minimizes the posterior uncertainty of the estimator $\hat{\tau}(\va_I, \mZ_I)$.
\end{bluebox}
\label{key_principle}
\end{center}
The distinction between active CQ estimation and traditional AL tasks is evident. Traditional AL methods, such as BALD and TVR, focus on minimizing uncertainty of regression function $\rf$ over the distribution of the union dataset (or equivalently, the pool dataset)~\citep{smith2023prediction}. In contrast, active estimation of CQ aims to reduce the uncertainty of CQ of interest directly with the regression function serving only as a means to an end. This principle guides data selection by quantifying uncertainty or label utility using methods like entropy and variance of the target estimator. We explore two strategies: information gain (IG) and TVR, and discuss their connection in App.~\ref{app:discussion}.

\textbf{Data Acquisitions via IG.} To evaluate the uncertainty of the target estimator $\hat{\tau}(\va_I,\mZ_I)$, we utilize its differential entropy, denoted as $\entropy[\hat{\tau}(\va_I,\mZ_I)]$. To obtain the labels for observations $\mX_B$ from $\gD_P$, we quantify posterior uncertainty using conditional entropy after observing $\rvy_{\mX_B}$. This is expressed as $\E_{\rvy_{\mX_B} \sim p(\cdot | \mathcal{D}_T)} \left[ \entropy \left( \hat{\tau}(a_I, \mathbf{Z}_I) \mid \gD_T, \rvy_{\mX_B} \right) \right]$, which we simplify as $\entropy \left( \hat{\tau}(\va_I, \mZ_I) \mid \rvy_{\mX_B}, \gD_T \right)$. Here, we use $\rvy$ to show its inherent randomness, which will be marginalized further. The IG, defined as $\mi(\hat{\tau}(\va_I,\mZ_I), \rvy_{\mX_B}|\gD_T) = \entropy[\hat{\tau}(\va_I,\mZ_I)|\gD_T] - \entropy[\hat{\tau}(\va_I,\mZ_I)|\rvy_{\mX_B},\gD_T]$, captures the reduction in uncertainty about $\hat{\tau}(\va_I,\mZ_I)$ after observing $\rvy_{\mX_B}$. For brevity, we omit the constraint $\mX_B\in \gD_P$ in the following optimization expressions, assuming all selections are from the pool set unless specified otherwise. Based on the IG criterion, the acquisition rule is
\begin{equation}
\mX_B = \argmin \entropy \left( \hat{\tau}(\va_I, \mZ_I) \mid \rvy_{\mX_B}, \gD_T \right) = \argmax \mi(\hat{\tau}(\va_I,\mZ_I); \rvy_{\mX_B}|\gD_T).
\label{eq:IG_acqusition}
\end{equation}
Since we employ a GP framework, this quantity can be expressed in closed form using the posterior covariance matrix of the predictive distribution. Namely, for Gaussian-distributed quantities, entropy is given by: $\entropy(\gN(0, \mSigma)) = \frac{1}{2} \log |(2\pi e) \mSigma|$. Thus, the mutual information simplifies to:
\begin{equation}
\small
\mX_B = \argmax \frac{1}{2}\log \left(\frac{\det\left(\Var[\hat{\tau}(\va_I,\mZ_I)|\gD_T]\right)}{\det\left(\Var[\hat{\tau}(\va_I,\mZ_I)|\gD_T,\rvy_{\mX_B}]\right)}\right) = \argmin \det\left(\Var[\hat{\tau}(\va_I,\mZ_I)|\gD_T,\rvy_{\mX_B}]\right),
\end{equation}
where $\det(\cdot)$ represents the determinant of a matrix. Note that although the notation includes $\rvy_{\mX_B}$, the denominator's covariance matrix does not actually depend on the specific values of outputs, only on the input locations $\mX_B$. This is a standard property of the Gaussian process.

\textbf{Data Acquisitions via TVR.} The other measure of prediction uncertainty is the total variance, which is defined as the sum of the marginal variances over the target set $\sum_{(a,\vz)\in(\va_I,\mZ_I)}\Var[\hat{\tau}(a,\vz)]$. Therefore, we can have the TVR strategy as our data acquisition function as follows.
\begin{equation}
\mX_B = \argmin \Tr\left(\Var[\hat{\tau}(\va_I,\mZ_I)|\gD_T, \rvy_{\mX_B}]\right).
\end{equation}
Here, we define $U_{\IG}(\mX_B) = \mi(\hat{\tau}(\va_I,\mZ_I); \rvy_{\mX_B}|\gD_T)$ as the IG-based label utility function of $\mX_B$. Similarly, the corresponding utility function for the TVR-based approach is given by $U_{\TVR}(\mX_B) = -\Tr\left( \mQ_{(\va_I,\mZ_I)}\right)|\gD_T, \rvy_{\mX_B}$, representing the TVR-based label utility function of $\mX_B$. Based on these utility functions, we can express the unified acquisition rule as $\mX_B = \argmax U(\mX_B)$.

\textbf{Batch Selection.} To enable efficient data acquisition, we adopt batch-wise selection, choosing $n_b$ data points at a time. A simple method involves using the utility function to rank all data points in $\gD_P$ and selecting the top $n_b$. Recent studies \citep{gentile2024fast} emphasize the importance of ensuring diversity within batches for greater efficiency. This can be achieved through a greedy approximation, where the selection of each data point $\vx_i$ considers the previously selected points $\{\vx_j\}_{j=1}^{i-1}$. This can be formalized as maximizing a batch utility function $U(\mX_b)$, which values the joint contribution of points in the batch. As finding the optimal batch $\mX_b^* = \argmax_{|\mX_b|=n_b} U(\mX_b)$ is computationally intractable, we employ a greedy approximation. This approach sequentially constructs the batch by iteratively adding the point that provides the highest marginal utility gain. Specifically, given the set of already selected points $\mX_{i-1}^*$, the next point is chosen as:
\begin{equation}
\vx_i^* = \underset{\vx \in \gD_P \setminus \mX_{i-1}^*}{\argmax} \ U(\mX_{i-1}^* \cup \{\vx\})
\label{eq:greedy_submodular}
\end{equation}
Following this greedy selection, we can enforce the diversity and informativeness of data points in one batch. The overall procedures for active CATE estimation is shown in Alg.~\ref{alg:AL_CATE_framework}. Moreover, softmax-BALD, introduced in~\citep{kirsch2023stochastic} through importance-weighted sampling over the pool dataset, was also adopted in CausalBALD, more details and results are provided in App.~\ref{app_subsubsec:results_simulation}.

\begin{figure*}[t]
    \centering   
    \begin{minipage}{0.24\linewidth}
        \centering
        \includegraphics[width=\linewidth]{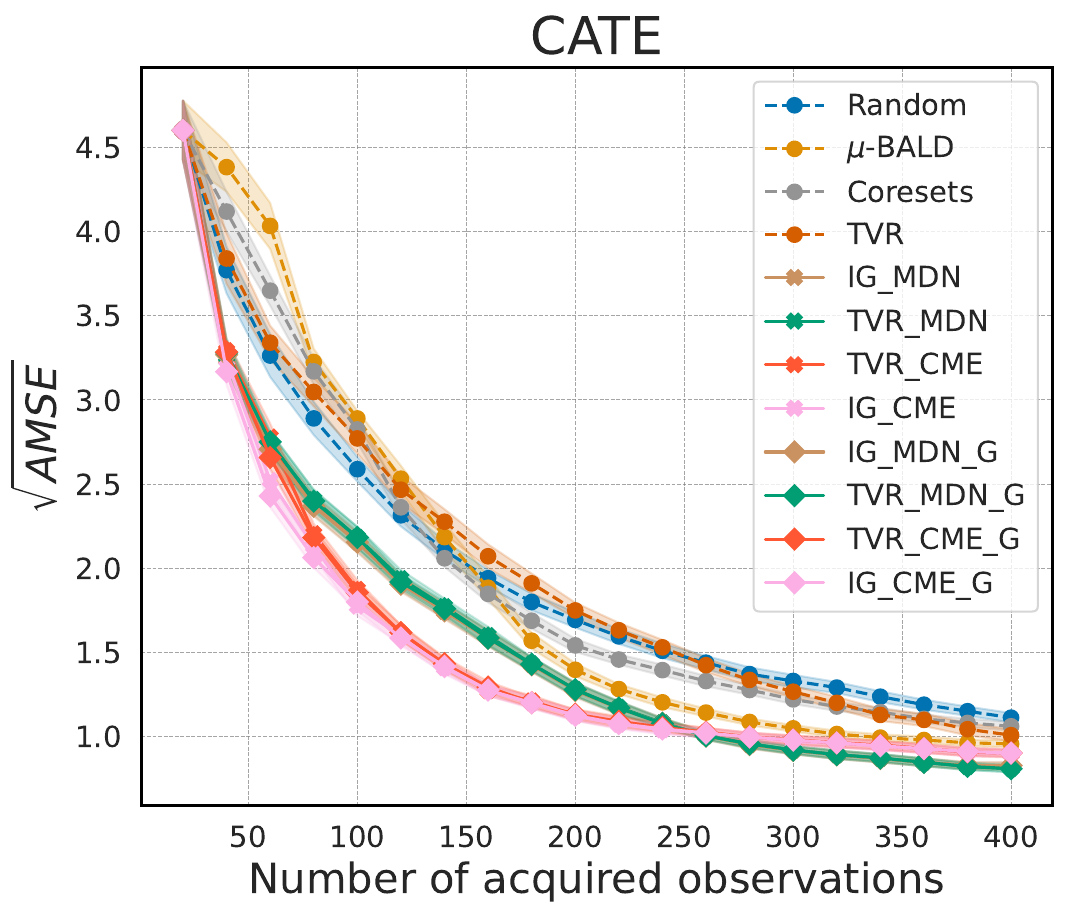}
    \end{minipage}
    \begin{minipage}{0.24\linewidth}
        \centering
        \includegraphics[width=\linewidth]{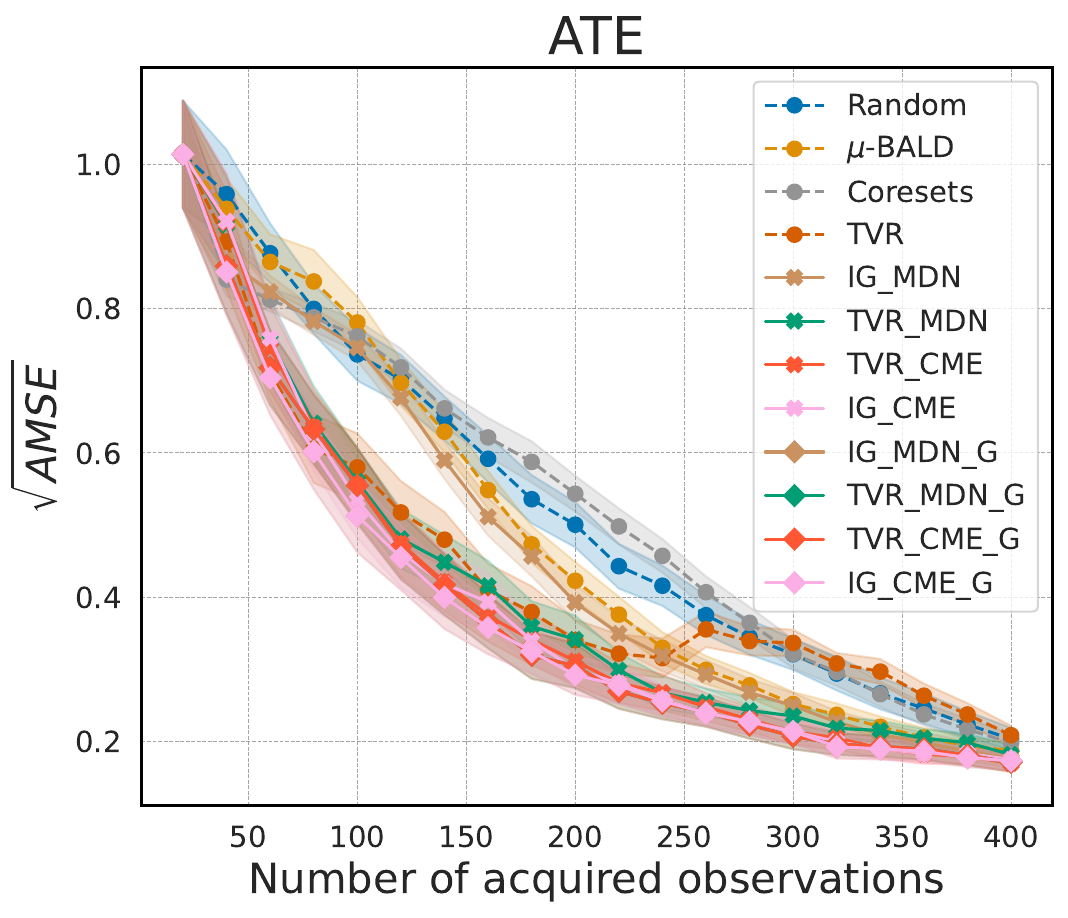}
    \end{minipage}
    \begin{minipage}{0.24\linewidth}
        \centering
        \includegraphics[width=\linewidth]{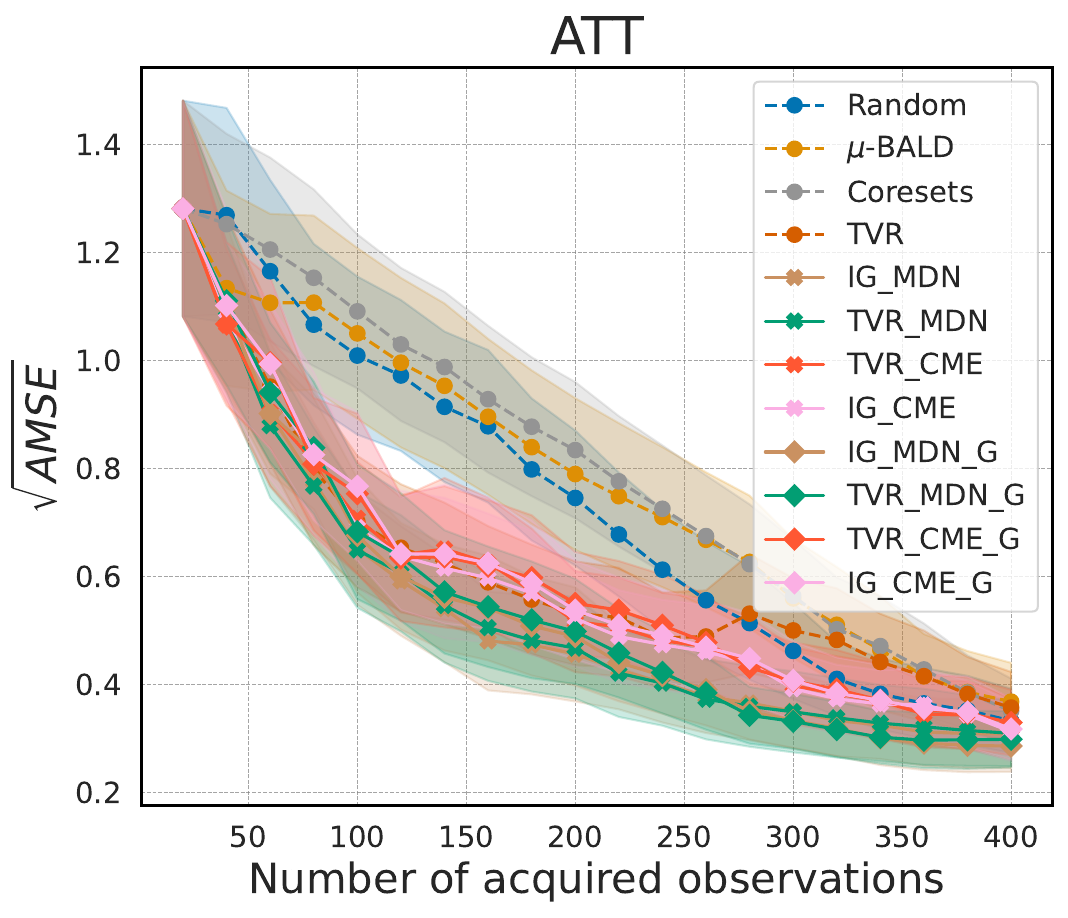}
    \end{minipage}
    \begin{minipage}{0.24\linewidth}
        \centering
        \includegraphics[width=\linewidth]{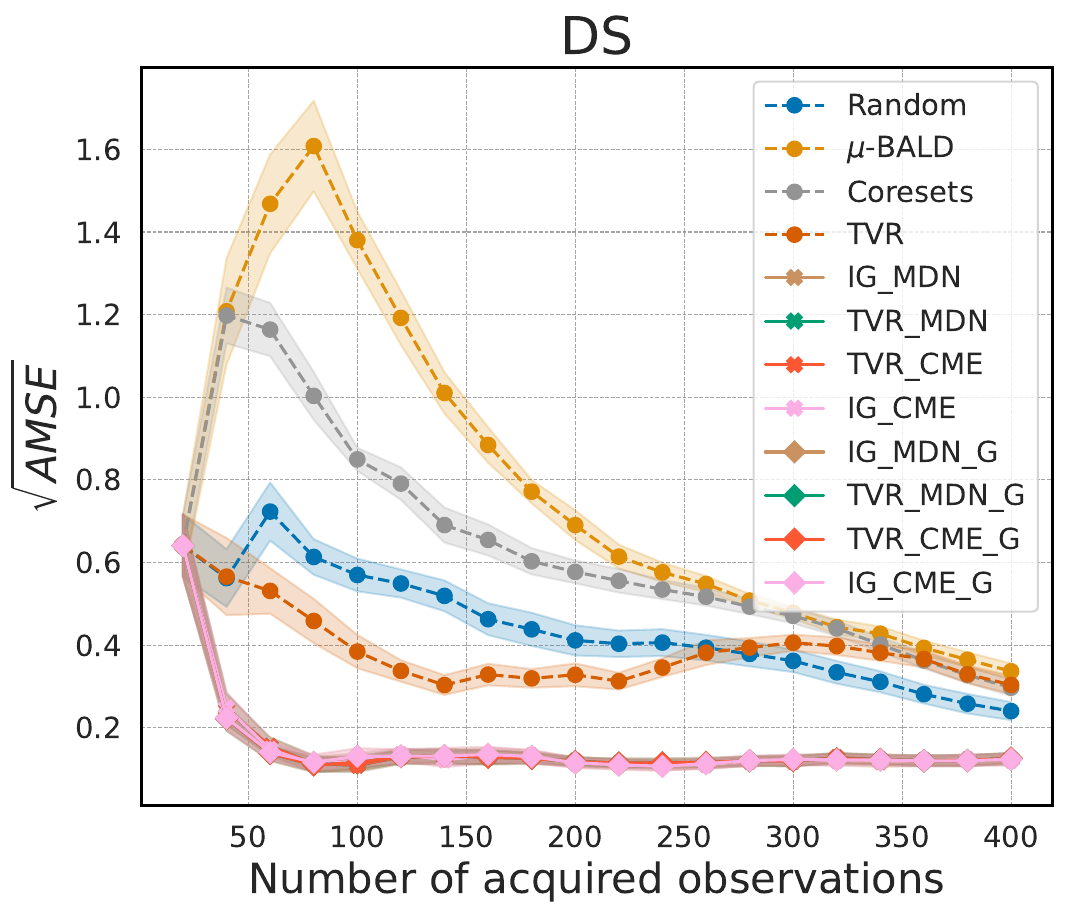}
    \end{minipage}

    \caption{Comparison of $\sqrt{\text{AMSE}}$ on simulation datasets (shaded: standard error). Baselines: Random, $\mu$-BALD, Coresets and TVR. Ours: "G" for greedy, others for top-$b$ acquisition.}
    \label{fig:simulation_results}
\vspace{-10pt}
\end{figure*}

\subsection{Uncertainty Decay Analysis}
\label{subsec:uncertainty_decay}

In this subsection, we analyze the convergence of posterior uncertainty when using the proposed data acquisition function with the CQ estimator. We begin by introducing the following assumption.
\begin{assumption}
    The utility function $U$ is sub-modular over the pool dataset $\mX_P$.
\label{ass:submodular}
\end{assumption}
Note that differential entropy often violates this assumption, while our utility function, information gain, satisfies it under mild conditions in the GP framework~\citep{krause2008near, srinivas2012information}. More justifications of this assumption is provided in App.~\ref{app_ass:justification}. We then define two key quantities. 
\begin{definition}
The maximum information gain about $(\va_T,\mZ_T)$ from $n_B$ observations in $\gD_P$, and the irreducible uncertainty is defined as the variance of $\hat{\tau}(a,\vz)$ given full knowledge from $\gD_P$, are:
\begin{equation}
\gamma_{n_B} \overset{\textup{def}}{=} \max_{|\mX| \leq n_B} \mi\big(\hat{\tau}(\va_T, \mZ_T); \rvy_{\mX}\big),
\quad
\eta^2_{\gD_P}(a, \vz) \overset{\textup{def}}{=} \Var\big[\hat{\tau}(a, \vz) \mid \gD_P\big].
\end{equation}
\vspace{-18pt}
\end{definition}
Finally, we bound the marginal variance of the treatment effect estimator under a given acquisition strategy, showing it is controlled by both the irreducible uncertainty and the information gain:
\begin{theorem} Let Ass.~\ref{ass:submodular} hold, and suppose the data acquisition follows utility $U$ (e.g. either the proposed IG or TVR strategy). Let $n_B$ represent the total number of individuals with observed outcomes that are acquired from $\gD_P$. Under these conditions, there exists a constant $C > 0$ such that for any $n_B \geq 1$ and for each pair $(a, \vz) \in (\va_I, \mZ_I)$, the marginal variance is bounded as:
\begin{equation}
\Var[\hat{\tau}(a, \vz)] \leq \eta^2_{\gD_P}(a, \vz) + C\, (\gamma_{n_B} / \sqrt{n_B}).
\end{equation}
\label{thm:bound_marginal_variance}
\vspace{-18pt}
\end{theorem}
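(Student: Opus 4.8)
The plan is to treat $\hat{\tau}(a,\vz)$ as a Gaussian process in its own right --- which is precisely what the integral representation and Prop.~\ref{prop:cate} establish, since $\hat{\tau}$ is a bounded linear functional of the posterior GP $\rf$. For any GP the posterior variance after an acquisition depends only on the \emph{input locations} $\mX_B$ and not on the realized labels $\rvy_{\mX_B}$ (the property already exploited in Eq.~\ref{eq:IG_acqusition}), so I would work with the deterministic set function $\sigma^2_{\mX_B}(a,\vz):=\Var[\hat{\tau}(a,\vz)\mid\gD_T,\rvy_{\mX_B}]$ of $\mX_B\subseteq\gD_P$. The first structural fact to record is monotonicity of conditioning: observing a superset of outcomes can only shrink posterior variance, and since $\mX_B\subseteq\gD_P$ we get $\sigma^2_{\mX_B}(a,\vz)\ge\Var[\hat{\tau}(a,\vz)\mid\gD_P]=\eta^2_{\gD_P}(a,\vz)$. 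Hence the object to control is the nonnegative \emph{excess} $\delta(a,\vz):=\sigma^2_{\mX_B}(a,\vz)-\eta^2_{\gD_P}(a,\vz)$, the variance that labeling the remainder of the pool would still remove.

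Next I would convert this excess into an information-theoretic quantity. For the jointly Gaussian target, the residual (conditional) information gain satisfies
\[
\mi\big(\hat{\tau}(a,\vz);\rvy_{\gD_P\setminus\mX_B}\mid\gD_T,\rvy_{\mX_B}\big)=\frac{1}{2}\log\frac{\sigma^2_{\mX_B}(a,\vz)}{\eta^2_{\gD_P}(a,\vz)},
\]
so $\sigma^2_{\mX_B}(a,\vz)=\eta^2_{\gD_P}(a,\vz)\,e^{2 I_{\mathrm{res}}}$ with $I_{\mathrm{res}}$ this residual gain. Using $e^{t}-1\le t\,e^{t}$ for $t\ge0$ and the fact that $\sigma^2_{\mX_B}(a,\vz)$ is bounded by the (finite) prior variance of $\hat{\tau}$ --- finite because the product kernel and the effective CME kernel of Prop.~\ref{prop:cate} are bounded --- the excess is controlled by a constant multiple of the residual information gain, $\delta(a,\vz)\le C'\,I_{\mathrm{res}}$. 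It therefore suffices to show that the information about the target still carried by the unlabeled pool after $n_B$ greedy acquisitions decays at rate $\gamma_{n_B}/\sqrt{n_B}$.

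This is where Ass.~\ref{ass:submodular} and the greedy rule of Eq.~\ref{eq:greedy_submodular} enter. By submodularity the marginal gains $g_i=U(\mX_i)-U(\mX_{i-1})$ are non-increasing, and greedy near-optimality (Nemhauser-type, in the GP setting of \citep{krause2008near,srinivas2012information}) guarantees $U(\mX_{n_B})\ge(1-1/e)\gamma_{n_B}$. The standard GP identity bounds the sum of predictive variances at the selected points by a constant multiple of the acquired information gain, $\sum_{i}\sigma^2_{i-1}(\vx_i)=O(\gamma_{n_B})$; a Cauchy--Schwarz step over the $n_B$ rounds (exactly as in the simple-regret analysis of GP-UCB) then converts this $\ell^2$ control into an averaging bound, so the smallest per-round reduction, and thereby the residual $I_{\mathrm{res}}$, is $O(\gamma_{n_B}/\sqrt{n_B})$. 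Combined with the previous paragraph this yields $\delta(a,\vz)\le C\,\gamma_{n_B}/\sqrt{n_B}$ for a constant $C$ depending only on the kernel bound and the noise level $\sigma^2$, which is the claim.

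The main obstacle I anticipate is the \emph{transfer} between two distinct objects: each acquisition supplies a noisy observation of the \emph{outcome} $\ry$ (equivalently of $\rf$), whereas the bound concerns the variance of the derived functional $\hat{\tau}$ evaluated on the target pairs $(\va_I,\mZ_I)$, and $\gamma_{n_B}$ is stated in terms of $(\va_T,\mZ_T)$. Making the variance/information-gain identities line up requires a deconditioning-style argument (in the spirit of Sec.~\ref{subsec:final_estimator} and \citep{chau2021deconditional}) to push information about $\rf$ through the linear integral operator onto $\hat{\tau}$, together with care that the effective kernel of $\hat{\tau}$ inherits the boundedness needed for $C$ to be finite and independent of $n_B$. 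The second delicate point is guaranteeing that the residual is controlled \emph{uniformly} relative to the irreducible floor $\eta^2_{\gD_P}$ rather than relative to zero, which is where the submodularity of information gain in the batch GP setting --- isolated precisely by Ass.~\ref{ass:submodular} --- does the essential work.
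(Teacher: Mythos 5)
Your first half is sound and essentially parallels the paper's argument: by monotonicity of Gaussian conditioning the posterior variance sits above the floor $\eta^2_{\gD_P}(a,\vz)$, and the excess can be bounded by a constant multiple of a residual information gain (your $e^{t}-1\le te^{t}$ step is the same device as Lemma C.17 of \citet{hubotter2024transductive}, which the paper invokes). The gap is in the second half, where you claim the residual information $I_{\mathrm{res}}=\mi\big(\hat{\tau}(a,\vz);\rvy_{\gD_P\setminus\mX_B}\mid\gD_T,\rvy_{\mX_B}\big)$ is $O(\gamma_{n_B}/\sqrt{n_B})$ via the GP-UCB-style bound $\sum_i\sigma^2_{i-1}(\vx_i)=O(\gamma_{n_B})$ plus Cauchy--Schwarz. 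That machinery controls cumulative quantities at the \emph{acquired} points; it does not bound the joint information still carried by the \emph{unacquired} remainder of the pool. What submodularity actually gives you after $n_B$ greedy steps is that any \emph{single} remaining point has marginal gain at most $\Gamma_{n_B}\lesssim\gamma_{n_B}/n_B$, so the joint residual is only bounded by $|\gD_P\setminus\mX_B|\cdot\Gamma_{n_B}$ --- a bound that carries a factor of the pool size $n_P$ and does not yield the stated rate. Nothing in your Cauchy--Schwarz step removes that factor, and the near-optimality constant $(1-1/e)$ you invoke does no work toward it either.

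The paper closes exactly this hole with the Approximate Markov Boundary construction (Def.~\ref{app_def:approx_markov_boundary}): for a tolerance $\epsilon$ there exists a subset $\mB_{n_T,\epsilon}(\bar{\vx})\subseteq\gD_P$ of size $r$ (with $r$ determined by $\gamma_r/r\le\epsilon\lambda_{\min}(\mK_{\mX_P\mX_P})/(2n_P\sigma_I^2\tilde{\sigma}_I^2)$, via a law-of-total-variance split into reducible and irreducible parts and an eigenvalue bound on the reducible piece) whose labels alone bring the variance within $\epsilon$ of the floor. One then only needs to bound the information of that \emph{small} set, $\mi(\upsilon_{\bar{\vx}};\vy_{\mB})\le C_1\,|\mB|\,\Gamma_{n_B}$, and choosing $\epsilon\propto\gamma_{\sqrt{n_B}}/\sqrt{n_B}$ makes $|\mB|\lesssim\sqrt{n_B}$, giving $\sqrt{n_B}\cdot\gamma_{n_B}/n_B=\gamma_{n_B}/\sqrt{n_B}$. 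Your proposal has no analogue of this cardinality control, and without it the argument does not reach the claimed rate. (The transfer issue you flag at the end --- information about $\rf$ versus information about the functional $\hat{\tau}$ --- is real, but the paper itself only handles it informally via a footnote, so I would not count that against you; the missing Markov-boundary step is the substantive defect.)
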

The convergence analysis proof of our proposed acquisition strategy is presented in App.~\ref{app_sec:convergence}, mainly building on the transductive active learning framework introduced by~\citep{hubotter2024transductive}.
\section{Experimental Results}
\label{sec:experiments}

We validate our approaches for active CQ estimation on multiple simulations as well as the semi-synthetic IHDP~\citep{louizos2017causal} and Lalonde~\citep{lalonde1986evaluating} datasets.

\textbf{Baselines, Implementations and Metrics.} To evaluate the effectiveness of our proposed framework, we compare it with various baseline acquisition strategies, including random selection, total variance reduction, $\mu$-BALD, QHTE, which is based on the core-set method. To ensure a fair comparison across all settings and methods, we use a GP to approximate the regression function, and either a Mixture Density Network (MDN)~\citep{bishop2006pattern} or CME for the conditional distribution estimation. Detailed implementations as well as the hyper-parameters of the baseline methods and our proposed methods, are provided in App.~\ref{app_subsec:implementation}. We evaluate the performance of methods using the Average Mean Squared Error (AMSE), which quantifies the accuracy of the estimated CQ compared to the true CQ. Note that the uncertainty reduction criterion is not well-suited for batch active learning. A detailed explanation is provided in App.~\ref{app_subsubsec:uncertainty_criteria}. All results are reported as the mean $\pm$ standard deviation, computed over $20$ independent random test set runs for each configuration.

\textbf{Synthetic Data Analysis.} Limited access to counterfactual data often necessitates the use of synthetic or semi-synthetic datasets for evaluating treatment effect estimation methods~\citep{bica2020estimating, gao2024a}. We design two simulation settings, each involving a single conditioning variable. The first includes two adjustment variables to facilitate visualization, while the second uses four adjustment variables for numerical evaluation. For tasks beyond CATE estimation, the conditioning variable is treated as part of the adjustment set and is not explicitly illustrated. All simulation datasets are generated using a predefined process, where treatment assignment is influenced by covariates, following a data generation process similar to that in~\citep{abrevaya2015estimating, singh2024kernel}. We define two types of target treatments for all CQs: one with a fixed treatment value and another considering all possible treatment values, as described in Sec.~\ref{subsec:specifying_subpopulation}. For clarity, we present results for the second case in the main paper, while comprehensive results for all combinations of these settings, both for binary and continuous treatments, are available in App.~\ref{app_subsubsec:results_simulation}.

\textbf{Results.} Fig.~\ref{fig:simulation_results} presents the results for CATE, ATE, ATT, and DS on the simulation dataset. Our proposed methods consistently achieve the best performance by prioritizing data that aligns with the target distribution of interest. The corresponding sampling results are shown in Fig.~\ref{fig:visualization_results}, from which we can see that our proposed method can acquire observations aligned well with the target distribution. Moreover, for CATE, we can see that TVR with CME consistently outperforms MC sampling-based methods in both cases, as CME directly operates on features relevant for the GP regression task, making it more prediction-oriented and efficient compared to estimating a conditional densities. More detailed explanations on the advantages of CME in our AL setup are provided in App.~\ref{app_subsec:advantage_CME}. For ATE, all methods, including the baselines, sample from the entire population, leading to similar performance among the uncertainty-aware methods, all of which outperform random acquisition. However, IG-based methods may suffer from reduced accuracy when calculating the determinant of the variance matrix as its size increases, potentially leading to suboptimal performance. We also observe that in the ATE with DS case, all our proposed methods significantly outperform the baseline methods, due to the distribution shift between the target and sampling distributions. Additional ablation results on the stability of our methods, considering factors such as \textit{different starting points, pool dataset sizes, batch sizes, and kernel choices,} are provided in App.~\ref{app_subsubsec:results_ablation}.

\begin{figure*}[t]
    \centering
    \begin{minipage}{0.19\linewidth}
        \centering
        \includegraphics[width=\linewidth]{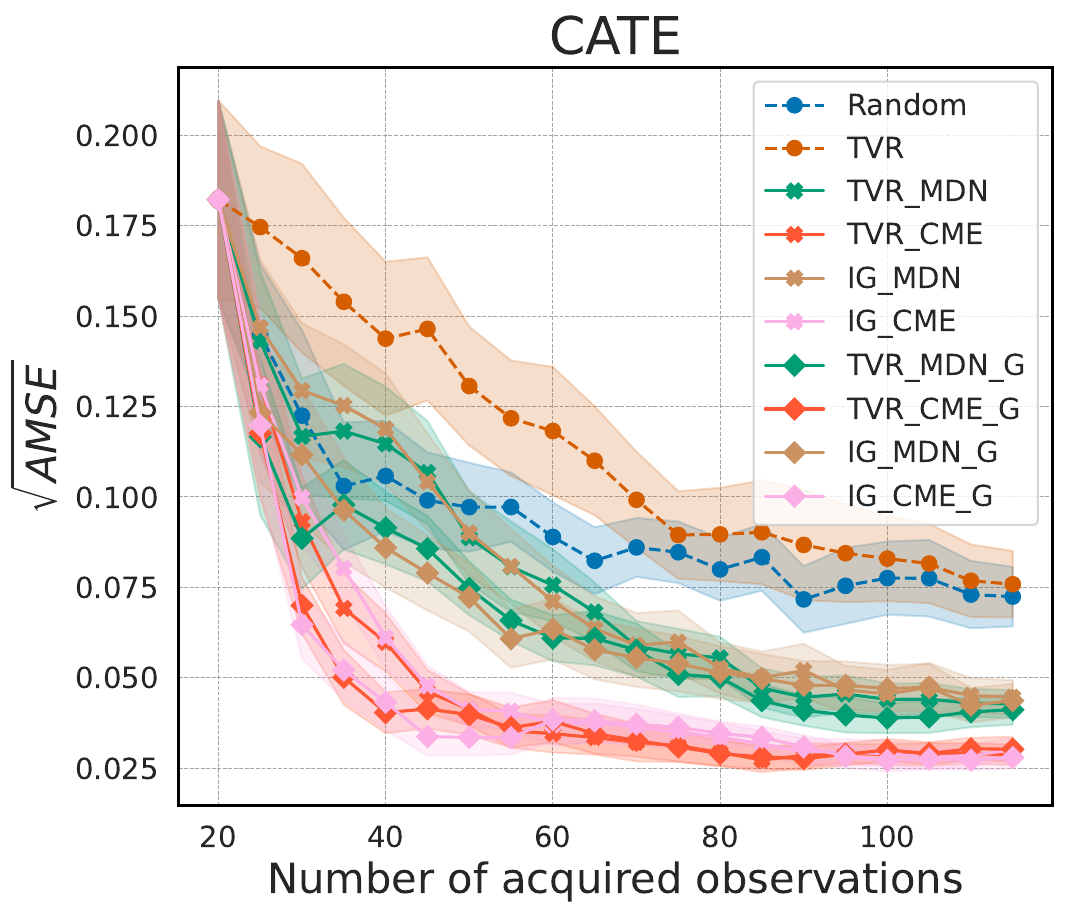}
    \end{minipage}
    \begin{minipage}{0.19\linewidth}
        \centering
        \includegraphics[width=\linewidth]{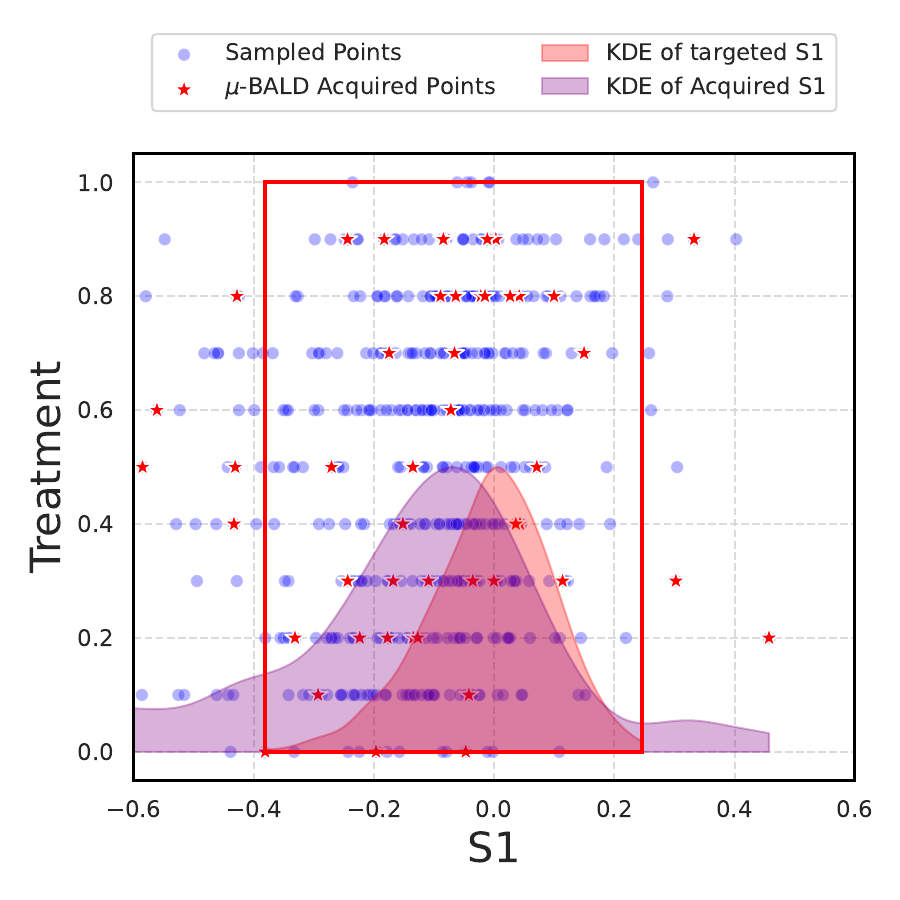}
    \end{minipage}
    \begin{minipage}{0.19\linewidth}
        \centering
        \includegraphics[width=\linewidth]{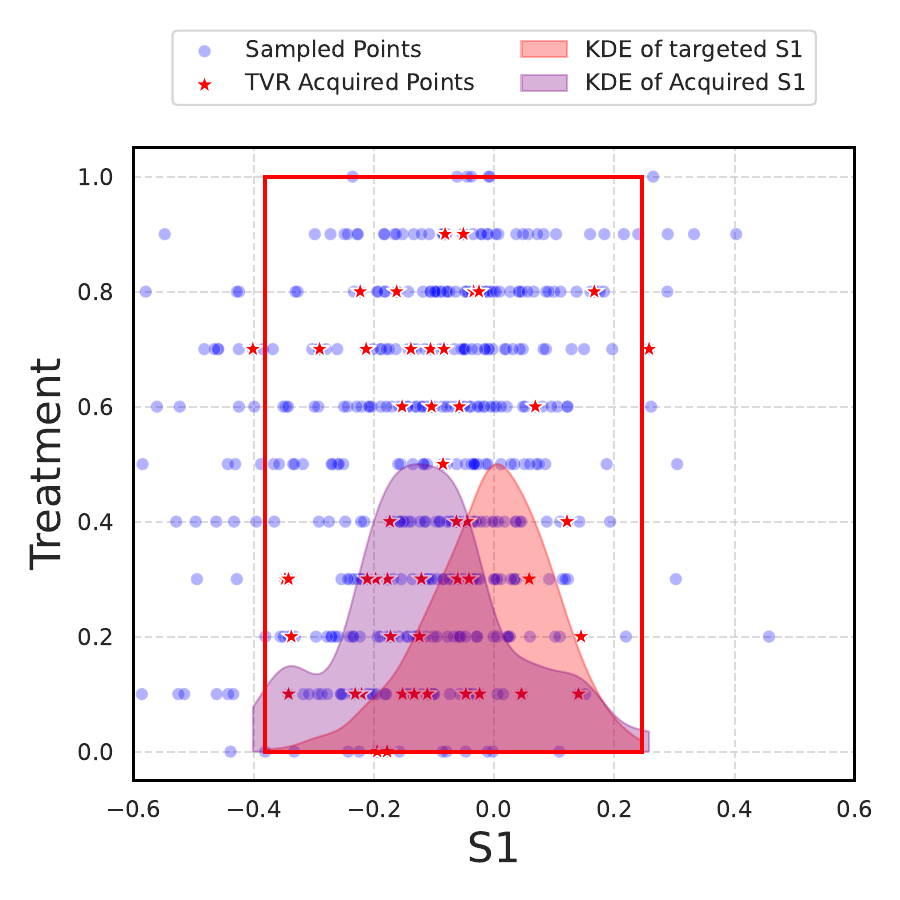}
    \end{minipage}
    \begin{minipage}{0.19\linewidth}
        \centering
        \includegraphics[width=\linewidth]{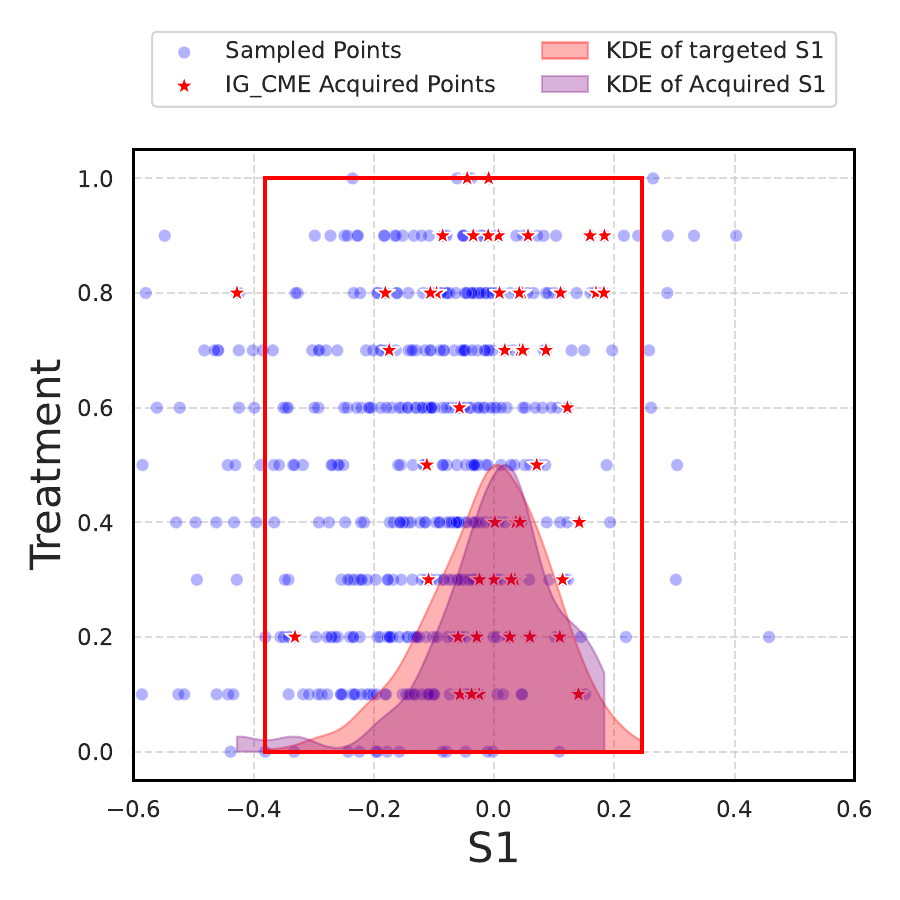}
    \end{minipage}
    \begin{minipage}{0.19\linewidth}
        \centering
        \includegraphics[width=\linewidth]{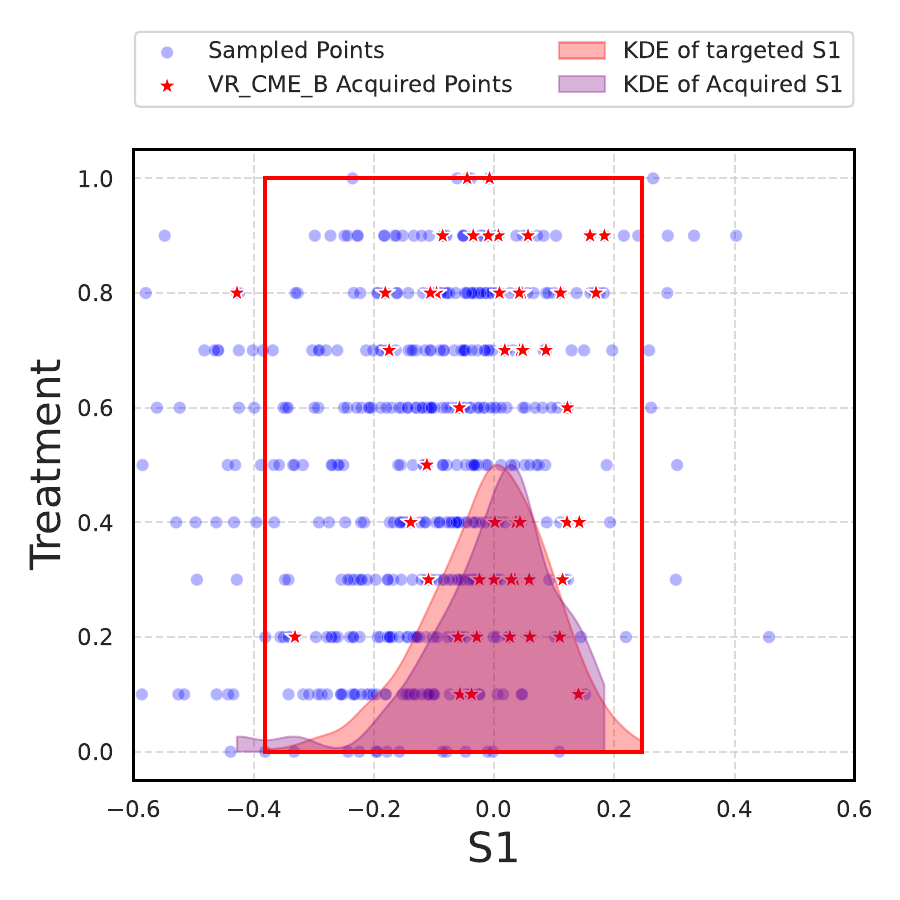}
    \end{minipage}
    \caption{From left to right: (1) $\sqrt{\text{AMSE}}$ (shaded: standard error) on visualization datasets. (2–5) Acquired data points visualized for some methods; full figures in App.~\ref{app_subsec:visualization}.}
    \label{fig:visualization_results}
\vspace{-15pt}
\end{figure*}

\textbf{Running Time Analysis.} In Fig.\ref{fig:running_time}, we present the running time for all methods on the CATE estimation task. While baseline methods that use naive top-$b$ selection per acquisition round are slightly faster, our approaches incur additional cost due to learning conditional distributions. The overall runtime is influenced by three main factors: (1) greedy acquisition, which selects points sequentially and triggers frequent posterior updates; (2) the pool size, which affects the scale of utility evaluation; and (3) the entropy computation in IG-based strategies. As shown in Fig.\ref{fig:running_time}(b), smaller batch sizes intensify the cost of greedy selection due to repeated posterior recalculations. For GP-based methods, covariate dimensionality has limited impact, as runtime is mainly driven by distance computations. Despite these overheads, all methods remain computationally feasible in our experiments. Further details on computational complexity are provided in App.~\ref{app_subsec:Computational_complexity}.

\textbf{Semi-synthetic Data Analysis.} For real-world case evaluation, we compare all algorithms on the widely used IHDP benchmark, which includes real covariates paired with simulated outcomes. This benchmark poses significant challenges due to its small size, imbalance, and limited overlap in covariate distributions. Notably, the two potential outcome functions, while supported on the same covariates, have distinct functional forms, making the estimation of their difference particularly difficult. Our results show that for CATE and DS scenarios, where substantial distributional shifts exist between the target distribution and the Pool data distribution. Our proposed methods, especially those leveraging the CME framework, consistently outperform other approaches. A comprehensive description of the dataset, additional experimental results under various setups, and detailed discussions are provided in the App.~\ref{app_subsubsec:results_IHDP}. We additionally evaluate on the Lalonde dataset; further results and implementation details appear in App.~\ref{app:lalonde}.

\section{Related Work}

\begin{figure*}[t]
\vspace{-1em}
\begin{center}
\minipage{0.5\textwidth}
    \subfloat{
        \includegraphics[height=0.128\textheight]{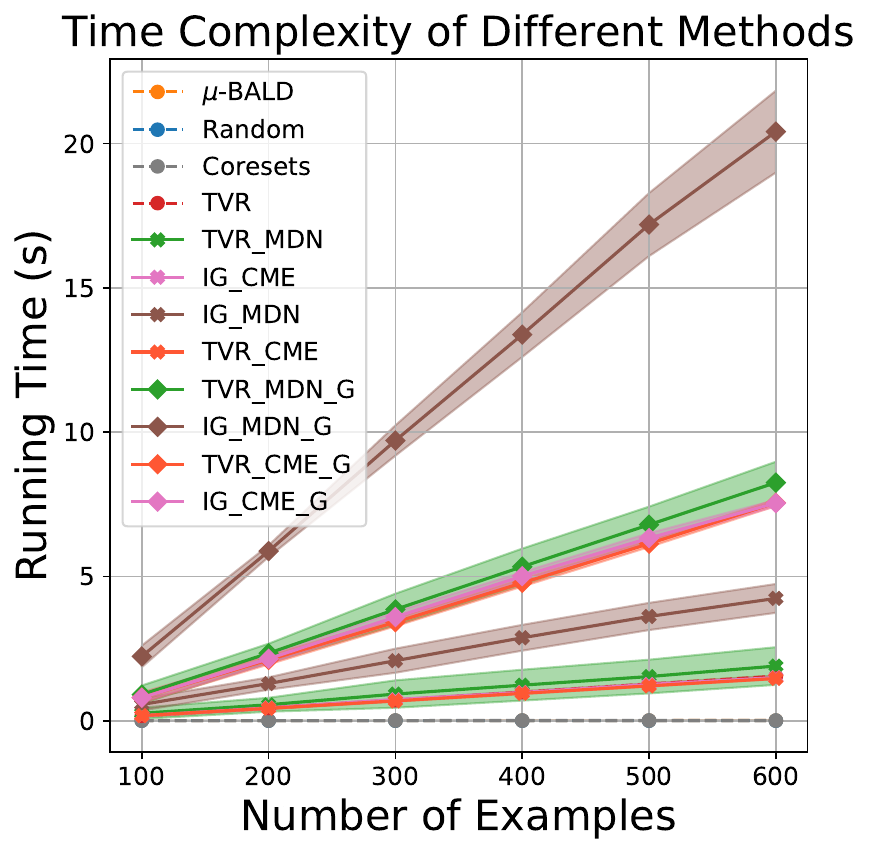}
    }
    \subfloat{
        \includegraphics[height=0.128\textheight]{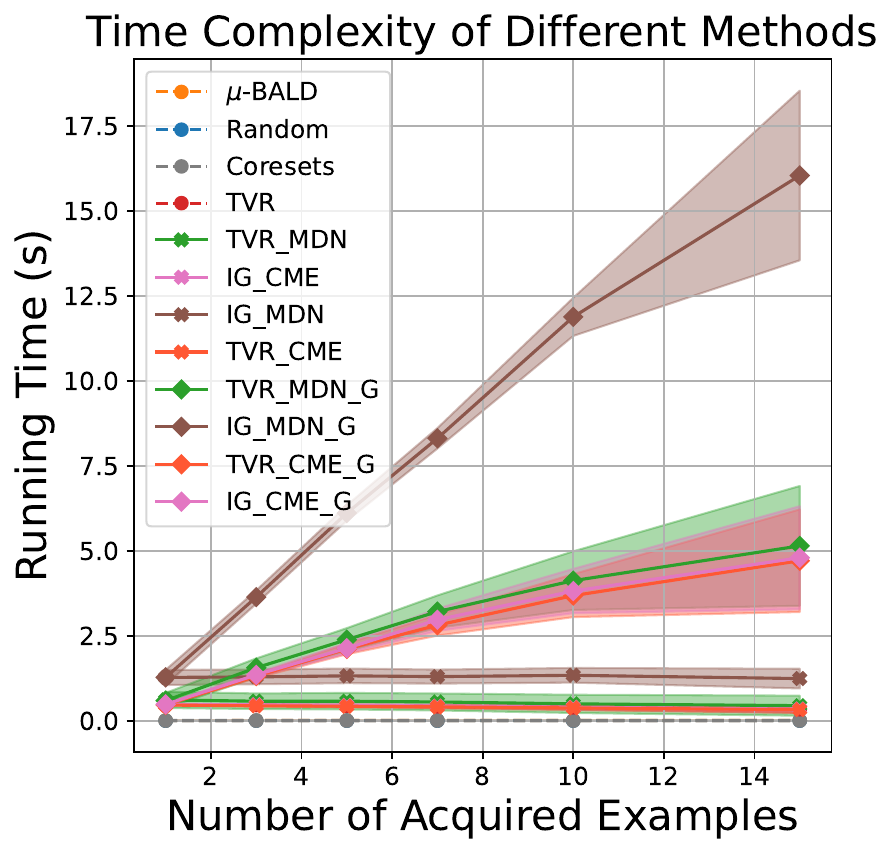}
    }
    \caption{Running time comparisons.}
    \label{fig:running_time}
\endminipage\hfill
\minipage{0.5\textwidth}
    \subfloat{
        \includegraphics[width=0.5\linewidth]{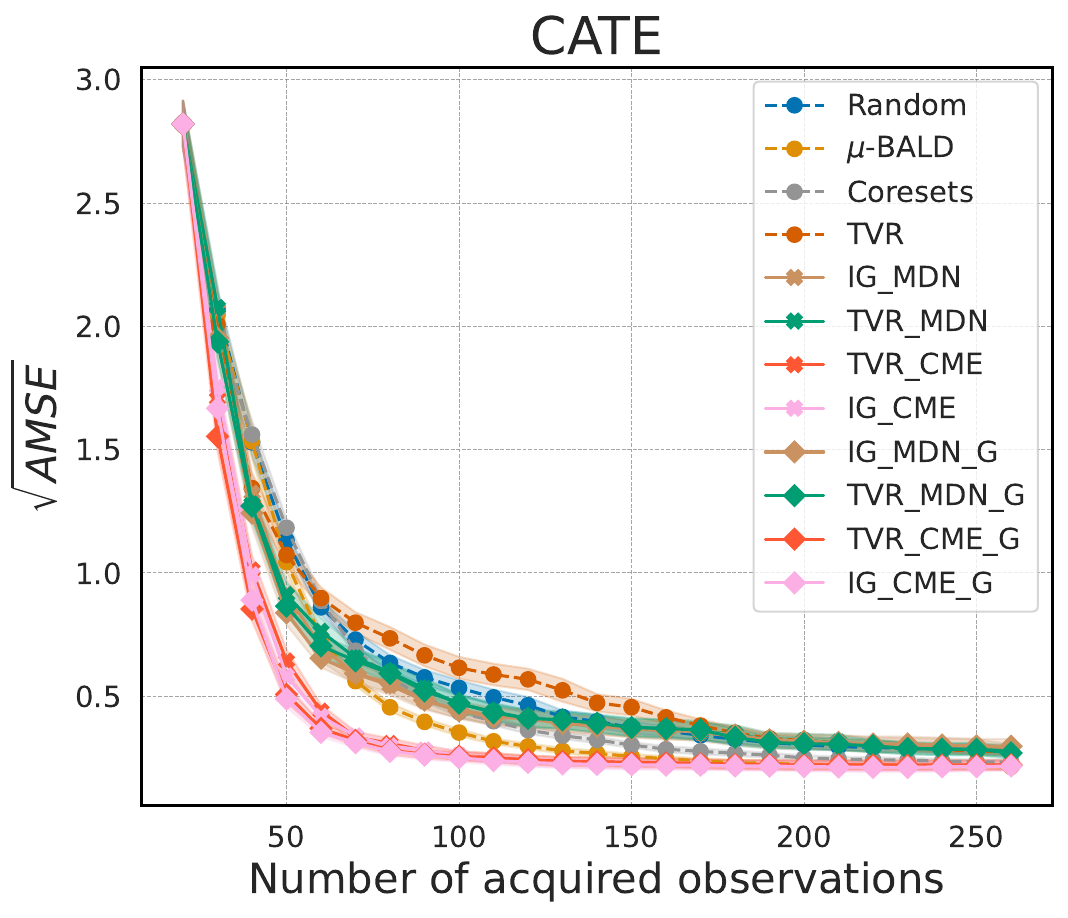}
    }
    \subfloat{
        \includegraphics[width=0.5\linewidth]{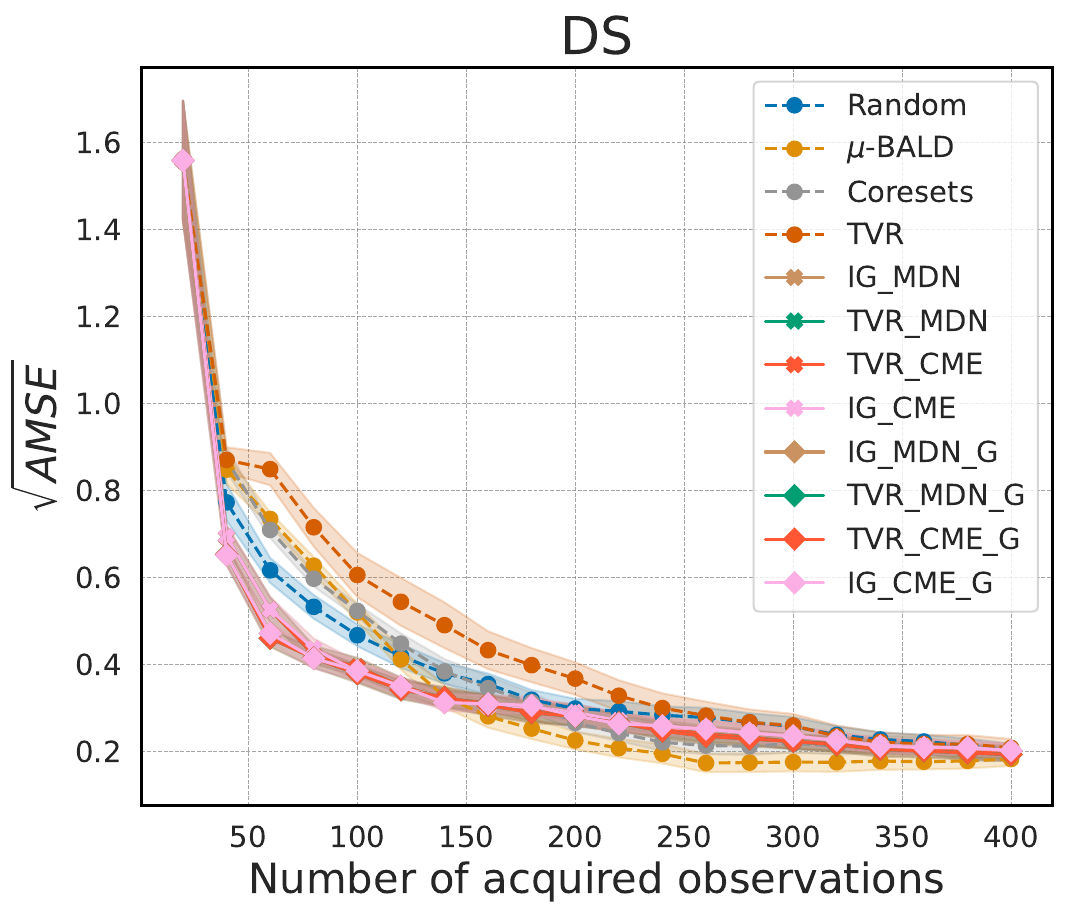}
    }
    \caption{Performance comparison on IHDP.}
    \label{fig:IHDP}
\endminipage
\end{center}
\vspace{-2em}
\end{figure*}

Previous work on active causal learning has mainly emphasized two aspects: (1) patient recruitment~\citep{deng2011active, song2023ace} and (2) selective acquisition of costly outcomes~\citep{jesson2021causal, wen2025progressive, gao2025causal} to reduce uncertainty in CATE estimation given all covariates. Our work follows the latter, assuming treatments are assigned but outcomes are expensive to obtain. In contrast to prior studies, we aim to efficiently estimate a wider range of causal quantities beyond ITE. A related line of research is active causal inference, exemplified by ABCI~\citep{toth2022active}, which leverages GP and information-theoretic criteria. However, it requires interventional data and is therefore not applicable to the observational setting we consider. This direction has since been extended in follow-up works~\citep{annadani2024amortized, zhou2024sample}. Another relevant stream employs kernel-based approaches to causal inference~\citep{sej2025an}, where conditional distributions are embedded in RKHS rather than explicitly estimated, yielding closed-form solutions for multiple causal quantities~\citep{singh2024kernel, mastouri2021proximal}. In addition, BayesIMP~\citep{chau2021bayesimp} addresses uncertainty quantification with a focus on data fusion. Further discussion of transductive active learning, active GP methods, and comparisons with ABCI is provided in App.~\ref{app_sec:more_related_works}.

\section{Conclusion}
\label{sec:conclusion}
In this paper, we introduced and formalized the task of \textit{Active Estimation of Causal Quantities (ActiveCQ)}, broadening the field's focus beyond the commonly studied CATE. We proposed a unified Bayesian framework for this new research direction that represents CQs as integrals, pairing a Gaussian Process for the regression function with conditional mean embeddings for the target distribution. This powerful combination allows for the model's adaptive refinement and enables the systematic derivation of uncertainty-aware utility functions; we demonstrated this by instantiating two strategies based on information gain and total variance reduction. Through simulations and a semi-synthetic dataset, we showed that our principled approach significantly outperforms traditional active learning baselines, including BALD and vanilla TVR. Our work lays a foundation for numerous future directions. While our GP-based instantiation inherits cubic complexity, the framework's modularity is a key strength, inviting future work that integrates scalable approximations like Sparse GPs or Nyström methods. Furthermore, our principled approach can be extended to handle more complex causal settings involving hidden confounders or instrumental variables.

\clearpage
\bibliography{reference}
\bibliographystyle{iclr2026_conference}

\newpage
\appendix
\onecolumn
\part{Appendix} 
\parttoc

\section{More related works}
\label{app_sec:more_related_works}

\subsection{Related works (continuing)}
Building on the idea of learning causal quantities in RKHS, several works have extended this approach to more complex settings. For instance, \citet{singh2019kernel} and \citet{mastouri2021proximal}, as well as~\citet{ren2024spectral}, address scenarios involving unmeasured confounders by leveraging instrumental variables and proxy variables, respectively. Additionally, \citet{zhu2022causal} focus on settings with measurement error in treatments, while \citet{singh2021sequential} explore applications in sequential data. Other advancements include kernel-based methods for causal hypothesis testing \citep{hu2024kernel} and modeling distributional treatment effects \citep{park2021conditional}. BayesIMP \citep{chau2021deconditional} introduced the mean process technique to incorporate uncertainty in causal estimation, particularly to handle dataset fusion problems. This work also investigated the uncertainty arising from learning the conditional distribution and proposed a method to integrate uncertainties across multiple stages. Subsequently, IMPspec \citep{dance2024spectral} introduced the spectral representation of Hilbert space to address the limitations of restricted nuclear-dominant kernels identified in BayesIMP. \citet{sej2025an} made an overview of using Kernel methods to deal with causal inference problem. Our framework has the potential to integrate these advanced techniques, enabling perhaps further improvements in performance. Technically, our proposed method for modeling different CQs using CME and GP is closely related to the conditional mean process~\citep{chau2021deconditional} and Conditional Bayesian Quadrature~\citep{chen2024conditional}.

\subsection{Transductive Active Learning}
Traditional active learning (TAL) implicitly assumes that the observational data, including both the training dataset and the pool dataset, represent the overall target (interested) distribution~\citep{holzmuller2023framework, kirsch2023blackbox}. Under this assumption, information gain-based methods naturally select the most informative points from the pool dataset, aligning effectively with the learning objective. Recent research, however, has begun addressing scenarios where the target distribution differs from the observational distribution, framing this challenge as a transductive active learning problem~\citep{hubotter2024transductive, smith2023prediction, kothawade2021similar}. \citet{gao2025causal} were the first to explicitly point out that BALD-based active causal estimation methods suffer from a fundamental objective mismatch: reducing model uncertainty or focusing only on observable factual outcomes does not effectively reduce the target uncertainty. To address this issue, they proposed the causal objective alignment principle. Our work builds on this principle, extending it to a broader class of causal queries CQs.  Our problem naturally aligns with TAL, enabling us to draw upon similar insights to address the challenges we face. The key distinction lies in the target quantities: while TAL typically focuses on optimizing the average performance across all data points within a distribution, our work emphasizes average metrics over specific subpopulations. However, if we shift our focus to point-specific performance metrics, such as the conditional individual treatment effect or the individual treatment effect, our framework can directly benefit from TAL insights. Furthermore, unlike previous approaches, our framework requires the estimation of conditional distributions. By representing and learning these distributions in an RKHS, we provide a more efficient and task-oriented solution. This feature-driven, task-specific approach to conditional distribution estimation not only enhances our framework but also introduces new perspectives for advancing transductive active learning research.

\subsection{In-depth Discussion of ABCI}

Here, we provide a detailed comparison with the Active Bayesian Causal Inference (ABCI) method proposed by \citep{toth2022active}, an active learning framework for integrated causal discovery and reasoning. ABCI is designed for settings where the causal structure is unknown and must be learned from interventional data, using information gain to select experiments and GPs to model causal mechanisms. While our work shares high-level themes with ABCI, such as the use of GPs and information-theoretic acquisition functions, the two frameworks diverge fundamentally in their problem settings, methodological approaches, and core contributions.

\paragraph{Divergent Problem Settings.} The most critical distinction lies in the problem formulation. ABCI operates in a setting where the causal graph is unknown; its primary goal is to actively select interventions to jointly learn this structure and its underlying mechanisms. The uncertainty it seeks to resolve pertains to both the causal graph and the functional relationships, which it addresses by acquiring new interventional data. In contrast, our framework assumes the causal structure is known and concentrates on the efficient estimation of causal quantities from observational data. Our approach does not generate new data through interventions. Instead, it strategically selects the most informative samples from a fixed, pre-existing pool of individuals with assigned treatments. This difference situates the two methods in parallel yet distinct lines of research. ABCI is concerned with causal discovery via intervention, whereas our work is tailored for causal estimation from observation. Consequently, ABCI is not directly applicable as a baseline in our setting. Even if adapted to assume a known graph, its core mechanism of acquiring data through new interventions is incompatible with a fixed observational context.

\paragraph{Distinct Methodological Contributions.} The methodological foundations of the two frameworks are also markedly different. Our work introduces a unified Bayesian framework for actively estimating a broad spectrum of CQs, a scope that extends beyond that of ABCI. A key technical innovation in our approach is the integration of Conditional Mean Embeddings with Gaussian Processes. This combination yields flexible and principled estimators capable of targeting diverse causal queries. This modeling strategy is entirely different from that of ABCI, which uses GPs to capture individual causal mechanisms within a DAG. Furthermore, we provide a formal performance guarantee by deriving a theoretical uncertainty decay rate for our acquisition function, a contribution not present in ABCI. Therefore, although both methods leverage GPs and information-theoretic ideas, these are superficial similarities. In our framework, these tools are applied to regression-based estimation from observational data to answer specific causal queries. In ABCI, they are used to facilitate structural learning. Ultimately, the methods pursue different objectives under incompatible assumptions and are not methodologically comparable.

\section{Further Preliminaries and Definitions}
\label{app_sec:further_preliminaries}

In this paper, upright Roman letters (e.g., $\ervv$) represent graph nodes and their associated random variables; calligraphic letters (e.g., $\gV$) denote measurable spaces; and italic letters (e.g., $\ervv = v$) indicate specific realizations. Bold letters (e.g., $\rvv$) refer to sets of nodes or random vectors, while bold italic letters (e.g., $\rvv = \vv$) are used for their realizations.

\subsection{DAG and Causal Graphs}
\label{app_subsec:DAG_and_graph}
\textbf{Directed Acyclic Graph (DAG).} Let $\gG=(\rvv, \rve)$ represent a graph consisting of a set of nodes (variables) $\rvv=\{\ervv_1,\cdots,\ervv_p\}$, and a set of edges $\rve$. $\gG$ is called as DAG if it contains only directed edges $(\rightarrow)$ and no directed cycles in $\gG$. Let $p(\vv)$ be an observational density over $\rvv$, which is said to be \emph{Markov compatible} with a DAG $\gG=(\rvv, \rve)$ if it factorizes as $p(v_i)=\prod_{\ervv_i\in \rvv} p(v_i|\text{pa}(v_i,\gG))$, where $\text{pa}(v_i,\gG)$ includes the value fo the parental nodes of $\ervv_i$ in $\gG$. We also assume the \emph{positivity} condition, meaning we restrict our attention to distributions where $p(\vv)>0$ for all valid values of $\rvv$. 

\textbf{Causal DAGs.} Causality is designed to describe physical processes in the real world, and DAGs serve as a powerful framework for formally representing this concept~\citep{pearl2009causality}. When $\gG$ is a DAG, it is considered a \textit{causal DAG} if every directed edge $\ervv_i \rightarrow \ervv_j$ represents a direct causal effect of $\ervv_i$ on $\ervv_j$. Let $\rva \subseteq \rvv$ be a node set in a causal DAG $\gG$. The intervention $\text{do}(\rva=\va)$~\footnote{In the main paper, we restrict the treatment variable to a single dimension, whereas in the appendix, this constraint is relaxed to multiple variables.}, or $\text{do}(\va)$ for short, denote and outside intervention that sets $\rvz$ to fixed values $\va$. An interventional density $p(\vv|\text{do}(\va))$ is a density resulting from such an intervention. Let $\gP^*$ represent the set of all interventional densities $p(\vv|\text{do}(\va))$. We say that a graph $\gG = (\rvv, \rve)$ is compatible with $\gP^*$ if and only if, for every $p(\vv|\text{do}(\va)) \in \gP^*$, the following condition holds:
\begin{equation}
    p(\vv|\text{do}(\va)) = \prod_{\ervv_i\in \rvv \setminus \rvz} p(v_i|\text{pa}(v_i, \gG))\mathds{1}(\rva=\va). 
\end{equation}
We say that an interventional density $p(\vv|\text{do}(\va))$ is consistent with $\gG$ if it belongs to a set interventional densities $\gP^*$, where $\gG$ is compatible with $\gP^*$~\citep{pearl2009causality}. Moreover, any observational density that is Markov compatible with $\gG$ is also consistent with $\gG$~\citep{laplante2024conditional}.

We then present a set of standard assumptions to establish the identifiability of the causal quantities discussed in the main paper, including the ATE, CATE, ATT, and ATEDS. It is important to note that the assumptions provided here are sufficient but not necessary conditions for identifying these causal quantities, with a primary focus on scenarios involving unmeasured confounders. Beyond these assumptions, alternative frameworks such as instrumental variable assumptions~\citep{xu2021learning}, front-door criteria~\citep{chau2021bayesimp}, and proxy variable approaches~\citep{mastouri2021proximal} can also enable identifiability in certain cases. However, these are beyond the current scope of our work. Extending our framework to accommodate such conditions remains an interesting direction for future research. Throughout this paper, we make the following assumptions:

\begin{assumption}[Causal Markov Compatibility]
The joint distribution of the variables is consistent with a causal DAG $\gG$, meaning:
\begin{equation}
    p(\vv | \text{do}(\va)) = \prod_{\ervv_i \in \rvv \setminus \rva} p(v_i | \text{pa}(v_i, \gG)) \mathds{1}(\rva = \va).
\end{equation}
Here, $\text{pa}(v_i, \gG)$ denotes the set of parent variables of $v_i$ in the graph $\gG$.
\end{assumption}

\begin{assumption}[Backdoor Criterion]
There exists a set of variables $\rvs \subseteq \rvv$ (the \textit{backdoor adjustment set}) such that $\rvs$ blocks all backdoor paths from the treatment $\rva$ to the outcome $\rvy$ in the graph $\gG$. Formally:
\begin{equation}
    \rva \indep \rvy \mid \rvz \quad \text{in the graph } \gG_{\text{remove edges } \rva \to \rvy}.
\end{equation}
This ensures that adjusting for $\rvs$ removes confounding bias in the causal effect of $\rva$ on $\rvy$.
\end{assumption}

\begin{assumption}[Positivity]
The treatment assignment is well-defined for all levels of the adjustment set $\rvs$. Formally:
\begin{equation}
    p(\rva = \va | \rvs = \vs) > 0 \quad \forall \va, \vs,
\end{equation}
where $\vs$ is a realization of the adjustment set $\rvs$. This assumption ensures that every treatment level $\va$ has a non-zero probability of being observed for all values of $\rvs$. Notice that for the CATE case, $\rvs=(\rvs,\rvz)$.
\end{assumption}

For the case of ATE with distribution shift, it is essential to impose constraints on the shifted distribution $\tilde{\sP}$ to ensure the identifiability of this causal quantity. In addition to the assumptions outlined previously, we further introduce the following assumption:
\begin{assumption}[Distribution shift~\citep{singh2024kernel}]
Assume the following:
\begin{enumerate}
    \item The joint distribution of $\rvy$ and $\rvs$ under the shifted distribution is given by:
    \begin{equation}
        \tilde{\sP}(\rvy, \rva, \rvs) = \sP(\rvy|\rva, \rvs) \tilde{\sP}(\rva,\rvs).
    \end{equation}
    \item The shifted marginal distribution $\tilde{\sP}(\rva, \rvs)$ is absolutely continuous with respect to the original distribution $\sP(\rva, \rvs)$.
\end{enumerate}
\end{assumption}
Under the above assumption, populations $\sP$ and $\tilde{\sP}$ differ only in the distribution of the treatment variable and covariates. Furthermore, the support of $\sP$ encompasses the support of $\tilde{\sP}$, ensuring that the treatment effect is well-defined across both populations. As a direct implication, the regression function remains invariant between $\sP$ and $\tilde{\sP}$, regardless of the distributional shift.

\paragraph{Difference from the PO Framework.}
The Potential Outcomes (PO) and causal graph frameworks provide distinct approaches for defining causal effects, differing in their foundational principles and particularly in their handling of continuous treatments. The PO framework defines causal effects via hypothetical counterfactuals, saying the potential outcomes under different treatment scenarios~\citep{ding2024first}. In contrast, the CG framework defines them directly from the structural relationships encoded in a graph, representing explicit causal assumptions~\citep{pearl2009causality}. While both frameworks align conceptually for binary treatments (e.g., ATE as the expected difference between treated and untreated groups), their approaches diverge for continuous treatments. The CG framework generalizes more naturally. Causal quantities like the ATE and CATE are consistently defined through interventions on the graph, corresponding directly to the Average Dose-Response Function (ADRF) and its conditional version, respectively~\citep{hernan2023causal}. The PO framework, however, must extend its definition to a function of treatment doses, often requiring additional modeling assumptions like smoothness or monotonicity to estimate the dose-response curve. In essence, the primary distinction is the representation of causality: PO models counterfactual outcomes explicitly, while CG models the causal mechanisms that generate them, offering a more unified approach for both binary and continuous interventions.

\textbf{ITE, CATE, and Our CATE.} We distinguish our definition of the CATE from two related concepts in the literature: the Individual Treatment Effect (ITE) and the commonly used definition of CATE. Our entire analysis is grounded in the structural causal model framework, using the $\DO$-operator to define causal quantities. A key aspect of our approach is that we focus on defining the conditional potential outcome under a specific treatment $\rva=\va$, rather than the contrast between two different treatments. To formalize these concepts, we consider a model (e.g., Fig.~\ref{app_fig:data_generation_model}) that includes observed covariates ($\rvs, \rvz$) and an unobserved random vector $\rve$. This vector $\rve$ represents all remaining individual-level latent factors that are not confounders, ensuring that conditioning on $(\rvs, \rvz, \rve)$ fully captures an individual's characteristics. The ITE is a theoretical construct representing the causal effect for a single individual $i$. It is defined by conditioning on all their characteristics, both observed and unobserved:
\begin{equation}
    \tau_{\ITE}(\va; \text{individual } i) := \E[\rvy \mid \DO(\rva=\va), \rvs=\vs_i, \rvz=\vz_i, \rve=\ve_i].
\end{equation}
The ITE is practically inestimable because the latent vector $\rve_i$ is unknown. To achieve practical estimation, the literature typically focuses on the CATE, which averages over the unobserved factors $\rve$. A common definition conditions on the full set of \textit{observed} covariates:
\begin{equation}
\tau_{\text{CATE}}(\va; \vs, \vz) := \E[\rvy \mid \DO(\rva=\va), \rvs=\vs, \rvz=\vz].
\end{equation}
This quantity represents the average treatment effect for the subpopulation of individuals sharing the same observed characteristics $(\vs, \vz)$. However, this definition can be restrictive, as it compels the analysis to be conditioned on the entire set of observed covariates. In this work, we take a more flexible and practical definition of CATE. Our framework allows conditioning on any chosen subset of observed covariates, which we denote by $\rvz$. The standard definition above thus becomes a special case of our framework where $\rvz$ includes all observed covariates. This flexibility allows for the estimation of treatment effects for broader or more specific subpopulations based on the context of the analysis.

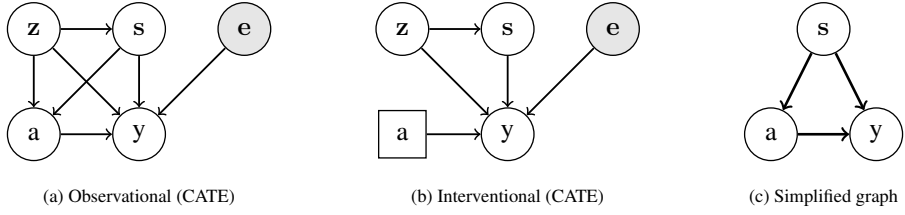
\begin{figure}[t!]
\centering
\begin{tikzpicture}[
    scale=0.7, 
    transform shape, 
    line width=0.5pt, 
    inner sep=0.2mm, 
    shorten >=.1pt, 
    shorten <=.1pt,
    node_style/.style={circle, draw, minimum size=1cm},
    rect_node_style/.style={rectangle, draw, minimum width=0.9cm, minimum height=0.9cm}
]
    \draw (0, 0) node(X1) [node_style]  {{\Large\,$\ra$\,}};
    \draw (0, 2) node(Z1) [node_style]  {{\Large\,$\rvz$\,}};
    \draw (2, 0) node(Y1) [node_style]  {{\Large\,$\ry$\,}};
    \draw (2, 2) node(S1) [node_style]  {{\Large\,$\rvs$\,}};
    \draw (4, 2) node(E1) [node_style, fill=gray!20] {{\Large\,$\rve$\,}};

    \draw[->, black, line width=0.7pt] (S1) to (Y1);
    \draw[->, black, line width=0.7pt] (S1) to (X1);
    \draw[->, black, line width=0.7pt] (X1) to (Y1);
    \draw[->, black, line width=0.7pt] (Z1) to (Y1);
    \draw[->, black, line width=0.7pt] (Z1) to (X1);
    \draw[->, black, line width=0.7pt] (Z1) to (S1);
    \draw[->, black, line width=0.7pt] (E1) to (Y1);
    \node at (2, -1.2) {(a) Observational (CATE)};

    \draw (7, 0) node(X2) [rect_node_style] {{\Large\,$\ra$\,}};
    \draw (7, 2) node(Z2) [node_style]  {{\Large\,$\rvz$\,}};
    \draw (9, 0) node(Y2) [node_style]  {{\Large\,$\ry$\,}};
    \draw (9, 2) node(S2) [node_style]  {{\Large\,$\rvs$\,}};
    \draw (11, 2) node(E2) [node_style, fill=gray!20] {{\Large\,$\rve$\,}};

    \draw[->, black, line width=0.7pt] (S2) to (Y2);
    \draw[->, black, line width=0.7pt] (X2) to (Y2);
    \draw[->, black, line width=0.7pt] (Z2) to (Y2);
    \draw[->, black, line width=0.7pt] (Z2) to (S2);
    \draw[->, black, line width=0.7pt] (E2) to (Y2);
    \node at (9, -1.2) {(b) Interventional (CATE)};

    \draw (14, 0) node(X3) [node_style]  {{\Large\,$\ra$\,}};
    \draw (16, 0) node(Y3) [node_style]  {{\Large\,$\ry$\,}};
    \draw (15, 2) node(S3) [node_style]  {{\Large\,$\rvs$\,}};
    
    \draw[->, black, line width=1pt] (X3) to (Y3);
    \draw[->, black, line width=1pt] (S3) to (Y3);
    \draw[->, black, line width=1pt] (S3) to (X3);
    \node at (15, -1.2) {(c) Simplified graph};

\end{tikzpicture}
\caption{Data generation processes depicted by DAGs. (a) and (b) illustrate the full causal graph for the CATE setting under observational and interventional distributions, respectively. (c) shows the simplified graph used for ATE, ATT, and ATEDS settings.}
\label{app_fig:data_generation_model}
\end{figure}

\subsection{Kernels and RKHS}
\label{app_subsec:kernel}

\textbf{Reproducing Kernel Hilbert Spaces (RKHS).} Consider any space $\gA, \gB \in \{\gZ, \gS, \gX, \gY \}$, and let $k: \gA \times \gA \to \R$ be a positive semi-definite kernel. The canonical feature map associated with this kernel is $\phi_{\va} := k(\va, \cdot)$ for any $\va \in \gA$. The features matrix $\Phi_{\mA}$ is constructed by stacking the feature maps of each row of $\mA$ as the columns, i.e. $\Phi_{\mA}:=[\phi_{\mA_{1,:}}, \cdots, \phi_{\mA_{n,:}}]$, where $\mA:=[\va_1,\cdots,\va_n]^T$. We denote the Gram matrix as $\mK_{\mA}:=\Phi_{\mA}^T\Phi_{\mA}$ and the vector of evaluations $\vk_{\va\mA}$ as $[k(\va,\va_1),\cdots,k(\va,\va_n)]$, and $\vk_{\mA\va}=\vk_{\va\mA}^T$. The RKHS spanned by this kernel is denoted by $\gH_\gA$. It consists of real-valued functions defined on $\gA$ and is endowed with the inner product $\langle \cdot, \cdot \rangle_{\gH_\gA}$. We use $\otimes$ and $\odot$ to represent tensor product and Hadamard product respectively and use $\gH_{\gA\gB}$ to represent the product space $\gH_\gA \times \gH_\gB$. For any distribution $\sP_\rva$ on $\gA$, we define the kernel mean embedding of $\sP_\rva$ as $\mu_\rva:=\int_{\gA} k(\va, \cdot) \sP_{\rva}(d\va) \in \gH_\gA$, which also corresponds to the Riesz representer of expectational functional $f \mapsto \E[f(\rva)] =\langle f, \mu_\rva\rangle_{\gH_\gA}$, due to the reproducing property that $\langle f, k(\va, \cdot)\rangle_{\gH_\gA}=f(\va)$ for all $f\in {\gH_\gA}$. Extending to the conditional distribution, for example $\sP_{\rva|\rvb=\vb}$, we define $\mu_{\rva|\vb}:=\int_{\gA} k(\va,\cdot) \sP_{\rva|\rvb}(d\va|\vb) \in \gH_\gA$ as the conditional mean embedding of $\sP_{\rva|\rvb=\vb}$.

\textbf{The tensor product RKHS.} expands the traditional RKHS framework to effectively manage functions with multiple arguments. Given two RKHSs, $ \gH_{\gA} $ and $ \gH_{\gB} $, associated with kernels $ k(\va, \cdot) $ and $ k(\vb, \cdot) $, the tensor product RKHS $\gH= \gH_{\gA} \otimes \gH_{\gB} $ creates a space of functions defined over the Cartesian product $ \gA \times \gB $, where $ \gA $ and $ \gB $ represent the input spaces of the respective RKHSs. The kernel of this tensor product space is the product of the individual kernels:
\begin{equation}
    k_{\gA \otimes \gB}\left((\va, \vb), (\va', \vb')\right) = k_\gA(\va, \va') \cdot k_\gB(\vb, \vb'),
\end{equation}
where $(\va, \vb), (\va', \vb') \in \gA \times \gB $. The feature map for this space is the tensor product of the individual feature maps, $\phi(\va, \vb) = \phi_\gA(\va) \otimes \phi_\gB(\vb)$. The norm is multiplicative for elementary tensors: $\|\phi_\gA(\va) \otimes \phi_\gB(\vb)\|_{\gH_\gA \otimes \gH_\gB} = \|\phi_\gA(\va)\|_{\gH_\gA} \cdot \|\phi_\gB(\vb)\|_{\gH_\gB}$. This framework is essential for modeling functions $f: \gA \times \gB \to \R$ by capturing both the main effects of the variables and their interactions.

\subsection{Information-Theoretic Measures}
\label{app_subsec:entropy}

This section provides concise definitions for the information-theoretic quantities used in our work.

\textbf{Differential Entropy.} For a continuous random variable $\rva$ with probability density function (PDF) $p(\va)$ over its support $\gA$, the differential entropy quantifies its uncertainty:
\begin{equation}
    \entropy(\rva) := \int_{\gA} -\log p(\va) \, d\sP_{\rva}(\va).
\end{equation}
Unlike discrete entropy, differential entropy can be negative and depends on the coordinate system.

\textbf{Conditional Entropy.} The conditional entropy $\entropy(\rva \mid \rvb)$ measures the remaining uncertainty in $\rva$ given $\rvb$:
\begin{equation}
    \entropy(\rva \mid \rvb) := \int_{\gA \times \gB} -\log p(\va \mid \vb) \, d\sP_{\rva\rvb}(d\va, d\vb).
\end{equation}

\textbf{Mutual Information.} The mutual information $\mi(\rva; \rvb)$ quantifies the reduction in uncertainty about one variable from observing the other. It is defined as:
\begin{equation}
    \mi(\rva; \rvb) := \entropy(\rva) - \entropy(\rva \mid \rvb),
\end{equation}
which is symmetric and can also be expressed using the joint entropy $\entropy(\rva, \rvb)$:
\begin{equation}
    \mi(\rva; \rvb) = \entropy(\rva) + \entropy(\rvb) - \entropy(\rva, \rvb).
\end{equation}
These measures are fundamental to acquisition functions like BALD~\citep{houlsby2011bayesian}, which seeks to maximize the mutual information between model parameters and unknown labels.

\section{Modeling for Various Causal Quantities}
\label{app:different_modelling_methods}

This section expands on the modeling of various CQs. As established in the main paper, our framework for the CATE integrates several key components. We began by modeling the outcome regression surface with a GP. The central challenge, evaluating the integral in the CATE definition, was addressed by leveraging CMEs to represent the conditional distribution $\sP_{\rvs|\rvz}$. This technique provides an analytic, closed-form posterior distribution for the CATE estimator, which is crucial for quantifying uncertainty and enabling effective active learning. Building on this foundation, we now extend the methodology to other key causal quantities. We will detail the specific modeling adaptations, estimator formulations for the ATE, the ATT, and the ATEDS.

\subsection{Derivation for the CATE}
\label{app:derivation_cate}

Here, we provides a detailed, step-by-step derivation for the posterior mean $\nu(a, \vz)$ and covariance $q((a,\vz),(a',\vz'))$ of the CATE estimator, as presented in Prop.~\ref{prop:cate}. The entire derivation is based on expressing the standard GP posterior formulas in terms of inner products in the RKHS. The key insight is the use of an \textit{effective feature map} $\phi_{\bar{\vx}} := \phi_{a}\otimes\phi_{\vz}\otimes\hat{\mu}_{\rvs|\vz}$, which allows the CATE estimation (an expectation over $\rvs$) to be treated as a standard prediction problem.

The following equations show the expansion of these inner products.
\begin{itemize}
    \item The derivation for $\nu(a,\vz)$ begins with the abstract inner product between the effective feature map and the GP posterior mean function, $\langle \phi_{\bar{\vx}}, m_\rf \rangle$.
    \item Similarly, the derivation for $q((a,\vz),(a',\vz'))$ starts with the abstract formula involving the prior kernel, $\langle \phi_{\bar{\vx}}, \phi_{\bar{\vx}'} \rangle$, and the update term, which depends on the cross-covariance inner products $\langle \phi_{\bar{\vx}}, \Phi_{\mX_T} \rangle$.
\end{itemize}
The subsequent steps in the equations expand these abstract inner products into concrete matrix-vector operations. The notation $(\dots)^T(\dots)$ is used to represent these operations in feature space. The `underbrace` annotations explicitly identify how these expanded expressions correspond to the final, compact terms like the effective cross-covariance vector $\vk_{\bar{\vx}\mX_T}$, the effective kernel $k_{\bar{\vx}\bar{\vx}'}$, and the full training Gram matrix $\mK_{\mX_T\mX_T}$.

\begin{equation}
\begin{aligned}
    \nu(a,\vz) 
    &= \langle \phi_{\bar{\vx}}, m_\rf \rangle_{\gH_{\gA\gZ\gS}} \\
    &= (\phi_{a}\otimes\phi_{\vz}\otimes\hat{\mu}_{\rvs|\vz})^T(\Phi_{\va_T}\otimes\Phi_{\mZ_T}\otimes\Phi_{\mS_T})(\mK_{\mX_T\mX_T} + \lambda_\rf\mI)^{-1}\vy_T \\
    &= \underbrace{\Big( \vk_{a\va_T}\odot\vk_{\vz\mZ_T}\odot \left(\vk_{\vz\mZ}(\mK_{\mZ\mZ}+\lambda\mI)^{-1}\mK_{\mS\mS_T} \right) \Big)}_{\vk_{\bar{\vx}\mX_T}} \\
    & \quad \times \underbrace{\Big( (\mK_{\va_T\va_T}\odot\mK_{\mZ_T\mZ_T}\odot\mK_{\mS_T\mS_T}) + \lambda_\rf\mI \Big)^{-1}}_{(\mK_{\mX_T\mX_T} + \lambda_\rf\mI)^{-1}}\vy_T \\
    &= \vk_{\bar{\vx}\mX_T}(\mK_{\mX_T\mX_T} + \lambda_\rf\mI)^{-1}\vy_T,
\end{aligned}
\end{equation}

\begin{equation}
\begin{aligned}
    q\left((a,\vz),(a',\vz')\right) &= \langle \phi_{\bar{\vx}}, \phi_{\bar{\vx}'} \rangle - \langle \phi_{\bar{\vx}}, \Phi_{\mX_T} \rangle (\mK_{\mX_T\mX_T} + \lambda_\rf\mI)^{-1} \langle \Phi_{\mX_T}, \phi_{\bar{\vx}'} \rangle \\
    &= \underbrace{\Big( k_{aa'}k_{\vz\vz'}\left(\vk_{\vz\mZ}(\mK_{\mZ\mZ}+\lambda\mI)^{-1}\mK_{\mS\mS}(\mK_{\mZ\mZ}+\lambda\mI)^{-1}\vk_{\mZ\vz'}\right) \Big)}_{k_{\bar{\vx}\bar{\vx}'}} \\
    & \quad - \underbrace{\Big( \vk_{a\va_T}\odot\vk_{\vz\mZ_T}\odot\left( \vk_{\vz\mZ}(\mK_{\mZ\mZ}+\lambda\mI)^{-1}\mK_{\mS\mS_T} \right) \Big)}_{\vk_{\bar{\vx}\mX_T}} \\
    & \quad \quad \times \underbrace{\Big( (\mK_{\va_T\va_T}\odot\mK_{\mZ_T\mZ_T}\odot\mK_{\mS_T\mS_T}) + \lambda_\rf\mI \Big)^{-1}}_{(\mK_{\mX_T\mX_T} + \lambda_\rf\mI)^{-1}} \\
    & \quad \quad \times \underbrace{\Big( \vk_{\va_Ta'}\odot\vk_{\mZ_T\vz'}\odot\left(\mK_{\mS_T\mS}(\mK_{\mZ\mZ}+\lambda\mI)^{-1}\vk_{\mZ\vz'}\right) \Big)}_{\vk_{\mX_T\bar{\vx}'}} \\
    &= k_{\bar{\vx}\bar{\vx}'} - \vk_{\bar{\vx}\mX_T}(\mK_{\mX_T\mX_T}+ \lambda_\rf\mI)^{-1}\vk_{\mX_T\bar{\vx}'}.
\end{aligned}
\end{equation}
The parameter $\lambda>0$ is the regularization for the CME, and $\lambda_\rf>0$ is the noise variance for the GP $\rf$.

\subsection{Derivation for ATE}

\paragraph{Definition and Estimation Strategy.}
In this subsection, we detail the modeling and posterior derivation for the ATE. The ATE for a specific treatment $a$ is defined as the expected outcome, marginalized over the entire population distribution of the covariates $\rvs$:
\begin{equation}
    \tau_{\ATE}(a) := \int_{\gS} \E[\rvy \mid \ra = a, \rvs=\vs] \, d\sP_{\rvs}(\vs).
\end{equation}
We model the response surface $\E[\rvy \mid \ra=a, \rvs=\vs]$ with a GP $\rf(a, \vs)$. The primary challenge is to evaluate the integral over the marginal distribution $\sP_{\rvs}$. Unlike CATE, which requires a conditional distribution, we can directly estimate $\sP_{\rvs}$ from all available samples in our dataset $\gD = \gD_T \cup \gD_P$. Our strategy is to represent the distribution $\sP_{\rvs}$ in the RKHS using its Mean Embedding (ME), $\mu_\rvs = \E_{\rvs}[\phi(\vs)]$. We form an empirical estimate of the ME from all $n$ individuals in $\gD$:
\begin{equation}
    \hat{\mu}_\rvs = \frac{1}{n} \sum_{i=1}^n \phi(\vs_i).
\end{equation}
This allows the ATE estimator to be elegantly expressed as an inner product in the tensor product RKHS $\gH_{\gA\gS}$:
\begin{equation}
    \hat{\tau}_{\ATE}(a) = \langle \rf, \phi(a) \otimes \hat{\mu}_\rvs \rangle_{\gH_{\gA\gS}}.
\end{equation}
This formulation is powerful because it converts the integration problem into a standard GP prediction problem at an "effective" input point.

\paragraph{Posterior Distribution.}
Based on the inner product formulation, the posterior distribution of the ATE estimator is also a GP. The following proposition formally states its posterior mean and covariance.

\begin{proposition}[ATE Estimator Posterior]
Given the training dataset $\gD_T = \{\va_T, \mS_T, \vy_T\}$ and the full set of inputs $\gD= \{\va, \mS\}$, let $n=|\gD|$. If $\rf(a, \vs)$ is the posterior GP learned from $\gD_T$, then the ATE estimator $\hat{\tau}_{\ATE}(a)$ follows a GP, $\hat{\tau}_{\ATE} \sim \mathcal{GP}(\nu(a), q(a,a'))$. Its posterior mean and covariance are derived using an effective feature map $\phi_{\bar{a}} := \phi(a) \otimes \hat{\mu}_\rvs$.
\end{proposition}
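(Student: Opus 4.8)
The plan is to exploit that the ATE estimator is a \emph{finite} linear functional of the posterior GP $\rf$, which makes its law immediately Gaussian and its two moments computable by the same inner-product expansion used for the CATE in App.~\ref{app:derivation_cate}. Since $\hat\mu_\rvs = \frac1n\sum_{i=1}^n \phi(\vs_i)$ is a finite combination of canonical features, the pairing collapses to
\begin{equation}
\hat{\tau}_{\ATE}(a) = \langle \rf, \phi(a)\otimes\hat\mu_\rvs\rangle_{\gH_{\gA\gS}} = \frac1n\sum_{i=1}^n \rf(a,\vs_i),
\end{equation}
a finite average of evaluations of $\rf$. Because $\rf$ is a GP, any finite linear combination of its values is jointly Gaussian across the index $a$; hence $\hat{\tau}_{\ATE}$ is itself a GP, and it suffices to compute its mean and covariance by linearity of expectation and bilinearity of covariance.

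\textbf{Mean.} Using the GP posterior mean $m$ of \eqref{eq:trained_gp_posterior}, $\nu(a)=\E[\hat{\tau}_{\ATE}(a)] = \frac1n\sum_{i=1}^n m(a,\vs_i)$. Substituting $m(a,\vs_i)=\vk_{(a,\vs_i)\mX_T}(\mK_{\mX_T\mX_T}+\lambda_\rf\mI)^{-1}\vy_T$ and using the product-kernel factorization $\vk_{(a,\vs_i)\mX_T}=\vk_{a\va_T}\odot\vk_{\vs_i\mS_T}$, the sum over $i$ acts only on the $\mS$-factor and produces the effective cross-covariance row vector
\begin{equation}
\vk_{\bar a\mX_T} = \vk_{a\va_T}\odot\Big(\tfrac1n\vone^\top\mK_{\mS\mS_T}\Big),
\end{equation}
so that $\nu(a)=\vk_{\bar a\mX_T}(\mK_{\mX_T\mX_T}+\lambda_\rf\mI)^{-1}\vy_T$. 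Here $\vone$ is the $n$-vector of ones and $\mK_{\mS\mS_T}$ the $n\times n_T$ cross-Gram matrix between the full input set $\gD$ and the training set $\gD_T$.

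\textbf{Covariance.} Analogously, $q(a,a')=\Cov[\hat{\tau}_{\ATE}(a),\hat{\tau}_{\ATE}(a')] = \frac{1}{n^2}\sum_{i,j} k_{\post}\big((a,\vs_i),(a',\vs_j)\big)$ with $k_{\post}$ from \eqref{eq:trained_gp_posterior}. Expanding $k_{\post}$ and again factoring the product kernel, the double average over $i,j$ yields the effective prior kernel $k_{\bar a\bar a'}=k_{aa'}\big(\tfrac{1}{n^2}\vone^\top\mK_{\mS\mS}\vone\big)$ and the effective cross term $\vk_{\mX_T\bar a'}=\vk_{\va_Ta'}\odot\big(\tfrac1n\mK_{\mS_T\mS}\vone\big)$, giving the closed form $q(a,a')=k_{\bar a\bar a'}-\vk_{\bar a\mX_T}(\mK_{\mX_T\mX_T}+\lambda_\rf\mI)^{-1}\vk_{\mX_T\bar a'}$, which is exactly the ATE specialization of Prop.~\ref{prop:cate}.

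\textbf{Main obstacle.} In the CATE derivation the chief technical care is the well-definedness of $\langle\rf,\phi_{\bar\vx}\rangle$, since GP sample paths almost surely leave the native RKHS and a compatible nuclear-dominant kernel construction is needed to interpret the pairing against a conditional mean embedding. For ATE this difficulty disappears: the empirical mean embedding $\hat\mu_\rvs$ lies in $\gH_\gS$ as a finite sum of features, so the functional is an honest finite average of pointwise GP values and no delicate RKHS-membership argument is required. The only remaining care is bookkeeping, namely keeping the index set $\gD$ of size $n$ (entering the embedding) distinct from the training set $\gD_T$ of size $n_T$ (entering the GP posterior), and verifying that the Hadamard factorizations of the effective terms match the product-kernel structure of \eqref{eq:trained_gp_posterior}.
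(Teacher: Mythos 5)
Your proposal is correct and follows essentially the same route as the paper's derivation: both express $\hat{\tau}_{\ATE}(a)$ via the effective feature map $\phi(a)\otimes\hat{\mu}_\rvs$, exploit the product-kernel factorization so the average over $\gD$ acts only on the $\mS$-factor, and arrive at the identical effective terms $\vk_{\bar a\mX_T}$, $k_{\bar a\bar a'}$, and $\vk_{\mX_T\bar a'}$. Your added observation that the pairing collapses to the finite average $\frac1n\sum_i \rf(a,\vs_i)$, so Gaussianity is immediate and no RKHS-compatibility argument is needed (unlike the CME case), is a correct and worthwhile clarification that the paper leaves implicit.
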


\paragraph{Detailed Derivation.}
The following equations show the step-by-step expansion of the GP posterior formulas. 

\begin{equation}
\begin{aligned}
    \nu(a) 
    &= \langle \phi_{a}\otimes\hat{\mu}_{\rvs}, m_\rf \rangle_{\gH_{\gA\gS}} \\
    &= \underbrace{\left( \vk_{a\va_T} \odot \frac{1}{n}\mathbf{1}_{n_T}^T \mK_{\mS\mS_T} \right)}_{\vk_{\bar{a}\mX_T}}
      \underbrace{\left( (\mK_{\va_T\va_T}\odot\mK_{\mS_T\mS_T}) + \lambda_\rf\mI \right)^{-1}}_{(\mK_{\mX_T\mX_T} + \lambda_\rf\mI)^{-1}} \vy_T \\
    &= \vk_{\bar{a}\mX_T}(\mK_{\mX_T\mX_T} + \lambda_\rf\mI)^{-1}\vy_T,
\end{aligned}
\end{equation}
\begin{equation}
\begin{aligned}
    q(a,a') 
    &= k_{\bar{a}\bar{a}'} - \vk_{\bar{a}\mX_T}(\mK_{\mX_T\mX_T}+ \lambda_\rf\mI)^{-1}\vk_{\mX_T\bar{a}'} \\
    &= \underbrace{k_{aa'} \left( \frac{1}{n^2} \mathbf{1}_n^T \mK_{\mS\mS} \mathbf{1}_n \right)}_{k_{\bar{a}\bar{a}'}} \\
    & \quad - \underbrace{\left( \vk_{a\va_T} \odot \frac{1}{n}\mathbf{1}_n^T \mK_{\mS\mS_T} \right)}_{\vk_{\bar{a}\mX_T}} 
      \left( (\mK_{\va_T\va_T}\odot\mK_{\mS_T\mS_T}) + \lambda_\rf\mI \right)^{-1} 
      \underbrace{\left( \vk_{\va_Ta'} \odot \frac{1}{n} \mK_{\mS_T\mS} \mathbf{1}_n \right)}_{\vk_{\mX_T\bar{a}'}}.
\end{aligned}
\end{equation}
Here, $\mK_{\mX_T\mX_T} = \mK_{\va_T\va_T}\odot\mK_{\mS_T\mS_T}$, $\mathbf{1}_n$ is a column vector of $n$ ones, and $\lambda_\rf>0$ is the noise variance for the GP $\rf$. Note that in $\vk_{\bar{a}\mX_T}$, the vector $\mathbf{1}_{n_T}^T$ should have length $n$, matching the full dataset size for the ME estimation. For simplicity in notation, we use $\mathbf{1}^T$ where the dimension is clear from context.

\subsection{Derivation for ATT}
\label{app:derivation_att}

\paragraph{Definition and Estimation Strategy.}
This section details the modeling and posterior derivation for the ATT. The ATT evaluates the effect of a new treatment $a$ on the specific subpopulation that had previously received a treatment $\tilde{a}$. For simplicity, we omit the context variable $\rvz$ in this derivation. The ATT is formally defined as:
\begin{equation}
    \tau_{\text{ATT}}(a, \tilde{a}) := \int_{\gS} \E[\rvy \mid \ra = a, \rvs=\vs] \, d\sP_{\rvs|\ra}(\vs|\tilde{a}).
\end{equation}
We model the response surface $\E[\rvy \mid \ra=a, \rvs=\vs]$ with a GP $\rf(a, \vs)$. The central challenge is to handle the integral over the conditional distribution $\sP_{\rvs|\ra=\tilde{a}}$. Analogous to the CATE case, we represent this conditional distribution in the RKHS using its CME, $\mu_{\rvs|\tilde{a}} = \E_{\rvs|\ra=\tilde{a}}[\phi(\vs)]$.

This allows the ATT estimator to be expressed as an inner product in the tensor product RKHS $\gH_{\gA\gS}$:
\begin{equation}
    \hat{\tau}_{\text{ATT}}(a, \tilde{a}) = \langle \rf, \phi(a) \otimes \hat{\mu}_{\rvs|\tilde{a}} \rangle_{\gH_{\gA\gS}},
\end{equation}
where $\hat{\mu}_{\rvs|\tilde{a}}$ is the empirical estimate of the CME. This formulation converts the integration problem into a standard GP prediction at an "effective" input point.

\paragraph{Posterior Distribution.}
Based on this formulation, the posterior distribution of the ATT estimator is also a Gaussian Process. We summarize the result in the following proposition.

\begin{proposition}[ATT Estimator Posterior]
Given the training dataset $\gD_T = \{\va_T, \mS_T, \vy_T\}$ and the full input set $\gD= \{\va, \mS\}$, if $\rf(a, \vs)$ is the posterior GP learned from $\gD_T$, then the ATT estimator $\hat{\tau}_{\text{ATT}}(a, \tilde{a})$ follows a GP, $\hat{\tau}_{\text{ATT}} \sim \mathcal{GP}(\nu(a, \tilde{a}), q((a, \tilde{a}),(a', \tilde{a}')))$. Its posterior mean and covariance are derived using an effective feature map $\phi_{\bar{a}} := \phi(a) \otimes \hat{\mu}_{\rvs|\tilde{a}}$.
\end{proposition}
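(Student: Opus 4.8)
The plan is to mirror the CATE derivation in Prop.~\ref{prop:cate}, treating the ATT as a bounded linear functional of the posterior GP $\rf$ and then reading off the induced Gaussian law. First I would record that the estimator $\hat{\tau}_{\text{ATT}}(a,\tilde a)=\langle \rf,\phi_{\bar a}\rangle_{\gH_{\gA\gS}}$ with $\phi_{\bar a}:=\phi(a)\otimes\hat{\mu}_{\rvs|\tilde a}$ is linear in $\rf$. Since any continuous linear functional of a GP is Gaussian and this linearity holds jointly across all index points $(a,\tilde a)$, the family $\{\hat{\tau}_{\text{ATT}}(a,\tilde a)\}$ is itself a GP, whose mean and covariance are obtained by pushing the posterior mean function $m_\rf$ and posterior kernel $k_{\post}$ of Eq.~\ref{eq:trained_gp_posterior} through this pairing.

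Second, I would compute the mean. Writing $m_\rf=\Phi_{\mX_T}(\mK_{\mX_T\mX_T}+\lambda_\rf\mI)^{-1}\vy_T$ with $\mX_T=(\va_T,\mS_T)$ and substituting the empirical CME $\hat{\mu}_{\rvs|\tilde a}=\Phi_{\mS}(\mK_{\mA\mA}+\lambda\mI)^{-1}\vk_{\mA\tilde a}$, the tensor-product structure factorizes each entry as $\langle\phi(a)\otimes\hat{\mu}_{\rvs|\tilde a},\,\phi(a_{T,i})\otimes\phi(\vs_{T,i})\rangle=k(a,a_{T,i})\,\langle\hat{\mu}_{\rvs|\tilde a},\phi(\vs_{T,i})\rangle$. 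Collecting these rows yields the effective cross-covariance vector $\vk_{\bar a\mX_T}=\vk_{a\va_T}\odot\big(\vk_{\tilde a\mA}(\mK_{\mA\mA}+\lambda\mI)^{-1}\mK_{\mS\mS_T}\big)$, and hence $\nu(a,\tilde a)=\vk_{\bar a\mX_T}(\mK_{\mX_T\mX_T}+\lambda_\rf\mI)^{-1}\vy_T$.

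Third, the covariance follows the same recipe applied to $k_{\post}$. The prior term $\langle\phi_{\bar a},\phi_{\bar a'}\rangle$ produces the effective kernel $k_{\bar a\bar a'}=k_{aa'}\big(\vk_{\tilde a\mA}(\mK_{\mA\mA}+\lambda\mI)^{-1}\mK_{\mS\mS}(\mK_{\mA\mA}+\lambda\mI)^{-1}\vk_{\mA\tilde a'}\big)$, while the data-dependent correction reuses $\vk_{\bar a\mX_T}$ and its transpose to give $q((a,\tilde a),(a',\tilde a'))=k_{\bar a\bar a'}-\vk_{\bar a\mX_T}(\mK_{\mX_T\mX_T}+\lambda_\rf\mI)^{-1}\vk_{\mX_T\bar a'}$. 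Structurally this is the ATE derivation with the marginal embedding $\hat{\mu}_\rvs$ replaced by the treatment-conditional CME $\hat{\mu}_{\rvs|\tilde a}$, so the algebraic expansions are routine once the feature-map substitution is made.

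The hard part will be justifying the well-definedness of the pairing $\langle\rf,\phi_{\bar a}\rangle$: GP sample paths almost surely lie outside their native RKHS, so the inner product between $\rf$ and the embedded feature is not automatically defined. As flagged immediately after Prop.~\ref{prop:cate}, this requires selecting kernels with the nuclear-dominance compatibility of~\citep{chau2021bayesimp}. I would invoke that construction to certify that the functional is well-defined and measurable; once this is secured, the Gaussian-functional argument and the closed-form expansions above go through, establishing that $\hat{\tau}_{\text{ATT}}$ is a GP with the stated mean $\nu$ and covariance $q$.
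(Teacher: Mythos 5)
Your proposal is correct and follows essentially the same route as the paper's derivation in the appendix: express $\hat{\tau}_{\text{ATT}}(a,\tilde a)$ as the RKHS pairing $\langle \rf,\phi(a)\otimes\hat{\mu}_{\rvs|\tilde a}\rangle$, invoke linearity to conclude Gaussianity, and expand the inner products via the empirical CME $\hat{\mu}_{\rvs|\tilde a}=\Phi_{\mS}(\mK_{\va\va}+\lambda\mI)^{-1}\vk_{\va\tilde a}$ to obtain exactly the paper's $\vk_{\bar a\mX_T}$, $k_{\bar a\bar a'}$, and posterior covariance. Your additional emphasis on the well-definedness of the pairing (nuclear dominance of the kernels) is a caveat the paper only flags after Prop.~\ref{prop:cate} rather than re-deriving here, but it does not change the argument.
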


\paragraph{Detailed Derivation.}
The following equations show the step-by-step expansion of the GP posterior formulas. 

\begin{equation}
\begin{aligned}
    \nu(a, \tilde{a}) 
    &= \langle \phi_{a}\otimes\hat{\mu}_{\rvs|\tilde{a}}, m_\rf \rangle_{\gH_{\gA\gS}} \\
    &= \underbrace{\left( \vk_{a\va_T}\odot \left(\vk_{\tilde{a}\va}(\mK_{\va\va}+\lambda\mI)^{-1}\mK_{\mS\mS_T} \right) \right)}_{\vk_{\bar{a}\mX_T}} \\
    & \qquad \times \underbrace{\left( (\mK_{\va_T\va_T}\odot\mK_{\mS_T\mS_T}) + \lambda_\rf\mI \right)^{-1}}_{(\mK_{\mX_T\mX_T} + \lambda_\rf\mI)^{-1}} \vy_T \\
    &= \vk_{\bar{a}\mX_T}(\mK_{\mX_T\mX_T} + \lambda_\rf\mI)^{-1}\vy_T,
\end{aligned}
\end{equation}
\begin{equation}
\begin{aligned}
    q\left((a, \tilde{a}),(a', \tilde{a}')\right) &= k_{\bar{a}\bar{a}'} - \vk_{\bar{a}\mX_T}(\mK_{\mX_T\mX_T}+ \lambda_\rf\mI)^{-1}\vk_{\mX_T\bar{a}'} \\
    &= \underbrace{k_{aa'} \left(\vk_{\tilde{a}\va}(\mK_{\va\va}+\lambda\mI)^{-1}\mK_{\mS\mS}(\mK_{\va\va}+\lambda\mI)^{-1}\vk_{\va\tilde{a}'}\right)}_{k_{\bar{a}\bar{a}'}} \\
    & \quad - \underbrace{\left( \vk_{a\va_T}\odot \left(\vk_{\tilde{a}\va}(\mK_{\va\va}+\lambda\mI)^{-1}\mK_{\mS\mS_T} \right) \right)}_{\vk_{\bar{a}\mX_T}} \\
    & \quad \quad \times \left( (\mK_{\va_T\va_T}\odot\mK_{\mS_T\mS_T}) + \lambda_\rf\mI \right)^{-1} \\
    & \quad \quad \times \underbrace{\left( \vk_{\va_Ta'} \odot \left(\mK_{\mS_T\mS}(\mK_{\va\va}+\lambda\mI)^{-1}\vk_{\va\tilde{a}'}\right) \right)}_{\vk_{\mX_T\bar{a}'}}.
\end{aligned}
\end{equation}
Here, $\mK_{\mX_T\mX_T} = \mK_{\va_T\va_T}\odot\mK_{\mS_T\mS_T}$. The parameter $\lambda>0$ is the regularization for the CME, and $\lambda_\rf>0$ is the noise variance for the GP $\rf$. Note that the CME for the distribution of $\rvs$ given $\ra$ is estimated using the full dataset ($\va, \mS$), whereas the final GP regression for $\rf$ is conditioned on the labeled training set ($\va_T, \mS_T, \vy_T$).

\subsection{Derivation for ATEDS}
\label{app:derivation_ateds}

\paragraph{Definition and Estimation Strategy.}
This section details the modeling and posterior derivation for the ATEDS. This quantity evaluates the effect of a treatment $a$ over a new target population, whose covariate distribution $\tilde{\sP}_{\rvs}$ may differ from the source population $\sP_{\rvs}$ where the model is trained. The ATEDS is formally defined as:
\begin{equation}
    \tau_{\text{ATEDS}}(a) := \int_{\gS} \E[\rvy \mid \ra = a, \rvs=\vs] \, d\tilde{\sP}_{\rvs}(\vs).
\end{equation}
The estimation challenge is that the regression function $\rf(a, \vs) = \E[\rvy \mid \ra = a, \rvs=\vs]$ is learned from the source population data $\gD_T$, but the integration is performed over the target population distribution $\tilde{\sP}_{\rvs}$. We assume we have access to a set of i.i.d. samples $\{\tilde{\vs}_i\}_{i=1}^{n_{\tilde{s}}}$ from the target population, which we denote as $\tilde{\gD} = \{\tilde{\mS}\}$.

Instead of learning the density $\tilde{p}(\vs)$, we can directly use these samples to estimate the integral. We represent the target distribution $\tilde{\sP}_{\rvs}$ via its empirical ME, estimated from the target samples:
\begin{equation}
    \hat{\tilde{\mu}}_\rvs = \frac{1}{n_{\tilde{s}}} \sum_{i=1}^{n_{\tilde{s}}} \phi(\tilde{\vs}_i).
\end{equation}
The ATEDS estimator can then be expressed as an inner product in the RKHS:
\begin{equation}
    \hat{\tau}_{\text{ATEDS}}(a) = \langle \rf, \phi(a) \otimes \hat{\tilde{\mu}}_\rvs \rangle_{\gH_{\gA\gS}}.
\end{equation}
This formulation elegantly handles the distribution shift by converting the problem into a standard GP prediction at an "effective" input point that encodes the target distribution.

\paragraph{Posterior Distribution.}
Based on this formulation, the posterior distribution of the ATEDS estimator is a Gaussian Process. The following proposition formally states its posterior mean and covariance.

\begin{proposition}[ATEDS Estimator Posterior]
Given the source training set $\gD_T = \{\va_T, \mS_T, \vy_T\}$ and a set of samples from the target population $\tilde{\gD} = \{\tilde{\mS}\}$, let $n_{\tilde{s}}=|\tilde{\gD}|$. If $\rf(a, \vs)$ is the posterior GP learned from $\gD_T$, then the ATEDS estimator $\hat{\tau}_{\text{ATEDS}}(a)$ follows a GP, $\hat{\tau}_{\text{ATEDS}} \sim \mathcal{GP}(\nu(a), q(a,a'))$. Its posterior mean and covariance are derived using an effective feature map $\phi_{\bar{a}} := \phi(a) \otimes \hat{\tilde{\mu}}_\rvs$.
\end{proposition}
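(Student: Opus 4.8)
The plan is to exploit the fact that the ATEDS estimator is a bounded linear functional of the posterior GP $\rf$, so that Gaussianity is inherited automatically, and then to make the resulting moments concrete by expanding the inner product in the tensor-product RKHS $\gH_{\gA\gS}$. The starting point is the identification formula for $\hat{\tau}_{\DS}$ from Lem.~\ref{lemma:Identification_CQs}, written as an integral of the response surface against the target law $\tilde{\sP}_\rvs$. Because the target embedding is the \emph{empirical} mean embedding $\hat{\tilde{\mu}}_\rvs = \frac{1}{n_{\tilde{s}}}\sum_{i=1}^{n_{\tilde{s}}}\phi(\tilde{\vs}_i)$, the estimator collapses to a finite average of GP evaluations,
\[
\hat{\tau}_{\text{ATEDS}}(a) = \langle \rf, \phi(a)\otimes\hat{\tilde{\mu}}_\rvs\rangle_{\gH_{\gA\gS}} = \frac{1}{n_{\tilde{s}}}\sum_{i=1}^{n_{\tilde{s}}} \rf(a,\tilde{\vs}_i).
\]
First I would observe that for any finite set of treatment values $a_1,\dots,a_m$, the vector $(\hat{\tau}_{\text{ATEDS}}(a_1),\dots,\hat{\tau}_{\text{ATEDS}}(a_m))$ is a fixed linear image of the Gaussian vector obtained by stacking the evaluations $\{\rf(a_\ell,\tilde{\vs}_i)\}_{\ell,i}$; since linear images of Gaussian vectors are Gaussian, this shows $\hat{\tau}_{\text{ATEDS}}$ is itself a GP. This step also sidesteps the usual difficulty that GP sample paths lie a.s.\ outside the RKHS: with an empirical embedding the pairing is a finite sum of point evaluations, which is well defined for any GP and needs no special kernel construction.

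Next I would compute the two moments by pushing the posterior mean and covariance of $\rf$ from Eq.~\ref{eq:trained_gp_posterior} through this linear map. Linearity of expectation gives $\nu(a) = \frac{1}{n_{\tilde{s}}}\sum_i m(a,\tilde{\vs}_i)$, and bilinearity of the covariance gives $q(a,a') = \frac{1}{n_{\tilde{s}}^2}\sum_{i,j} k_{\post}((a,\tilde{\vs}_i),(a',\tilde{\vs}_j))$. It then remains to substitute the explicit posterior formulas and collapse the double sums using the product-kernel factorisation $k_{\vx\vx'}=k_{aa'}k_{\vs\vs'}$ together with the definition of $\hat{\tilde{\mu}}_\rvs$. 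This is precisely the bookkeeping carried out in the ATE derivation above, with the single change that the pool sample matrix $\mS$ is replaced throughout by the target sample matrix $\tilde{\mS}$ and $n$ by $n_{\tilde{s}}$. Concretely, I expect the effective cross-covariance vector to become $\vk_{\bar{a}\mX_T}=\vk_{a\va_T}\odot(\frac{1}{n_{\tilde{s}}}\mathbf{1}_{n_{\tilde{s}}}^T\mK_{\tilde{\mS}\mS_T})$ and the effective prior kernel $k_{\bar{a}\bar{a}'}=k_{aa'}(\frac{1}{n_{\tilde{s}}^2}\mathbf{1}_{n_{\tilde{s}}}^T\mK_{\tilde{\mS}\tilde{\mS}}\mathbf{1}_{n_{\tilde{s}}})$, yielding $\nu(a)=\vk_{\bar{a}\mX_T}(\mK_{\mX_T\mX_T}+\lambda_\rf\mI)^{-1}\vy_T$ and $q(a,a')=k_{\bar{a}\bar{a}'}-\vk_{\bar{a}\mX_T}(\mK_{\mX_T\mX_T}+\lambda_\rf\mI)^{-1}\vk_{\mX_T\bar{a}'}$ with $\mK_{\mX_T\mX_T}=\mK_{\va_T\va_T}\odot\mK_{\mS_T\mS_T}$ built only from training data.

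The only genuinely non-routine point, as opposed to a computational obstacle, is justifying that the \emph{same} regression function $\rf$ fit on the source data $\gD_T$ may legitimately be integrated against the target law $\tilde{\sP}_\rvs$. This is where the distribution-shift assumption is essential: under $\tilde{\sP}(\rvy,\ra,\rvs)=\sP(\rvy\mid\ra,\rvs)\tilde{\sP}(\ra,\rvs)$ with $\tilde{\sP}(\ra,\rvs)$ absolutely continuous with respect to $\sP(\ra,\rvs)$, the conditional response surface $\E[\rvy\mid\ra=a,\rvs=\vs]$ is invariant across the two populations, so the GP learned on the source is a valid model for the target integrand, and absolute continuity guarantees the integral is well defined. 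Once this invariance is established, the remaining steps are identical in form to the ATE case, and I would close the argument by invoking the joint Gaussianity across treatment values from the first paragraph to upgrade the pointwise expressions for $\nu$ and $q$ to the claimed $\mathcal{GP}(\nu(a),q(a,a'))$.
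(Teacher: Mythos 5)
Your proposal is correct and follows essentially the same route as the paper's derivation: represent the target law by its empirical mean embedding $\hat{\tilde{\mu}}_\rvs$, view $\hat{\tau}_{\text{ATEDS}}(a)=\langle \rf,\phi(a)\otimes\hat{\tilde{\mu}}_\rvs\rangle$ as a linear functional of the posterior GP, and expand the inner products into the effective kernel terms $\vk_{\bar{a}\mX_T}=\vk_{a\va_T}\odot\frac{1}{n_{\tilde{s}}}\mathbf{1}_{n_{\tilde{s}}}^T\mK_{\tilde{\mS}\mS_T}$ and $k_{\bar{a}\bar{a}'}=k_{aa'}\frac{1}{n_{\tilde{s}}^2}\mathbf{1}_{n_{\tilde{s}}}^T\mK_{\tilde{\mS}\tilde{\mS}}\mathbf{1}_{n_{\tilde{s}}}$, which match the paper's formulas exactly. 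Your two added remarks — that the empirical embedding reduces the pairing to a finite sum of point evaluations (so no special kernel construction is needed for well-definedness) and that the distribution-shift assumption is what licenses integrating the source-fitted response surface against $\tilde{\sP}_\rvs$ — are justifications the paper leaves implicit rather than a different argument.
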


\paragraph{Detailed Derivation.}
The following equations show the step-by-step expansion of the GP posterior formulas. 
\begin{equation}
\begin{aligned}
    \nu(a) 
    &= \langle \phi_{a}\otimes\hat{\tilde{\mu}}_{\rvs}, m_\rf \rangle_{\gH_{\gA\gS}} \\
    &= \underbrace{\left( \vk_{a\va_T} \odot \frac{1}{n_{\tilde{s}}}\mathbf{1}_{n_{\tilde{s}}}^T \mK_{\tilde{\mS}\mS_T} \right)}_{\vk_{\bar{a}\mX_T}} 
      \underbrace{\left( (\mK_{\va_T\va_T}\odot\mK_{\mS_T\mS_T}) + \lambda_\rf\mI \right)^{-1}}_{(\mK_{\mX_T\mX_T} + \lambda_\rf\mI)^{-1}} \vy_T \\
    &= \vk_{\bar{a}\mX_T}(\mK_{\mX_T\mX_T} + \lambda_\rf\mI)^{-1}\vy_T,
\end{aligned}
\end{equation}
\begin{equation}
\begin{aligned}
    q(a,a') 
    &= k_{\bar{a}\bar{a}'} - \vk_{\bar{a}\mX_T}(\mK_{\mX_T\mX_T}+ \lambda_\rf\mI)^{-1}\vk_{\mX_T\bar{a}'} \\
    &= \underbrace{k_{aa'} \left( \frac{1}{n_{\tilde{s}}^2} \mathbf{1}_{n_{\tilde{s}}}^T \mK_{\tilde{\mS}\tilde{\mS}} \mathbf{1}_{n_{\tilde{s}}} \right)}_{k_{\bar{a}\bar{a}'}} \\
    & \quad - \underbrace{\left( \vk_{a\va_T} \odot \frac{1}{n_{\tilde{s}}}\mathbf{1}_{n_{\tilde{s}}}^T \mK_{\tilde{\mS}\mS_T} \right)}_{\vk_{\bar{a}\mX_T}} 
      \left( (\mK_{\va_T\va_T}\odot\mK_{\mS_T\mS_T}) + \lambda_\rf\mI \right)^{-1} 
      \underbrace{\left( \vk_{\va_Ta'} \odot \frac{1}{n_{\tilde{s}}} \mK_{\mS_T\tilde{\mS}} \mathbf{1}_{n_{\tilde{s}}} \right)}_{\vk_{\mX_T\bar{a}'}}.
\end{aligned}
\end{equation}
Here, $\mK_{\mX_T\mX_T} = \mK_{\va_T\va_T}\odot\mK_{\mS_T\mS_T}$, $\mathbf{1}_{n_{\tilde{s}}}$ is a column vector of $n_{\tilde{s}}$ ones, and $\lambda_\rf>0$ is the noise variance for the GP $\rf$.

\subsection{Connection between IG and TVR}
\label{app:discussion}

We have instantiated our key principle of uncertainty reduction using two prominent strategies derived from information theory and optimal design: IG and TVR. While these two approaches are often presented as distinct heuristics, they share a deep connection, both aiming to shrink the posterior uncertainty of the CQ estimator. They differ, however, in how they quantify this uncertainty, focusing on different geometric properties of the posterior covariance matrix. The following remark clarifies this relationship by analyzing how each strategy operates on the eigenvalues of this matrix.

\begin{remark}
IG and TVR can be understood through the eigenvalues, $\{\lambda_i\}$, of the posterior covariance matrix $\mQ_{\text{post}} = \Var[\hat{\tau}(\va_I,\mZ_I)|\gD_T,\rvy_{\mX_B}]$. The \textbf{TVR} strategy aims to minimize the trace of the posterior covariance. Since the trace is the sum of the diagonal elements (the individual variances), and also the sum of the eigenvalues, the objective is:
\begin{equation}
    \mX_B^* = \argmin_{\mX_B \subset \gD_P} \Tr(\mQ_{\text{post}}) = \argmin_{\mX_B \subset \gD_P} \sum_i \lambda_i.
\end{equation}
This corresponds to minimizing the \textit{arithmetic mean} of the eigenvalues, effectively shrinking the average uncertainty across all dimensions. This is also known as A-optimality. The \textbf{IG} strategy is equivalent to minimizing the determinant of the posterior covariance. Since the determinant is the product of the eigenvalues, the objective is:
\begin{equation}
    \mX_B^* = \argmin_{\mX_B \subset \gD_P} \det(\mQ_{\text{post}}) = \argmin_{\mX_B \subset \gD_P} \prod_i \lambda_i.
\end{equation}
This is equivalent to minimizing $\log(\det(\mQ_{\text{post}})) = \sum_i \log(\lambda_i)$, which corresponds to minimizing the volume of the uncertainty ellipsoid, or the \textit{geometric mean} of the eigenvalues. This is also known as D-optimality.
\end{remark}
\section{Convergence Analysis}
\label{app_sec:convergence}

In this section, we analyze the uncertainty decay of the CQ estimator under our proposed method. Unlike inductive active learning, which reduces uncertainty in predictions over the empirical distribution represented by the pool dataset $p_{\text{pool}}(\vx, t)$, our problem is inherently transductive: we aim to evaluate the regression function over a distribution that typically differs from $p_{\text{pool}}(\vx, t)$, as it is specific to the target CQ. Hence, our approach can be viewed as a form of transductive active learning (TAL). The key distinction is in the objective: TAL seeks to optimize a function $\rf$ over a target dataset, focusing on pointwise performance. In contrast, our method estimates an average over a distribution, shifting the emphasis from individual points to distributional estimation. Nevertheless, our theoretical analysis leverages the TAL framework introduced in~\citep{hubotter2024transductive}.

\subsection{Justification of Assumption~\ref{ass:submodular}}
\label{app_ass:justification}
To leverage the theoretical framework of~\citep{hubotter2024transductive}, we define the target set $\gA$ as the collection of all input points to the regressor $\rf$ required for the numerical evaluation of the CQ integral, and the sample set $\gS$ as the subset of the pool $\gD_P$ for which we could acquire outcomes. In the cases of ATE and DS, the structure of the integral ensures that the condition $\gS \subseteq \gA$ holds. This is because estimating these quantities requires considering counterfactual outcomes for each individual. The positivity assumption, which we maintain during acquisition, implies that any treatment could have been assigned to any individual, meaning the set of all potential evaluation points ($\gA$) encompasses the set of observed factual points ($\gS$). Under this condition, the submodularity assumption is naturally satisfied when using GPs for regression, as supported by Lemma C.9 in~\citep{hubotter2024transductive}. 

For CATE and ATT, the situation is more nuanced because there can be overlap between $\gS$ and $\gA$ without a strict subset relationship. In these cases, the assumption may not strictly hold. To address this, we introduce a criterion called the Information Ratio later, which relaxes the assumption to a weak submodular condition. The resulting bound is derived under the weak submodularity assumption and covers the standard submodular case, affecting only a coefficient while maintaining the same convergence rate.

\subsection{Preliminary definitions and concepts}
In the following, we focus on the analysis of CATE, which is the most complex case. We use the same notations as in the main paper, where $\rf$ is defined as a function $\gA \times \gZ \times \gS \mapsto \R$. Before proceeding with the theoretical analysis, we clarify the notations and define key preliminary concepts used in this section. Instead of using $\hat{\tau}_{\CATE}(a,\vz)$ as defined in the main paper, we simplify the notation by introducing $\bar{\rvx} = (\ra, \rvz)$ and $\bar{\vx} \in \bar{\gX} = \gA \times \gZ$, with $\bar{\vx}_I = (a_I, \vz_I)$ and $\bar{\mX}_I = (\va_I, \mZ_I)$. Note that $\vx = (a, \vz, \vs)$ and should not be confused with $\bar{\vx}$. Additionally, we define an operator $\upsilon[\rf] := \int_{\gS} \rf(a, \vz, \vs) \sP_{\rvs|\rvz}(d\vs | \vz)$. For brevity, we denote $\upsilon_{\bar{\vx}} = \upsilon[\rf](a, \vz)$. Since $\rf$ is a GP, $\upsilon_{\bar{\vx}}$ follows a Gaussian distribution as it is a linear functional of $\rf$. Our proposed method can be seen as acquiring outcomes to reduce the posterior uncertainty of the vector $\vupsilon_{\bar{\mX}_I}$, where the posterior mean and variance of any element $\upsilon_{\bar{\vx}_I}$ are given in Prop.~\ref{prop:cate}. We define $\sigma^2_{\mX_T}(\bar{\vx}_I) = q\left((a, \vz), (a, \vz)\right)$, $\sigma_I^2 \overset{\text{def}}{=} \max_{\bar{\vx}_I \in \bar{\mX}_I} \sigma^2_{\empty}(\bar{\vx}_I)$, and $\tilde{\sigma}_I^2 \overset{\text{def}}{=} \sigma_I^2 + \sigma^2$, where $\sigma^2$ is the noise variance. We also connect to notation from the main text: $\upsilon_{\bar{\vx}}$ corresponds to the estimator $\hat{\tau}(a, \vz)$, and the irreducible uncertainty is $\eta^2_{\gD_P}(\bar{\vx}) = \Var[\upsilon_{\bar{\vx}}|\gD_P]$.

\begin{definition}[Marginal Gain and Maximal Marginal Gain]
The marginal gain of $\vx \in \mX_P$ given $\mX \subseteq \mX_P$ is defined as 
\begin{equation}
\Delta_{\mX_I}(\vx | \mX) \overset{\text{def}}{=} U_{\mX_I}(\mX \cup \{\vx\}) - U_{\mX_I}(\mX),
\end{equation}
which corresponds to the IG or TVR objective. The maximal marginal gain after $i-1$ greedy selections is defined as 
\begin{equation}
\Gamma_i \overset{\text{def}}{=} \max_{\vx \in \mX_P} \Delta_{\mX_I}(\vx | \vx_{1:i-1}).
\end{equation}
\end{definition}
Marginal gain quantifies the information contribution of a single point $\vx$, while maximal marginal gain measures the peak possible contribution in a greedy sequence. Next, we define the information ratio.
\begin{definition}[Information Ratio]
The information ratio of $\mX\subseteq\mX_P$ given $\mD\subseteq\mX_P$, $|\mX|, |\mD| < \infty$ is defined as 
\begin{equation}
    \bar{\kappa}(\mX|\mD) \overset{\text{def}}{=} \frac{\sum_{\vx\in\mX} \Delta_{\mX_I}(\vx|\mD)}{\Delta_{\mX_I}(\mX|\mD)} \in [0,\infty).
\end{equation}
\end{definition}
\begin{remark}
The insight behind these definitions is to leverage the classic performance guarantee for greedy optimization of submodular functions. For a set of acquired points $\vx_{1:n_A}$, this is given by:
\begin{equation}
    U(\vx_{1:n_A})\geq \left(1-\frac{1}{e^{c_U}}\right) \max_{\substack{\mX \subseteq \mX_P \\ |\mX|\leq n_A}} U(\mX),
\end{equation}
where $c_U$ is a parameter. This implies the utility obtained by our strategy is bounded by a fraction of the best achievable utility. Assuming Ass.~\ref{ass:submodular} holds, $c_U = 1$ for adaptive acquisition, while $c_U$ depends on the submodularity ratio for batch acquisition.
\end{remark}
With these definitions in place, we proceed to a concept key to our convergence analysis: the Approximate Markov Boundary.

\subsection{Approximate Markov Boundary}

\begin{intuition}
In inductive AL uncertainty decay analysis, the maximum information gain is typically used to bound the information gain achieved by different methods. Acquiring all observation outcomes in $\gD_P$ would naturally lead to convergence to this maximum information gain. However, in our setting, even if all outcomes in $\gD_P$ were acquired, the uncertainty over the target distribution of our estimator would not be fully eliminated. To account for this, we follow the same way in~\citep{hubotter2024transductive} to define the irreducible uncertainty $\eta^2_{\gD_P}(\bar{\vx})$ as the lowest uncertainty attainable when all observation outcomes in $\gD_P$ are acquired. However, a key challenge lies in distinguishing between the irreducible uncertainty and the reducible uncertainty, as analyzing the uncertainty decay rate requires isolating the reducible component. An idea inspired by the Markov Blanket can be used to derive a rough formulation: $\upsilon_{\bar{\vx}} \indep \vf_{\mX_P} \mid \vf_{\textup{MB}(\mB)}$ where $\mB \subseteq \mX_P$. This suggests that the uncertainty at $\bar{\vx}$ is conditionally independent of $\vf_{\mX_P}$ given the function values at the Markov Blanket $\textup{MB}(\mB)$, providing a potential way to separate the irreducible and reducible uncertainties.
\end{intuition}

Following the above Intuition, we introduce a new definition named \textit{Approximate Markov Boundary} of $\bar{\vx}$ in $\gD_P$, which is defined as

\begin{definition}[Approximate Markov Boundary (AMB)]
  For any $\epsilon > 0$, $n_T \geq 0$, and $\bar{\vx} \in \mX_I$, we define $\mB_{n_T,\epsilon}(\bar{\vx})$ as the smallest (multi-)set of $\gD_P$, satisfying \begin{equation}
    \Var\left[\upsilon_{\bar{\vx}} \mid \gD_T, \vy_{\mB_{n_T,\epsilon}(\bar{\vx})} \right] \leq \eta_{\gD_P}^2(\bar{\vx}) + \epsilon.\label{app_eq:approx_markov_boundary}
  \end{equation}
where $n_T$ is the number of observations in $\gD_T$. We refer to $\mB_{n_T,\epsilon}(\bar{\vx})$ as the \emph{$\epsilon$-approximate Markov boundary} of $\bar{\vx}$ in~$\gD_P$.
\label{app_def:approx_markov_boundary}
\end{definition}
The key idea behind constructing an AMB is that it acts as a bridge, linking the posterior uncertainty of $\upsilon_{\bar{\vx}}$ to two components: the fundamental irreducible uncertainty and a controllable approximation term $\epsilon$. The rough idea, then, is to analyze the uncertainty decay when observing the outcomes of points in $\mB_{n_T,\epsilon}(\bar{\vx})$. Since Eq.~\ref{app_eq:approx_markov_boundary} provides an upper bound on the remaining uncertainty, it can be leveraged to constrain the decay rate.

Before utilizing the AMB, we first establish its existence through a formal proof.
\begin{lemma} 
Let $\epsilon > 0$ and define $r$ as the smallest integer satisfying 
\begin{equation} 
    \frac{\gamma_r}{r} \leq \frac{\epsilon \lambda_{\textup{min}}(\mK_{\mX_P\mX_P})}{2 n_P \sigma_I^2 \tilde{\sigma}_I^2},
\label{app_eq:amb_inequality}
\end{equation} 
where $\gamma_r$ is defined as\footnote{Note that this $\gamma_r$ is defined over the regressor $\rf$ to leverage existing proofs, while the $\gamma_{n_B}$ in the main text is defined over the CQ estimator $\hat{\tau}$. A formal connection between them is an avenue for future theoretical work.}
\begin{equation}
\gamma_r \overset{\text{def}}{=} \max_{\substack{\mX \subseteq \mX_P \\ |\mX|\leq r}} \mi(\rf_{\gD_P}; \vy_{\mX}).
\end{equation}
For any $ n_T \geq 0 $ and $ \bar{\vx} \in \bar{\mX_I} $, there exists an $ \epsilon $-approximate Markov boundary $ \mB_{n_T, \epsilon}(\bar{\vx}) $ for $ \bar{\vx} $ within $ \gD_P $, with a size bounded by $ r $. 
\end{lemma}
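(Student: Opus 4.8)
The plan is to reduce the existence statement to a quantitative bound on a single quadratic form, obtain the candidate boundary by greedy selection, and then control the residual through the maximum information gain $\gamma_r$.

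First I would rewrite the reducible uncertainty in operator form. Since $\upsilon_{\bar\vx}=\langle\rf,\phi_{\bar\vx}\rangle_{\gH_{\gA\gZ\gS}}$ is a linear functional of the GP, its posterior variance after conditioning on $\gD_T$ and any $\vy_\mB$ is the quadratic form $\langle\phi_{\bar\vx},C_\mB\,\phi_{\bar\vx}\rangle$, where $C_\mB$ denotes the posterior covariance operator of $\rf$ given $\gD_T,\vy_\mB$; likewise $\eta^2_{\gD_P}(\bar\vx)=\langle\phi_{\bar\vx},C_{\gD_P}\phi_{\bar\vx}\rangle$. Because conditioning on additional data only decreases the covariance operator in the Loewner sense, the difference $C_\mB-C_{\gD_P}$ is positive semi-definite, and
\begin{equation}
\Var[\upsilon_{\bar\vx}\mid\gD_T,\vy_\mB]-\eta^2_{\gD_P}(\bar\vx)=\big\langle\phi_{\bar\vx},\,(C_\mB-C_{\gD_P})\,\phi_{\bar\vx}\big\rangle.
\end{equation}
It therefore suffices to construct $\mB\subseteq\gD_P$ with $|\mB|\le r$ making this quadratic form at most $\epsilon$.

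Next I would build $\mB$ greedily, at each step appending the pool point whose outcome maximally reduces uncertainty about the pool vector $\rf_{\gD_P}$, and stop after $r$ steps. The cumulative gain of this sequence is $\mi(\rf_{\gD_P};\vy_\mB)\le\gamma_r$ by the definition of $\gamma_r$. Under Ass.~\ref{ass:submodular}, the set function $\mX\mapsto\mi(\rf_{\gD_P};\vy_\mX)$ is submodular (as in the GP analysis of \citep{hubotter2024transductive}), so the marginal gains along the greedy path are non-increasing; hence the final---and every subsequent---marginal gain is at most the running average $\gamma_r/r$. Using the GP identity that the marginal gain of $\vx$ equals $\tfrac12\log(1+\sigma^{-2}\Var[\rf(\vx)\mid\gD_T,\vy_\mB])$, this forces a uniform bound $\Var[\rf(\vx)\mid\gD_T,\vy_\mB]\le\sigma^2(e^{2\gamma_r/r}-1)$ for every $\vx\in\gD_P$; once the greedy budget $r$ is spent, no pool point retains more than a controlled posterior variance.

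Finally I would bound the quadratic form. Writing the Schur-complement form of the sequential GP update, $C_\mB-C_{\gD_P}$ is governed by the cross-covariances $\mathrm{Cov}[\upsilon_{\bar\vx},\rf(\vx)\mid\gD_T,\vy_\mB]$ over the unobserved points together with $(\Sigma_{\mB^c}+\sigma^2\mI)^{-1}$, where $\Sigma_{\mB^c}$ is the posterior covariance of the remaining pool values. Applying Cauchy--Schwarz to each cross-covariance (which contributes the functional-variance factors $\sigma_I^2$ and, after noise inflation, $\tilde\sigma_I^2$), summing over the at most $n_P$ pool points (contributing $n_P$), controlling the inverse through the conditioning of the pool Gram matrix (contributing $\lambda_{\textup{min}}(\mK_{\mX_P\mX_P})$), and inserting the uniform per-point variance bound of the previous step (contributing $\gamma_r/r$), one obtains $\langle\phi_{\bar\vx},(C_\mB-C_{\gD_P})\phi_{\bar\vx}\rangle\le\epsilon$ exactly when $\gamma_r/r\le\epsilon\,\lambda_{\textup{min}}(\mK_{\mX_P\mX_P})/(2n_P\sigma_I^2\tilde\sigma_I^2)$, which is the defining inequality for $r$. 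A finite such $r$ exists because $\gamma_r$ grows sublinearly, so $\gamma_r/r\to0$; the resulting greedy set is then an $\epsilon$-approximate Markov boundary of size at most $r$.

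The hard part will be the constant tracking in the last step: expressing $C_\mB-C_{\gD_P}$ so that the functional magnitude, the noise inflation $\tilde\sigma_I^2$, the pool dimension $n_P$, and the Gram-matrix conditioning $\lambda_{\textup{min}}(\mK_{\mX_P\mX_P})$ appear separately and combine into precisely the stated threshold. Care is also needed to apply the per-point variance bound uniformly over the correct (unobserved) index set and to justify the Loewner monotonicity used to pass from the operator difference to a scalar bound in the RKHS-valued GP setting of \citep{hubotter2024transductive}.
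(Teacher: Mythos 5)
Your proposal follows essentially the same route as the paper's (largely implicit) argument: take a greedily selected set of size $r$ whose per-point posterior variances over the pool decay like $\gamma_r/r$ (the paper imports this step as Lemma C.19 of \citet{hubotter2024transductive}, which you re-derive via submodularity, the averaging bound on the $r$-th marginal gain, and the Gaussian entropy identity), and then transfer that uniform bound to the linear functional $\upsilon_{\bar\vx}$ so that the constants $n_P$, $\sigma_I^2$, $\tilde\sigma_I^2$ and $\lambda_{\min}(\mK_{\mX_P\mX_P})$ combine into the stated threshold. The only substantive difference is in the transfer step you flag as ``the hard part'': rather than a Schur-complement expansion of $C_{\mB}-C_{\gD_P}$, the paper's Lemma~\ref{app_lem:approx_markov_boundary_property} uses the law of total variance together with the explicit linear representation $\E[\upsilon_{\bar\vx}\mid\vf_{\mX_P}]=\E[\upsilon_{\bar\vx}]+\vl^\top(\vf_{\mX_P}-\E[\vf_{\mX_P}])$, $\vl=\mK_{\mX_P\mX_P}^{-1}\Cov(\vf_{\mX_P},\upsilon_{\bar\vx})$, so the reducible part is exactly $\vl^\top\Var[\vf_{\mX_P}\mid\vy_{\mB}]\vl\le\|\vl\|^2\, n_P\max_{\vx\in\mX_P}\Var[\rf_{\vx}\mid\vy_{\mB}]$ with $\|\vl\|^2\le\sigma_I^2/\lambda_{\min}(\mK_{\mX_P\mX_P})$, and the defining inequality for $r$ then drops out immediately.
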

From this lemma, we conclude that for any $ \bar{\vx} $, given $ n_T $ and $ \epsilon $, there exists a finite set $ \mB_{n_T, \epsilon}(\bar{\vx}) $ which satisfies Eq.~\ref{app_eq:approx_markov_boundary}.

Then, we move to examine a key property of any set satisfying a similar condition.
\begin{lemma}
  Let $\epsilon > 0$ and $\mB \subseteq \gD_P$, such that the following condition holds:
  \begin{equation}
    \Var[\upsilon_{\bar{\vx}} \mid \vy_{\mB}] \leq \frac{\epsilon \lambda_{\min}(\mK_{\mX_P\mX_P})}{n_P \sigma_I^2}. \label{eq:approximation_condition}
  \end{equation}
  Then for any ${\bar{\vx} \in \mX_I}$, we have the following inequality:
  \begin{align}
    \Var[\upsilon_{\bar{\vx}} \mid \vy_{\mB}] \leq \Var[\upsilon_{\bar{\vx}} \mid \vf_{\mX_P}] + \epsilon.
  \end{align}
\label{app_lem:approx_markov_boundary_property}
\end{lemma}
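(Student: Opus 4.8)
The plan is to use the fact that, under the GP model, the functional $\upsilon_{\bar{\vx}}$, the noiseless pool values $\vf_{\mX_P}$, and the noisy observations $\vy_{\mB}$ (with $\mB \subseteq \gD_P$) are jointly Gaussian, so that each conditional variance is a deterministic quadratic form governed by the law of total variance. First I would recast the desired inequality $\Var[\upsilon_{\bar{\vx}} \mid \vy_{\mB}] \le \Var[\upsilon_{\bar{\vx}} \mid \vf_{\mX_P}] + \epsilon$ as a bound on a single nonnegative excess term, and then control that term using the hypothesis.

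Concretely, write $\upsilon_{\bar{\vx}} = \vw^\top \vf_{\mX_P} + \xi$, where $\vw = \mK_{\mX_P\mX_P}^{-1}\Cov(\vf_{\mX_P}, \upsilon_{\bar{\vx}})$ realizes the orthogonal projection $\E[\upsilon_{\bar{\vx}} \mid \vf_{\mX_P}]$ and the residual $\xi$ is independent of $\vf_{\mX_P}$, hence of $\vy_{\mB}$ (since $\vy_{\mB}$ is $\vf_{\mB}$ perturbed by independent noise and $\mB \subseteq \mX_P$). Because knowing $\vf_{\mX_P}$ renders $\vy_{\mB}$ uninformative about $\upsilon_{\bar{\vx}}$, applying the law of total variance through the pair $(\vy_{\mB}, \vf_{\mX_P})$ yields the exact identity
\[
\Var[\upsilon_{\bar{\vx}} \mid \vy_{\mB}] = \underbrace{\Var[\upsilon_{\bar{\vx}} \mid \vf_{\mX_P}]}_{=\,\eta^2_{\gD_P}(\bar{\vx})} + \underbrace{\vw^\top \Cov[\vf_{\mX_P} \mid \vy_{\mB}]\, \vw}_{=:\,\mathrm{excess}\,\ge\,0}.
\]
The claim is therefore equivalent to $\mathrm{excess} \le \epsilon$, and, crucially, since the irreducible term $\eta^2_{\gD_P}(\bar{\vx})$ is nonnegative, this same identity already delivers the clean bound $\mathrm{excess} \le \Var[\upsilon_{\bar{\vx}} \mid \vy_{\mB}]$.

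It then remains to convert the hypothesis into $\mathrm{excess} \le \epsilon$. Chaining the bound above with the assumed inequality gives $\mathrm{excess} \le \Var[\upsilon_{\bar{\vx}} \mid \vy_{\mB}] \le \epsilon\, \lambda_{\min}(\mK_{\mX_P\mX_P}) / (n_P \sigma_I^2)$, so it suffices to show $\lambda_{\min}(\mK_{\mX_P\mX_P}) \le n_P \sigma_I^2$. I would obtain this from two elementary facts: (i) for the PSD pool Gram matrix, $\lambda_{\min}(\mK_{\mX_P\mX_P}) \le \tfrac{1}{n_P}\Tr(\mK_{\mX_P\mX_P}) \le \max_i k(\vx_i, \vx_i)$; and (ii) the prior variance decomposition $\Var[\upsilon_{\bar{\vx}}] = \Cov(\vf_{\mX_P}, \upsilon_{\bar{\vx}})^\top \mK_{\mX_P\mX_P}^{-1}\Cov(\vf_{\mX_P}, \upsilon_{\bar{\vx}}) + \eta^2_{\gD_P}(\bar{\vx}) \le \sigma_I^2$, which simultaneously certifies $\|\vw\|^2 \le \sigma_I^2/\lambda_{\min}(\mK_{\mX_P\mX_P})$ and pins down why exactly the combination $\lambda_{\min}/(n_P \sigma_I^2)$ is the right scale in the hypothesis.

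The hard part is this last constant accounting, which is where the transductive nature of the problem bites. The effective evaluation points defining $\upsilon_{\bar{\vx}}$ (the target $(a,\vz)$ coupled with the conditional-mean-embedding weights over $\rvs$) are not the pool points $\mX_P$, so $\upsilon_{\bar{\vx}}$ cannot be identified with any coordinate of $\vf_{\mX_P}$; the entire argument must pass through the projection weights $\vw$ and the covariance operator $\Cov[\vf_{\mX_P}\mid\vy_{\mB}]$ rather than through individual pool marginals. Making the inequality $\lambda_{\min}(\mK_{\mX_P\mX_P}) \le n_P \sigma_I^2$ rigorous, equivalently controlling the kernel diagonal relative to the target prior variance and the pool size, is the single place where a boundedness or normalization assumption on the kernel is genuinely invoked, and I would state that assumption explicitly before concluding.
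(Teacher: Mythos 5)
Your opening decomposition is exactly the paper's: both arguments condition through $\vf_{\mX_P}$, use $\upsilon_{\bar{\vx}} \indep \vy_{\mB} \mid \vf_{\mX_P}$ (valid since $\mB \subseteq \mX_P$), and isolate the excess term $\vw^\top \Var[\vf_{\mX_P}\mid\vy_{\mB}]\,\vw$ with $\vw = \mK_{\mX_P\mX_P}^{-1}\Cov(\vf_{\mX_P},\upsilon_{\bar{\vx}})$. The divergence, and the gap, lies in how you bound this excess. You discard the quadratic form and instead chain $\mathrm{excess} \le \Var[\upsilon_{\bar{\vx}}\mid\vy_{\mB}] \le \epsilon\,\lambda_{\min}(\mK_{\mX_P\mX_P})/(n_P\sigma_I^2)$, which closes only if $\lambda_{\min}(\mK_{\mX_P\mX_P}) \le n_P\sigma_I^2$. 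You do not establish this: your fact (i) gives $\lambda_{\min} \le \max_i k(\vx_i,\vx_i)$, but relating $\max_i k(\vx_i,\vx_i)$ to $n_P\sigma_I^2$ requires both a kernel normalization and a lower bound on the prior variance $\sigma_I^2$ of the target functional, neither of which is assumed; you concede this yourself. So the proof is incomplete at its only nontrivial step. Worse, if $\lambda_{\min} \le n_P\sigma_I^2$ did hold, your route would prove the lemma with no GP structure at all: the hypothesis would already force the left-hand side below $\epsilon$, and the conclusion would follow from nonnegativity of the irreducible term. That the lemma becomes vacuous under your reading is a signal that the route misses its intended content, namely that the excess over a possibly large irreducible floor is small.

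The paper instead bounds the excess directly. With $\vl = \vw$ and $\mathbf{\Upsilon} = \Cov(\vf_{\mX_P},\upsilon_{\bar{\vx}})$, one has $\lambda_{\min}(\mK_{\mX_P\mX_P})\|\vl\|^2 \le \vl^\top\mK_{\mX_P\mX_P}\vl = \mathbf{\Upsilon}^\top\mK_{\mX_P\mX_P}^{-1}\mathbf{\Upsilon} \le \Var[\upsilon_{\bar{\vx}}] \le \sigma_I^2$, hence $\|\vl\|^2 \le \sigma_I^2/\lambda_{\min}$, and then
\begin{equation}
\vl^\top \Var[\vf_{\mX_P}\mid\vy_{\mB}]\,\vl \;\le\; \|\vl\|^2\,\Tr\bigl(\Var[\vf_{\mX_P}\mid\vy_{\mB}]\bigr) \;\le\; \frac{\sigma_I^2}{\lambda_{\min}(\mK_{\mX_P\mX_P})}\cdot n_P\,\epsilon' \;=\; \epsilon,
\end{equation}
where the trace bound uses $\Var[\rf_{\vx}\mid\vy_{\mB}] \le \epsilon'$ for every pool point $\vx \in \mX_P$. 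This per-pool-point bound is how the lemma is actually invoked immediately afterward (via Lemma C.19 of H\"ubotter et al.), and it is the reading under which the constant $\lambda_{\min}/(n_P\sigma_I^2)$ is designed; under that reading your shortcut $\mathrm{excess} \le \Var[\upsilon_{\bar{\vx}}\mid\vy_{\mB}] \le \epsilon'$ is unavailable, because the hypothesis no longer bounds $\Var[\upsilon_{\bar{\vx}}\mid\vy_{\mB}]$ directly. You had both ingredients in hand — you state $\|\vw\|^2 \le \sigma_I^2/\lambda_{\min}$ yourself — but never combine them with a trace or operator-norm bound on $\Var[\vf_{\mX_P}\mid\vy_{\mB}]$, which is the step that actually produces $\epsilon$.
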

\begin{proof} 
Let the right-hand side of \cref{eq:approximation_condition} be denoted by $\epsilon'$. From the given condition, we can derive that
  \begin{equation}
  \begin{aligned}
    &\Var[\upsilon_{\bar{\vx}} \mid \vy_{\mB}] \\
    \stackrel{(i)}{=}& \E_{\vf_{\mX_P} \mid \vy_{\mB}}[\Var_{\upsilon_{\bar{\vx}}}[\upsilon_{\bar{\vx}} \mid \vf_{\mX_P}, \vy_{\mB}]] + \Var_{\vf_{\mX_P} \mid \vy_{\mB}}[\E_{\upsilon_{\bar{\vx}}}[\upsilon_{\bar{\vx}} \mid \vf_{\mX_P}, \vy_{\mB}]] \\
    \stackrel{(ii)}{=}& \Var_{\upsilon_{\bar{\vx}}}[\upsilon_{\bar{\vx}} \mid \vf_{\mX_P}, \vy_{\mB}] + \Var_{\vf_{\mX_P} \mid \vy_{\mB}}[\E_{\upsilon_{\bar{\vx}}}[\upsilon_{\bar{\vx}} \mid \vf_{\mX_P}, \vy_{\mB}]] \\
    \stackrel{(iii)}{=}& \underbrace{\Var_{\upsilon_{\bar{\vx}}}[\upsilon_{\bar{\vx}} \mid \vf_{\mX_P}]}_{\text{irreducible uncertainty}} + \underbrace{\Var_{\vf_{\mX_P} \mid \vy_{\mB}}[\E_{\upsilon_{\bar{\vx}}}[\upsilon_{\bar{\vx}} \mid \vf_{\mX_P}]]}_{\text{reducible uncertainty}}
  \end{aligned}
  \end{equation} 
  Step $(i)$ follows from the law of total variance. Step $(ii)$ utilizes the fact that the conditional variance of a GP depends only on observation locations, not their values. Step $(iii)$ results from the independence $\upsilon_{\bar{\vx}} \perp \vy_{\mB} \mid \vf_{\mX_P}$, as $\mB \subseteq \mX_P$. The remaining task is to bound the reducible uncertainty.

We define a function $h_{\bar{\vx}} : \R^{n_P} \to \R, \; \vf_{\mX_P} \mapsto \E[\upsilon_{\bar{\vx}} \mid \vf_{\mX_P}]$.
Using the formula for the GP posterior mean, we have 
\begin{equation}
h_{\bar{\vx}}(\vf_{\mX_P}) = \E[\upsilon_{\bar{\vx}}] + \vl^T (\vf_{\mX_P} - \E[\vf_{\mX_P}]),
\end{equation}
where $\vl = (\mK_{\mX_P\mX_P})^{-1}\mathbf{\Upsilon}$ and $\mathbf{\Upsilon} = \text{Cov}(\vf_{\mX_P}, \upsilon_{\bar{\vx}})$. The explicit form of $\mathbf{\Upsilon}$ for the CME case is $\langle \Phi_{\mX_P}, \phi_{a}\otimes\phi_{\vz}\otimes\mu_{\rvs|\vz}\rangle_{\gH}$.
Because $h_{\bar{\vx}}$ is a linear function in $\vf_{\mX_P}$, we have for the reducible uncertainty that 
\begin{equation}
\begin{aligned}
\Var[h_{\bar{\vx}}(\vf_{\mX_P}) \mid \vy_{\mB}] 
& = \vl^\top \Var[\vf_{\mX_P} \mid \vy_{\mB}] \vl \\
& \leq \frac{\epsilon' n_P \sigma_I^2}{\lambda_{\min}(\mK_{\mX_P\mX_P})}. \\
\end{aligned}
\end{equation}
This inequality can be derived in a similar manner to Lemma C.20 in~\citep{hubotter2024transductive}, using analogous steps. By the definition of $\epsilon'$, this final term is equal to $\epsilon$.
\end{proof}
Then, according to the results of Lemma C.19 in \citep{hubotter2024transductive}, we let $\Var[\rf_{\vx}|\vy_{\mB}] \leq \frac{2\tilde{\sigma}^2\gamma_k}{k}$ for all $\vx \in \mX_P$, we have for any $\bar{\vx} \in \mX_I$
\begin{equation}
\begin{aligned}
    \Var[\upsilon_{\bar{\vx}} \mid \gD_T, \vy_{\mB}] & \leq \Var[\upsilon_{\bar{\vx}}|\vy_{\mB}] \\ 
    & \leq \Var[\upsilon_{\bar{\vx}}|\vf_{\mX_P}] + \epsilon.
\end{aligned}
\end{equation}
The first inequality follows from the monotonicity of variance (i.e., more information does not increase variance), and the second inequality is the result of Lem.~\ref{app_lem:approx_markov_boundary_property}.

\subsection{Proof of Theorem~\ref{thm:bound_marginal_variance}}

To prove Thm.~\ref{thm:bound_marginal_variance}, we follow the analytical framework of \citep{hubotter2024transductive}. The proof proceeds in three main steps: (1) we leverage the property of the AMB to relate the current variance to the information gain of the AMB set; (2) we bound this information gain using properties of submodular functions; and (3) we select a decaying approximation error $\epsilon$ to derive the final convergence rate.

First, from Lemma C.17 in \citep{hubotter2024transductive} and our AMB definition (Def.~\ref{app_def:approx_markov_boundary}), we can bound the current variance of the estimator for any $\bar{\vx} \in \bar{\mX_I}$ as:
\begin{equation}
    \Var[\upsilon_{\bar{\vx}}|\gD_T] \leq 2\sigma_I^2 \mi(\upsilon_{\bar{\vx}}; \vy_{\mB_{n_T,\epsilon}(\bar{\vx})} |\gD_T) + \eta_{\gD_P}^2(\bar{\vx}) + \epsilon.
\end{equation}
Next, we bound the mutual information term. By the submodularity of information gain (Assumption~\ref{ass:submodular}), the total gain from a set is bounded by the sum of marginal gains. This leads to a bound involving the maximal marginal gain $\Gamma$ (similar to the logic in the proof of Thm 3.3 in the cited work):
\begin{equation}
    \mi(\upsilon_{\bar{\vx}}; \vy_{\mB_{n_T,\epsilon}(\bar{\vx})} |\gD_T) \leq C_1 |\mB_{n_T,\epsilon}(\bar{\vx})| \cdot \Gamma_{n_T},
\end{equation}
where $C_1$ is a constant related to the submodularity/information ratio $\bar{\kappa}$. Let $b_\epsilon = |\mB_{n_T,\epsilon}(\bar{\vx})|$ be the size of the AMB. This gives:
\begin{equation}
    \Var[\upsilon_{\bar{\vx}}|\gD_T] \leq 2\sigma_I^2 C_1 b_\epsilon \Gamma_{n_T} + \eta_{\gD_P}^2(\bar{\vx}) + \epsilon.
    \label{eq:proof_intermediate_bound}
\end{equation}
Now, we select a specific value for $\epsilon$ that decays with $n_T$. Let $\epsilon = c \frac{\gamma_{\sqrt{n_T}}}{\sqrt{n_T}}$, with $c = 2 n_P \sigma_I^2 \tilde{\sigma}_I^2 / \lambda_{\min}(\mK_{\mX_P\mX_P})$. From Lemma~\ref{app_eq:amb_inequality}, this choice of $\epsilon$ ensures that an AMB exists with a size $b_\epsilon$ bounded by $r \approx \sqrt{n_T}$.

Substituting $b_\epsilon \leq \sqrt{n_T}$ and the expression for $\epsilon$ into Eq.~\ref{eq:proof_intermediate_bound}, we get:
\begin{equation}
\Var[\upsilon_{\bar{\vx}}|\gD_T] \leq \underbrace{2\sigma_I^2 C_1 \sqrt{n_T} \Gamma_{n_T}}_{\text{The problematic term}} + \eta_{\gD_P}^2(\bar{\vx}) + c \frac{\gamma_{\sqrt{n_T}}}{\sqrt{n_T}}.
\label{eq:proof_before_final_step}
\end{equation}
The term $\Gamma_{n_T}$ itself decays. Building on Theorem C.13 from \citep{hubotter2024transductive}, the maximal marginal gain is bounded by a term that decays with $n_T$. A simplified consequence for our analysis is that we can find a constant $C_2$ such that:
\begin{equation}
    \Gamma_{n_T} \leq C_2 \frac{\gamma_{n_T}}{n_T}.
\end{equation}
Substituting this bound for $\Gamma_{n_T}$ into Eq.~\ref{eq:proof_before_final_step} resolves the problematic term:
\begin{equation}
\begin{aligned}
    2\sigma_I^2 C_1 \sqrt{n_T} \Gamma_{n_T} \leq 2\sigma_I^2 C_1 \sqrt{n_T} \left( C_2 \frac{\gamma_{n_T}}{n_T} \right) = (2\sigma_I^2 C_1 C_2) \frac{\gamma_{n_T}}{\sqrt{n_T}}.
\end{aligned}
\end{equation}
Combining all the pieces, the total variance is bounded by:
\begin{equation}
\begin{aligned}
\Var[\upsilon_{\bar{\vx}}|\gD_T] &\leq (2\sigma_I^2 C_1 C_2) \frac{\gamma_{n_T}}{\sqrt{n_T}} + \eta_{\gD_P}^2(\bar{\vx}) + c \frac{\gamma_{\sqrt{n_T}}}{\sqrt{n_T}} \\
&\leq \eta_{\gD_P}^2(\bar{\vx}) + C \frac{\gamma_{n_T}}{\sqrt{n_T}},
\end{aligned}
\end{equation}
where the final step combines all constants into a single constant $C$ and uses the fact that for large $n_T$, the $\gamma_{n_T}$ term dominates the $\gamma_{\sqrt{n_T}}$ term. In the main paper, we use $n_B$ for the total number of acquired samples, which corresponds to $n_T$ here. The results for TVR can be derived similarly. It is worth noting that while our convergence analysis builds upon the framework for TAL, our contribution lies in successfully generalizing this framework from the simpler task of pointwise prediction to the more complex task of distributional integral estimation. Our analysis reveals that, despite the added complexity of our problem, the optimal convergence rate retains a similar form, demonstrating the robustness and power of principled active learning for this new and important class of problems.

\section{Experiments Details}
\label{appsec:experiments}

In this supplementary section, we present a detailed account of the experimental results discussed in the main text. This includes a comprehensive description of the data generation process, as well as the implementation details for both the baseline methods and our proposed methods. Specifically, we provide information on the visualization dataset, the simulation dataset, and the the semi-synthetic dataset: IHDP~\citep{hill2011bayesian} and Lalonde~\citep{lalonde1986evaluating}.

\subsection{Datasets}
\label{appsubsec:datasets}

\paragraph{General Setup}
For each experimental configuration, we report the \textbf{mean and standard error over 20 independent trials}, each initiated with a unique random seed.
\begin{itemize}[leftmargin=*]
    \item \textbf{Datasets and Evaluation:} All simulation datasets consist of 500 training, 200 validation, and 500 testing samples. We evaluate all methods on both the in-distribution training set and the testing set. The main paper presents the results on the testing dataset, while the Appendix here contains comprehensive results for both.

    \item \textbf{Variable Definitions:} Across all experiments, the treatment $\ra$ and outcome $\ry$ are one-dimensional. We denote a single observation as a quadruplet $(\rz, \rvs, \ra, \ry)$, representing the conditioning covariate, adjustment covariates, treatment, and outcome, respectively. The dimensionality of the adjustment covariates $\rvs$ varies by setting.

    \item \textbf{Intervention Strategies:} We evaluate two types of interventions for all causal quantities:
    \begin{itemize}
        \item \textbf{Hard Intervention:} The target treatment is a single value, $a$, randomly selected for each trial. In settings with continuous treatments, we discretize the treatment space for evaluation on the training set, as exact continuous values may not be present in the data.
        \item \textbf{Soft Intervention:} The target is a treatment distribution, specified as $p^*(a) \sim \text{Uniform}[\min(\va_T), \max(\va_T)]$. This requires the estimator to average over a range of treatments.
    \end{itemize}

    \item \textbf{CATE-Specific Setups:} For CATE estimation, which depends on the conditioning variable $\rz$, we test two distinct scenarios:
    \begin{itemize}
        \item \textbf{Fixed $z$:} A single value of $z$ is randomly drawn from its training set range and held constant across all 20 trials for a given configuration.
        \item \textbf{Random $z$:} A new value of $z$ is randomly and independently drawn for each of the 20 trials.
    \end{itemize}
\end{itemize}

\subsubsection{Visualization}
In the visualization case, where the goal is to examine the match between the acquired data points and the target distribution, we set the dimensionality of the adjustment to $2$. One is controlled by the conditioning variable, and the other is a noisy variable that does not influence the treatment decision or the outcome variable. The process for generating a single observation is as follows:

1. Set the conditioning variable:
\begin{equation}
\rz \sim \text{Uniform}(-2, 2).
\end{equation}

2. Define the adjustment variables $ \ervs_1 $ and $ \ervs_2 $ as:
\begin{equation}
\begin{cases}
\ervs_1 = \text{Skew}(\rz), \\
\ervs_2 = \exp(2\epsilon_1) + \epsilon_2,
\end{cases}
\end{equation}
where $\epsilon_1, \epsilon_2 \sim \gN(0,1)$. For the skew function $ \text{Skew}(\cdot) $, we define:
\begin{equation}
\ry_s \sim \text{Skew}(\xi(\rx), \omega(\rx), \alpha(\rx)), \quad \text{where} \quad \rx = 2.5 \cdot \rz,
\end{equation}
with:
\begin{equation}
\xi(\rx) = 0.1 \cdot \rx, \quad \omega(\rx) = 0.1 \cdot |\rx| + 0.05, \quad \alpha(\rx) = -8 + 8 \cdot \left( \frac{1}{1 + \exp(-\rx)} \right).
\end{equation}
Next, to generate the treatment variable, we focus on the continuous case (excluding the binary case). First, we concatenate the conditional variable with additional features to form the random vector $ \rvx $, and the weight vector $ \beta $ is defined as:
\begin{equation}
\rvx = \begin{bmatrix} \rz & \ervs_1 \end{bmatrix}, \quad 
\beta = \begin{bmatrix} \frac{1}{1^2} \\ \frac{1}{2^2} \end{bmatrix} = \begin{bmatrix} 1 \\ 0.25 \end{bmatrix}.
\end{equation}

The intermediate variable $ \ra_{\text{org}} $ is then generated as:
\begin{equation}
    \ra_{\text{org}} = \Phi\left(3 \cdot (\rvx \cdot \beta)\right) + 1.5 \cdot \epsilon - 0.5, \quad \epsilon \sim \mathcal{N}(0, 1),
\end{equation}
Then, we generate the corresponding treatment variable by
\begin{equation}
\ra = \frac{1.0}{1.0 + \exp(-\ra_{\text{org}})}.
\end{equation}
We observe that the values of the treatment variable lie within the range $(0,1)$. If discretization is required, we use a step size of $0.1$ for the treatment variable. Finally, we generate the outcome variable:
\begin{equation}
\ry = \ra \cdot \rz \cdot \ervs_1 + 2 \cdot \rz + \ervs_1 + \epsilon_\ry, \quad \text{where} \quad \epsilon_\ry \sim \mathcal{N}(0, 0.16).
\end{equation}

\begin{figure}[t]
    \centering
    \subfloat[Skew function]{
        \includegraphics[width=0.3\textwidth]{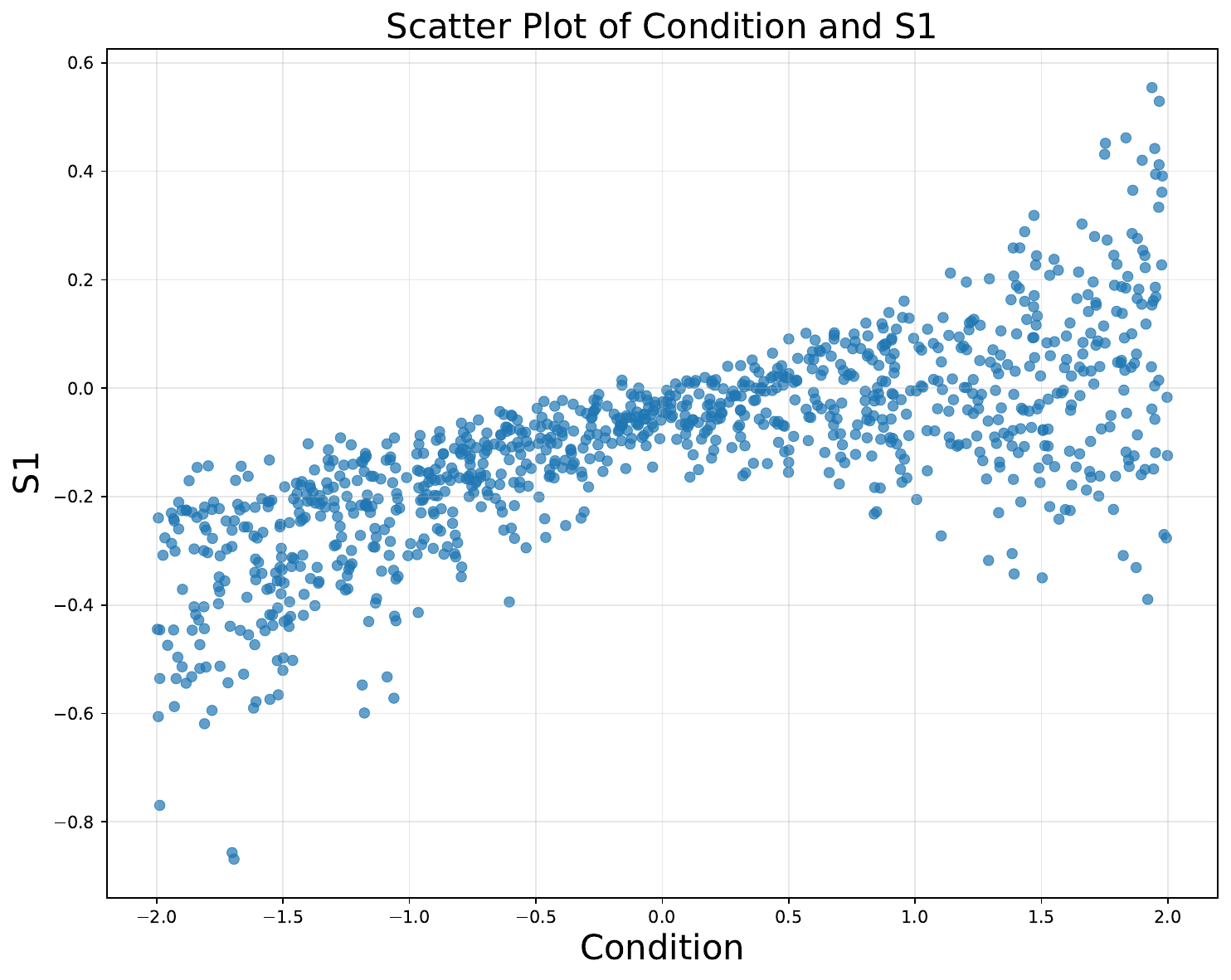}
    }
    \subfloat[PDF of $\ervs_1$]{
        \includegraphics[width=0.3\textwidth]{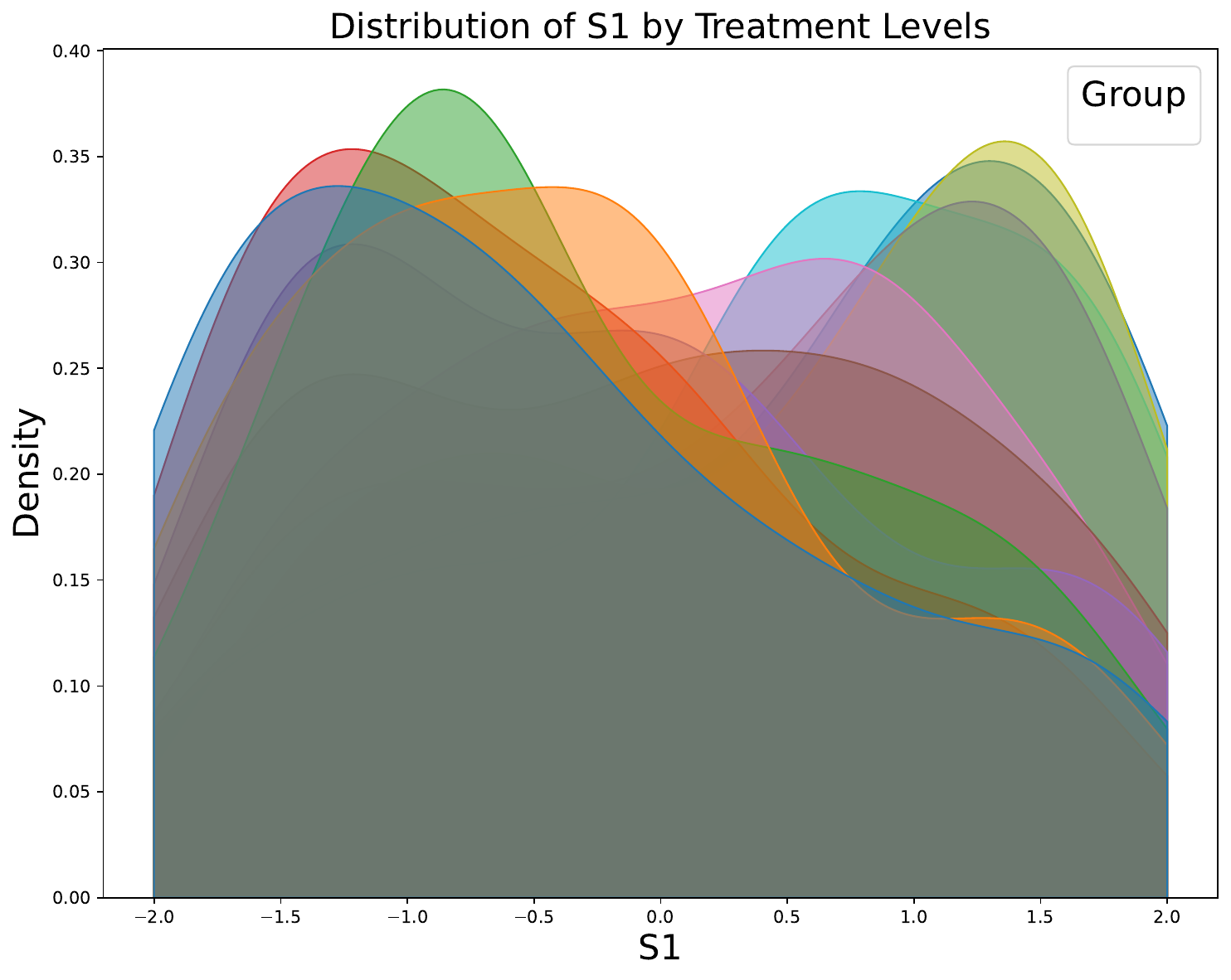}
    }
    \subfloat[PDF of $\ervs_2$]{
        \includegraphics[width=0.3\textwidth]{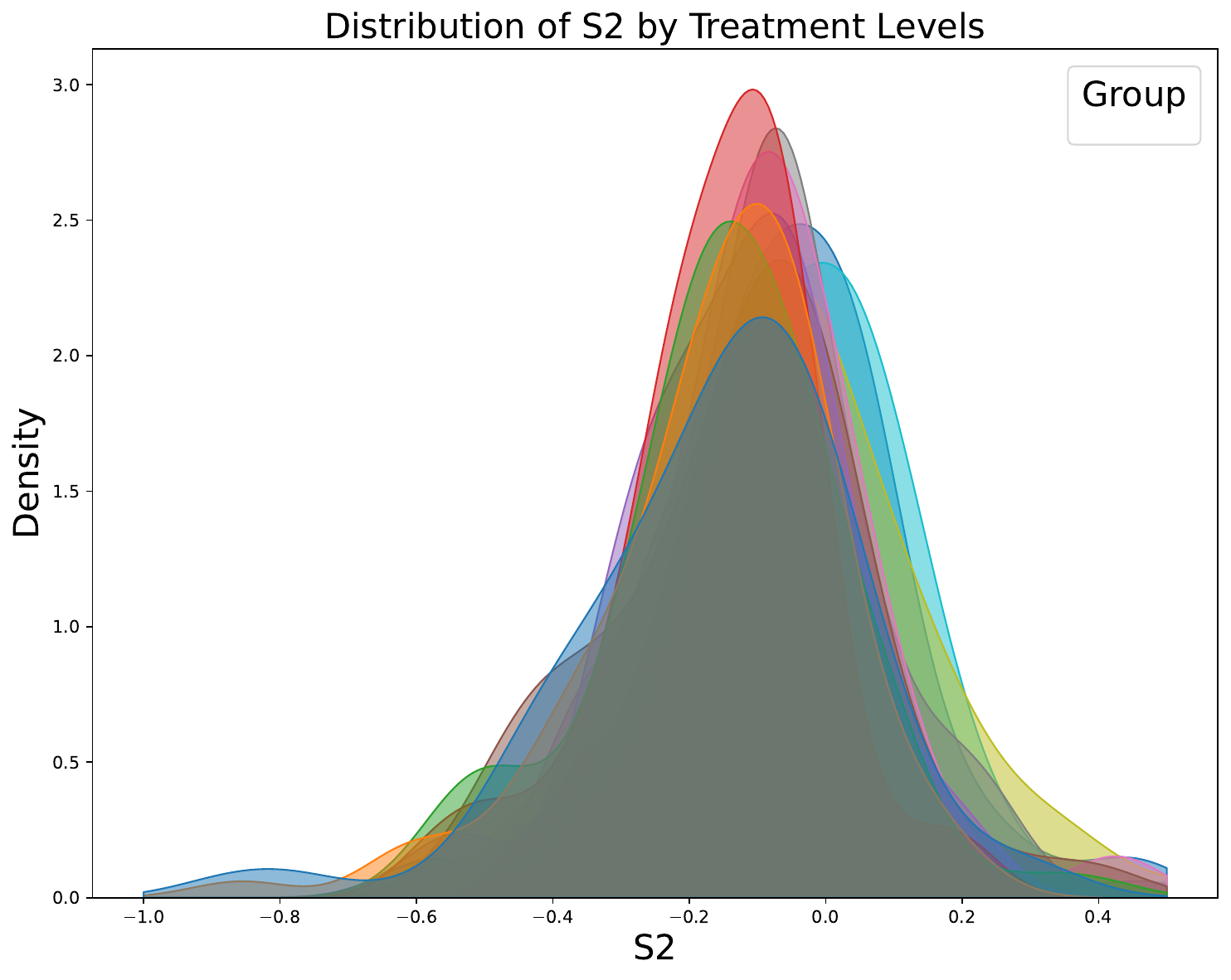}
    }
    \caption{Visualization of the visualization dataset. (a) Scatter plot of the conditioning variable and the adjustment variable $\ervs_1$; (b) the pdf of the first adjustment variable $\ervs_1$ for different treatments; (c) the pdf of the second adjustment variable $\ervs_2$ for different treatments.}
\label{app_fig:vis_of_the_vis_dataset}
\end{figure}

\subsubsection{Simulations}
For the simulation, which we run mainly our most of different scenarios, including the cate, ate, att and distribution shift. We mainly follow the similar data generation way from~\citep{abrevaya2015estimating, singh2024kernel}. For the experimental results in Sec.~\ref{sec:experiments}, we make $\rvs \in \R^4$. A single observation is generated as follow. Draw unobserved noise as $\epsilon_i \overset{\text{i.i.d.}}{\sim} \gN(0,1)$, where $i=1,2,3,4,5$. Then set
\begin{equation}
\rz \sim \text{Uniform}(-2, 2), \qquad
    \begin{cases}
        \ervs_1 = \cos(\rz)+ \rz + \epsilon_1 \\
        \ervs_2 = -1+ \frac{1}{4}\rz^2 +\epsilon_2 \\
        \ervs_3 = \sin{\rz}^2 + \epsilon_3 \\
        \ervs_4 = \exp(2\epsilon_4) + \epsilon_5,
    \end{cases}
\end{equation}
Next, to generate the treatment variable, in this case, we have three different settings, including the binary treatment, the continuous treatment and the discrete treatment. First, we concatenate the conditional variable with the first three adjustment variable to form the random vector $ \rvx $, and the weight vector $ \beta $ is defined as:
\begin{equation}
\rvx = \begin{bmatrix} \rz,\ervs_1,\ervs_2,\ervs_3\end{bmatrix}, \quad 
\beta_i = \frac{1}{j^2}, \text{for} \ j=1,\cdots,4.
\end{equation}
The intermediate variable $ \ra_{\text{org}} $ is then generated as:
\begin{equation}
    \ra_{\text{org}} = \Phi\left(3 \cdot (\rx \cdot \beta)\right) + 1.5 \cdot \epsilon - 0.5, \quad \epsilon \sim \mathcal{N}(0, 1),
\end{equation}
Then, if the treatment is binary, we use the following decision rule to get the treatment
\begin{equation}
    \ra = \begin{cases}
        1, \quad \text{if} \ \ra_{\text{org}}>0; \\
        0, \quad \text{else}.
    \end{cases}
\end{equation}
If we need the continuous treatment, we make the following decision rule to get the treatment
\begin{equation}
\ra = \frac{1.0}{1.0 + \exp(-\ra_{\text{org}})}.
\end{equation}
We observe that the values of the treatment variable lie within the range $(0,1)$. If discretization is required, we use a step size of $0.1$ for the treatment variable. Note that for the ATE and ATT cases, we retain only the first three adjustment variables and incorporate the conditioning variable $\rz$ directly into the adjustment variable set $\rvs$. For the ATE under distribution shift, we define the target distribution as
\begin{equation}
\rz \sim \text{Uniform}(-1, 1), \qquad
    \begin{cases}
        \ervs_1 = \text{Uniform}(-1, 1) \\
        \ervs_2 = \text{Uniform}(-0.5, 0) \\
        \ervs_3 = \text{Uniform}(0, 0.5)
    \end{cases}
\end{equation}

\begin{figure}[t]
    \centering
    \begin{minipage}{0.3\linewidth}
        \centering
        \includegraphics[width=\linewidth]{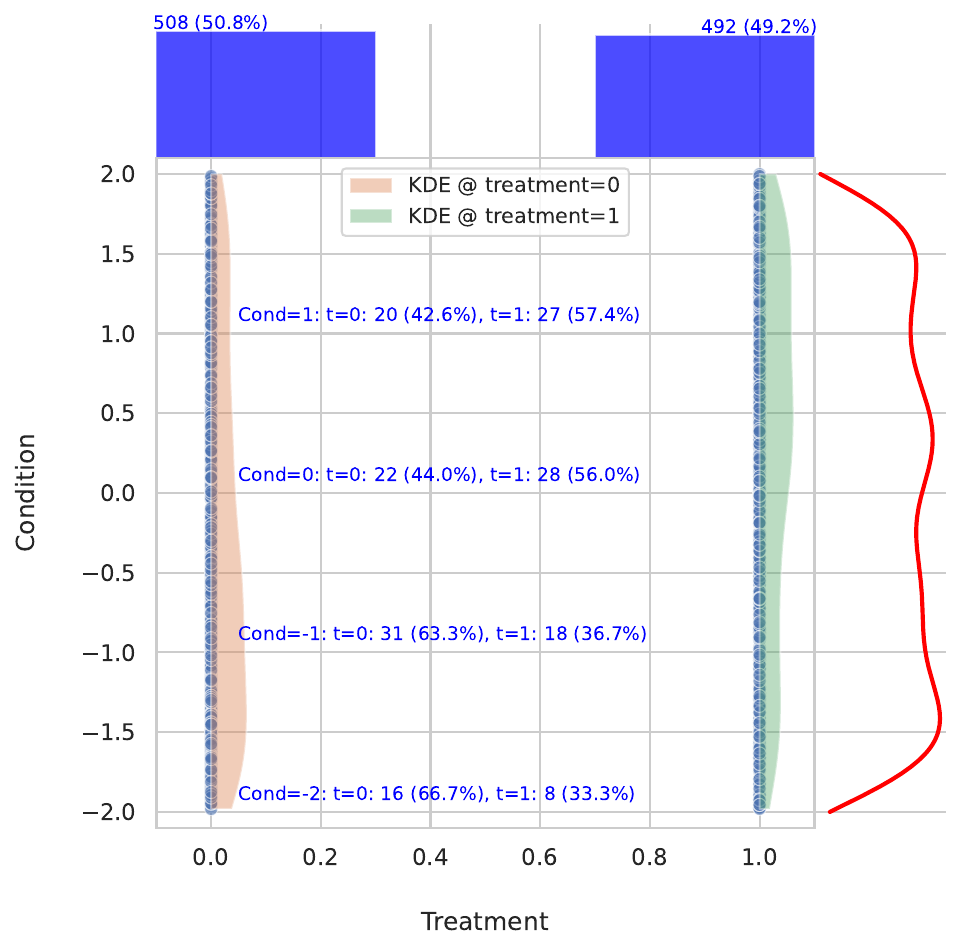}
    \end{minipage}
    \begin{minipage}{0.3\linewidth}
        \centering
        \includegraphics[width=\linewidth]{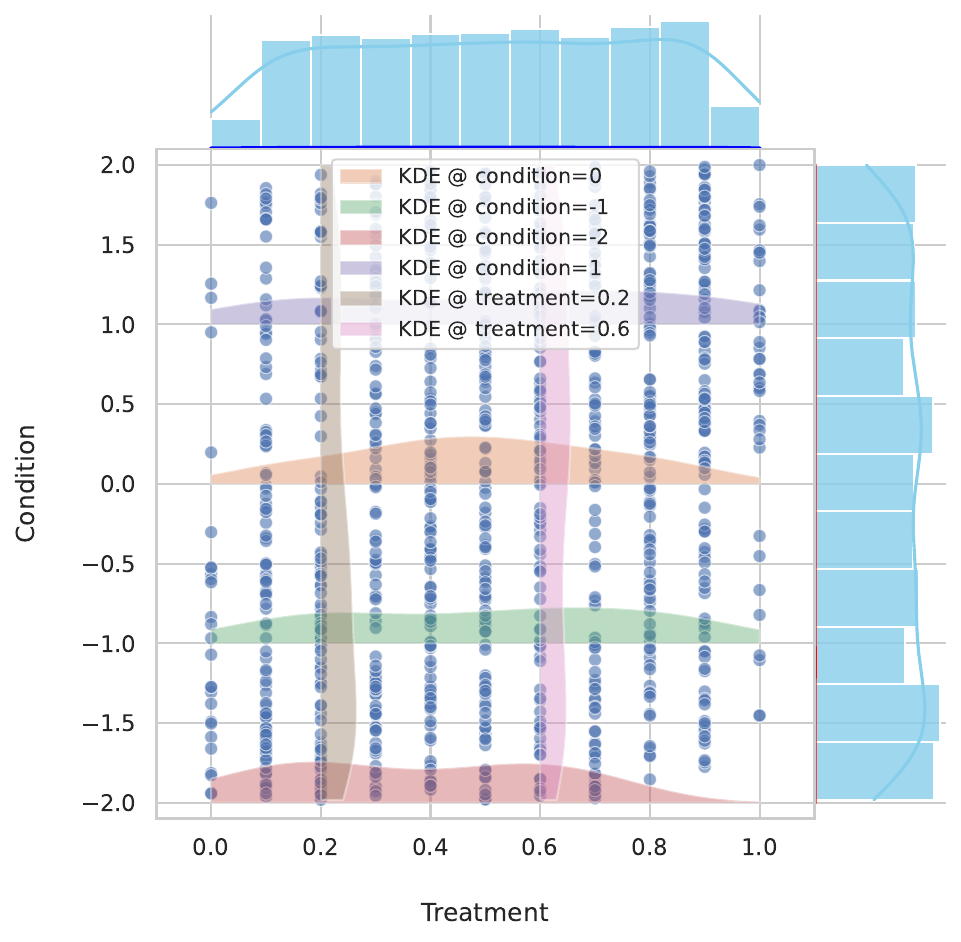}
    \end{minipage}
    \begin{minipage}{0.3\linewidth}
        \centering
        \includegraphics[width=\linewidth]{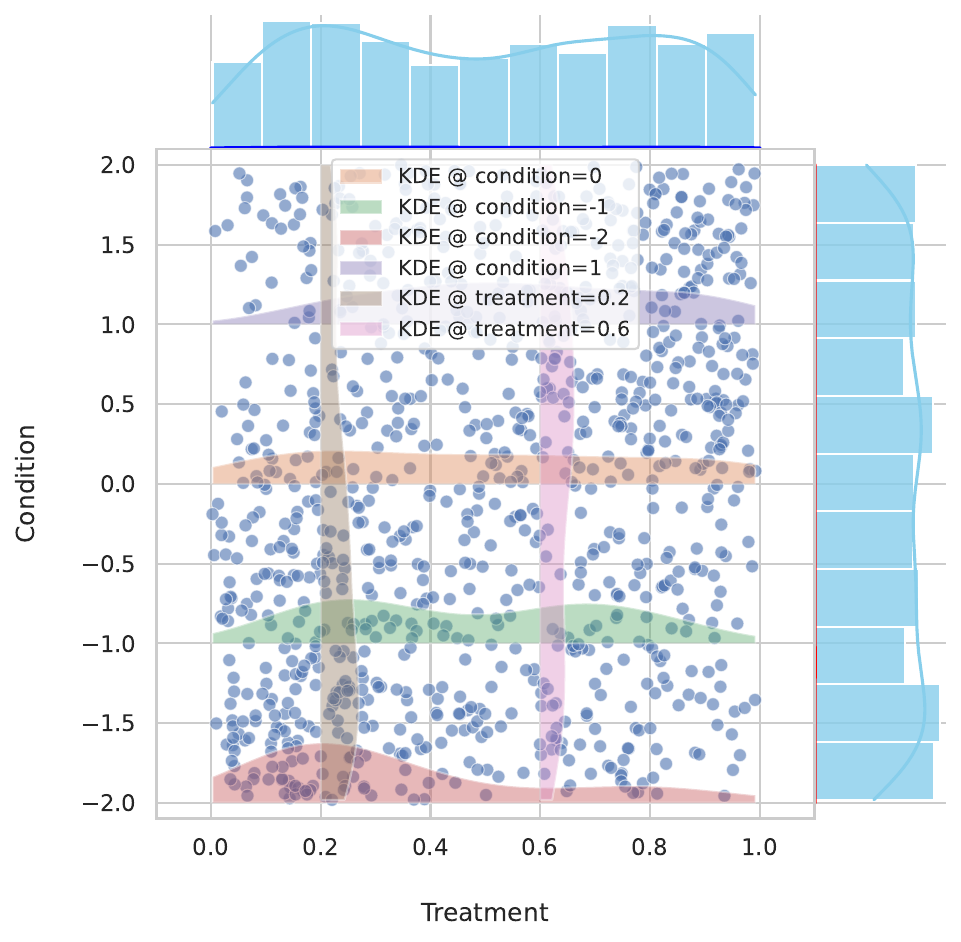}
    \end{minipage}

    \vspace{0.5em}
    
    \begin{minipage}{0.3\linewidth}
        \centering
        \includegraphics[width=\linewidth]{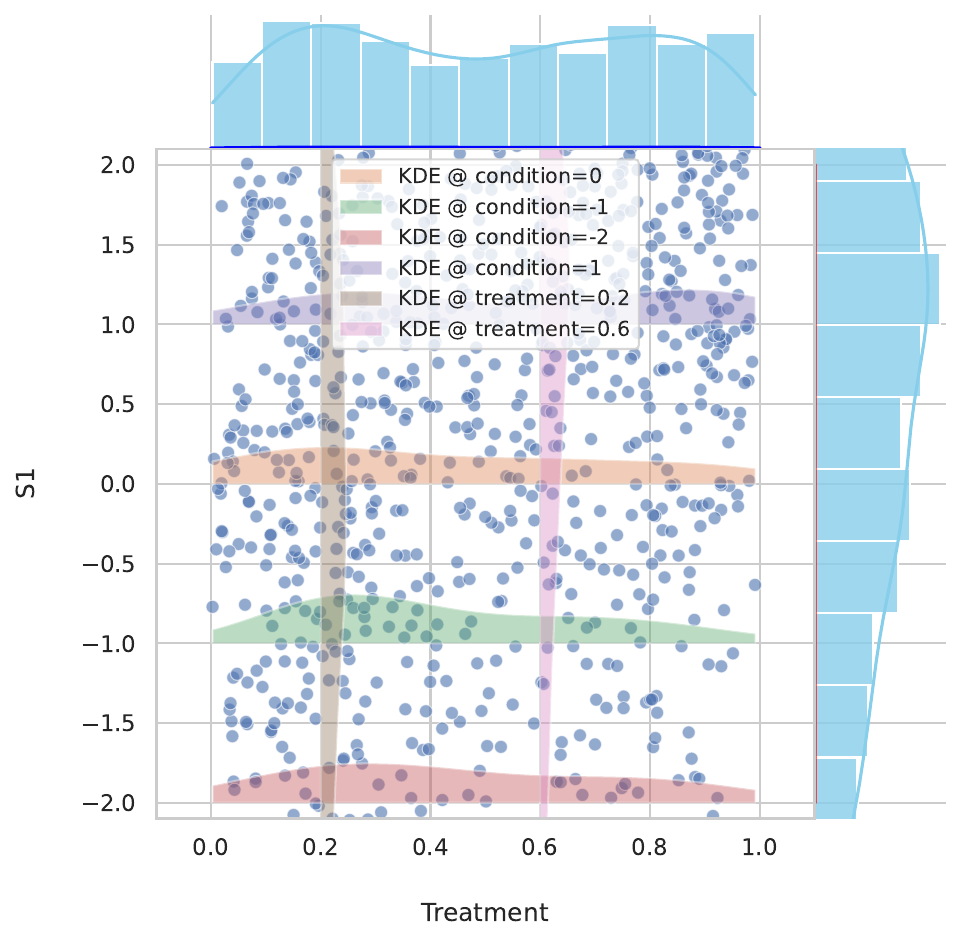}
    \end{minipage}
    \begin{minipage}{0.3\linewidth}
        \centering
        \includegraphics[width=\linewidth]{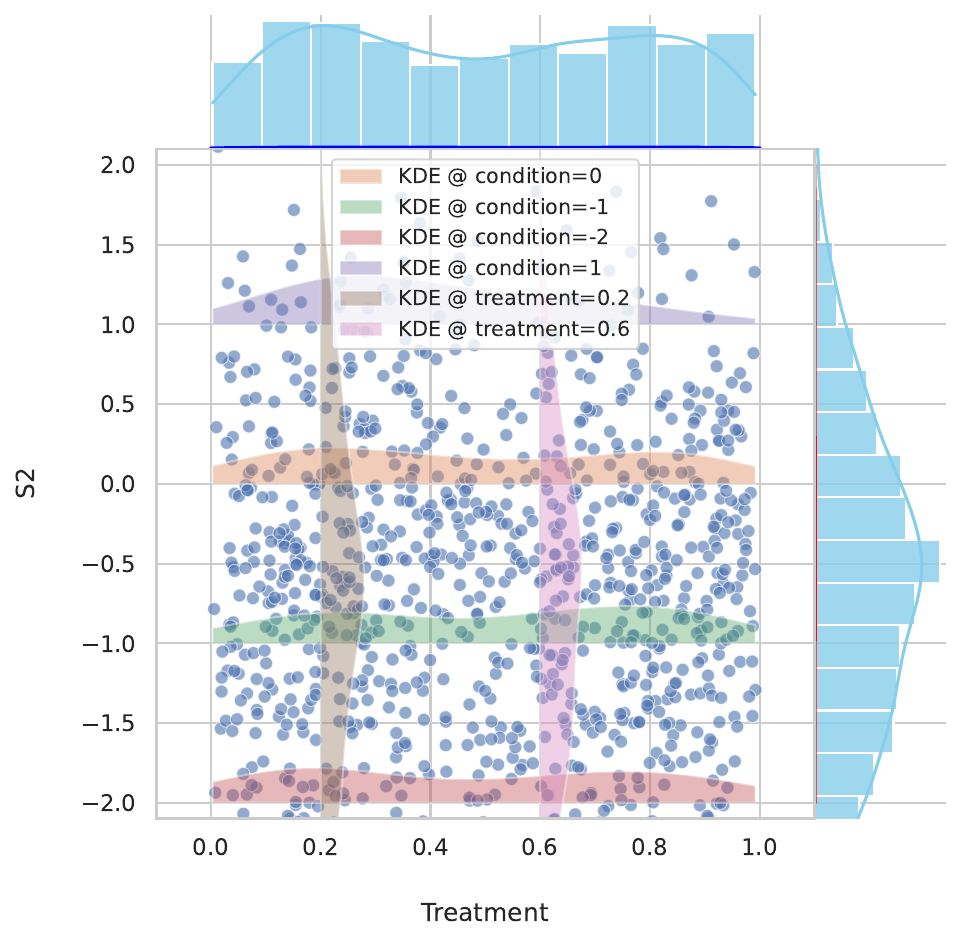}
    \end{minipage}
    \begin{minipage}{0.3\linewidth}
        \centering
        \includegraphics[width=\linewidth]{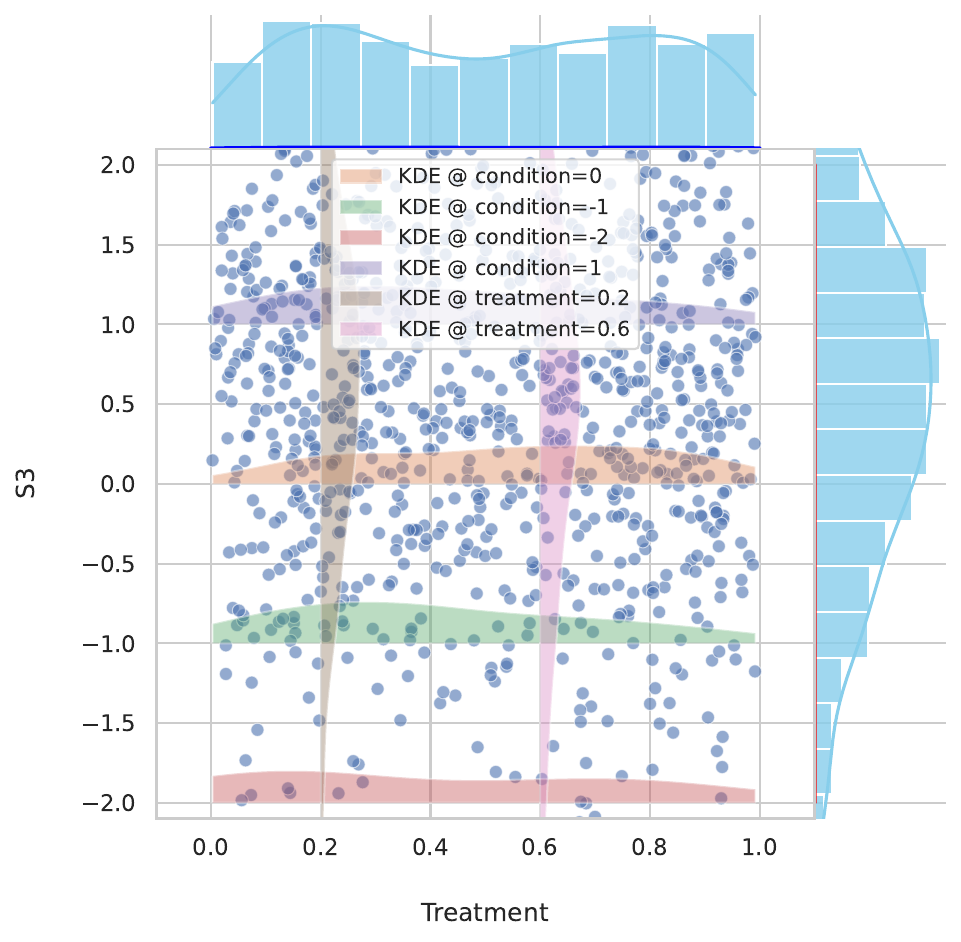}
    \end{minipage}

    \caption{Scatter Plot with Conditional Distribution Overlay}
    \label{app_fig:vis_of_the_simulation_dataset}
\end{figure}

\subsubsection{Semi-synthetic Datasets}



\paragraph{IHDP.} The IHDP dataset, derived from a randomized controlled trial, is widely used in causal inference to benchmark methods for estimating treatment effects. It contains $747$ samples, with $139$ in the treatment group and $608$ in the control group, and $25$ covariates: age, sex, race, married, education, income, maternal age, birth weight, gestation, mother smoking, mother drinking, mother health, father age, father education, father income, father smoking, father drinking, father health, num children, urban, prenatal care, previous pregnancy, low birth weight, premature birth, and health intervention. Among these, $6$ covariates are continuous: age, maternal age, birth weight, gestation, father age, and father income, while $19$ covariates are binary. Treatment selection bias is introduced by removing a subset of treated individuals, and synthetic outcomes are generated to enable ground-truth calculations of ATE and CATE. The dataset’s high-dimensional covariate space and realistic challenges, such as treatment heterogeneity, make it a valuable resource for evaluating causal inference algorithms.

\paragraph{Binary case.} For the binary case, we do not need to generate the corresponding treatment and directly obtain them from the dataset. To generate the potential outcomes, we first extract the continuous covariates to form the random vector $ \rvx $, and the weight vector $ \beta $ is defined as:
\begin{equation}
\begin{aligned}
   \rvx &= \begin{bmatrix}\text{birth weight}, \text{birth head circumference}, \text{preterm birth}, \textbf{birth order}, \text{neonatal health}, \text{mother's age}\end{bmatrix}, \\ 
\beta_i &= \frac{1}{j}, \text{for} \ j=1,\cdots,6. 
\end{aligned}
\end{equation}
\begin{equation}
    \ry = \begin{cases}
        1.2*(\rvx\cdot \beta) + 1 + \epsilon_0, \quad \epsilon_0 \sim \gN(0,0.16) \quad \text{if} \ \rt = 0;\\
        1.2*(\rvx\cdot \beta) + \exp(\rvx+0.5)\cdot\beta + 3\cdot \text{b.w} + 1 + \epsilon_0, \quad \epsilon_1 \sim \gN(0,0.16) \quad \text{else}\\
    \end{cases}
\end{equation}
where $\text{b.w}$ represents the birth weight. 

\paragraph{Continuous and discrete case.} For the continuous and discrete case, we need to generate the corresponding treatment. To generate the treatment, we first extract the continuous covariates to form the random vector $ \rvx $, and the weight vector $ \beta $ is defined as:
The intermediate variable $ \ra_{\text{org}} $ is then generated as:
\begin{equation}
    \ra_{\text{org}} = \Phi\left(3 \cdot (\rx \cdot \beta)\right) + 1.5 \cdot \epsilon - 0.5, \quad \epsilon \sim \mathcal{N}(0, 1),
\end{equation}
Then, if the treatment is binary, we use the following decision rule to get the treatment
\begin{equation}
    \ra = \begin{cases}
        1, \quad \text{if} \ \ra_{\text{org}}>0; \\
        0, \quad \text{else}.
    \end{cases}
\end{equation}
If we need the continuous treatment, we make the following decision rule to get the treatment
\begin{equation}
\ra = \frac{1.0}{1.0 + \exp(-\ra_{\text{org}})}.
\end{equation}
We observe that the values of the treatment variable lie within the range $(0,1)$. If discretization is required, we use a step size of $0.1$ for the treatment variable.
\begin{equation}
    \ry = 1.2*(\rvx\cdot \beta) + 1.2\cdot\rt + \text{b.w}^2 + \rt*\text{b.w} + \epsilon_0, \quad \epsilon_1 \sim \gN(0,0.16).
\end{equation}

For the ATE with distribution shift case, we generate the six continuous covariates as follows:
\begin{equation}
    \ervs_j = \text{Uniform}(0, 0.5) \quad j=1,\cdots,6.
\end{equation}
When evaluating the performance of the estimator in the DS case, we generate new data by combining them with the original data. For both the training and testing sets, we sample the required number of points from the new distribution and replace the corresponding covariate observations of the original data with these new values. These modified data points, now incorporating the new covariates, are treated as being drawn from the shifted distribution. From the visualization of the IHDP dataset, we observe a relatively strong correlation among the first three variables: birth weight, birth head circumference, and preterm birth. In all CATE setups, we use birth weight as the conditioning variable to estimate the causal effect across different subpopulations, stratified by birth weight.

\paragraph{Lalonde.} This is a widely used benchmark in causal inference for evaluating methods that estimate treatment effects from observational data~\citep{lalonde1986evaluating}. It contains $2,675$ samples, with $151$ in the treatment group and $2,524$ in the control group, and $8$ covariates: age, education, black, hispanic, married, nodegree, re74 (earnings in 1974), and re75 (earnings in 1975). Among these, $4$ covariates are continuous: age, education, re74, and re75, while $4$ covariates are binary: black, hispanic, married, and nodegree. The dataset represents a real-world observational study where treatment assignment is not randomized, creating natural selection bias. The potential outcomes are pre-computed using established econometric methods, enabling ground-truth calculations of ATE and CATE. The dataset's realistic covariate structure and natural treatment heterogeneity make it a valuable resource for evaluating causal inference algorithms in observational settings.

\paragraph{Binary case.} For the binary case, we do not need to generate the corresponding treatment and directly obtain them from the dataset. The potential outcomes are pre-computed and stored in the dataset. To generate the potential outcomes, we first extract the continuous covariates to form the random vector $ \rvx $, and the weight vector $ \beta $ is defined as:
\begin{equation}
\begin{aligned}
   \rvx &= \begin{bmatrix}\text{age}, \text{education}, \text{re74}, \text{re75}\end{bmatrix}, \\ 
\beta_i &= \frac{1}{j}, \text{for} \ j=1,\cdots,4. 
\end{aligned}
\end{equation}
\begin{equation}
    \ry = \begin{cases}
        1.2*(\rvx\cdot \beta) + 1 + \epsilon_0, \quad \epsilon_0 \sim \gN(0,0.16) \quad \text{if} \ \rt = 0;\\
        1.2*(\rvx\cdot \beta) + \exp(\rvx+0.5)\cdot\beta + 3\cdot \text{age} + 1 + \epsilon_0, \quad \epsilon_1 \sim \gN(0,0.16) \quad \text{else}\\
    \end{cases}
\end{equation}
where $\text{age}$ represents the age covariate. 

For the ATE with distribution shift case, we generate the four continuous covariates as follows:
\begin{equation}
    \ervs_j = \text{Uniform}(0, 0.5) \quad j=1,\cdots,4.
\end{equation}
When evaluating the performance of the estimator in the DS case, we generate new data by combining them with the original data. For both the training and testing sets, we sample the required number of points from the new distribution and replace the corresponding covariate observations of the original data with these new values. These modified data points, now incorporating the new covariates, are treated as being drawn from the shifted distribution.

\subsection{Implementation Details}
\label{app_subsec:implementation}

\subsubsection{General Setting}

\textbf{Validation dataset.} In the simulation dataset, we change the seed to sample a validation set from the same data generation process as the training dataset, ensuring that the observations in the validation set share the same distribution as the training data. In the real world dataset IHDP, we directly random segment the trainging dataset into training set and validation set. This validation set is used to control early stopping during the training of the GP. However, this approach is not optimal because our goal is to minimize the uncertainty of the target CQ estimator. Ideally, the GP should perform better on the newly constructed distribution rather than directly on the observed data. This highlights a mismatch between the current validation setup and the optimal validation strategy. In principle, one could construct a validation set that mimics the new target distribution. However, given the limited size of the validation set, such an approach could lead to insufficient data for validation, posing a challenge for reliable model evaluation.

\textbf{Trial Setup.} For each trial, we initiated the process with a warm-start size, and the data for each experiment were randomly sampled from the pool dataset. This random sampling was done to ensure variability in the experiments and to evaluate the robustness of our methods under different initial conditions.

\textbf{Simulations.} In the binary treatment case, we used the Delta kernel, which is suitable for modeling binary outcomes. For future work, it is possible to explore the use of a multi-task kernel~\citep{alaa2017bayesian}, which could help model the inner correlation between different groups. However, in our current implementation, we did not explore this possibility. In the CATE case, we employed a RBF kernel for the conditioning variable $\rvz$ and used another RBF kernel for other adjustment variables. The RBF kernel is chosen because of its flexibility in modeling smooth functions. For continuous treatment variables, we also used the RBF kernel. Additionally, we performed kernel ablation experiments by using alternative kernels, such as the Matern kernel and the Rational Quadratic (RQ) kernel, to explore the impact of different kernel choices on model performance. However, the default setup uses the RBF kernel for all conditioning variables, and kernel bandwidths for the adjustment variables are treated as hyperparameters to be optimized during training.

\textbf{Visualization.} In the visualization experiment, the warm-start size was set to $20$, and the acquisition step was set to$ 5$, meaning that data acquisition occurred in steps of $5$ data points. This setup allowed us to monitor the effect of incremental data acquisition on model performance. The total number of training data points for this experiment was set to $200$, with the CATE model being the sole model used in this case. All other experimental settings for the visualization were identical to those used in the simulation setup.

\textbf{IHDP.} In the IHDP experiment, we worked with a fixed number of observations. The warm-start size was set to $20$, and the acquisition step was set to $10$, meaning that data acquisition proceeded in increments of $10$ data points. This setup was chosen to evaluate the model's performance with a real-world dataset under controlled acquisition conditions.

\subsubsection{Hyperparameters setting}

In the active learning framework, the overall process is divided into multiple iterative rounds. Each round consists of two sequential stages: \texttt{Training} and \texttt{Acquisition}. During the training stage, we update the model parameters using the currently labeled dataset. In the acquisition stage, the trained model, together with the proposed acquisition criterion, is used to evaluate the utility of unlabeled points in the pool dataset. 

\textbf{GP.} We perform GP regression without neural feature extractors in all experiments, including both synthetic simulations and the semi-synthetic IHDP dataset. Instead, we directly apply kernel-based models implemented via \texttt{GPyTorch} (v1.12)\footnote{https://github.com/cornellius-gp/gpytorch/releases/tag/v1.12}. All models are trained using the Adam optimizer with a fixed learning rate of $0.05$ for $500$ epochs. Full-batch training is used throughout. No early stopping is applied in the first $200$ epochs, and validation performance is monitored every $10$ epochs. Hyperparameters $\sigma^2$ and any kernel parameters, e.g. bandwidths of the RBF kernel, are optimized by maximizing the exact marginal log-likelihood, defined as:
\begin{equation}
\log p(\vy_T \mid \mX_T) = -\frac{1}{2} \vy_T^\top (\mK_{\mX_T\mX_T} + \sigma^2 \mI)^{-1} \vy_T 
- \frac{1}{2} \log \det (\mK_{\mX_T\mX_T} + \sigma^2 I) 
- \frac{n_T}{2} \log (2\pi),
\end{equation}
where $\mX_T \in \R^{n_T \times (d+1)}$ are the inputs, including the covariates and treatment, $\vy_T \in \R^{n_T}$ are the observed outcomes, $\mK_{\mX_T\mX_T}$ is the kernel matrix computed via the RBF kernel, noticed that our construction of kernel are detailed in Sec.~\ref{sec:model}, $\sigma^2$ is the noise variance, and $n_T$ is the number of training samples. And, hyperparameters are updated following each acquisition of new labels. Unless otherwise specified, we use the standard radial basis function (RBF) kernel. We have also evaluated alternative kernel choices, including Matern and RQ kernels in Fig.~\ref{app_fig:different_kernel_choices}. 

\textbf{CME.} We use a RBF kernel for CME, with the bandwidth selected via the median heuristic. The regularization parameter is fixed at $\lambda = 0.01$. Optimization is performed using the Adam optimizer with a learning rate of $0.05$, and training is run for $100$ epochs without early stopping or validation. To reduce computational cost in the active learning setting, we avoid retraining the CME model after each acquisition round by leveraging the closed-form conditional mean operator. In each round of CATE or ATT estimation, the kernel features of the adjustment variables $\rvs$ must be updated. Specifically, the CMEs $\mu_{\rvs | \vz}$ and $\mu_{\rvs | \va'}$ need to be recomputed. This update can be done efficiently using the conditional embedding operator (details are in Sec.~\ref{subsec:conditional_density_estimation}): instead of retraining, we only update the feature map $\Phi_{\mS}$ within the operator. This update requires only a matrix multiplication and is thus computationally lightweight.

\subsubsection{Baseline Methods}

We use existing implementations for most of the baseline methods:
\begin{enumerate}[leftmargin=*]
    \item \textbf{Random:} In this case, during each round, we randomly choose the required $n_b$ data points uniformly from the pool dataset.
    \item \textbf{Query-based Heterogeneous Treatment Effect estimation (QHTE)\footnote{https://github.com/Qcer17/QHTE}:} QHTE is a core-set method that uses a distance-based acquisition function to strategically select data points, enhancing the estimation of heterogeneous treatment effects~\citep{qin2021budgeted}. We adopt this method as a baseline for the semi-synthetic dataset. To ensure a fair comparison, we also use a Gaussian Process (GP) as the regression function. Similar to QHTE, we select the core sets in the feature space. In the GP case, we define the distance between two points as the posterior covariance, where a smaller distance indicates stronger relationships. We greedily select $n_b$ data points from the pool dataset during each round to ensure they serve as an effective core-set.
    \item \textbf{CausalBald\footnote{https://github.com/OATML/causal-bald}:} For causalBALD~\citep{jesson2021causal}, we directly take the code of $\mu$-BALD as the information gain method applied to the pool dataset. While CausalBALD is originally designed for the binary case, it can be extended to the continuous case by treating each subgroup independently. As reported in their paper, $\mu$-BALD generally performs well, often achieving results comparable to the best methods.
    \item \textbf{Total variance reduction:} For this method, we directly compute the posterior variance for all data points in the pool dataset~\citep{mackay1992information}. We then calculate each data point's utility by measuring the total variance reduction, which is based on the sum of the posterior variances of all data points in the pool dataset.
\end{enumerate}

For all baseline methods, to ensure a fair comparison, we use the same training configuration for the GP regression function as in our proposed method, including the learning rate, number of epochs, and validation dataset, as detailed in the implementation part as discussed above.

\begin{algorithm}[t]
\caption{Active Learning Framework for CATE Estimation}
\label{alg:AL_CATE_framework}
\begin{algorithmic}[1]
    \Require 
        Initial labeled dataset $\gD_T = \{(\vx_i, y_i)\}_{i=1}^{n_T}$, 
        Pool of unlabeled data $\gD_P = \{\vx_i\}_{i=n_T+1}^{N}$, 
        Query budget $n_B$, 
        Batch size $n_b$, 
        Utility function $U(\cdot)$
    \Ensure Final trained CATE model $\hat{\tau}$

    \State Train initial CATE model $\hat{\tau}_0$ on $\gD_T$. \hfill \textcolor{gray}{\# Train a baseline model}
    
    \For{$r = 1$ to $n_B / n_b$} \hfill \textcolor{gray}{\# Main active learning loop for each round}
        \State Let $\hat{\tau}_{\text{current}} = \hat{\tau}_{r-1}$. \hfill \textcolor{gray}{\# Use the model from the previous round}
        
        \State Compute utility scores $u_i \leftarrow U(\vx_i; \hat{\tau}_{\text{current}})$ for all $\vx_i \in \gD_P$. \hfill \textcolor{gray}{\# Evaluate pool points}
        
        \State Select a batch of indices $\gI_b$ of size $n_b$ from $\gD_P$ based on scores $\{u_i\}$. \hfill \textcolor{gray}{\# e.g., Top-$k$ or greedy selection}
        
        \State Let the selected batch be $\mX_b = \{\vx_i \mid i \in \gI_b\}$.
        
        \State Query outcomes $\mY_b = \{y_i \mid i \in \gI_b\}$ for the instances in $\mX_b$. \hfill \textcolor{gray}{\# Oracle queries for labels}
        
        \State Augment the labeled set: $\gD_T \leftarrow \gD_T \cup \{(\vx_i, y_i) \mid \vx_i \in \mX_b, y_i \in \mY_b\}$. \hfill \textcolor{gray}{\# Add new data to training set}
        
        \State Update the pool: $\gD_P \leftarrow \gD_P \setminus \mX_b$. \hfill \textcolor{gray}{\# Remove queried points from pool}
        
        \State Retrain CATE model $\hat{\tau}_r$ on the updated labeled set $\gD_T$. \hfill \textcolor{gray}{\# Improve the model with new data}
    \EndFor
    
    \State \textbf{return} The final trained model $\hat{\tau}_{n_B/n_b}$.
\end{algorithmic}
\end{algorithm}

\subsection{Hardware}
\label{app_subsec:hardware}
For the running times and visualization dataset, we leverage the hardware, which includes a $32$-core Intel CPU with $128$ GB of RAM. This setup provides sufficient resources for processing and handling the large-scale data required for our experiments. For running all the numerical results, we utilize Spartan, a high-performance computing platform. Each trial is typically run on an NVIDIA $A100$ $80$GB GPU, paired with an Intel CPU featuring $8$ cores and $40$ GB of memory.





\subsection{Metrics}

To evaluate the accuracy of our estimators, we use the Average Mean Squared Error (AMSE). This metric is computed over a predefined set of "points of interest," which represent the specific subpopulations and treatment scenarios relevant to the research question. Since calculating the AMSE requires access to the ground-truth causal effect $\tau$, this evaluation approach is suitable for synthetic data settings where the true data generating process is known. The general principle is to compute the mean squared error between the estimated effect $\hat{\tau}$ and the true effect $\tau$ across a set of $n_I$ interest points. The structure of these points varies depending on the causal quantity being estimated.

\paragraph{AMSE for CATE.}
For the Conditional Average Treatment Effect, the estimator $\hat{\tau}_{\text{CATE}}(a, \vz)$ is a function of both the treatment $a$ and the conditioning covariates $\vz$. The set of interest points is therefore a collection of treatment-covariate pairs $(\va_I, \mZ_I) = \{(a_i, \vz_i)\}_{i=1}^{n_I}$. The AMSE is defined as:
\begin{equation}
    \text{AMSE}_{\text{CATE}} \overset{\text{def}}{=} \frac{1}{n_I} \sum_{i=1}^{n_I} \left( \hat{\tau}_{\text{CATE}}(a_i, \vz_i) - \tau_{\text{CATE}}(a_i, \vz_i) \right)^2.
\end{equation}

\paragraph{AMSE for ATE and ATEDS.}
For the Average Treatment Effect (ATE) and ATE under Distribution Shift (ATEDS), the estimators $\hat{\tau}_{\text{ATE}}(a)$ and $\hat{\tau}_{\text{ATEDS}}(a)$ are functions of the treatment $a$ only. Consequently, the set of interest points is a collection of different treatment values $\va_I = \{a_i\}_{i=1}^{n_I}$. The AMSE is defined as:
\begin{equation}
    \text{AMSE}_{\text{ATE}} \overset{\text{def}}{=} \frac{1}{n_I} \sum_{i=1}^{n_I} \left( \hat{\tau}_{\text{ATE}}(a_i) - \tau_{\text{ATE}}(a_i) \right)^2.
\end{equation}
The formula for ATEDS is analogous, using the respective estimator and true value.

\paragraph{AMSE for ATT.}
The Average Treatment Effect on the Treated, $\hat{\tau}_{\text{ATT}}(a, \tilde{a})$, is a function of both the target treatment $a$ and the conditioning treatment $\tilde{a}$. The set of interest points is therefore a collection of pairs $(\va_I, \tilde{\va}_I) = \{(a_i, \tilde{a}_i)\}_{i=1}^{n_I}$. The AMSE is defined as:
\begin{equation}
    \text{AMSE}_{\text{ATT}} \overset{\text{def}}{=} \frac{1}{n_I} \sum_{i=1}^{n_I} \left( \hat{\tau}_{\text{ATT}}(a_i, \tilde{a}_i) - \tau_{\text{ATT}}(a_i, \tilde{a}_i) \right)^2.
\end{equation}

\subsubsection{Uncertainty Reduction as an Evaluation Criterion}
\label{app_subsubsec:uncertainty_criteria}
While our acquisition strategy is designed to reduce the uncertainty of the causal estimator, directly tracking this uncertainty reduction over acquisition rounds is an unsuitable primary metric for comparing different active learning methods. This is due to several fundamental challenges. First, the model is retrained after each batch acquisition. This process, which often includes hyperparameter re-optimization, recalibrates the model's posterior distribution. Consequently, the scale and meaning of uncertainty are not consistent from one round to the next, making direct numerical comparisons of uncertainty values across rounds unreliable. Second, different acquisition strategies are designed to reduce different forms of uncertainty. For instance, BALD-based methods target the uncertainty in model parameters ($\vtheta$), whereas our method targets the posterior variance of the causal quantity ($\tau$) itself. Evaluating all methods using any single, internal uncertainty measure would unfairly favor the method that directly optimizes that specific measure. Therefore, following established practice in active learning, we evaluate performance using an external, task-based metric that is independent of any specific acquisition function. Since the ultimate goal of all compared methods is to produce the most accurate causal estimator possible, the AMSE of the causal estimate is the most appropriate and equitable metric. It directly measures how effectively each strategy uses its budget to achieve the shared final objective.

\subsection{Computational Complexity}
\label{app_subsec:Computational_complexity}

We analyze the computational complexity for a single round of active learning. This process involves two main stages for each of the $n_P$ points in the pool set $\gD_P$: (1) computing the posterior distribution of the causal quantity (e.g., CATE), and (2) evaluating the acquisition function based on this posterior. We denote the training set size as $n_T$. The complexity of any neural network components is omitted from this analysis.

\paragraph{Stage 1: Posterior Computation Cost.}
The cost of computing the posterior is dominated by the number of "query points" required to estimate the causal quantity. Let $m$ be the number of such query points. For example, for a CATE $\tau(a, \vz)$, $m$ would be the number of discrete values of $a$ and $\vz$ we are interested in. The initial, one-time cost for inverting the training kernel matrix is $O(n_T^3)$. After this, the cost for predicting the posterior for $m$ query points is:

\begin{itemize}[leftmargin=*]
    \item \textbf{CDE with Monte Carlo Sampling:} To estimate an integral (e.g., over $\rvs$), we sample $n_s$ points for each of the initial $n_q$ query points (e.g., $n_q$ different $\vz$ values). This results in a total of $m_{\text{MC}} = n_q \times n_s$ effective query points. The cost to obtain their joint posterior covariance is dominated by:
    \begin{equation}
        \gO(n_T^2 \cdot m_{\text{MC}} + m_{\text{MC}}^2).
    \end{equation}

    \item \textbf{CME-based Method:} The CME method analytically handles the integral, avoiding the need for sampling. The number of effective query points remains $m_{\text{CME}} = n_q$. The CME estimator itself requires a one-time kernel matrix inversion, typically on the training data, with a cost of $\gO(n_T^3)$, which is subsumed by the main GP inversion cost. The posterior computation cost is then dominated by:
    \begin{equation}
        \gO(n_T^2 \cdot m_{\text{CME}} + m_{\text{CME}}^2).
    \end{equation}
    The key advantage of the CME method is that $m_{\text{CME}} \ll m_{\text{MC}}$, leading to a significant reduction in complexity, especially when a large number of Monte Carlo samples $n_s$ is required for accuracy.
\end{itemize}

\paragraph{Stage 2: Acquisition Function Evaluation.}
After obtaining the $m \times m$ posterior covariance matrix $\mSigma_{\tau}$ for the causal quantity, we evaluate the acquisition function for each of the $n_P$ pool points.

\begin{itemize}[leftmargin=*]
    \item \textbf{IG:} This acquisition function is often related to the determinant of the posterior covariance matrix, representing the volume of the uncertainty ellipsoid. The cost of computing the determinant is $\gO(m^3)$. To find the best point, this must be done for all pool points, leading to a total complexity for one round of:
    \begin{equation}
        \gO(n_P \cdot (n_T^2 m + m^2 + m^3)).
    \end{equation}

    \item \textbf{TVR:} This acquisition function corresponds to the trace of the posterior covariance matrix, $\text{Tr}(\mSigma_{\tau})$. The cost of computing the trace is only $\gO(m)$. The total complexity for one round is therefore much lower:
    \begin{equation}
        \gO(n_P \cdot (n_T^2 m + m^2)).
    \end{equation}
\end{itemize}

\paragraph{Batch Selection.}
When selecting a batch of $n_b$ points:
\begin{itemize}[leftmargin=*]
    \item \textbf{Top-$k$ Selection:} The simplest method is to compute the acquisition score for all $n_P$ points and select the top $n_b$. The cost is determined by the total complexity of the acquisition function evaluation, plus a negligible sorting cost of $\gO(n_P \log n_P)$.

    \item \textbf{Greedy Selection:} A greedy approach selects points sequentially, updating the selection criteria at each step. A naive implementation would re-evaluate the acquisition function for all remaining pool points at each of the $n_b$ steps. For a complex, set-based acquisition function like batch IG, this can be prohibitively expensive, with complexity scaling with $n_b \cdot n_P$. More efficient greedy methods exist but are beyond the scope of this analysis~\citep{holzmuller2023framework}.
\end{itemize}

\subsection{Additional Experimental Results}
\label{app_subsec:more_results}

In this section, we present additional ablation results, including an analysis of the robustness of our proposed methods with respect to different kernel choices, starting points, step sizes, and pool dataset sizes. All results are based on the visualization datasets. Additionally, we conduct various simulation setups and provide further results on both the simulations and the semi-synthetic IHDP data.

\subsubsection{Visualization}
\label{app_subsec:visualization}

Here,  we present the full set of results, including the AMSE metrics and additional visualizations of the acquired data points, complementing the results presented in the main paper. Specifically, for the acquired data visualizations, we focus on the CATE case, where the condition variable is fixed at a specific value, and we evaluate the performance across all possible treatments. The figure illustrates the joint distribution of the treatment and the adjustment variable $S1$. The blue points represent the scatter of the pool dataset, while the red points correspond to the scatter of the data points acquired by different methods. The red rectangle highlights the region of interest, which encompasses all possible treatments and the range of $S1$ conditioned on the fixed condition variable. From this figure, we can clearly observe the distribution shift between the target distribution, as indicated by the CATE estimator, and the distribution of the pool dataset. In this scenario, our eight proposed methods consistently approximate the target distribution more accurately than all baseline methods, resulting in improved estimation performance.

\begin{figure}[t]
    \centering
    \begin{minipage}{0.24\linewidth}
        \centering
        \includegraphics[width=\linewidth]{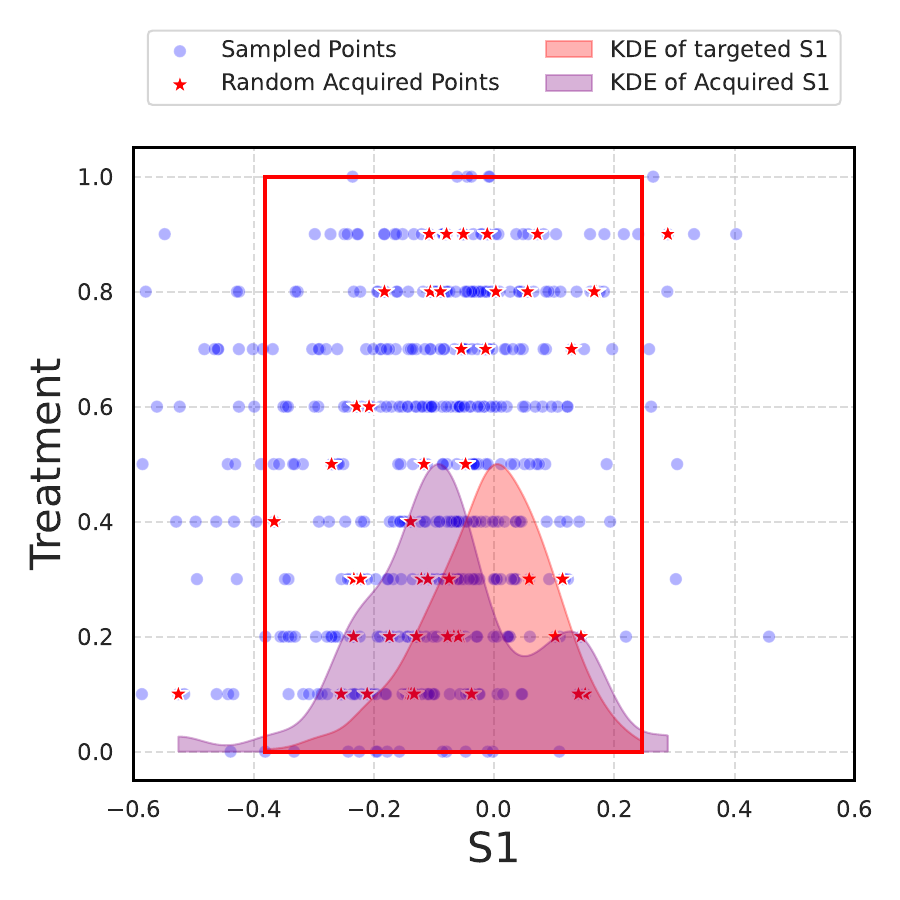}
    \end{minipage}
    \begin{minipage}{0.24\linewidth}
        \centering
        \includegraphics[width=\linewidth]{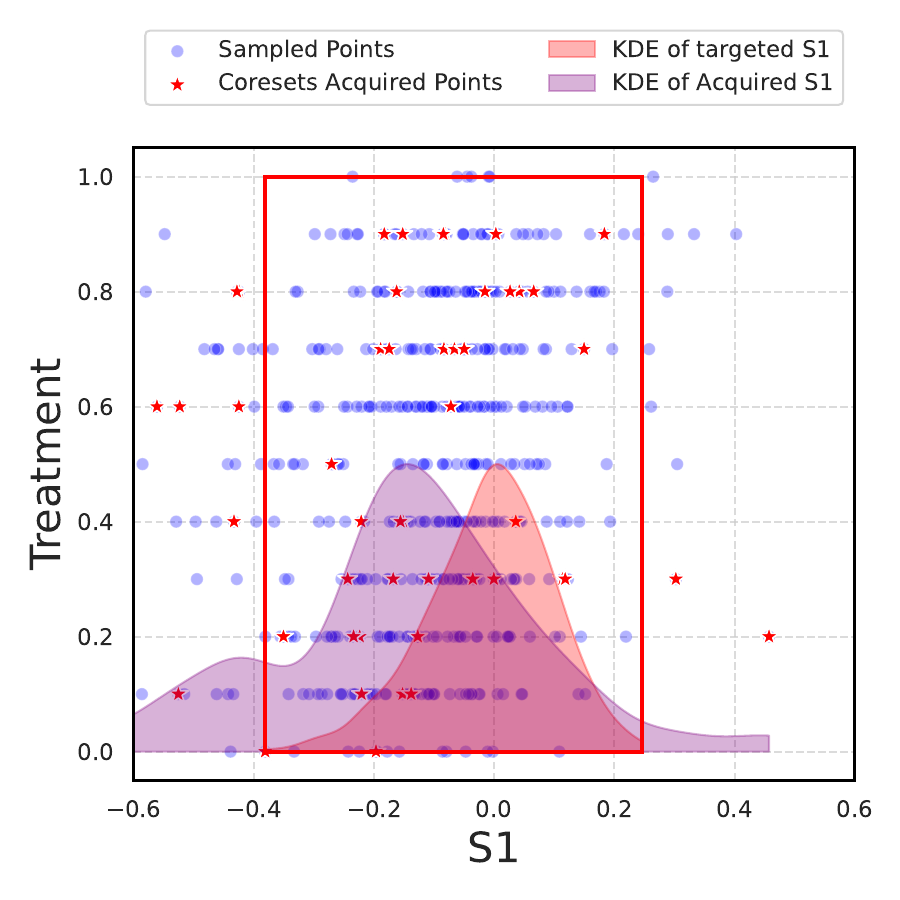}
    \end{minipage}
    \begin{minipage}{0.24\linewidth}
        \centering
        \includegraphics[width=\linewidth]{Figures/Main_paper/Experiments/Visualization/cate/regular/treatment-discrete/active_learning/var_rank_acquired_points.pdf}
    \end{minipage}
    \begin{minipage}{0.24\linewidth}
        \centering
        \includegraphics[width=\linewidth]{Figures/Main_paper/Experiments/Visualization/cate/regular/treatment-discrete/active_learning/var_reduction_rank_acquired_points.pdf}
    \end{minipage}

    \vspace{0.5em}
    
    \begin{minipage}{0.24\linewidth}
        \centering
        \includegraphics[width=\linewidth]{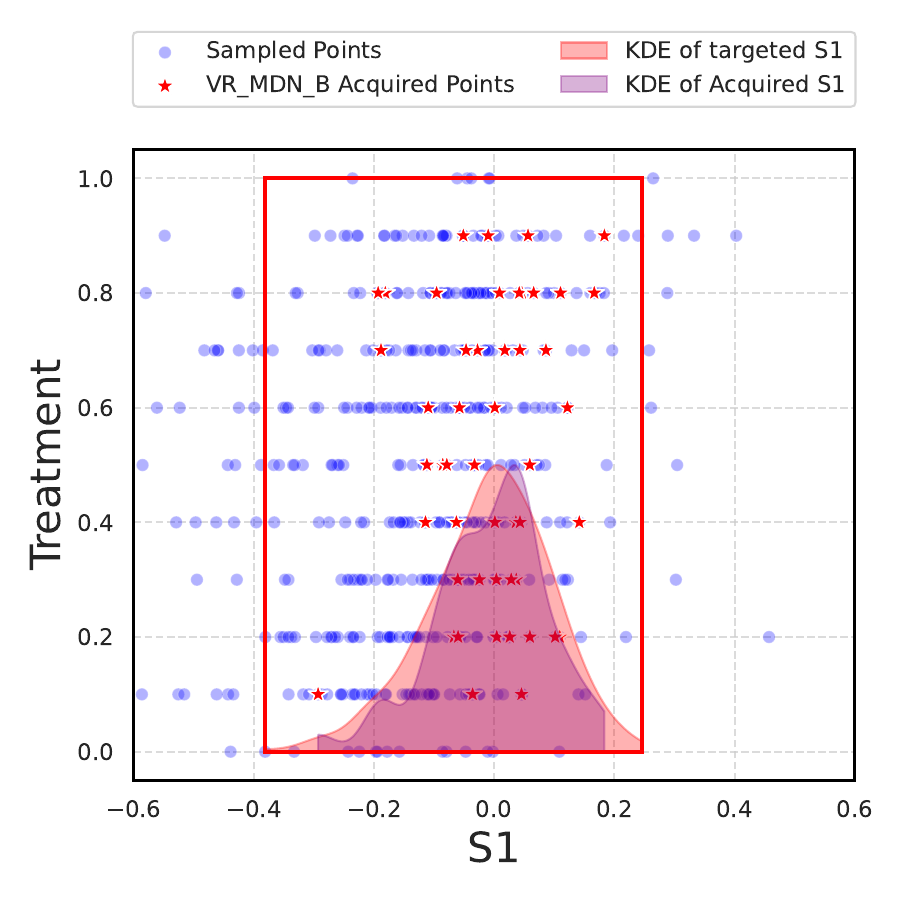}
    \end{minipage}
    \begin{minipage}{0.24\linewidth}
        \centering
        \includegraphics[width=\linewidth]{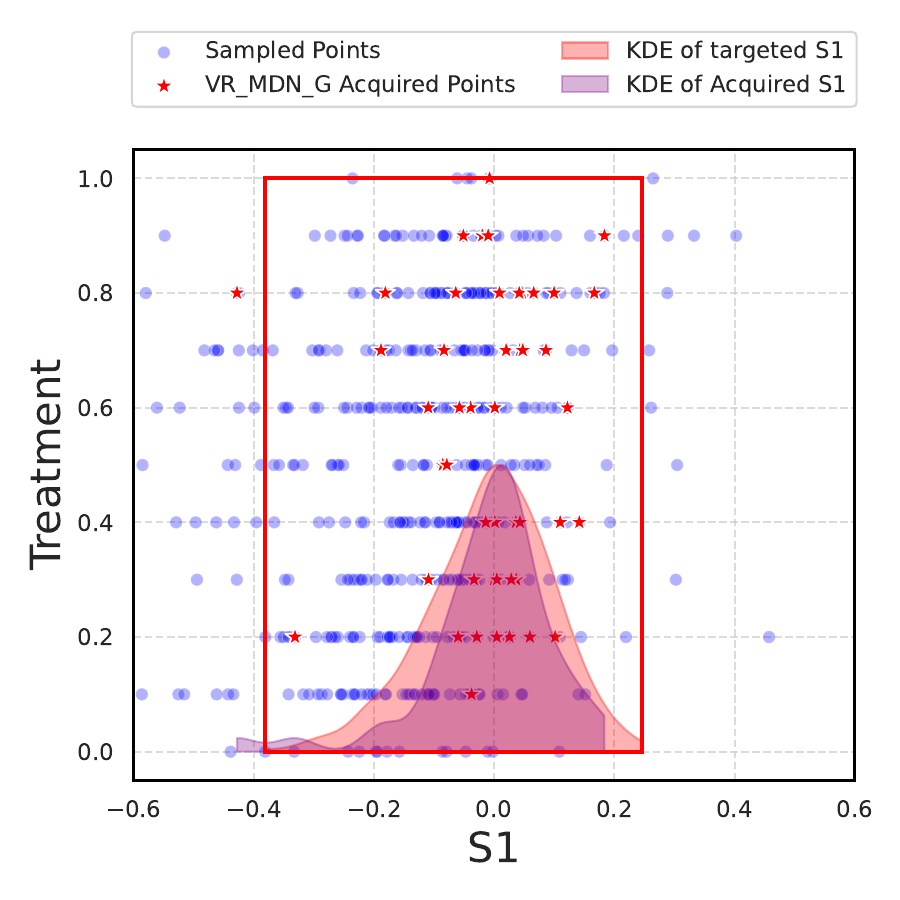}
    \end{minipage}
    \begin{minipage}{0.24\linewidth}
        \centering
        \includegraphics[width=\linewidth]{Figures/Main_paper/Experiments/Visualization/cate/regular/treatment-discrete/active_learning/VR_CME_B_acquired_points.pdf}
    \end{minipage}
    \begin{minipage}{0.24\linewidth}
        \centering
        \includegraphics[width=\linewidth]{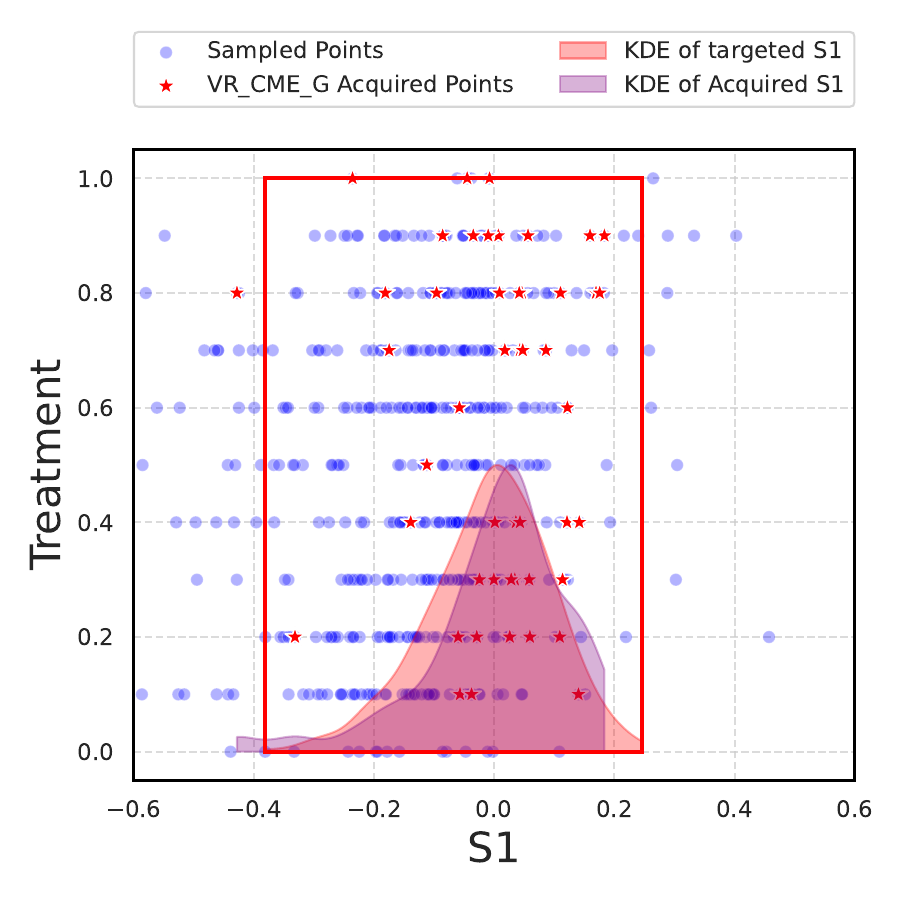}
    \end{minipage}

    \vspace{0.5em}
    
    \begin{minipage}{0.24\linewidth}
        \centering
        \includegraphics[width=\linewidth]{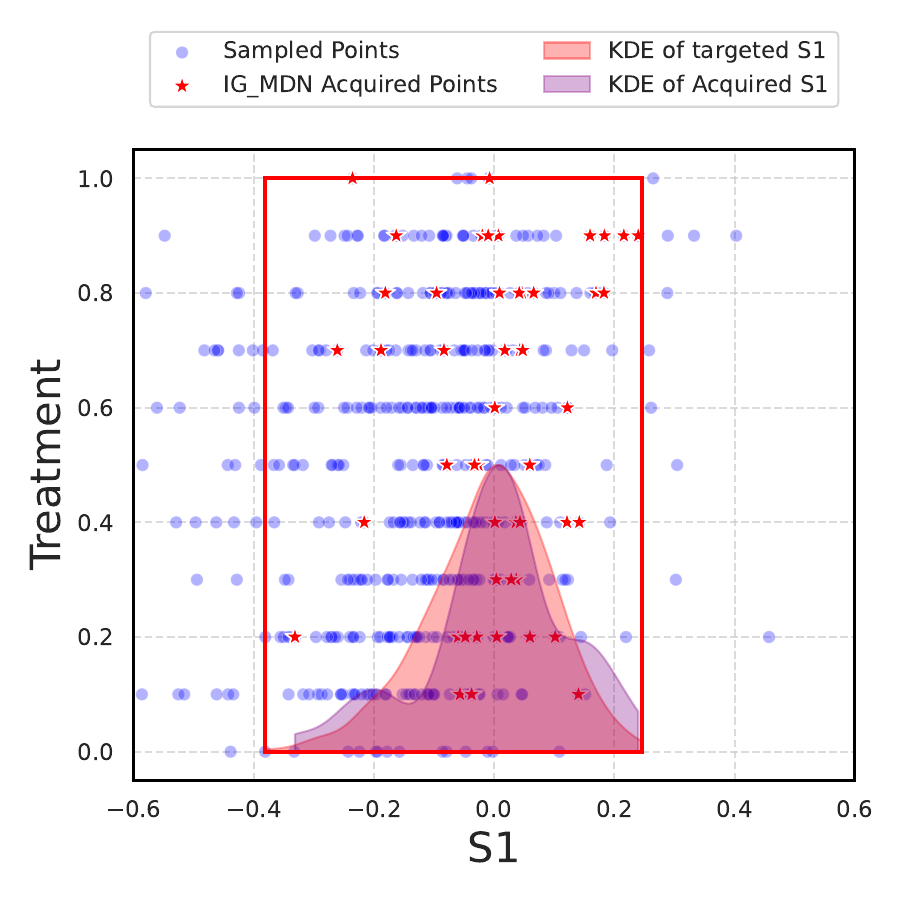}
    \end{minipage}
    \begin{minipage}{0.24\linewidth}
        \centering
        \includegraphics[width=\linewidth]{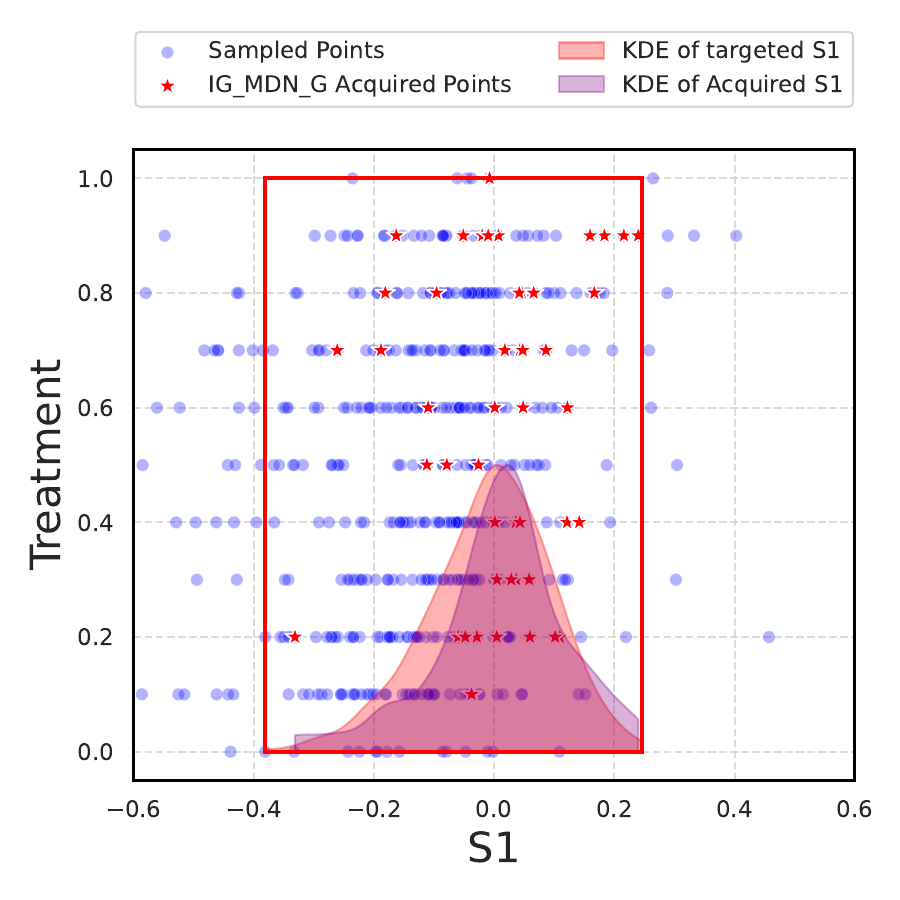}
    \end{minipage}
    \begin{minipage}{0.24\linewidth}
        \centering
        \includegraphics[width=\linewidth]{Figures/Main_paper/Experiments/Visualization/cate/regular/treatment-discrete/active_learning/IG_CME_B_acquired_points.pdf}
    \end{minipage}
    \begin{minipage}{0.24\linewidth}
        \centering
        \includegraphics[width=\linewidth]{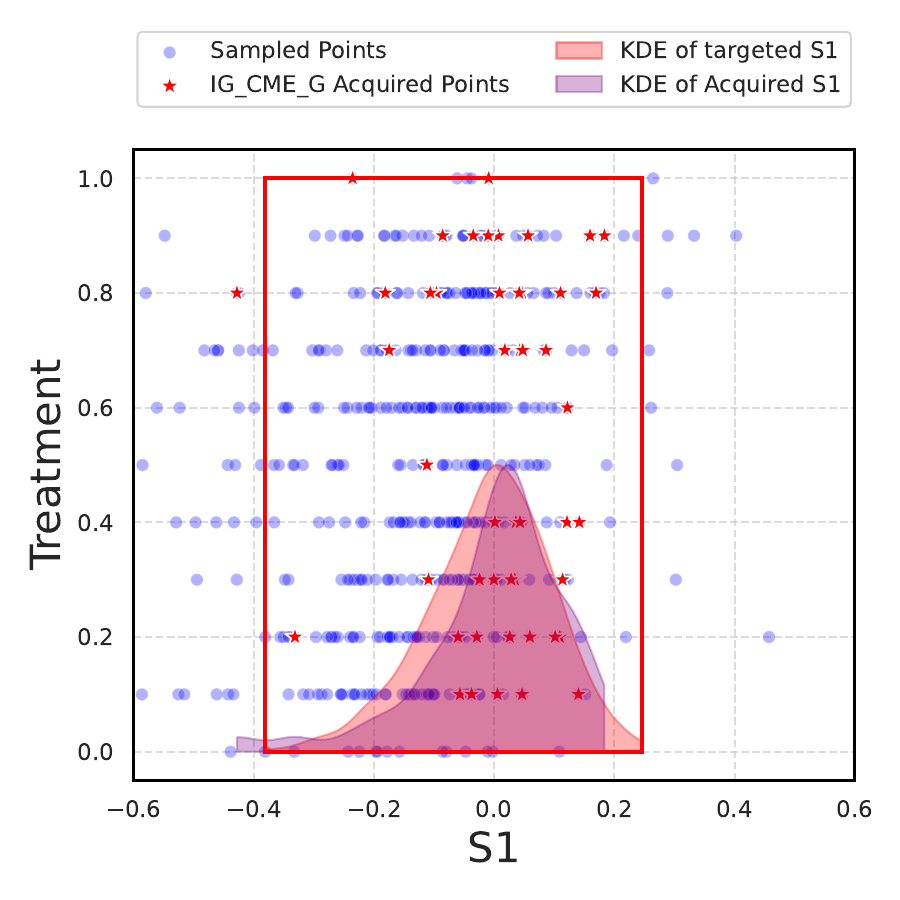}
    \end{minipage}

    \caption{Acquired data points visualization.}
    \label{app_fig:acq_visualization_results}
\end{figure}

\subsubsection{Ablation results}
\label{app_subsubsec:results_ablation}

In this part, we present various ablation studies to evaluate the robustness of our data acquisition strategies. For active learning, the experimental setups we control include: (1) the size of the pool dataset, in Fig.~\ref{app_fig:different_pooldataset_sizes} (2) the number of warm-start data points, in Fig.~\ref{app_fig:different_starting_points}, (3) different batch acquisition sizes, in Fig.~\ref{app_fig:one_step_acqusition}, (4) using soft acquisition trick for batch acquisition, in Fig.~\ref{app_fig:softmax_acq} and (5) different noises~\ref{app_fig:different noises}. Regarding our model, since we use GPs as the regression function, we provide two more kernel choices, the Rational Quadratic (RQ) kernel and the Matérn kernel, to assess the model's reliability and sensitivity to different kernel selections, in Fig.~\ref{app_fig:different_kernel_choices}. 

\begin{wrapfigure}{r}{0.52\linewidth}
    \centering
    \vspace{-1em}
    \includegraphics[width=0.48\linewidth]{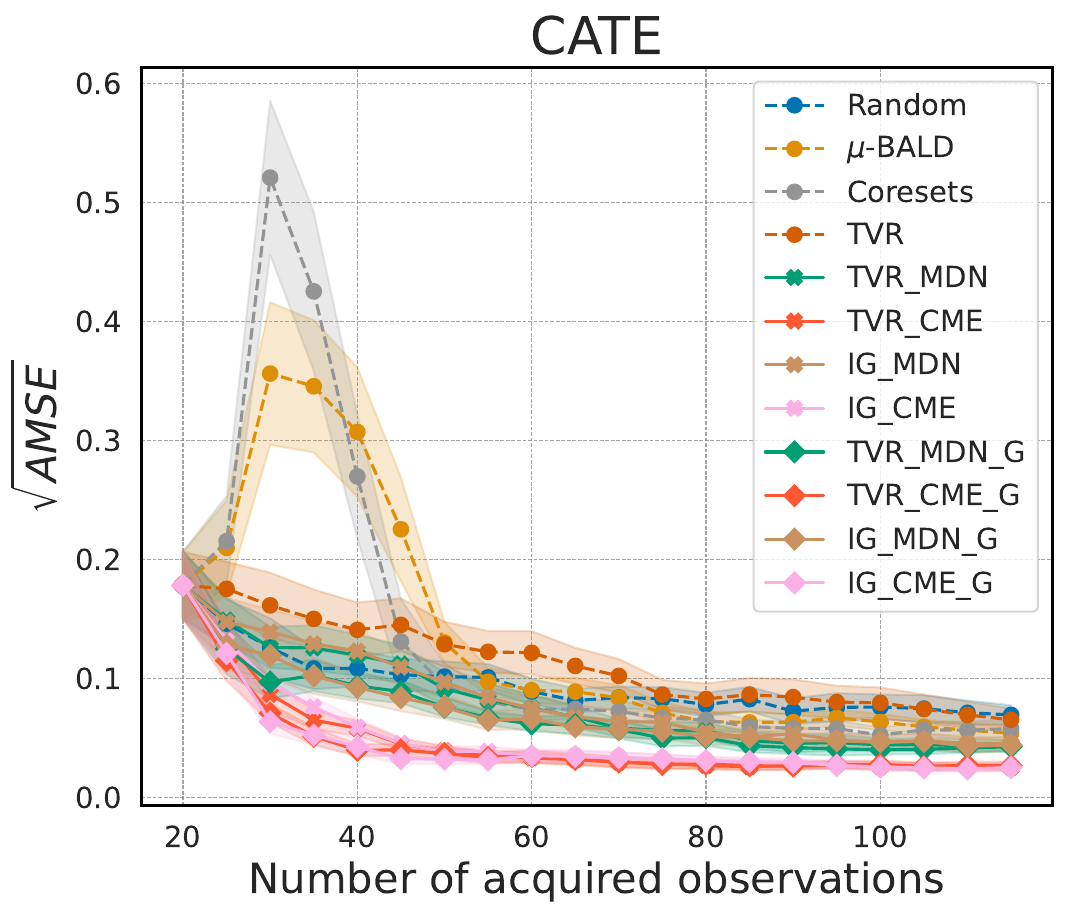}
    \includegraphics[width=0.48\linewidth]{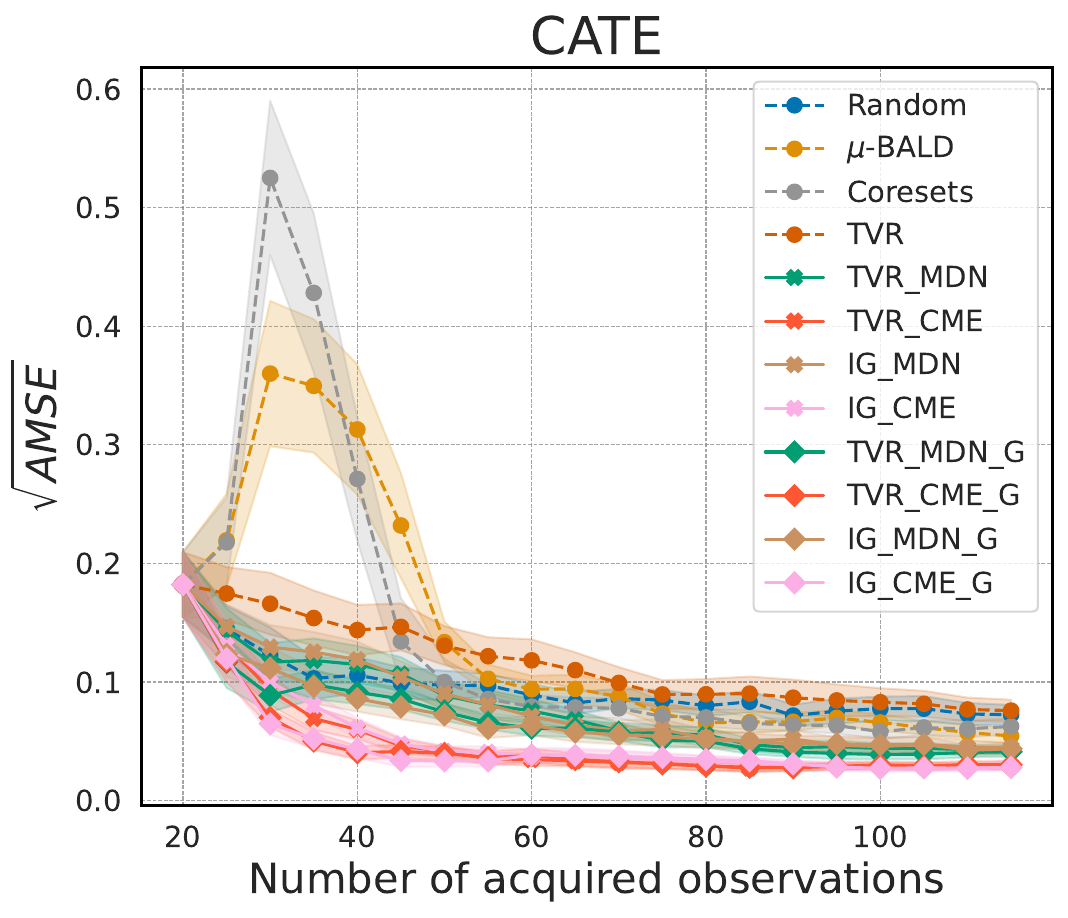}
    \caption{Full results of the $\sqrt{\text{AMSE}}$ performance (with shaded standard error) for all methods on the Visualization dataset. From left to right: (1) In-distribution results; (2) Testing-distribution results.}
    \label{app_fig:visualization_results_full}
    \vspace{-1.5em}
\end{wrapfigure}
Fig.~\ref{app_fig:one_step_acqusition} shows that smaller batch sizes improve performance by enabling frequent retraining, refining uncertainty estimates, and enhancing adaptive sampling. This supports our conclusion that greedy acquisition approximates sequential single-point estimation. However, smaller batches also increase training iterations, balancing performance and efficiency.

Across all results, our CME-based greedy data acquisition strategies, including the IG-based and TVR-based methods, consistently demonstrate the best performance. In most cases, the naive method also performs comparably to the greedy approach, with the added benefit of significantly faster computation. In practice, when computational resources are limited, the naive method can be a practical alternative that delivers strong performance.

The CDE MC-sampling methods generally achieve second-tier performance across various settings. This is because they aim to reduce the uncertainty of the target estimator, focusing on the constructed distribution rather than the observational data represented by the pool dataset. However, as previously discussed, the CDE method struggles in high-dimensional scenarios and is sensitive to noisy covariates that do not influence treatment assignment or outcome values.

In contrast, the CME-based method iteratively refines the learned representation by adjusting the bandwidth hyperparameter of the RBF kernel, enabling it to better capture the influence of relevant variables. Building on this insight, leveraging neural network-based methods could further improve performance by filtering out irrelevant covariates more effectively~\citep{xu2021learning}. Alternatively, one could apply independence tests between covariates, treatment, and outcomes to remove irrelevant variables~\citep{zhang2018large}. However, this approach heavily depends on the reliability of the independence testing tool. If it incorrectly excludes important variables, it could adversely affect subsequent procedures. We leave these explorations for future work.

\begin{figure}[h]
    \centering
    \begin{minipage}{0.19\linewidth}
        \centering
        \includegraphics[width=\linewidth]{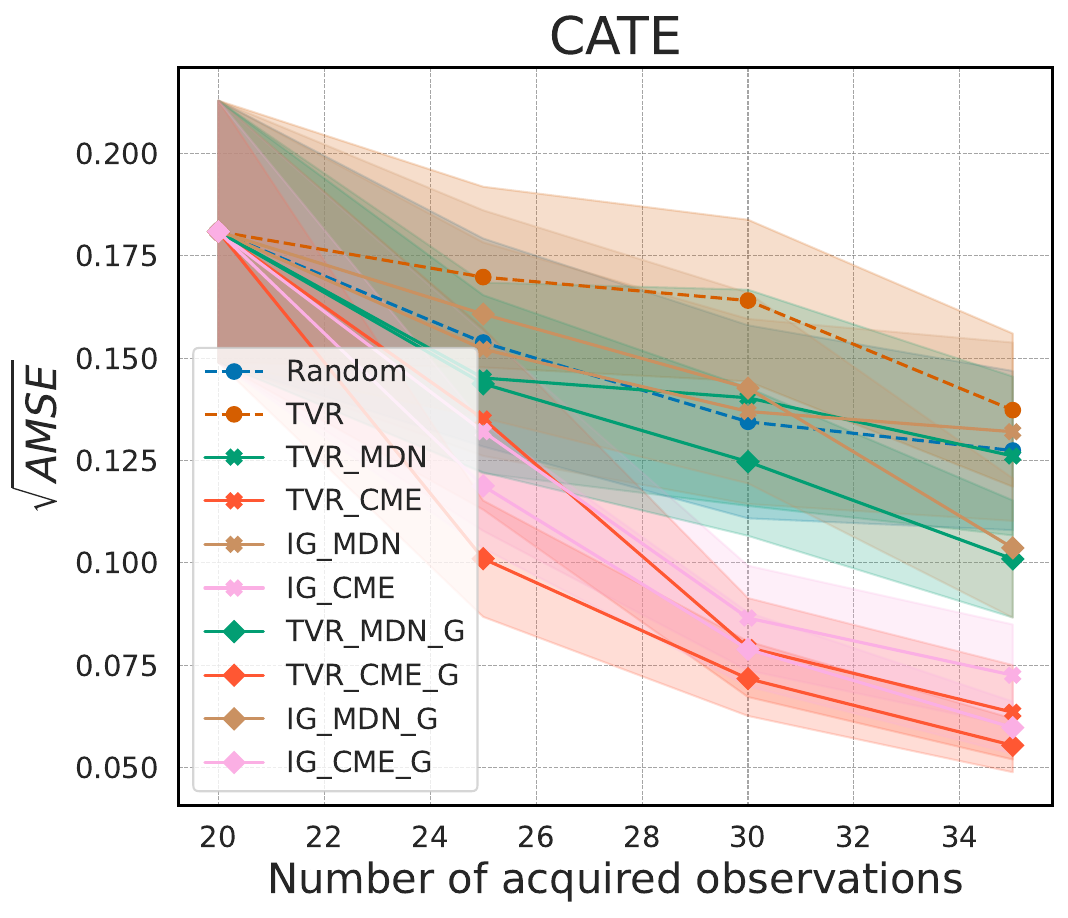}
    \end{minipage}
    \begin{minipage}{0.19\linewidth}
        \centering
        \includegraphics[width=\linewidth]{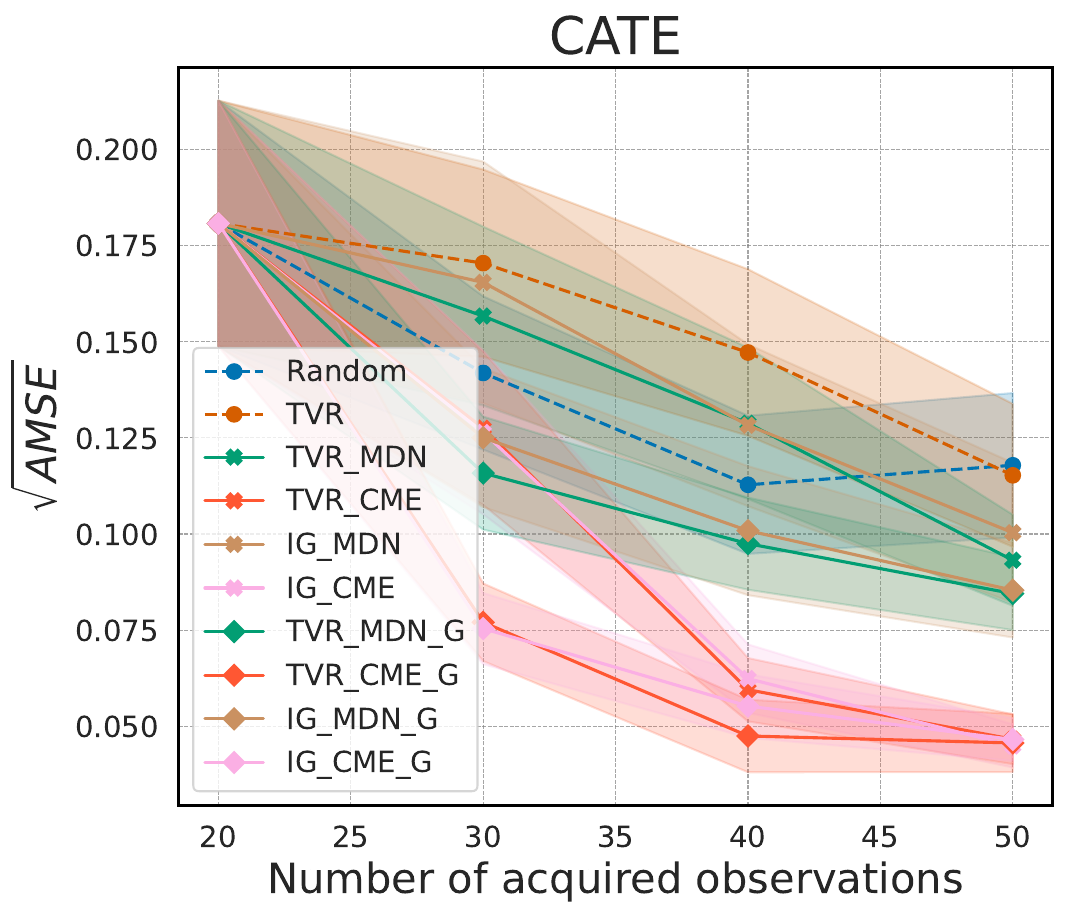}
    \end{minipage}
    \begin{minipage}{0.19\linewidth}
        \centering
        \includegraphics[width=\linewidth]{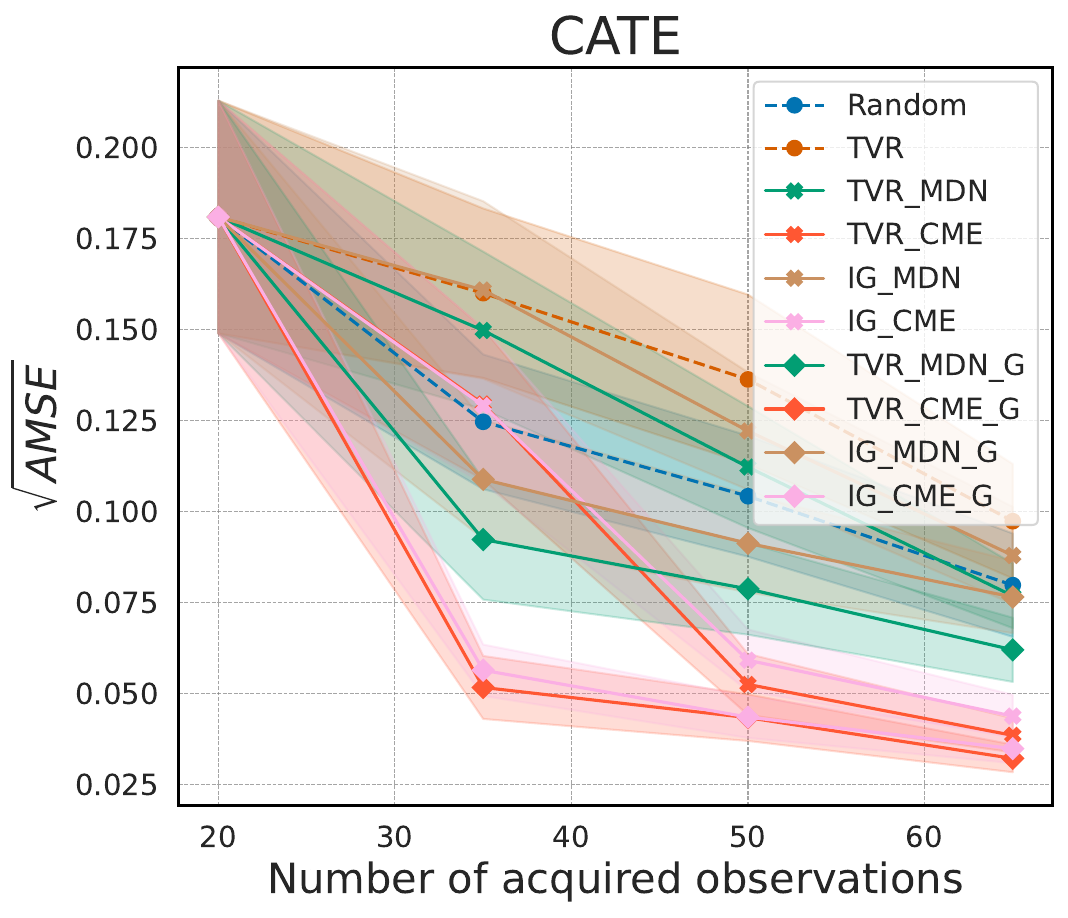}
    \end{minipage}
    \begin{minipage}{0.19\linewidth}
        \centering
        \includegraphics[width=\linewidth]{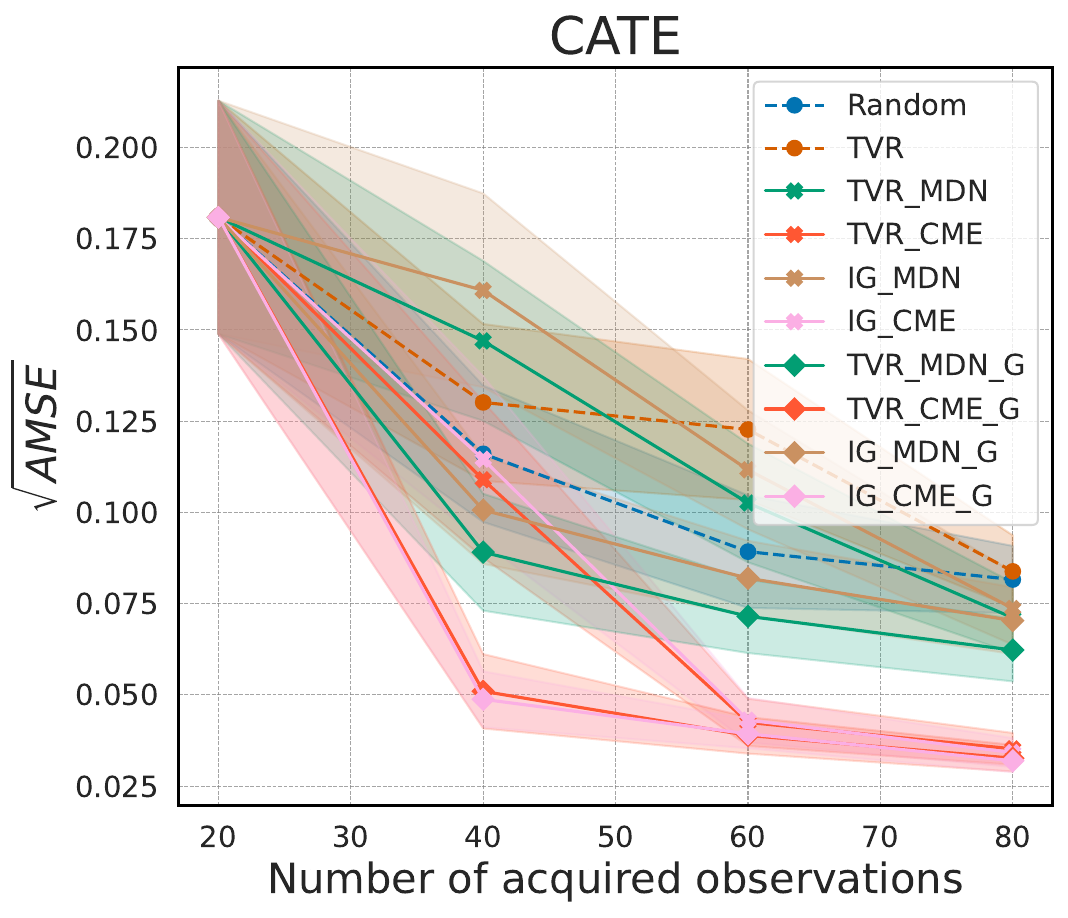}
    \end{minipage}
    \begin{minipage}{0.19\linewidth}
        \centering
        \includegraphics[width=\linewidth]{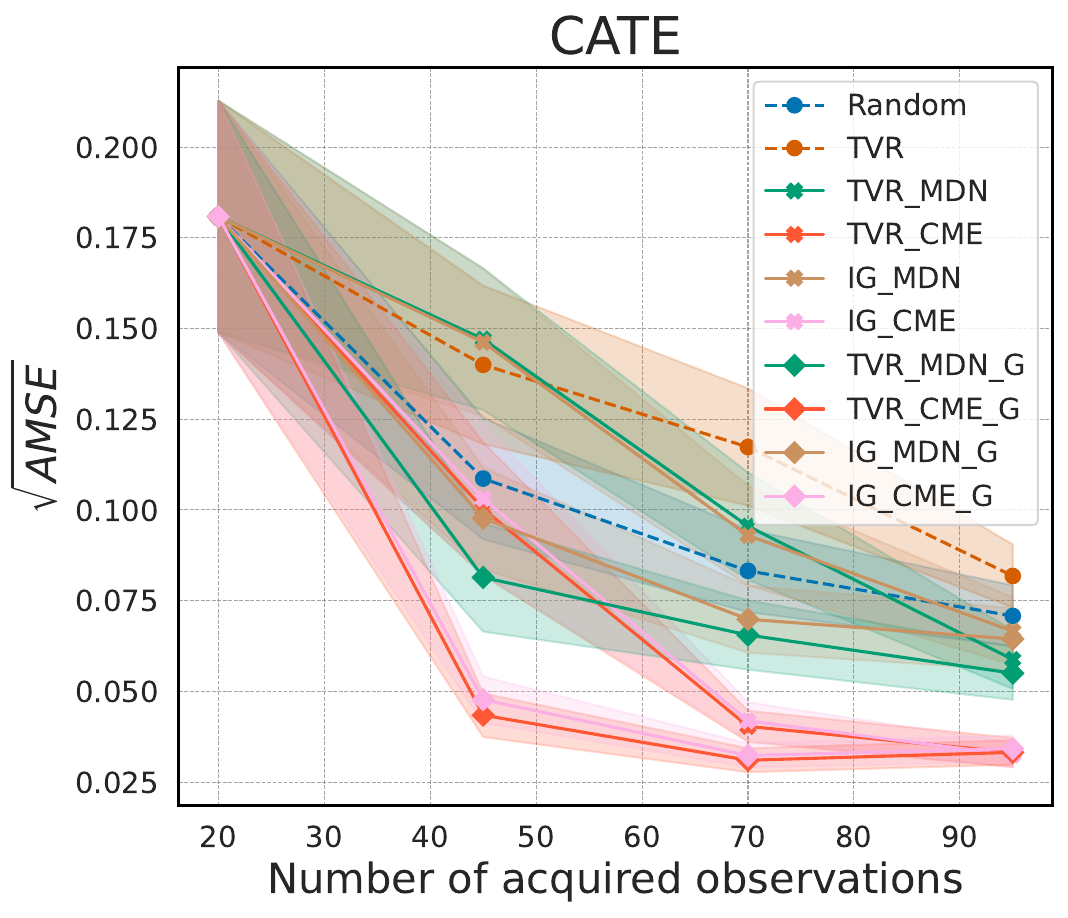}
    \end{minipage}

    \vspace{0.5em}

    \begin{minipage}{0.19\linewidth}
        \centering
        \includegraphics[width=\linewidth]{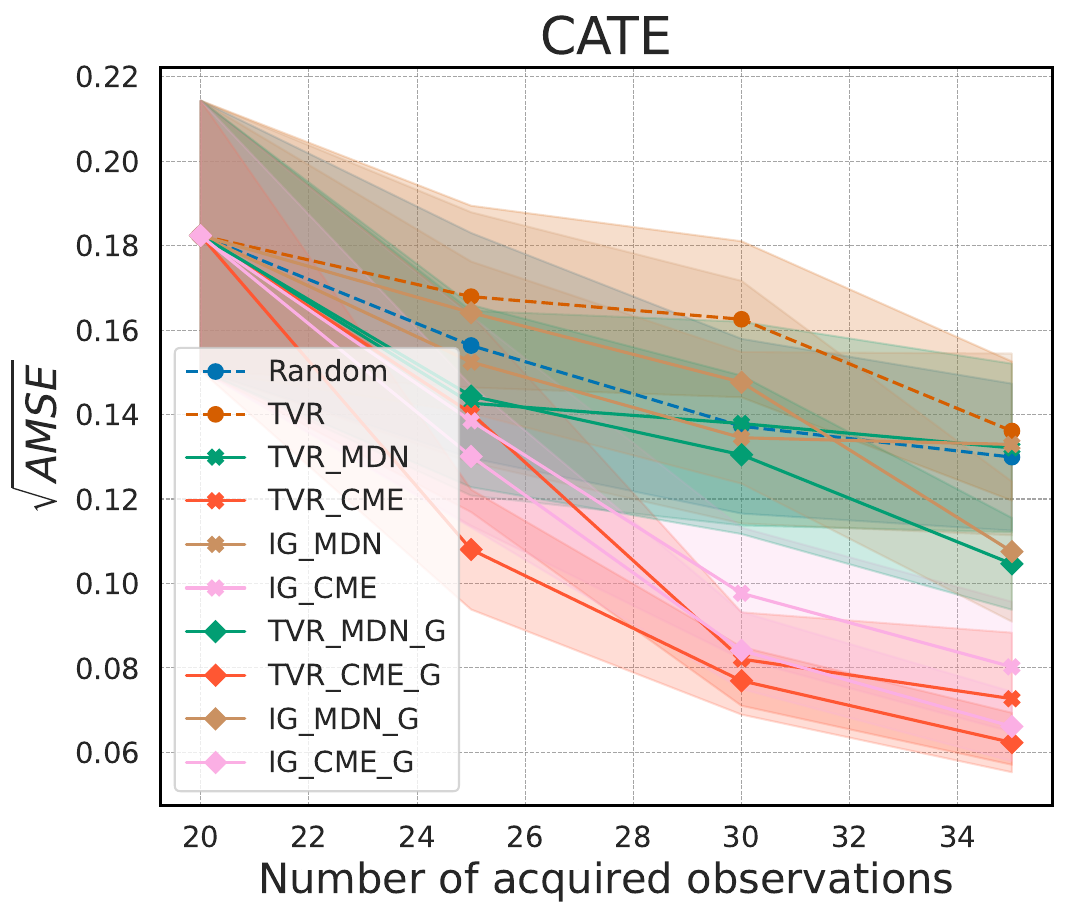}
    \end{minipage}
    \begin{minipage}{0.19\linewidth}
        \centering
        \includegraphics[width=\linewidth]{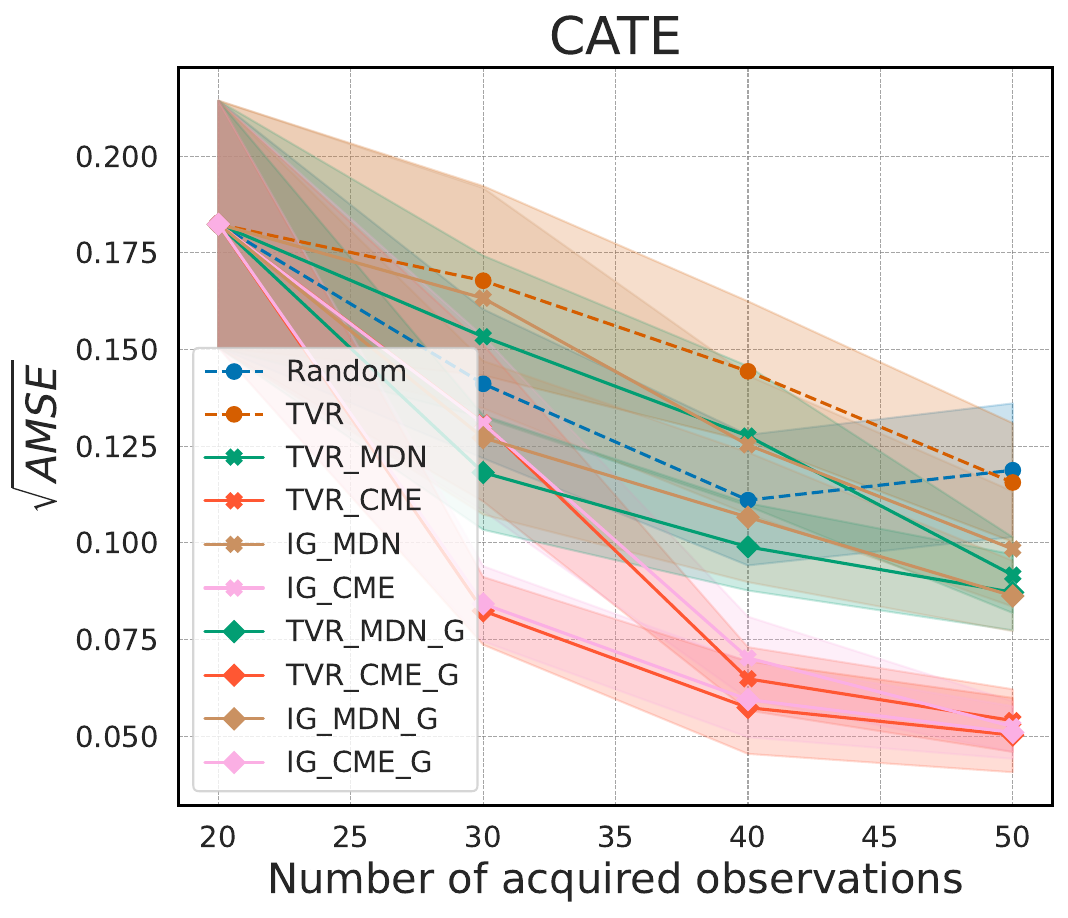}
    \end{minipage}
    \begin{minipage}{0.19\linewidth}
        \centering
        \includegraphics[width=\linewidth]{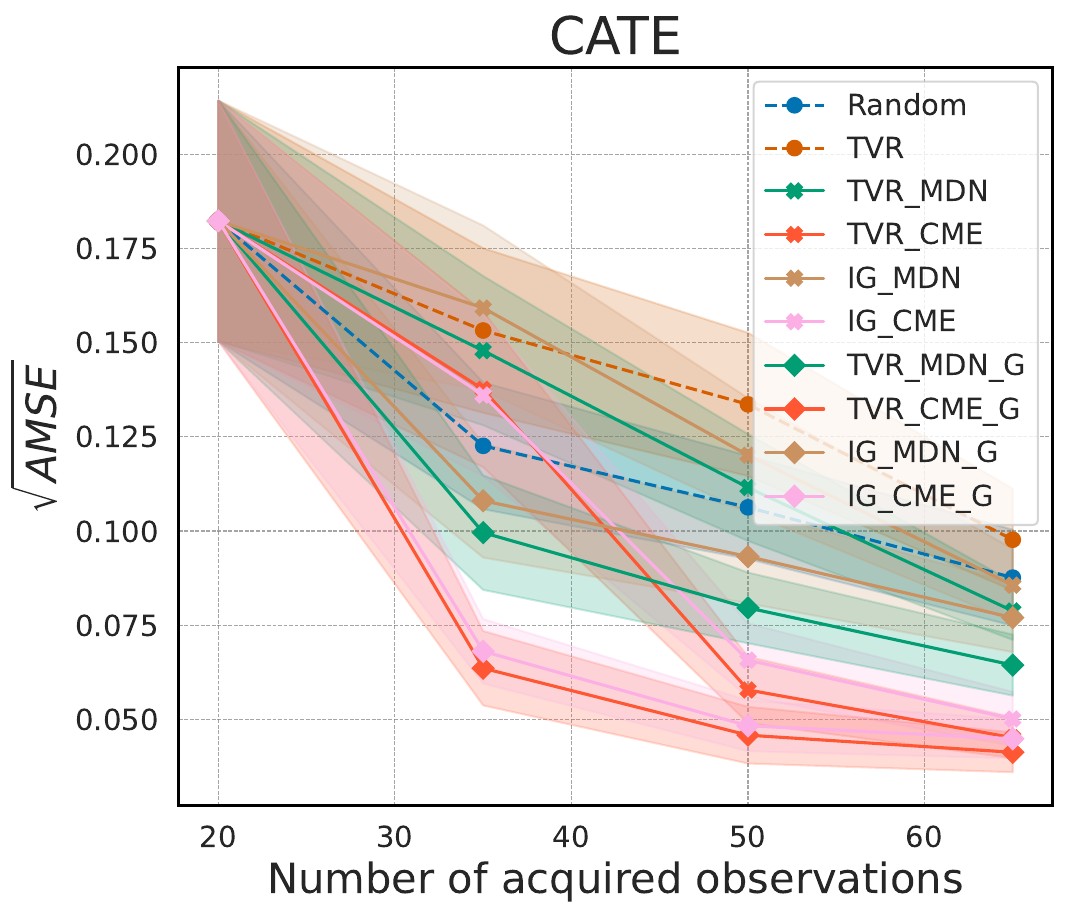}
    \end{minipage}
    \begin{minipage}{0.19\linewidth}
        \centering
        \includegraphics[width=\linewidth]{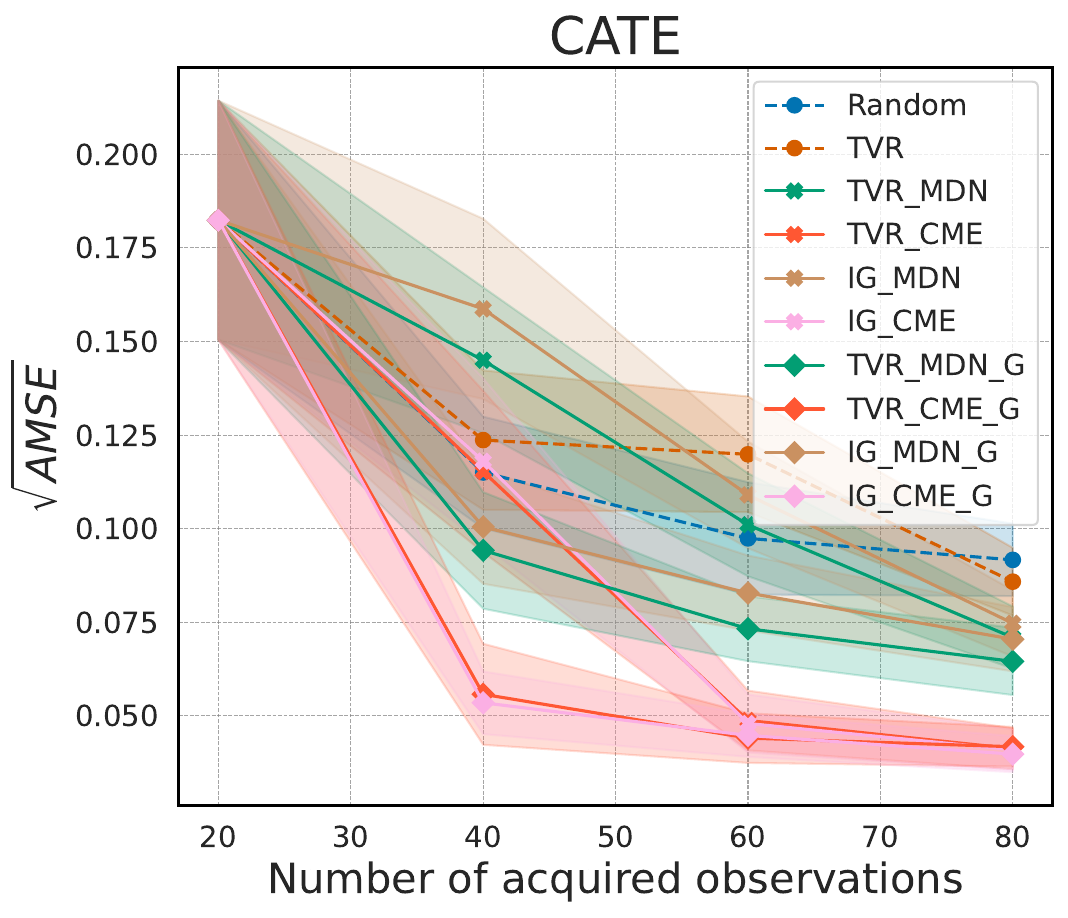}
    \end{minipage}
    \begin{minipage}{0.19\linewidth}
        \centering
        \includegraphics[width=\linewidth]{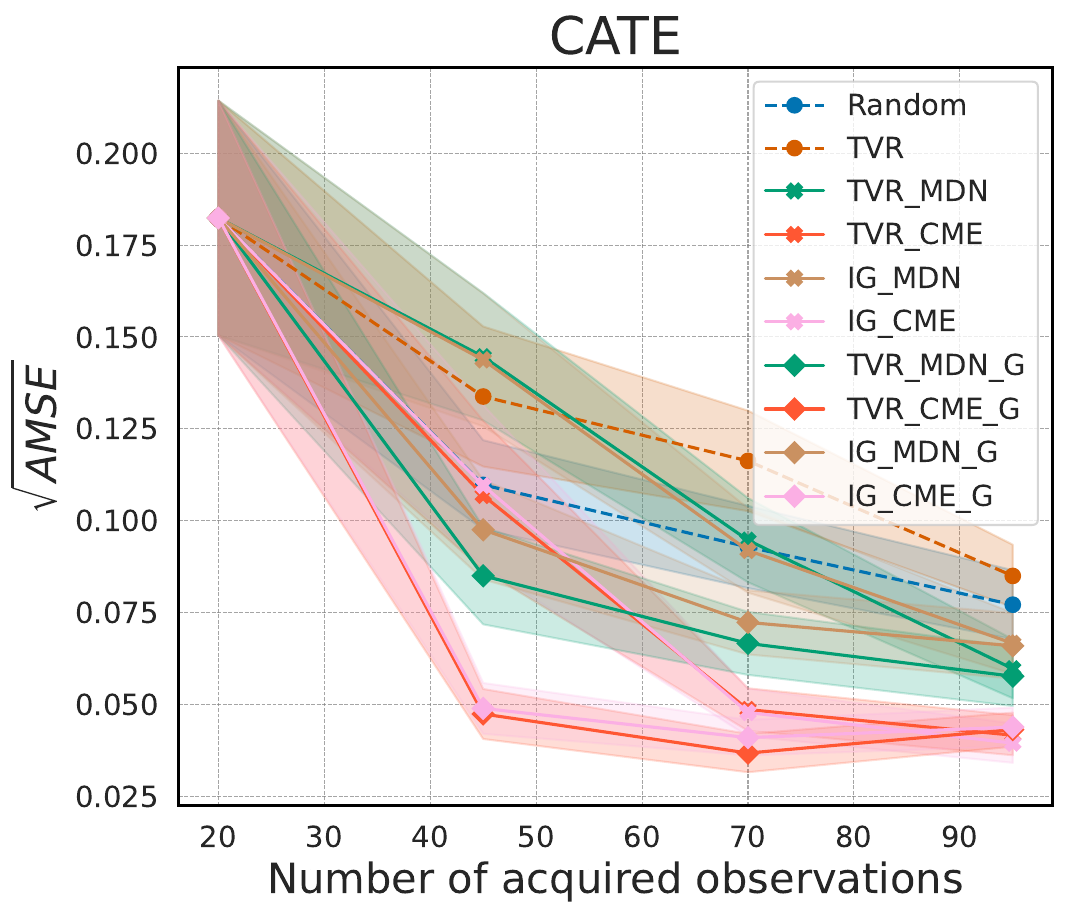}
    \end{minipage}

    \caption{\textbf{One step acquisition / Different batch sizes.} The $\sqrt{\text{AMSE}}$ performance (with shaded standard error) for different batch sizes is presented. The first row illustrates the in-distribution performance, while the second row depicts the out-of-distribution performance. From left to right, the batch sizes are set to 5, 10, 15, 20, and 25.}
    \label{app_fig:one_step_acqusition}
\end{figure}

\begin{figure}[h]
    \centering
    \begin{minipage}{0.22\linewidth}
        \centering
        \includegraphics[width=\linewidth]{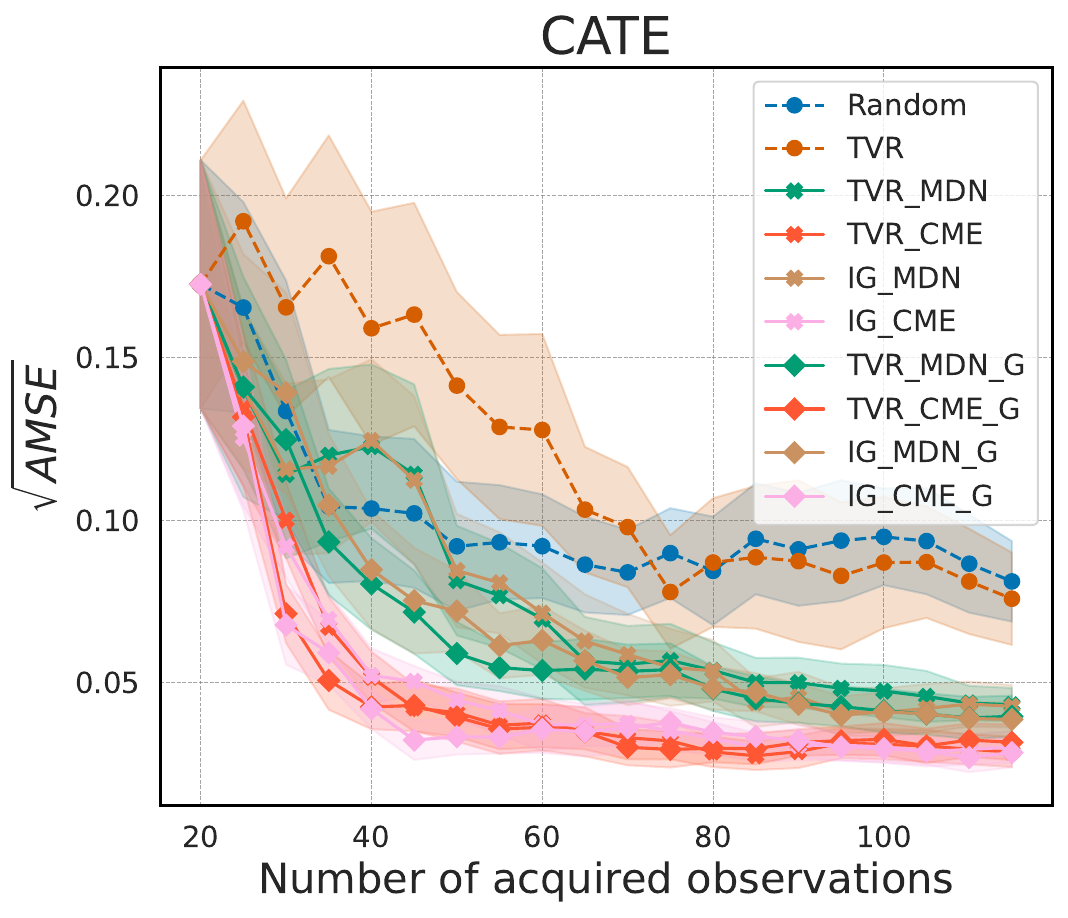}
    \end{minipage}
    \begin{minipage}{0.22\linewidth}
        \centering
        \includegraphics[width=\linewidth]{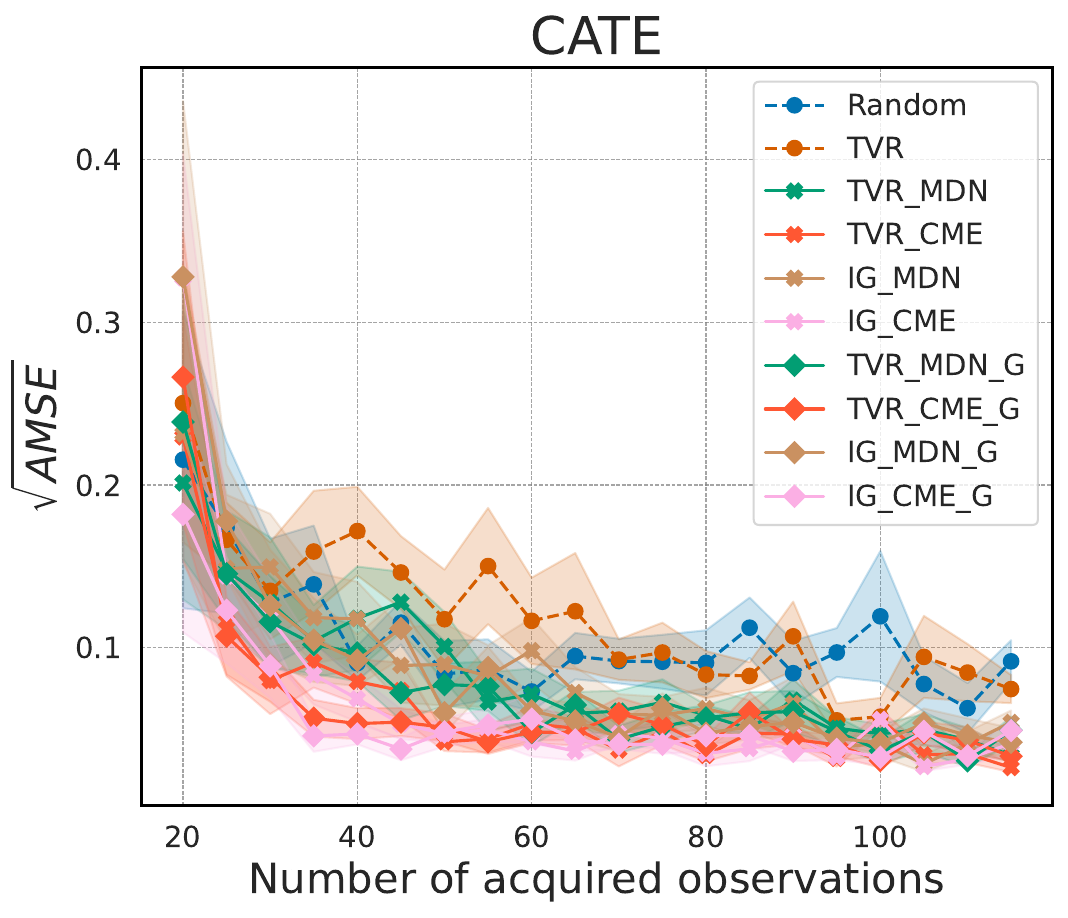}
    \end{minipage}
    \begin{minipage}{0.22\linewidth}
        \centering
        \includegraphics[width=\linewidth]{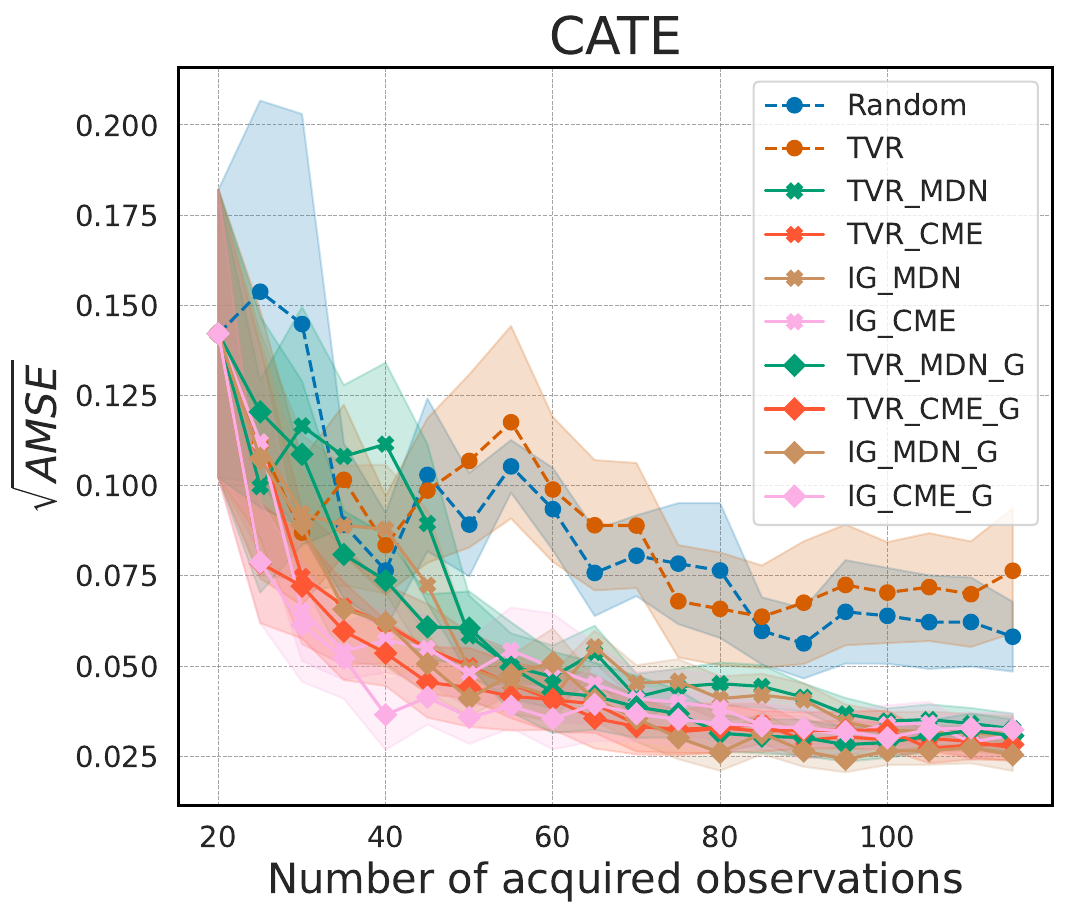}
    \end{minipage}
    \begin{minipage}{0.22\linewidth}
        \centering
        \includegraphics[width=\linewidth]{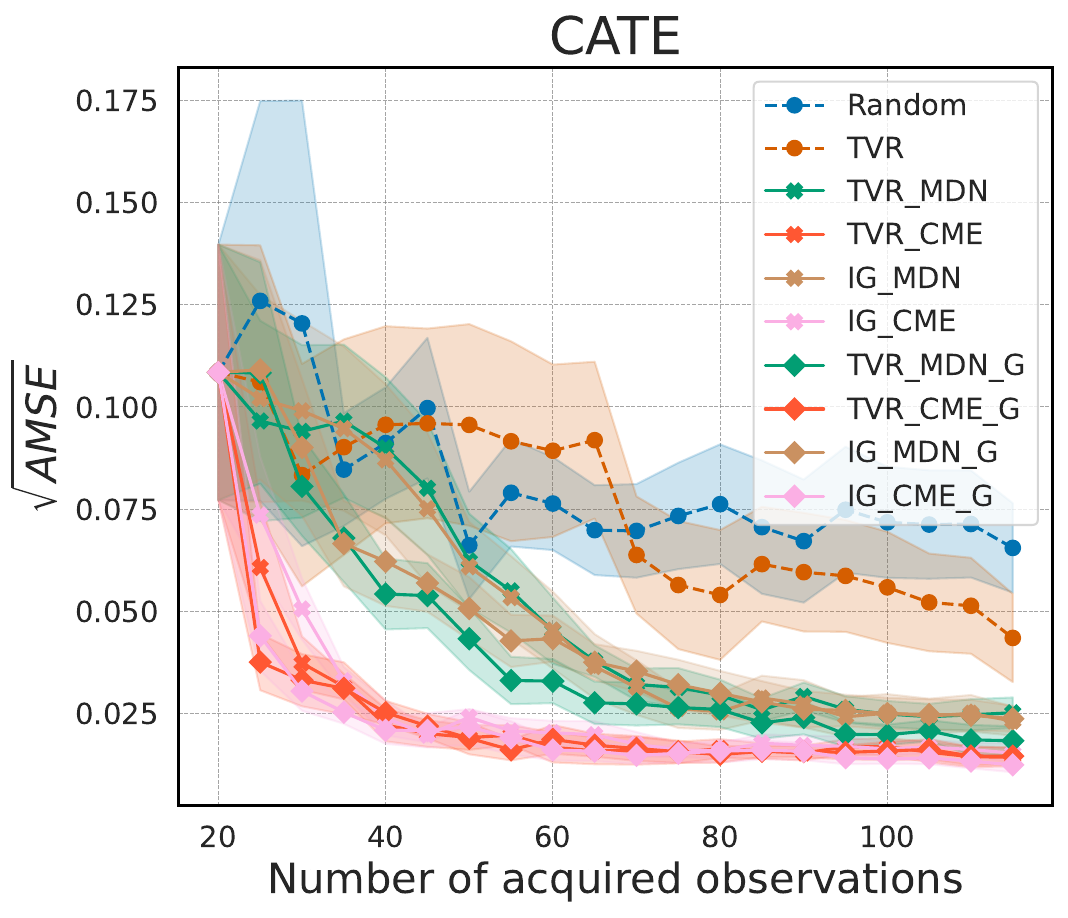}
    \end{minipage}

    \vspace{0.5em}

    \begin{minipage}{0.22\linewidth}
        \centering
        \includegraphics[width=\linewidth]{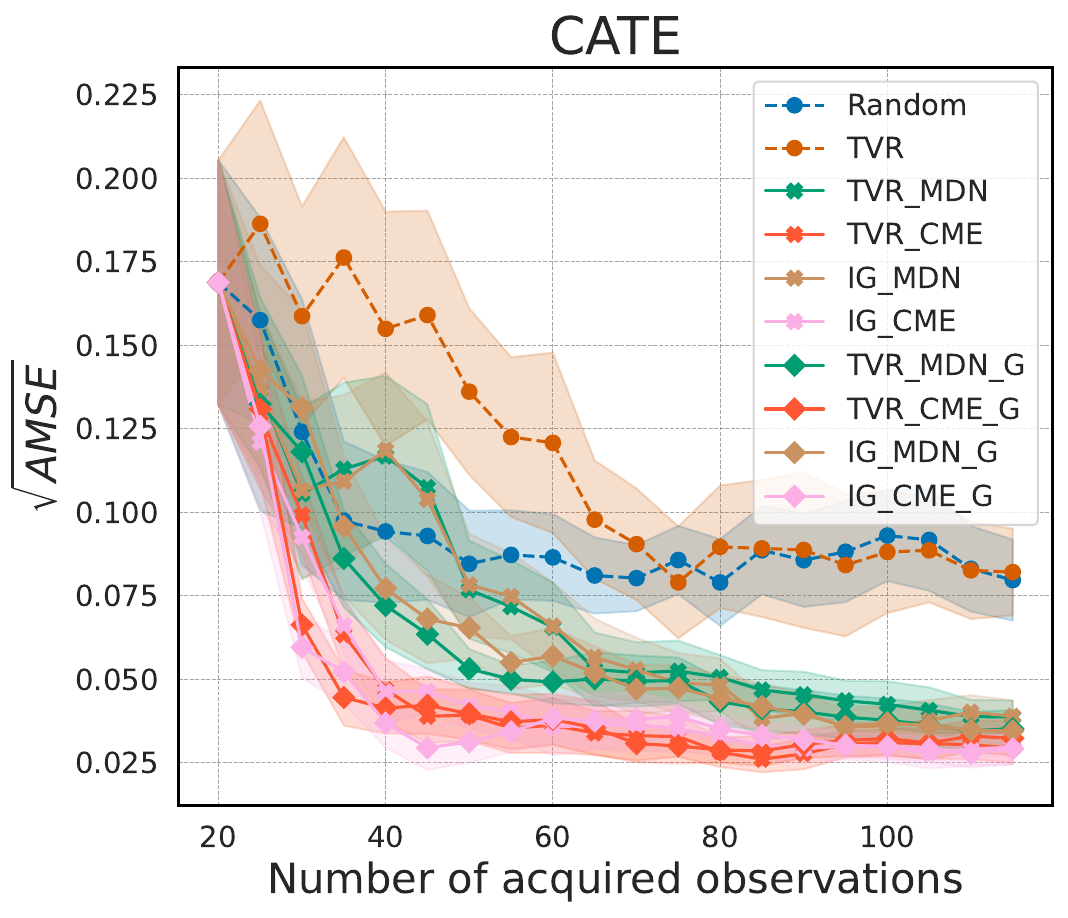}
    \end{minipage}
    \begin{minipage}{0.22\linewidth}
        \centering
        \includegraphics[width=\linewidth]{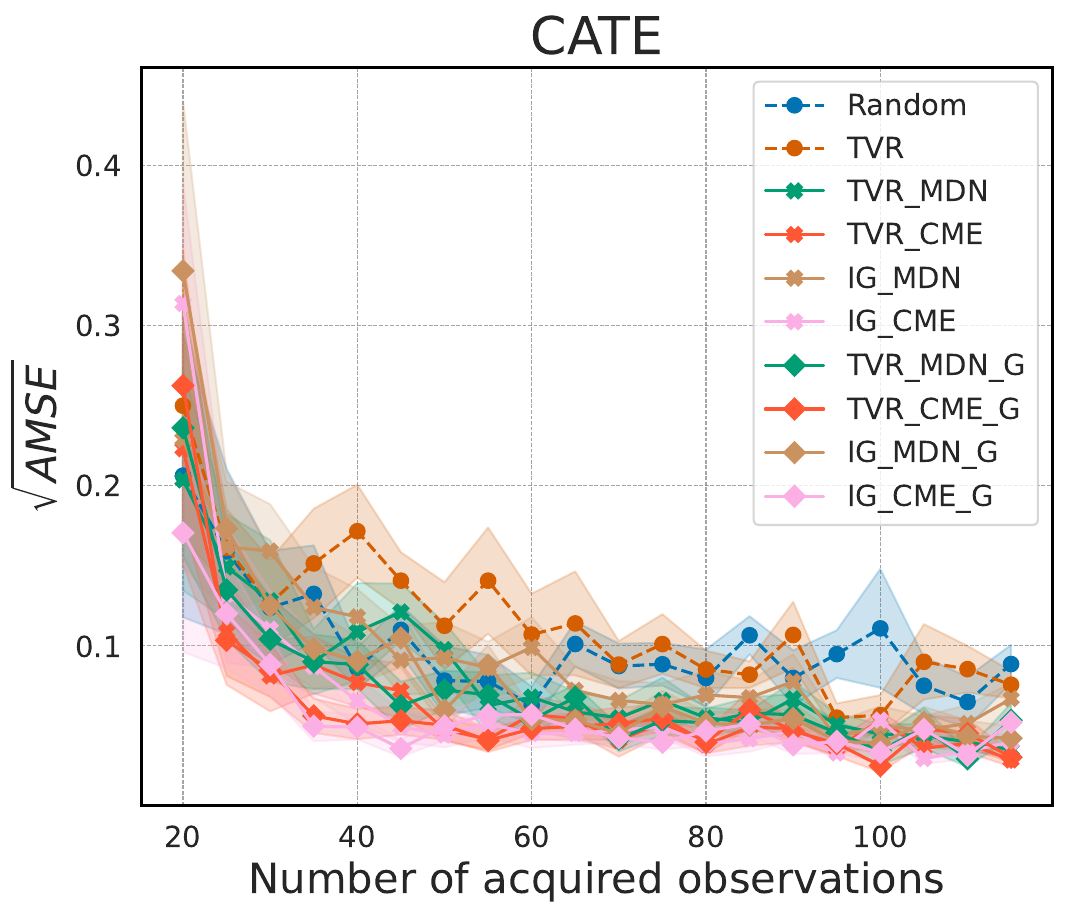}
    \end{minipage}
    \begin{minipage}{0.22\linewidth}
        \centering
        \includegraphics[width=\linewidth]{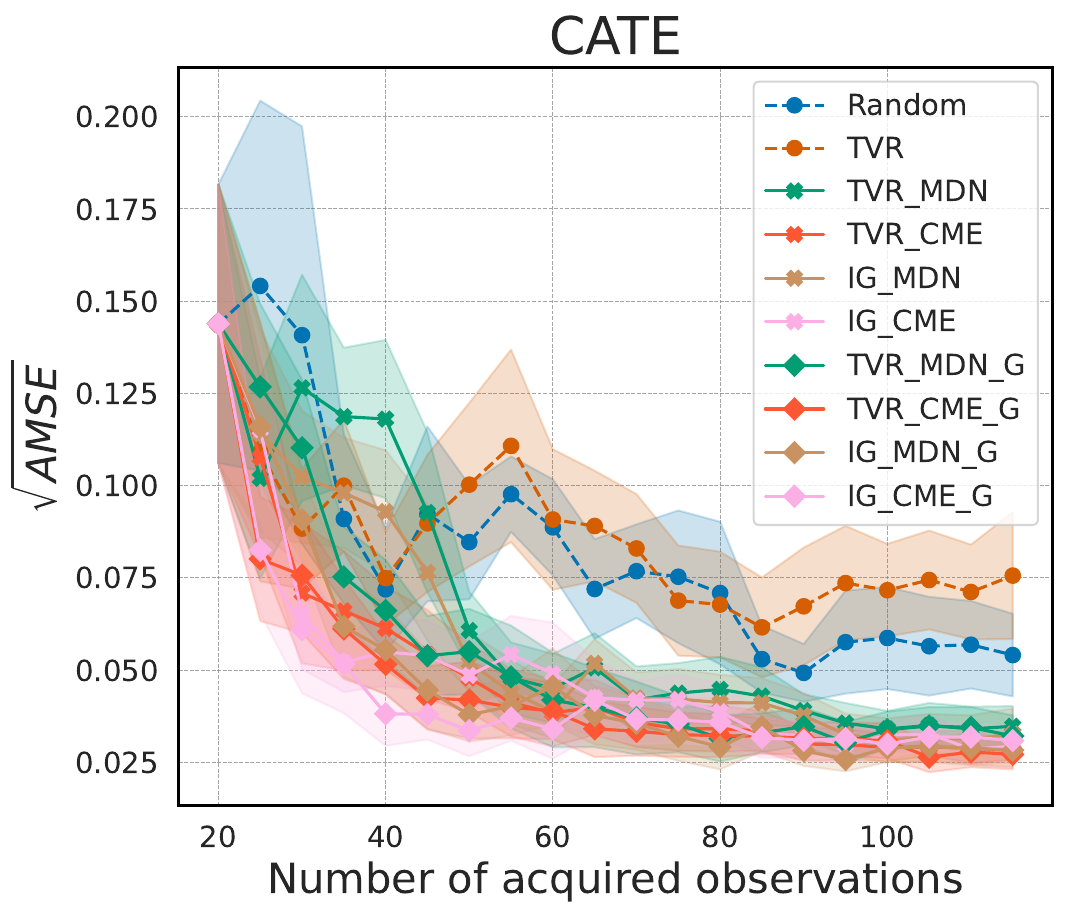}
    \end{minipage}
    \begin{minipage}{0.22\linewidth}
        \centering
        \includegraphics[width=\linewidth]{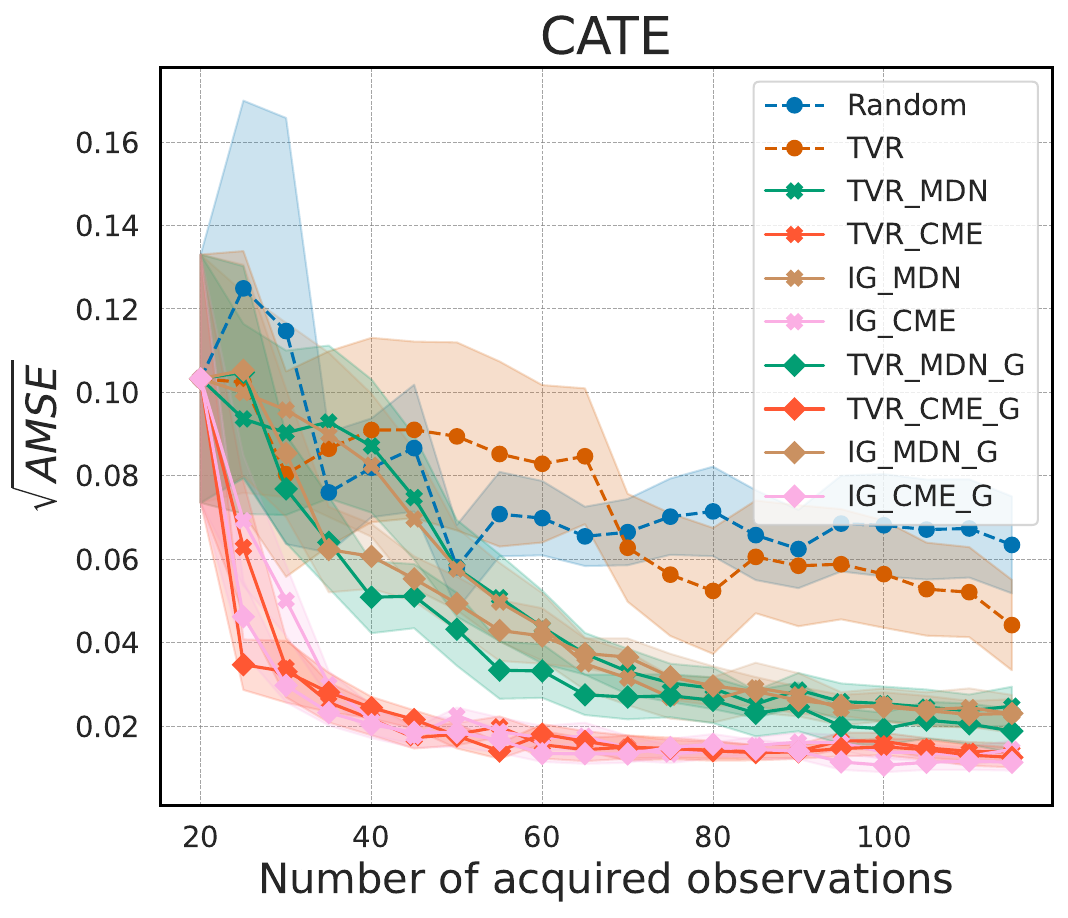}
    \end{minipage}

    \caption{\textbf{Different noises.} The $\sqrt{\text{AMSE}}$ performance (with shaded standard error) for different batch sizes is presented. The first row illustrates the in-distribution performance, while the second row depicts the out-of-distribution performance. From left to right, the noises are normal, student, Laplace and uniform.}
    \label{app_fig:different noises}
\end{figure}

\begin{figure}[h]
    \centering
    \begin{minipage}{0.19\linewidth}
        \centering
        \includegraphics[width=\linewidth]{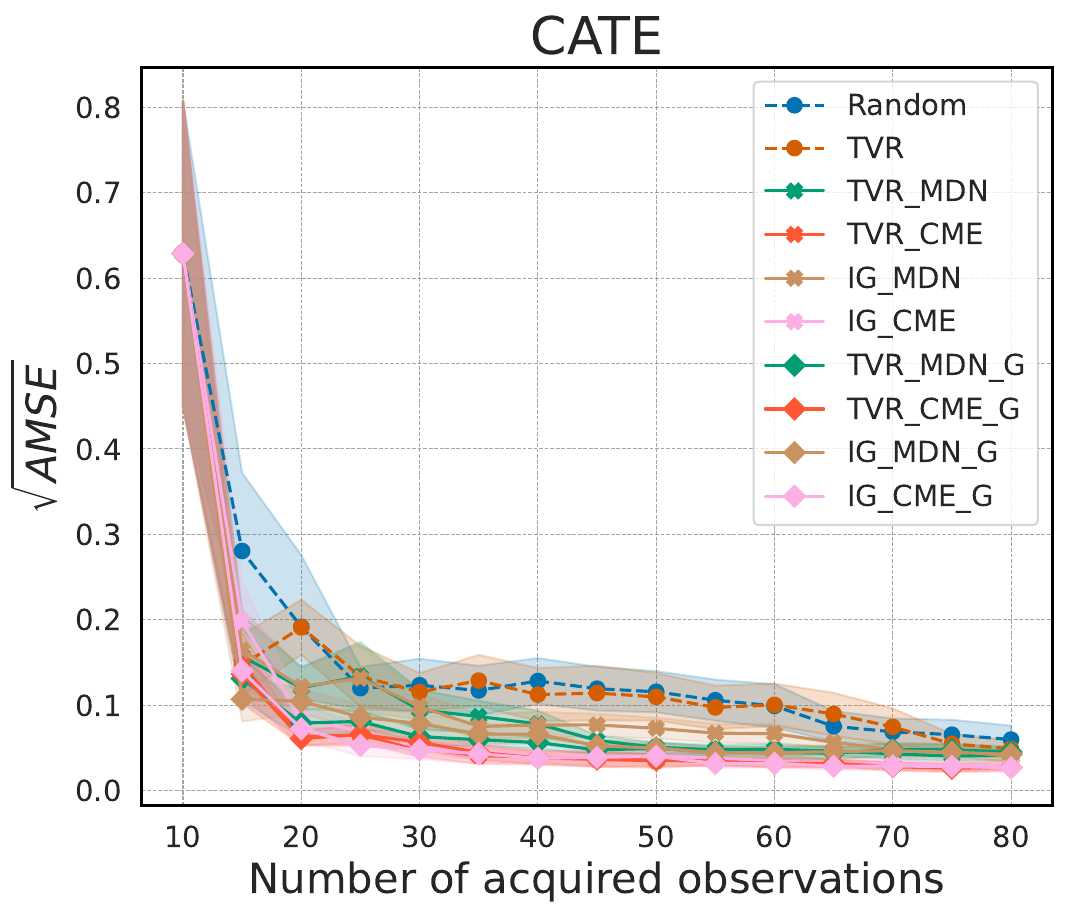}
    \end{minipage}
    \begin{minipage}{0.19\linewidth}
        \centering
        \includegraphics[width=\linewidth]{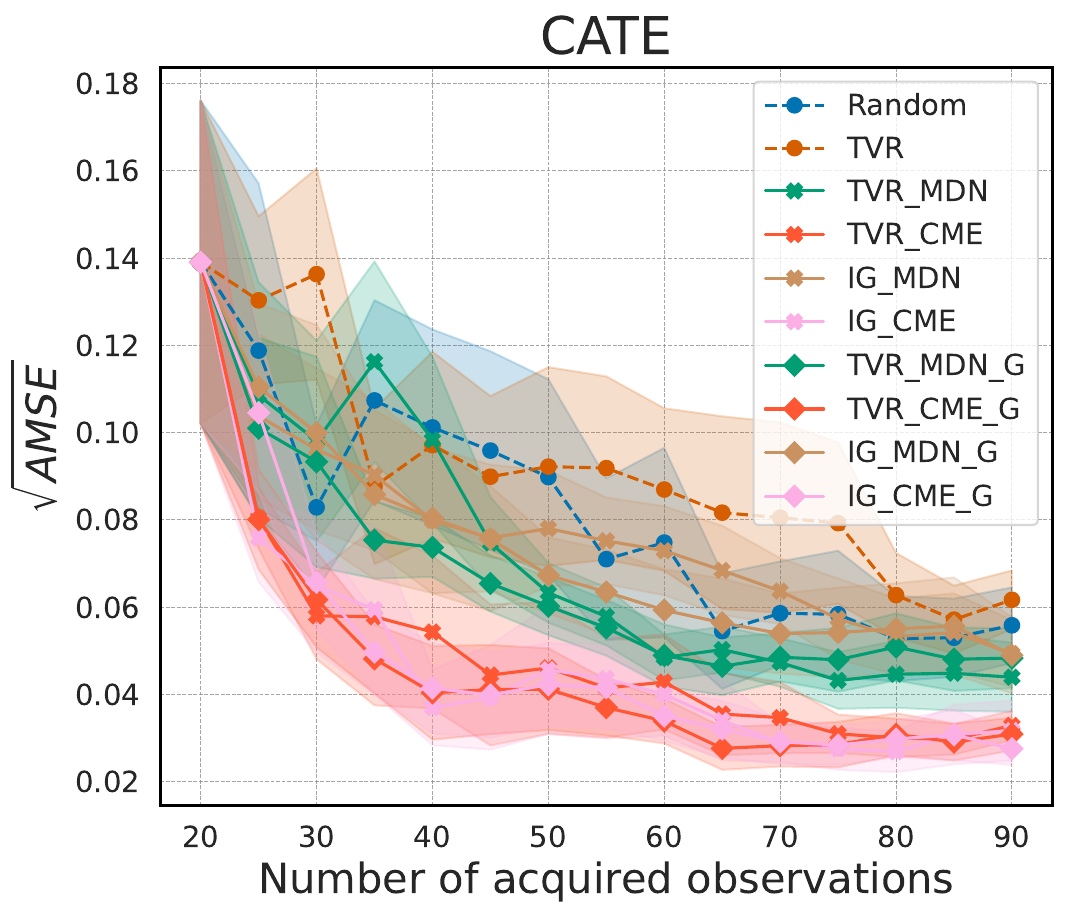}
    \end{minipage}
    \begin{minipage}{0.19\linewidth}
        \centering
        \includegraphics[width=\linewidth]{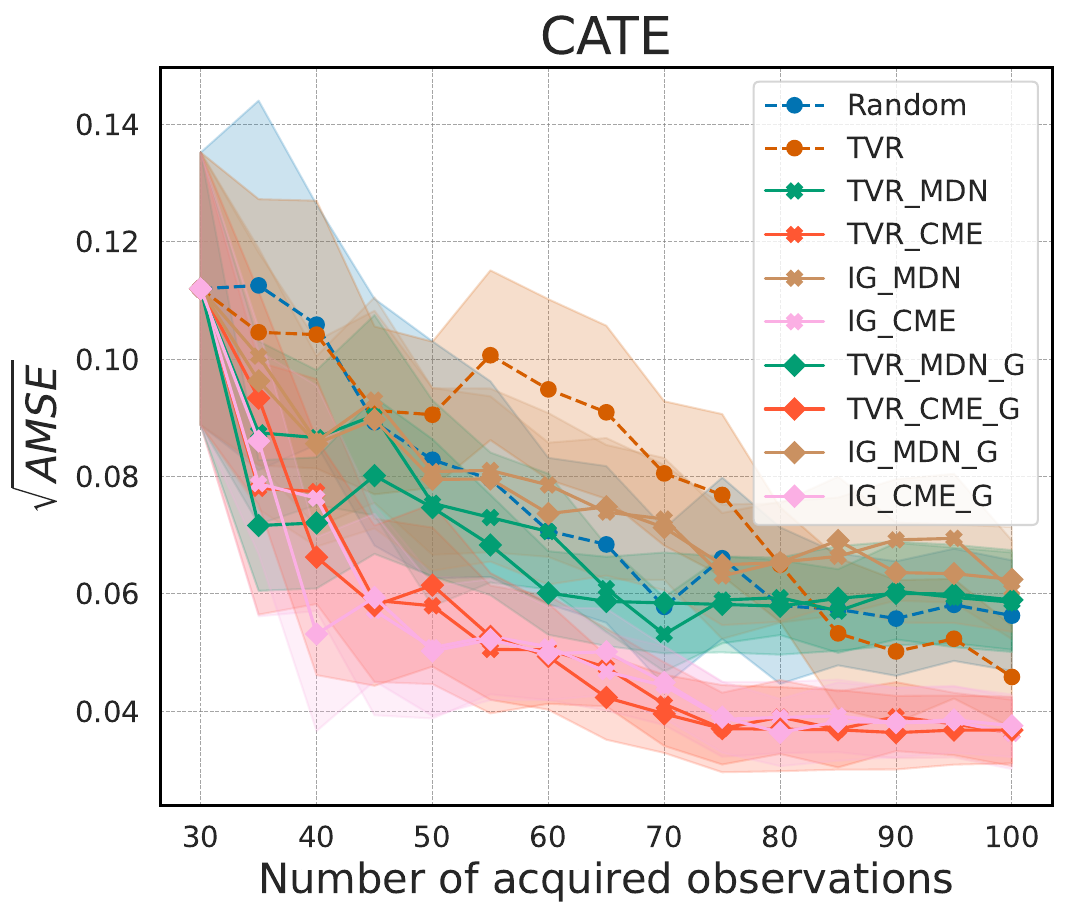}
    \end{minipage}
    \begin{minipage}{0.19\linewidth}
        \centering
        \includegraphics[width=\linewidth]{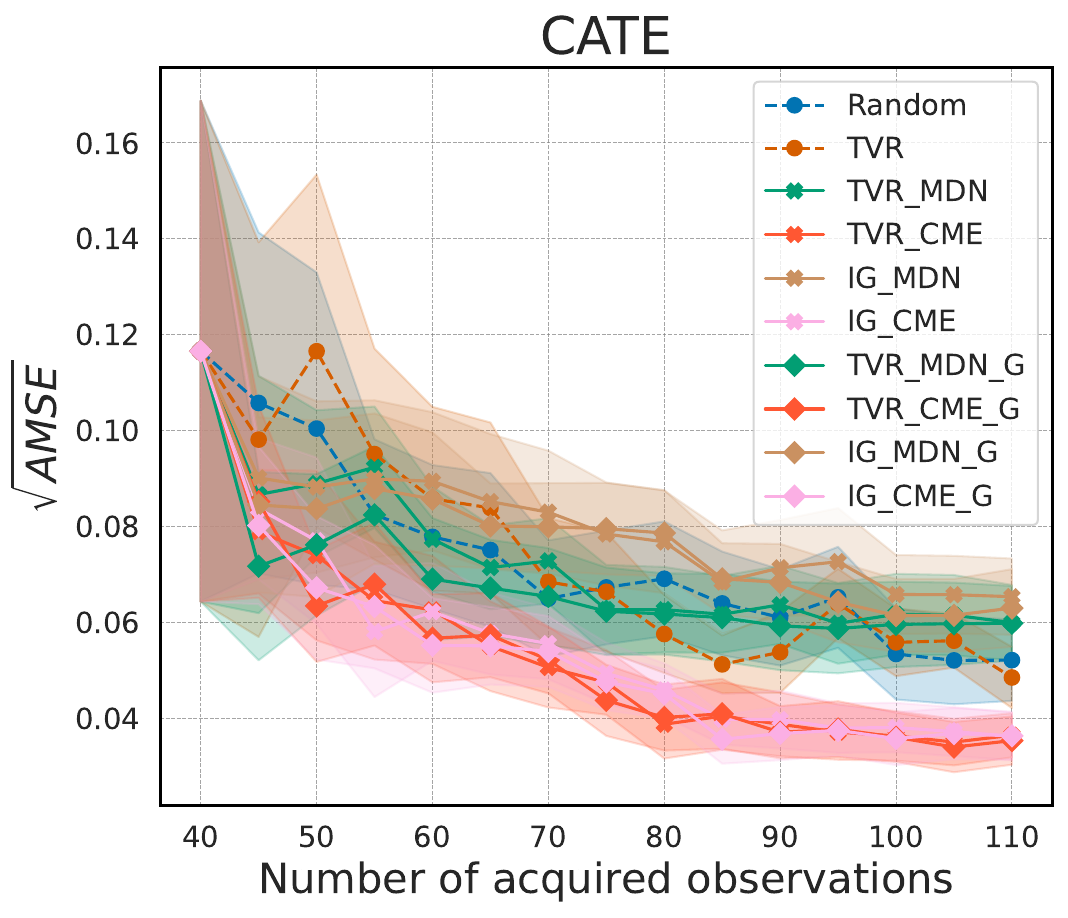}
    \end{minipage}
    \begin{minipage}{0.19\linewidth}
        \centering
        \includegraphics[width=\linewidth]{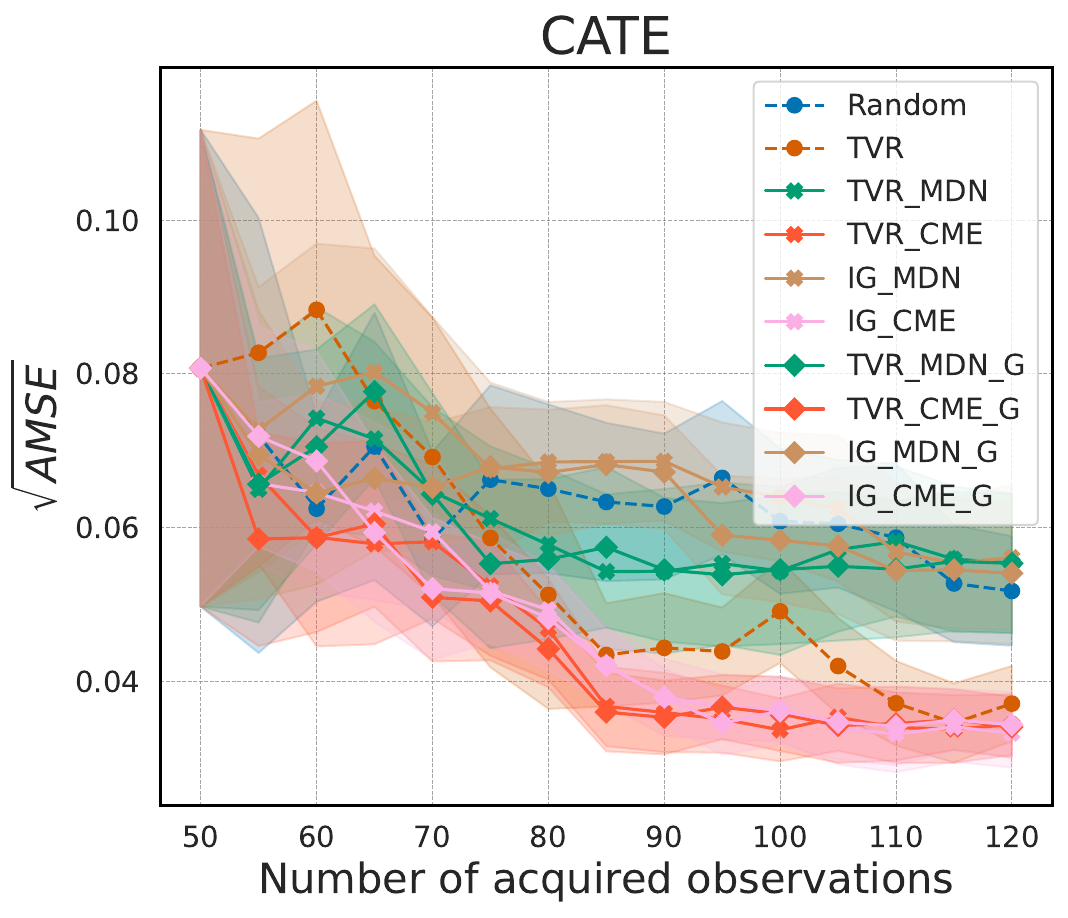}
    \end{minipage}

    \vspace{0.5em}

    \begin{minipage}{0.19\linewidth}
        \centering
        \includegraphics[width=\linewidth]{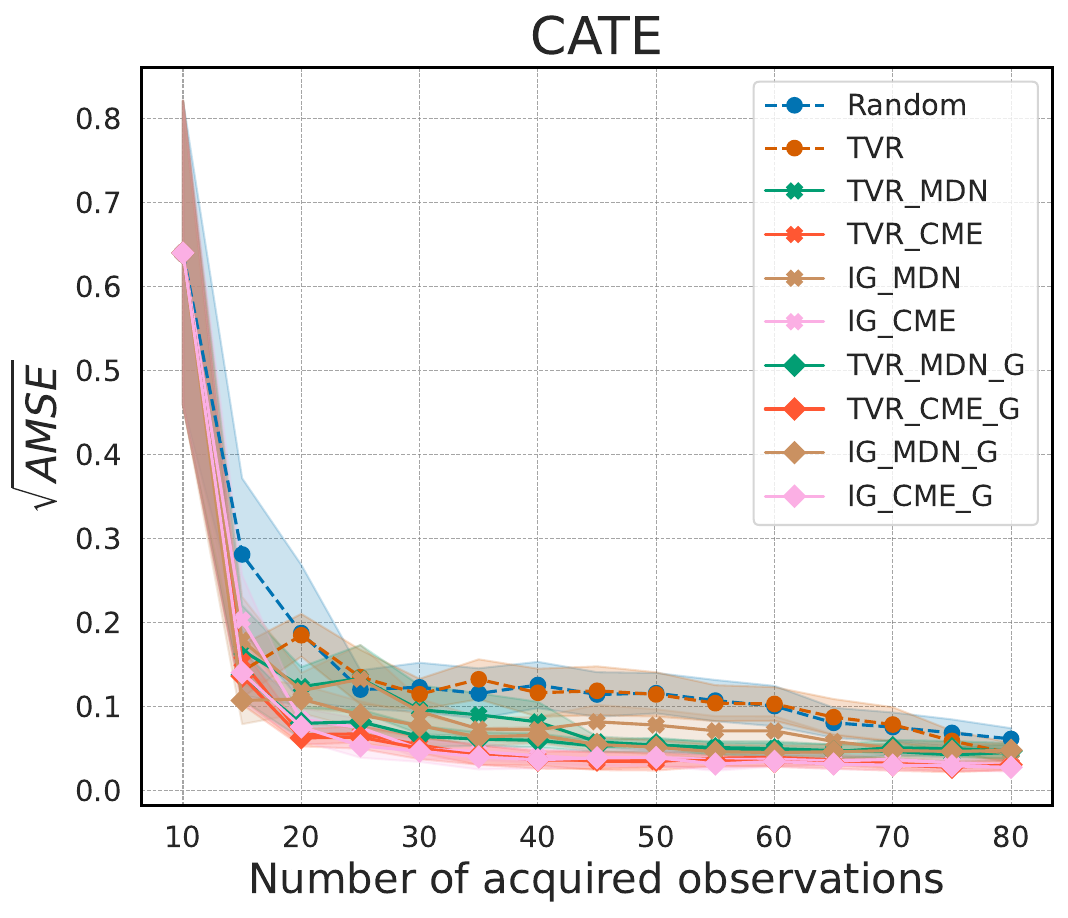}
    \end{minipage}
    \begin{minipage}{0.19\linewidth}
        \centering
        \includegraphics[width=\linewidth]{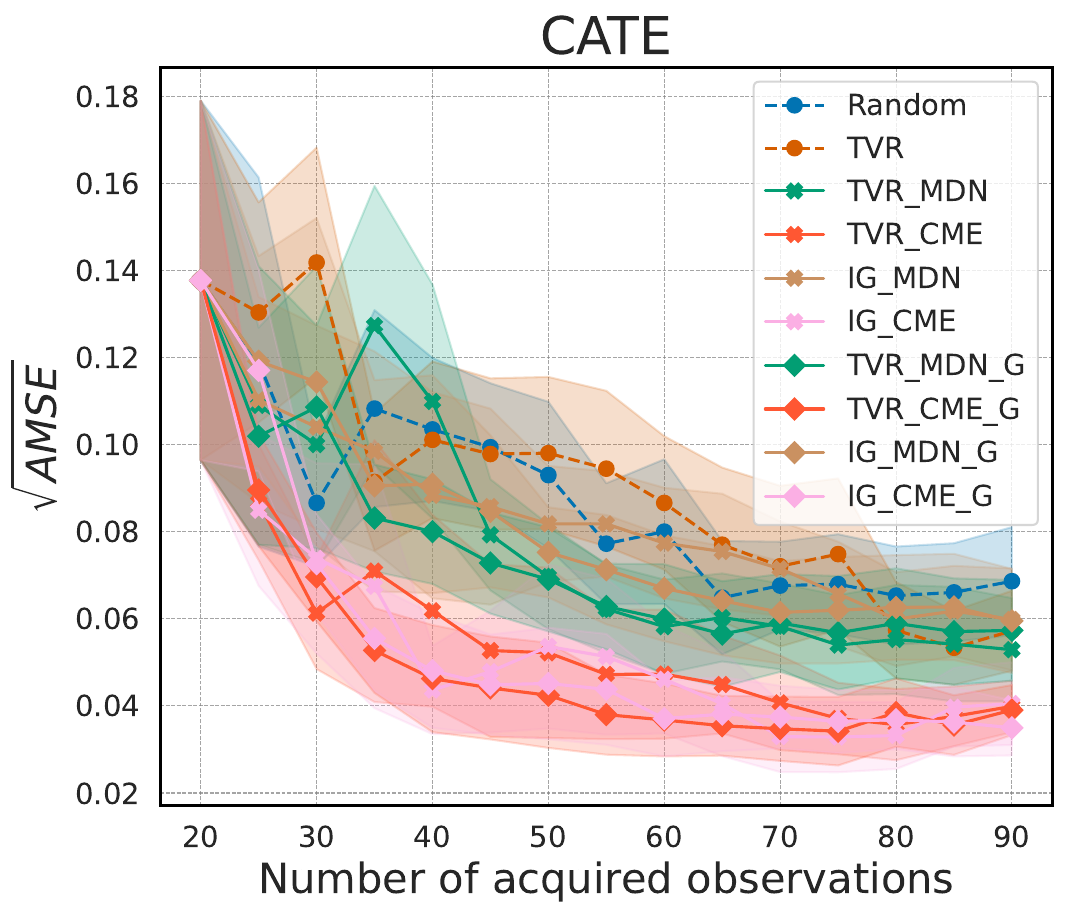}
    \end{minipage}
    \begin{minipage}{0.19\linewidth}
        \centering
        \includegraphics[width=\linewidth]{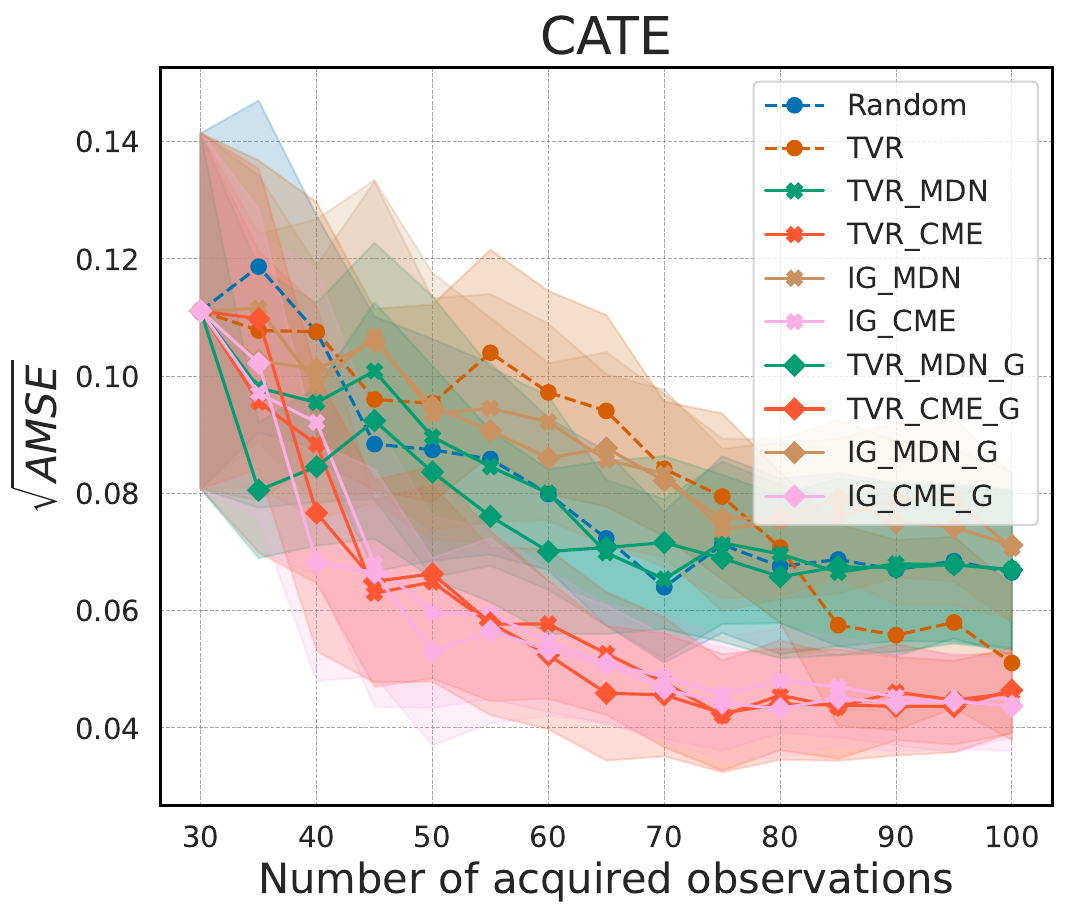}
    \end{minipage}
    \begin{minipage}{0.19\linewidth}
        \centering
        \includegraphics[width=\linewidth]{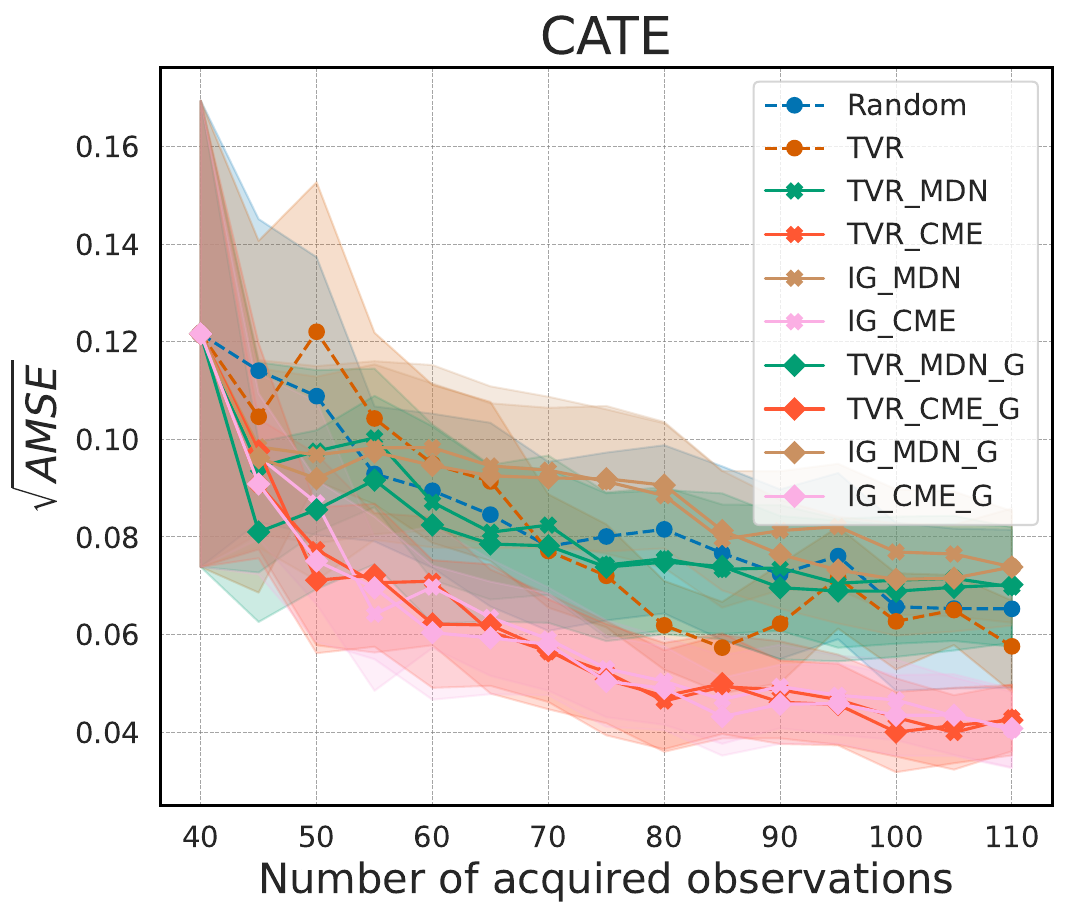}
    \end{minipage}
    \begin{minipage}{0.19\linewidth}
        \centering
        \includegraphics[width=\linewidth]{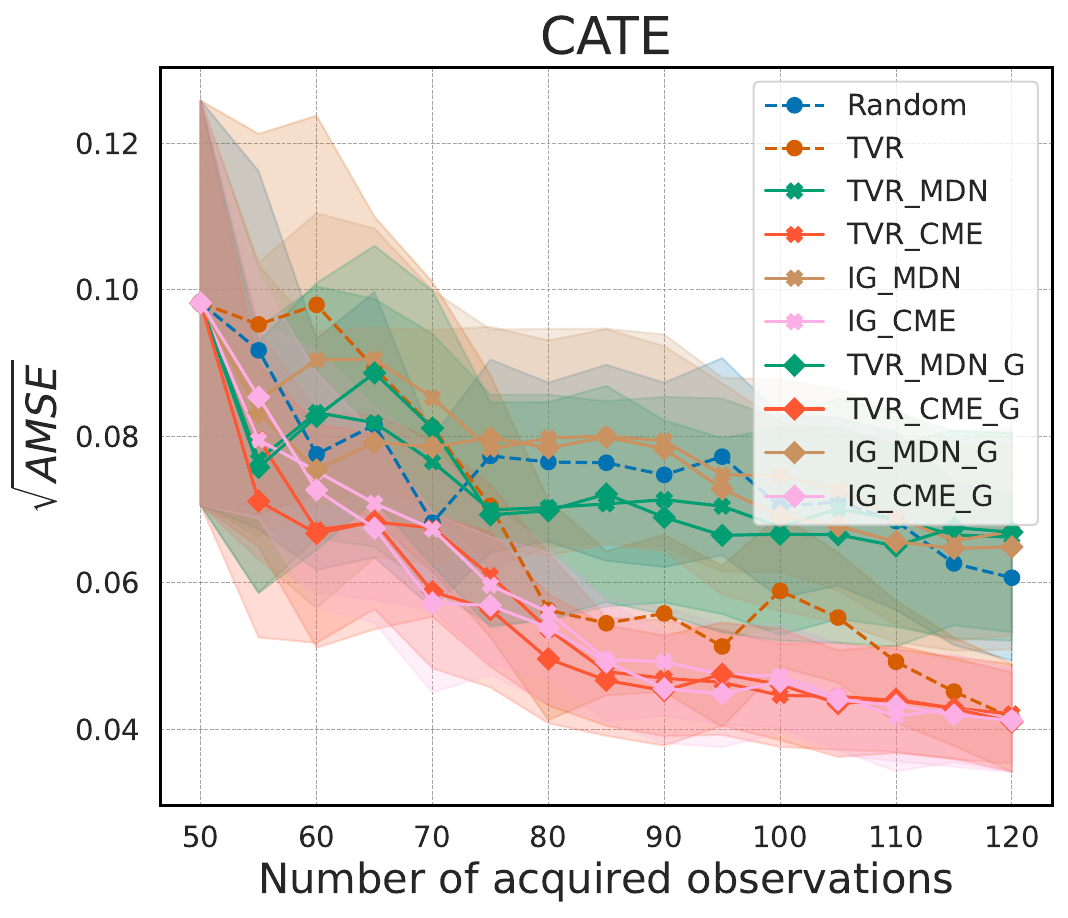}
    \end{minipage}

    \caption{\textbf{Different starting points.} The $\sqrt{\text{AMSE}}$ performance (with shaded standard error) for different warm starting points is presented. The first row illustrates the in-distribution performance, while the second row depicts the out-of-distribution performance. From left to right, the starting poitns are set to 10, 20, 30, 40, and 50.}
    \label{app_fig:different_starting_points}
\end{figure}

\begin{figure}[h]
    \centering
    \begin{minipage}{0.19\linewidth}
        \centering
        \includegraphics[width=\linewidth]{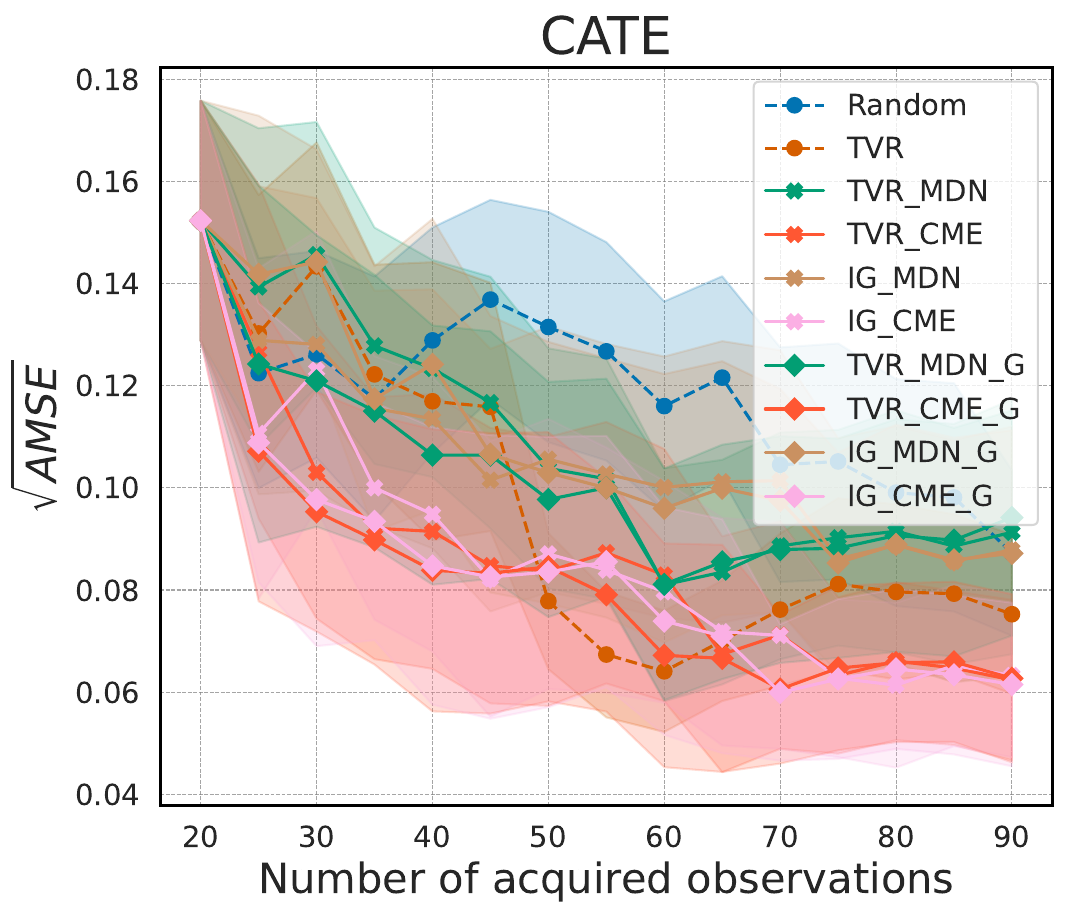}
    \end{minipage}
    \begin{minipage}{0.19\linewidth}
        \centering
        \includegraphics[width=\linewidth]{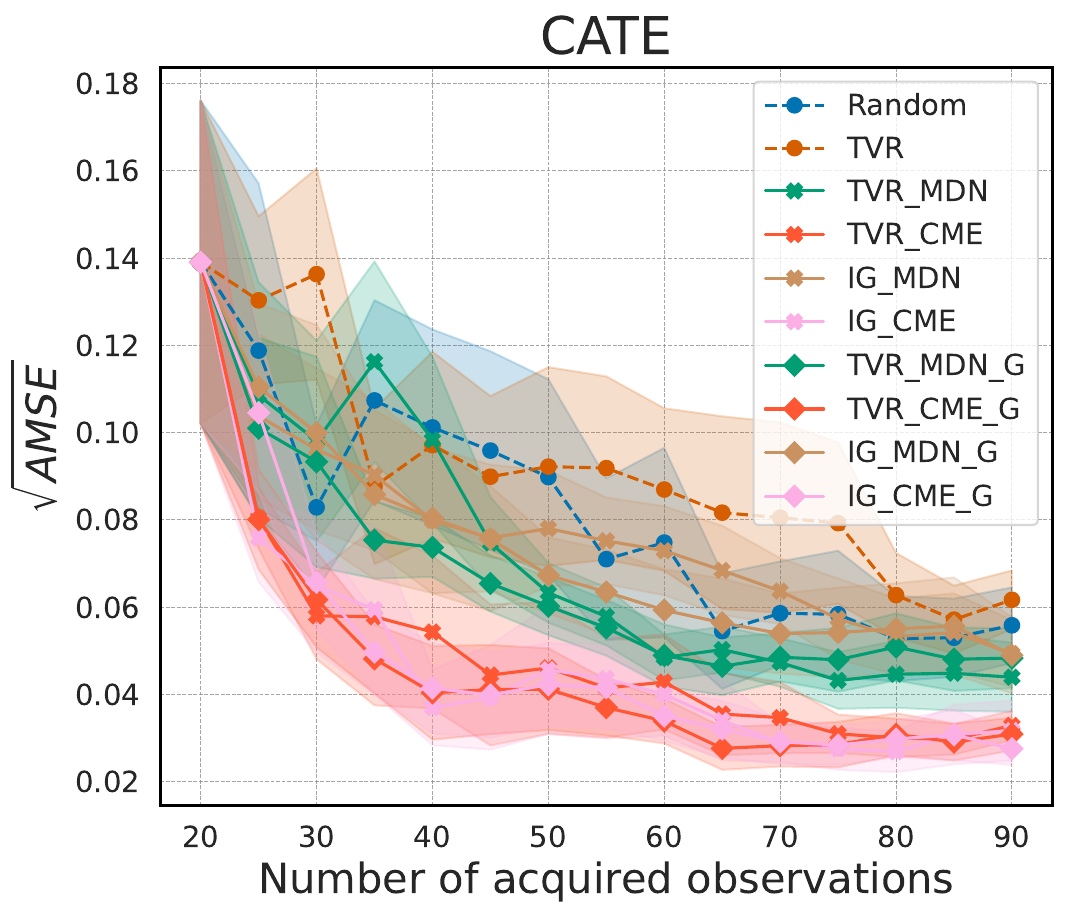}
    \end{minipage}
    \begin{minipage}{0.19\linewidth}
        \centering
        \includegraphics[width=\linewidth]{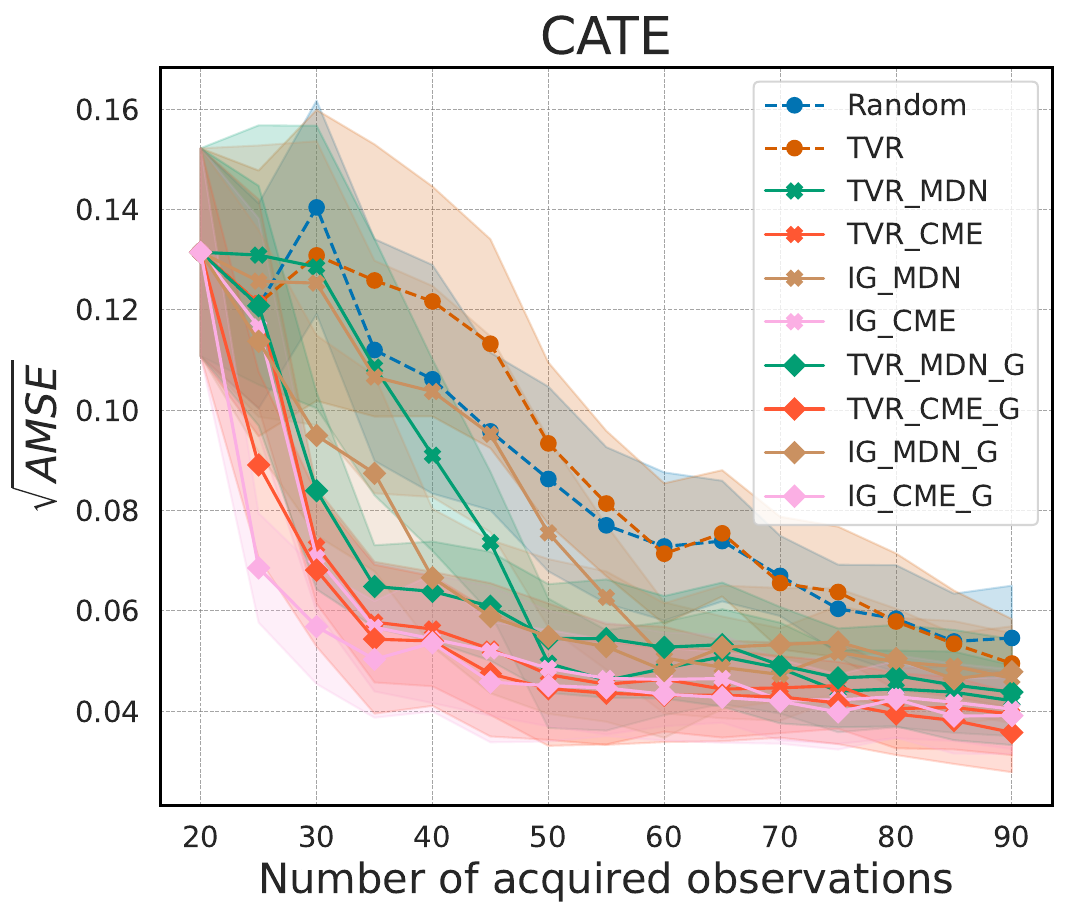}
    \end{minipage}
    \begin{minipage}{0.19\linewidth}
        \centering
        \includegraphics[width=\linewidth]{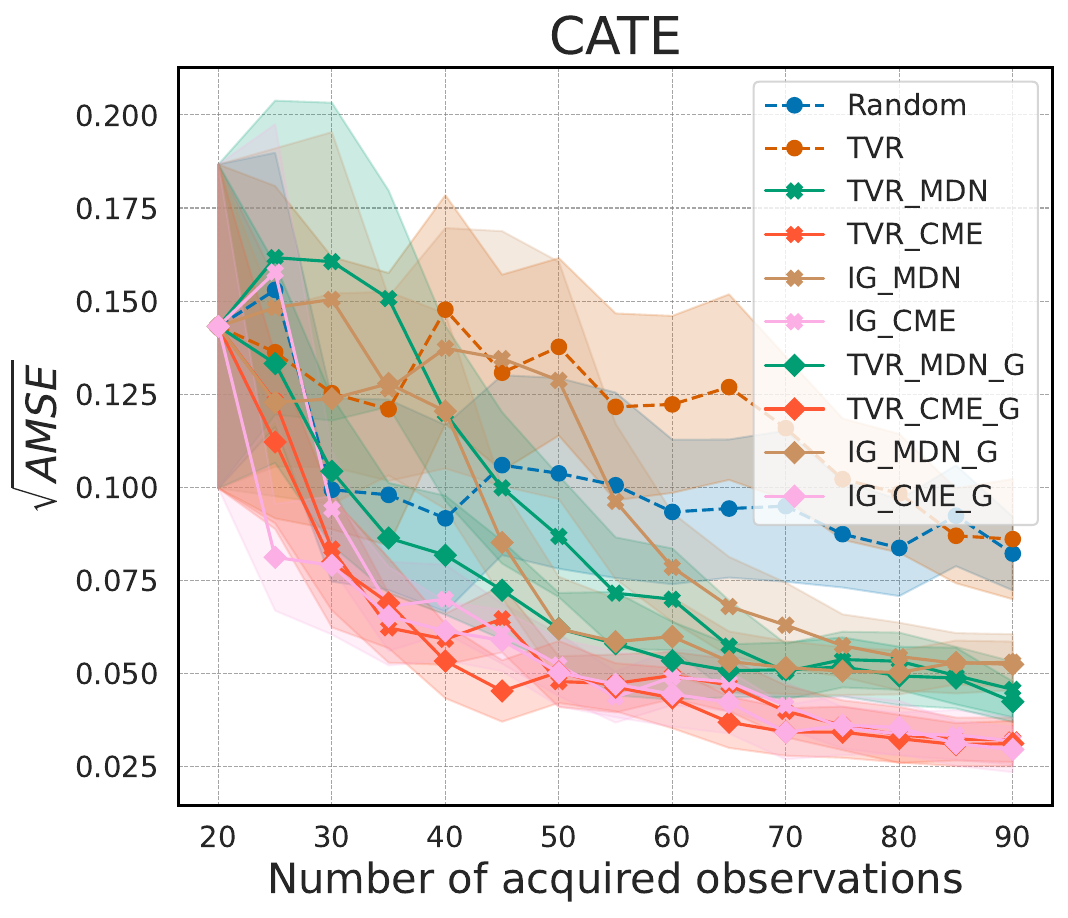}
    \end{minipage}
    \begin{minipage}{0.19\linewidth}
        \centering
        \includegraphics[width=\linewidth]{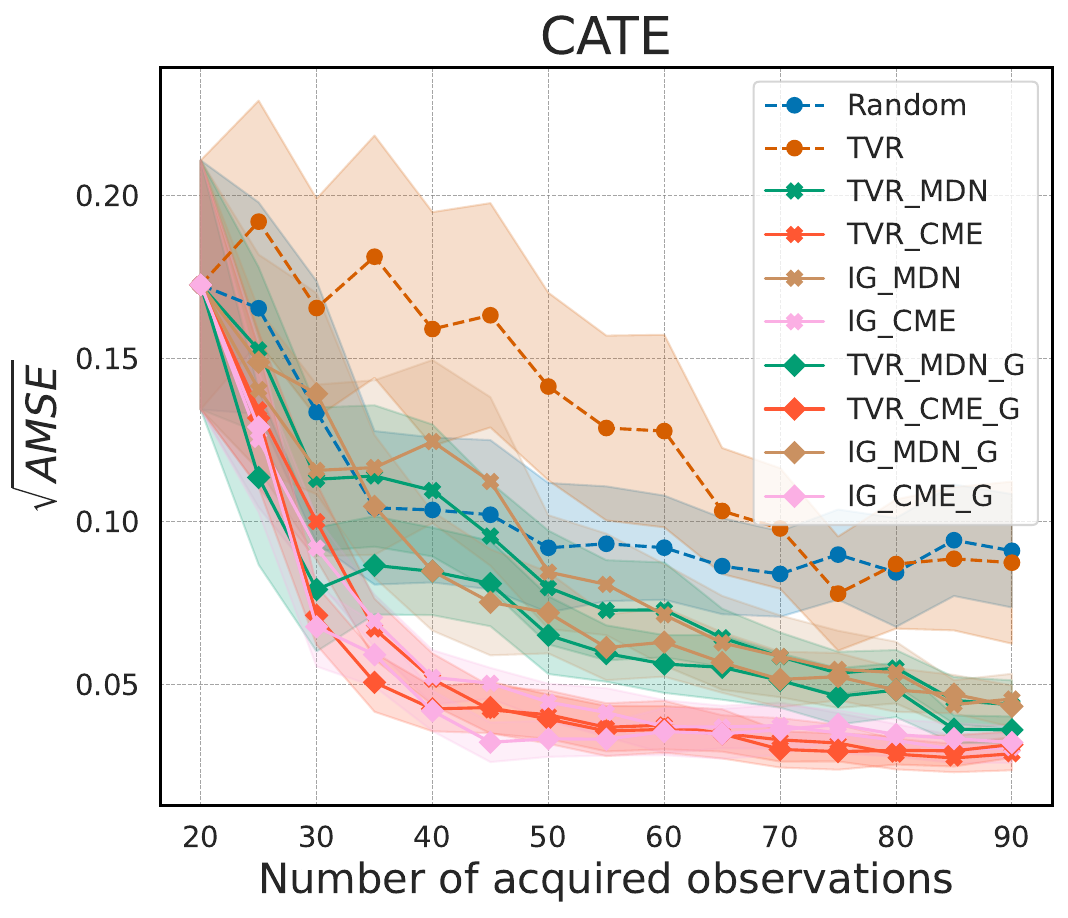}
    \end{minipage}

    \vspace{0.5em}

    \begin{minipage}{0.19\linewidth}
        \centering
        \includegraphics[width=\linewidth]{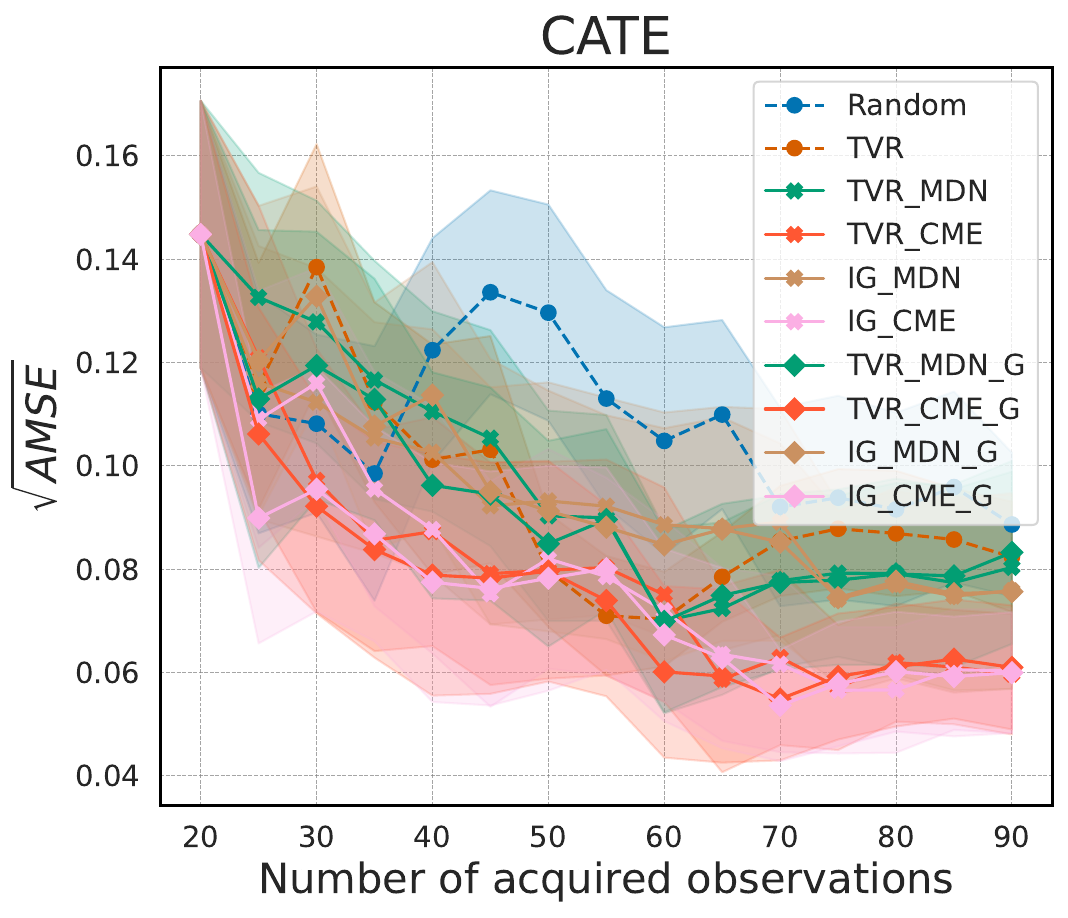}
    \end{minipage}
    \begin{minipage}{0.19\linewidth}
        \centering
        \includegraphics[width=\linewidth]{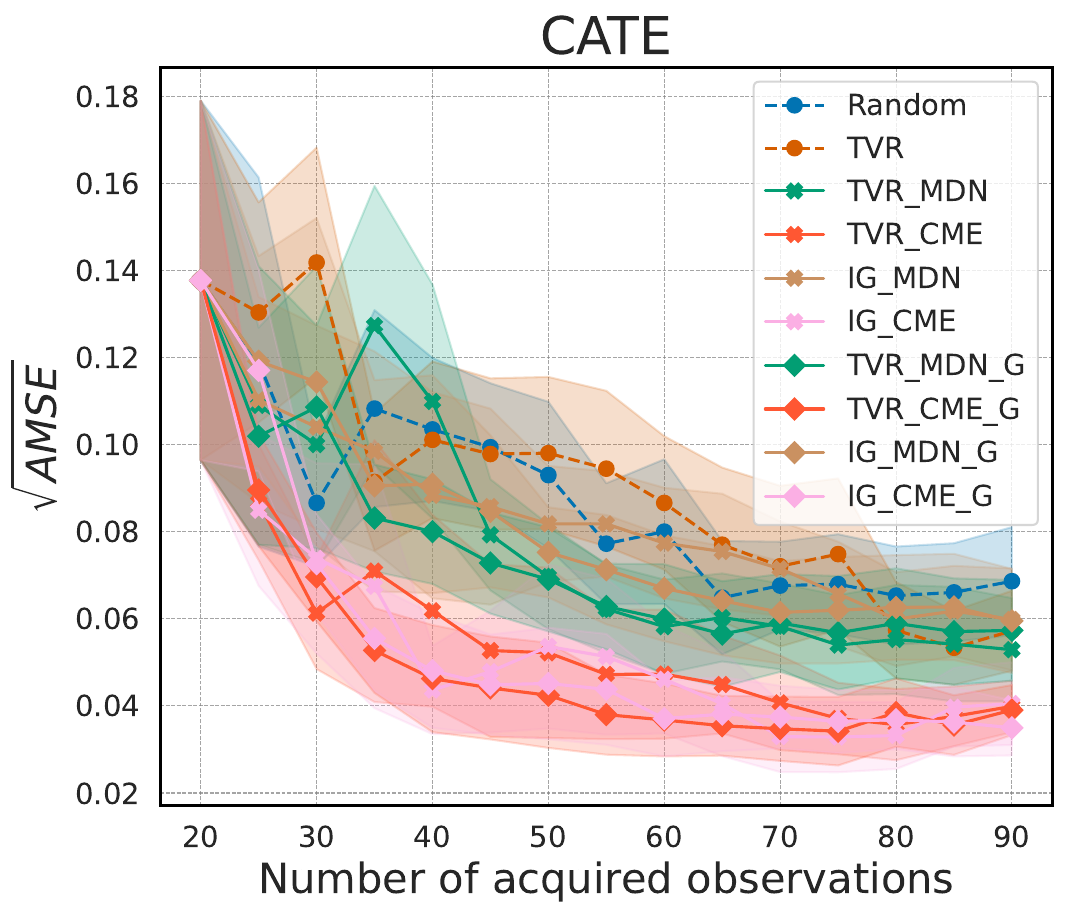}
    \end{minipage}
    \begin{minipage}{0.19\linewidth}
        \centering
        \includegraphics[width=\linewidth]{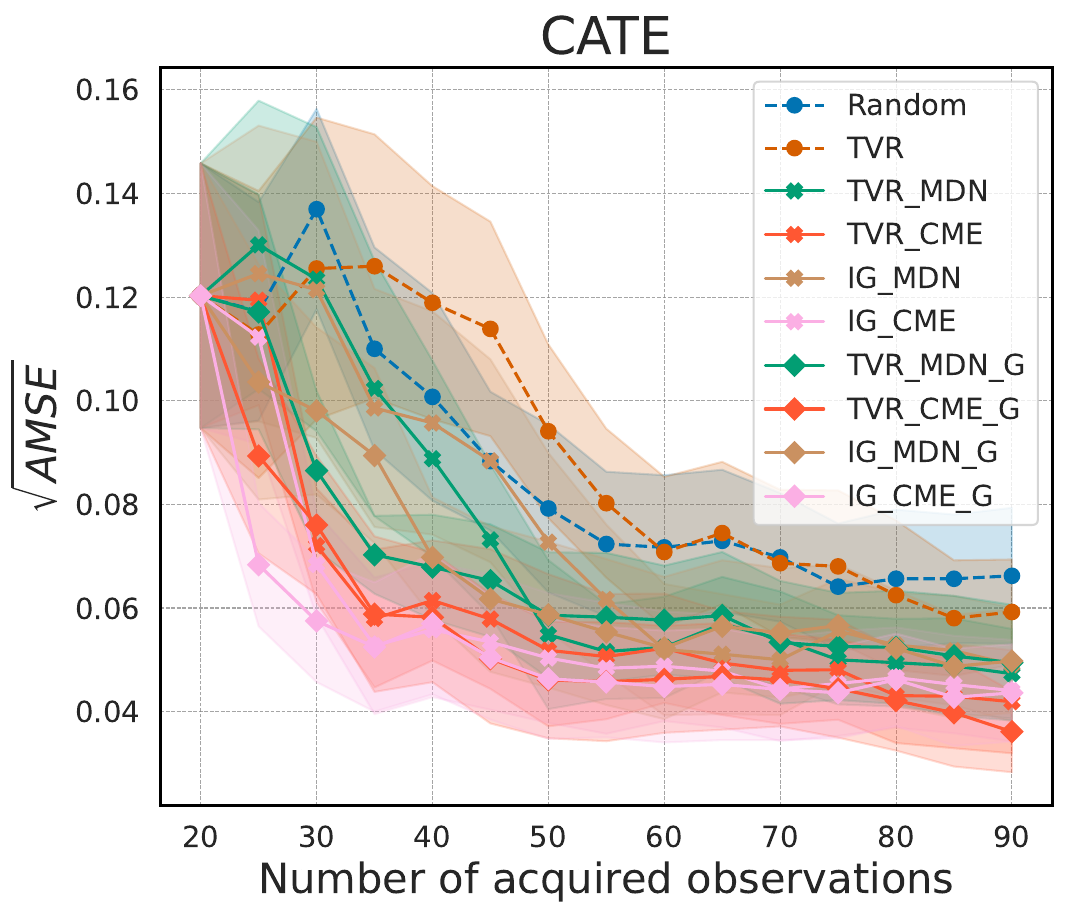}
    \end{minipage}
    \begin{minipage}{0.19\linewidth}
        \centering
        \includegraphics[width=\linewidth]{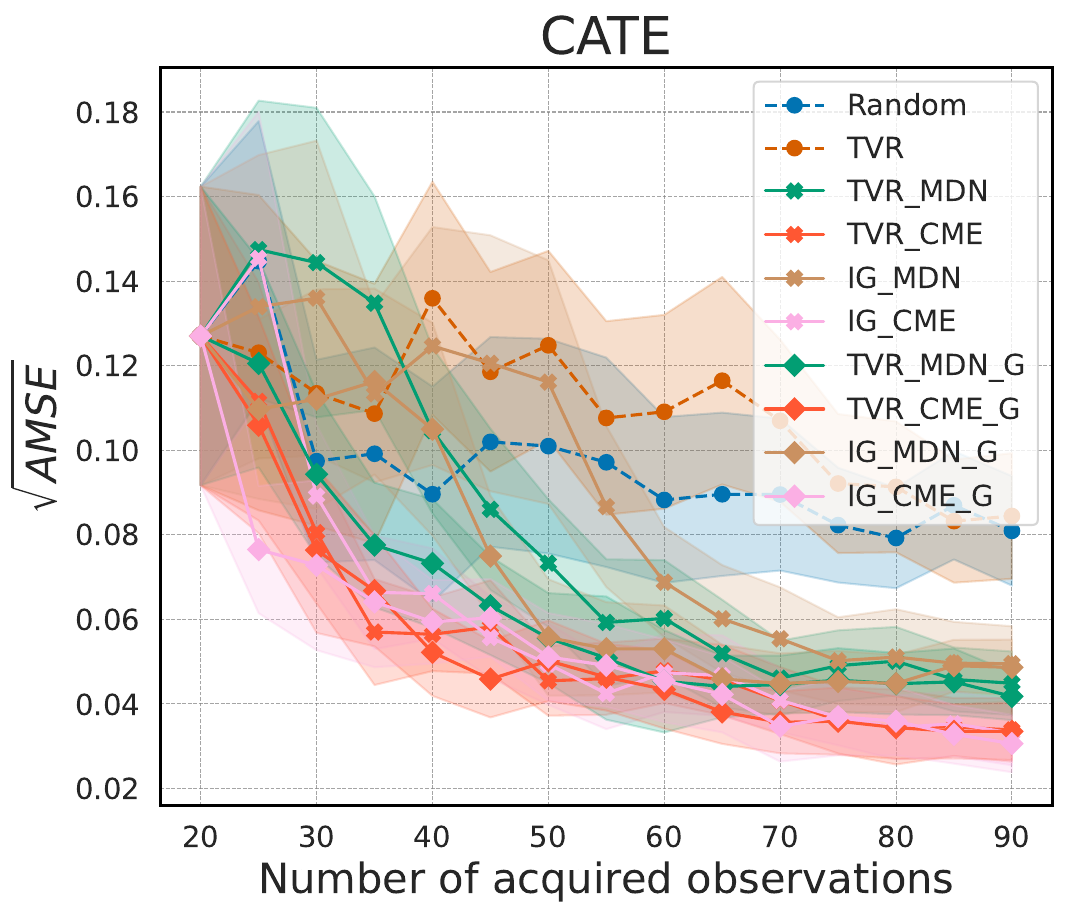}
    \end{minipage}
    \begin{minipage}{0.19\linewidth}
        \centering
        \includegraphics[width=\linewidth]{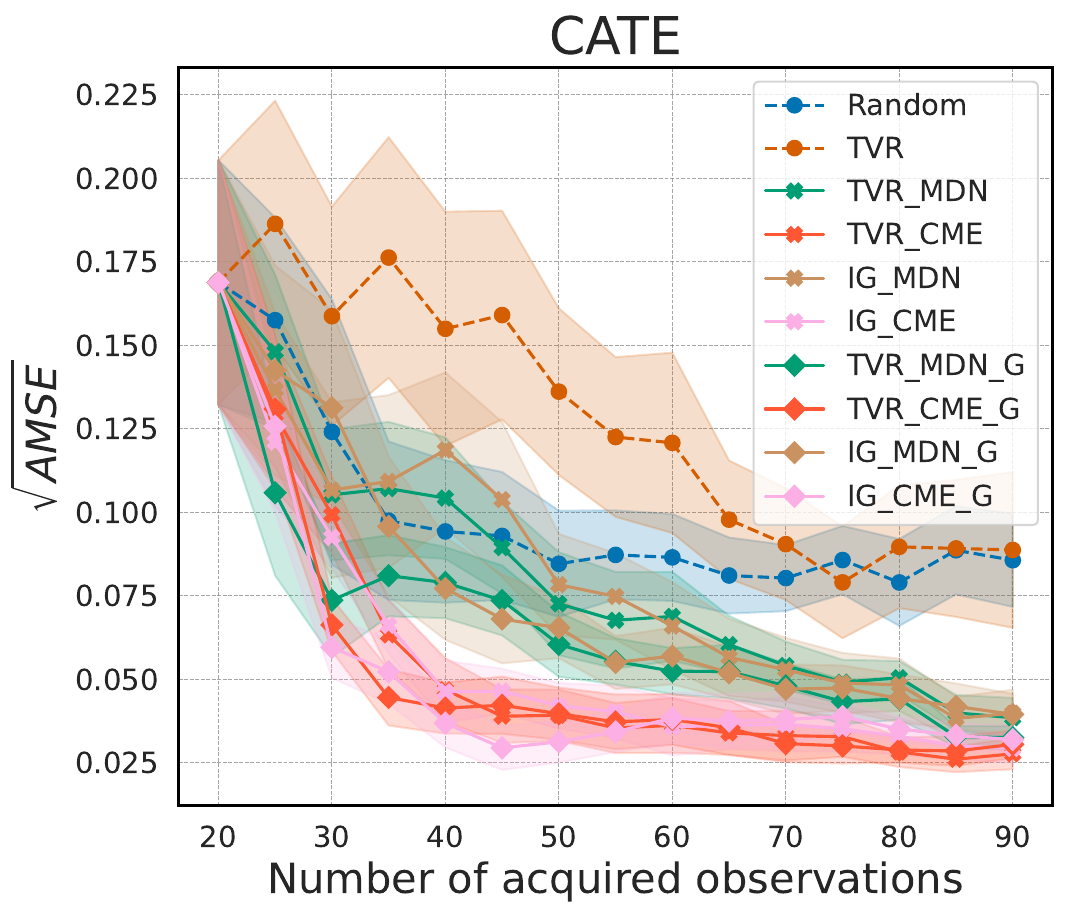}
    \end{minipage}

    \caption{\textbf{Different sizes of pool datasets.} The $\sqrt{\text{AMSE}}$ performance (with shaded standard error) for different Pool dataset sizes is presented. The first row illustrates the in-distribution performance, while the second row depicts the out-of-distribution performance. From left to right, the Pool dataset sizes are set to 100, 150, 200, 300, 400, and 500.)}
    \label{app_fig:different_pooldataset_sizes}
\end{figure}

\begin{figure}[h]
    \centering
    \begin{minipage}{0.22\linewidth}
        \centering
        \includegraphics[width=\linewidth]{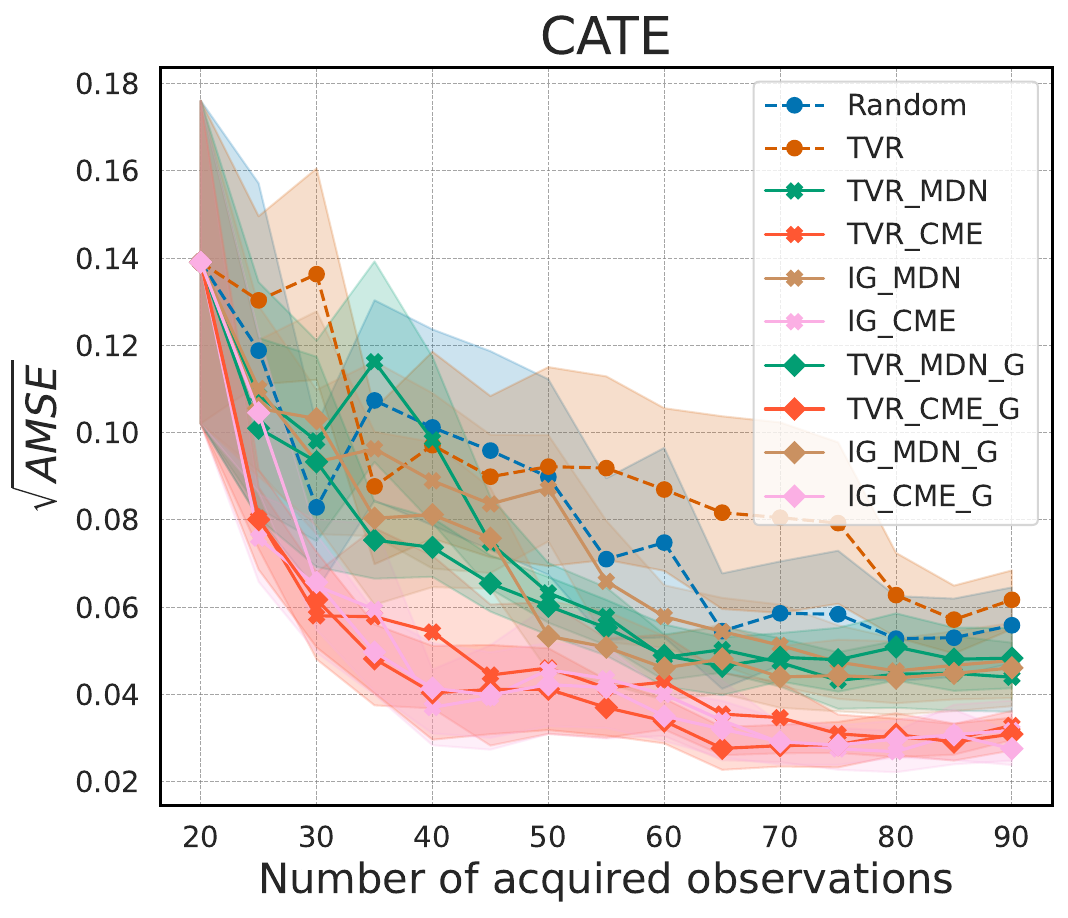}
    \end{minipage}
    \begin{minipage}{0.22\linewidth}
        \centering
        \includegraphics[width=\linewidth]{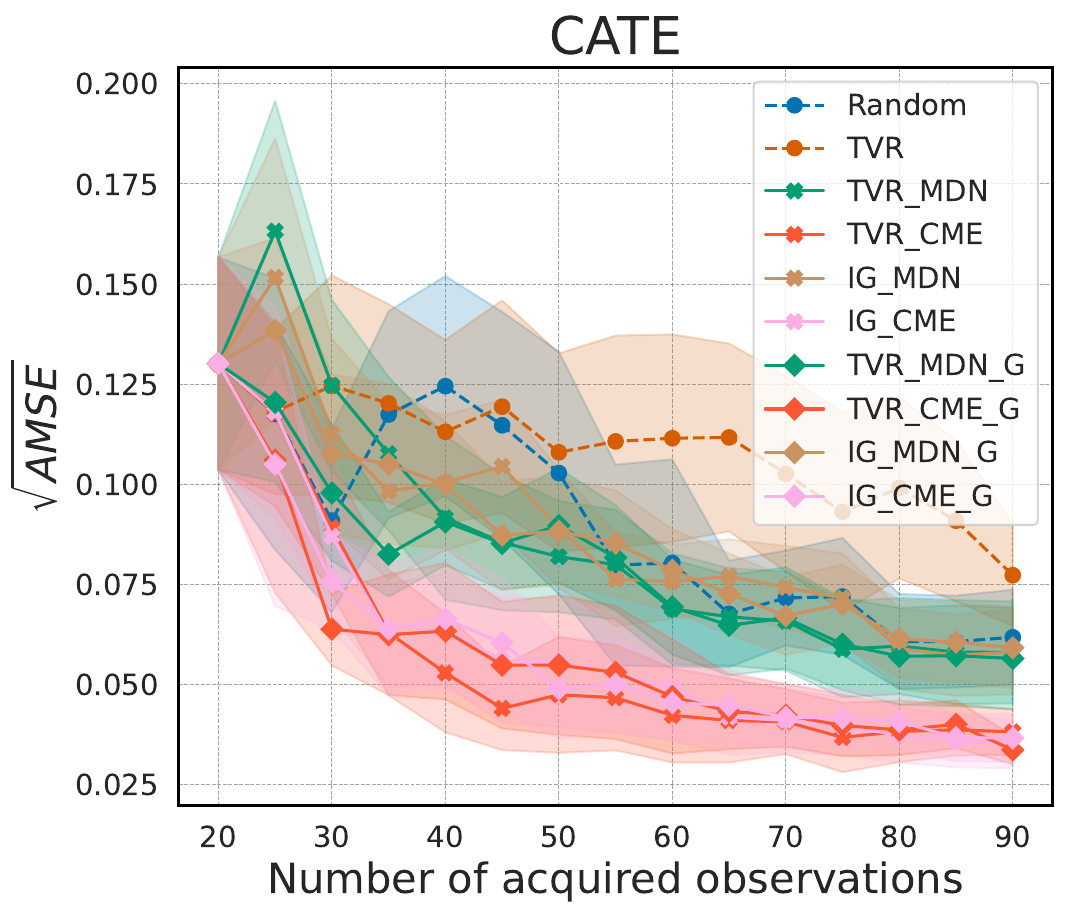}
    \end{minipage}
    \begin{minipage}{0.22\linewidth}
        \centering
        \includegraphics[width=\linewidth]{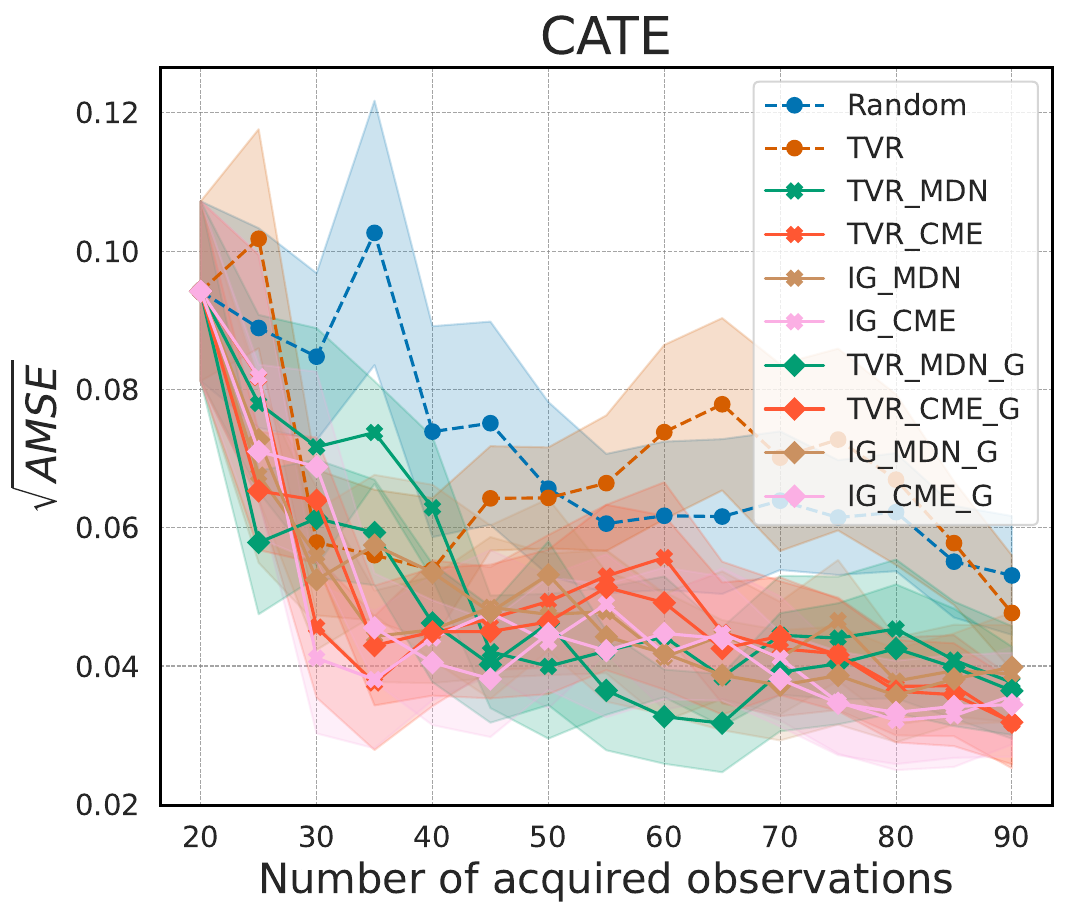}
    \end{minipage}

    \vspace{0.5em}

    \begin{minipage}{0.22\linewidth}
        \centering
        \includegraphics[width=\linewidth]{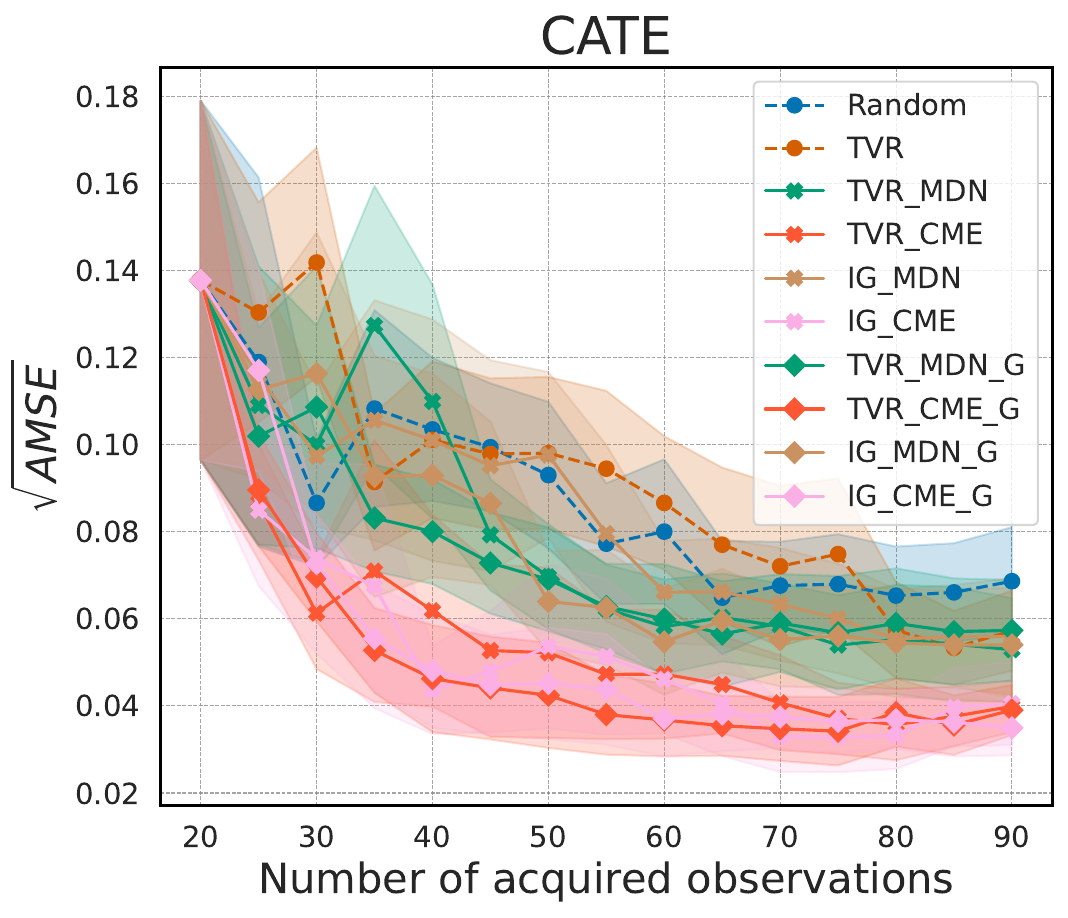}
    \end{minipage}
    \begin{minipage}{0.22\linewidth}
        \centering
        \includegraphics[width=\linewidth]{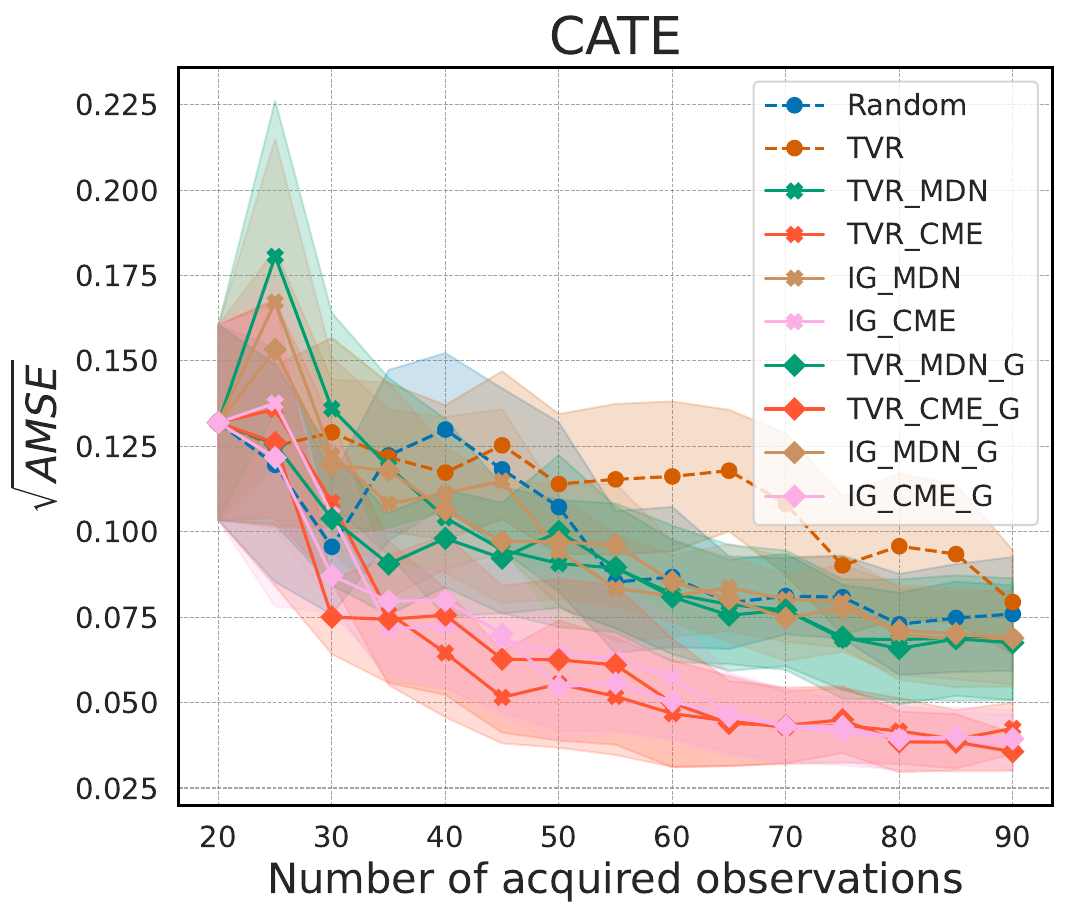}
    \end{minipage}
    \begin{minipage}{0.22\linewidth}
        \centering
        \includegraphics[width=\linewidth]{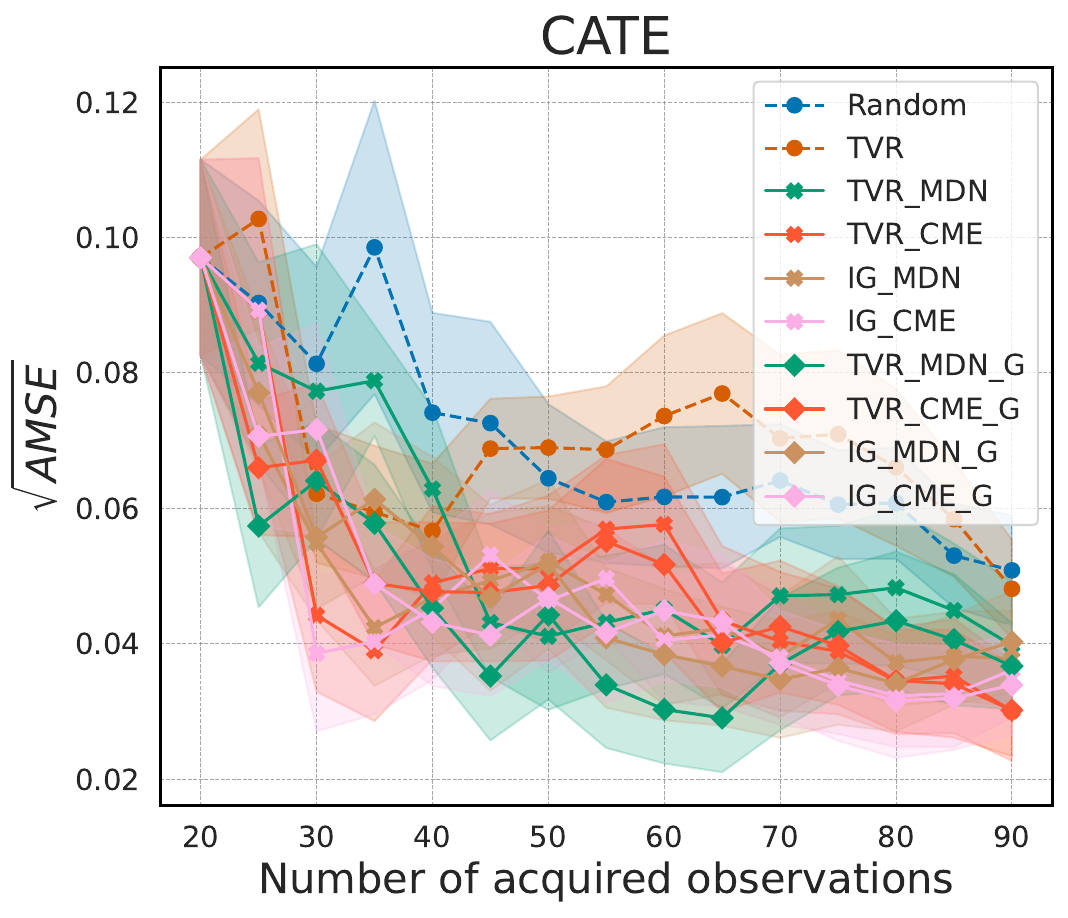}
    \end{minipage}

    \caption{\textbf{Different choices of kernels.} The $\sqrt{\text{AMSE}}$ performance (with shaded standard error) for different kernel functions is presented. The first row illustrates the in-distribution performance, while the second row depicts the out-of-distribution performance. From left to right, the kernel functions are set to RBF, Matern and RQ.}
    \label{app_fig:different_kernel_choices}
\end{figure}

\subsubsection{Simulation results}
\label{app_subsubsec:results_simulation}
In this part, we present additional experimental results on the simulation datasets, covering the CATE, ATE, ATT, and DS cases. Across all scenarios, we explore two primary treatment settings.

The first setting involves fixing the target treatment value, meaning we focus solely on estimating the causal effect for specific subgroups. This approach is particularly practical in real-world applications. For example, in personalized medicine, we might estimate the effect of a fixed dosage of a drug on patients with a specific genetic profile. Similarly, in educational interventions, we might evaluate the impact of a fixed number of tutoring sessions on students with a certain performance baseline.

The second setting, as discussed in the main paper, examines all possible causal effects across different treatment levels within specific subgroups. This broader perspective allows us to assess the variability of treatment effects. For instance, we could evaluate how varying dosages of a medication influence recovery rates for patients with a particular condition. Alternatively, we might analyze how different levels of government subsidies affect the economic outcomes of households in a given income bracket.

We would like to clarify why we use discrete treatments in all cases, which is driven by two main reasons.  
The first reason pertains to the IG-based method, which requires calculating the determinant of the posterior covariance matrix. This determinant is influenced by the size of the pool dataset and the number of possible treatments, where the size is the product of these two factors. In our setup, we use 600 observations in the training dataset, and the starting point is set to 100, meaning the pool dataset consists of 500 samples. If we were to use a continuous treatment, the worst-case scenario would involve 500 different treatment levels. This would result in a covariance matrix of size $250,000 \times 250,000$, which is computationally prohibitive for calculating the determinant. While it is possible to approximate this by sampling covariates and treatments, which is a reasonable approach in practice, it falls outside the scope of this paper. Therefore, we discretize the continuous treatment in most cases, excluding IG-based methods in the process.

The second reason relates to the simulation mechanisms in our dataset. When we focus on a fixed treatment value, the binary case is straightforward, as we can easily segment the data into treated and untreated groups. However, in the continuous treatment case, the treatment is determined by the covariates with added noise. To evaluate the performance of our estimator for a specific treatment, generating observations with a fixed treatment value is challenging. To avoid this issue, we discretize the treatment in the fixed-treatment case and also use discrete treatments for all possible treatment scenarios.

\begin{figure}[h]
    \centering
    \begin{minipage}{0.19\linewidth}
        \centering
        \includegraphics[width=\linewidth]{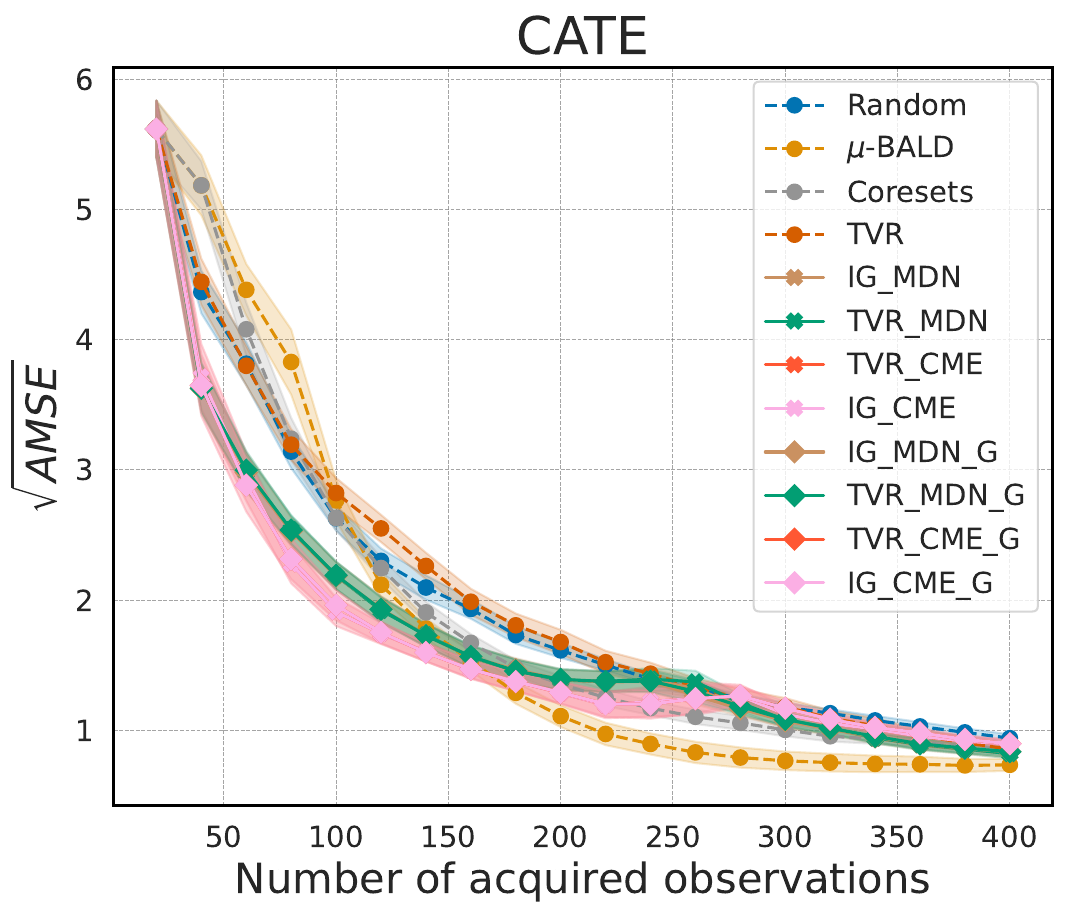}
    \end{minipage}
    \begin{minipage}{0.19\linewidth}
        \centering
        \includegraphics[width=\linewidth]{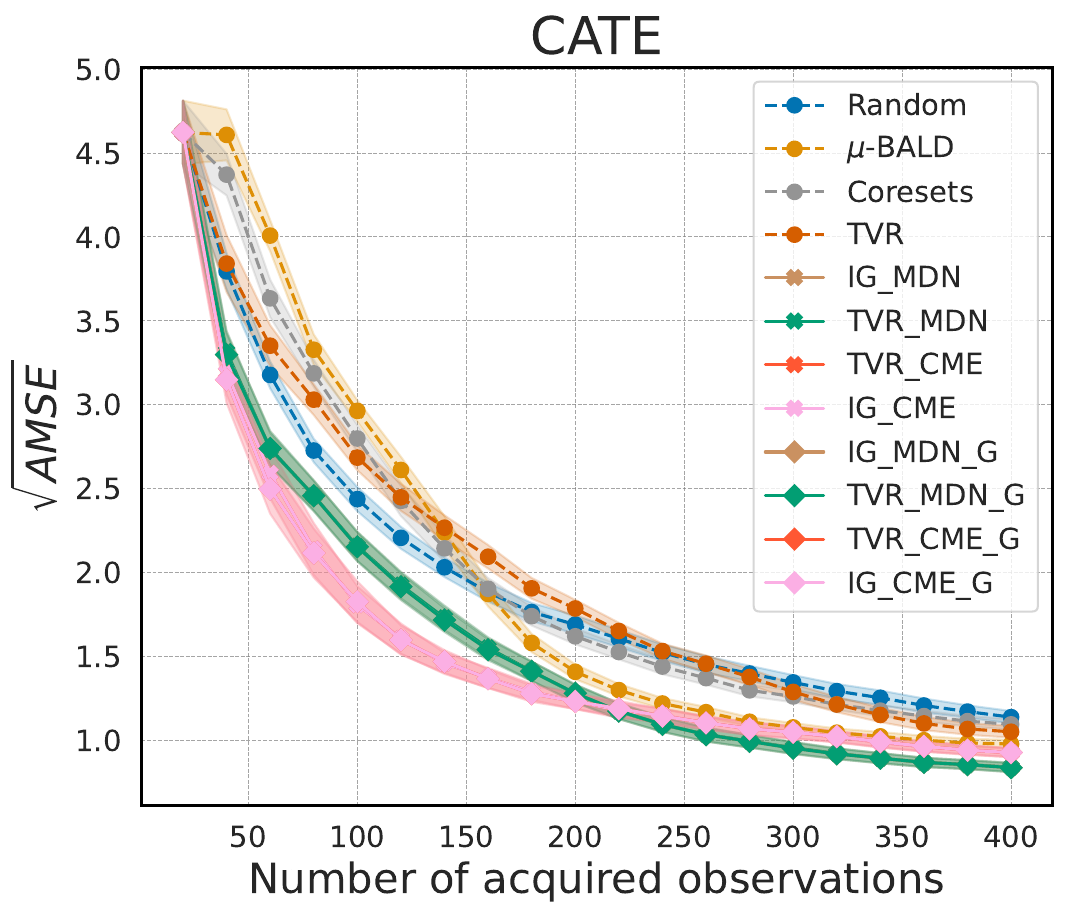}
    \end{minipage}
    \begin{minipage}{0.19\linewidth}
        \centering
        \includegraphics[width=\linewidth]{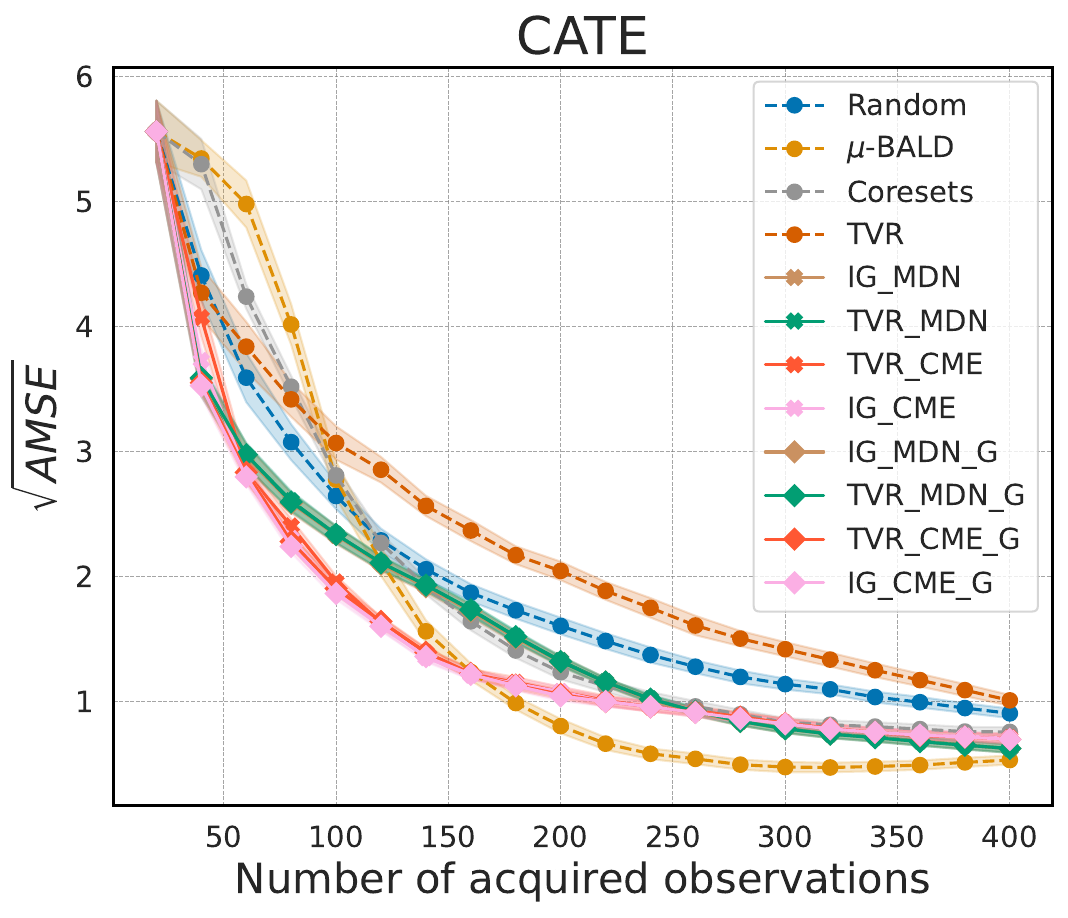}
    \end{minipage}
    \begin{minipage}{0.19\linewidth}
        \centering
        \includegraphics[width=\linewidth]{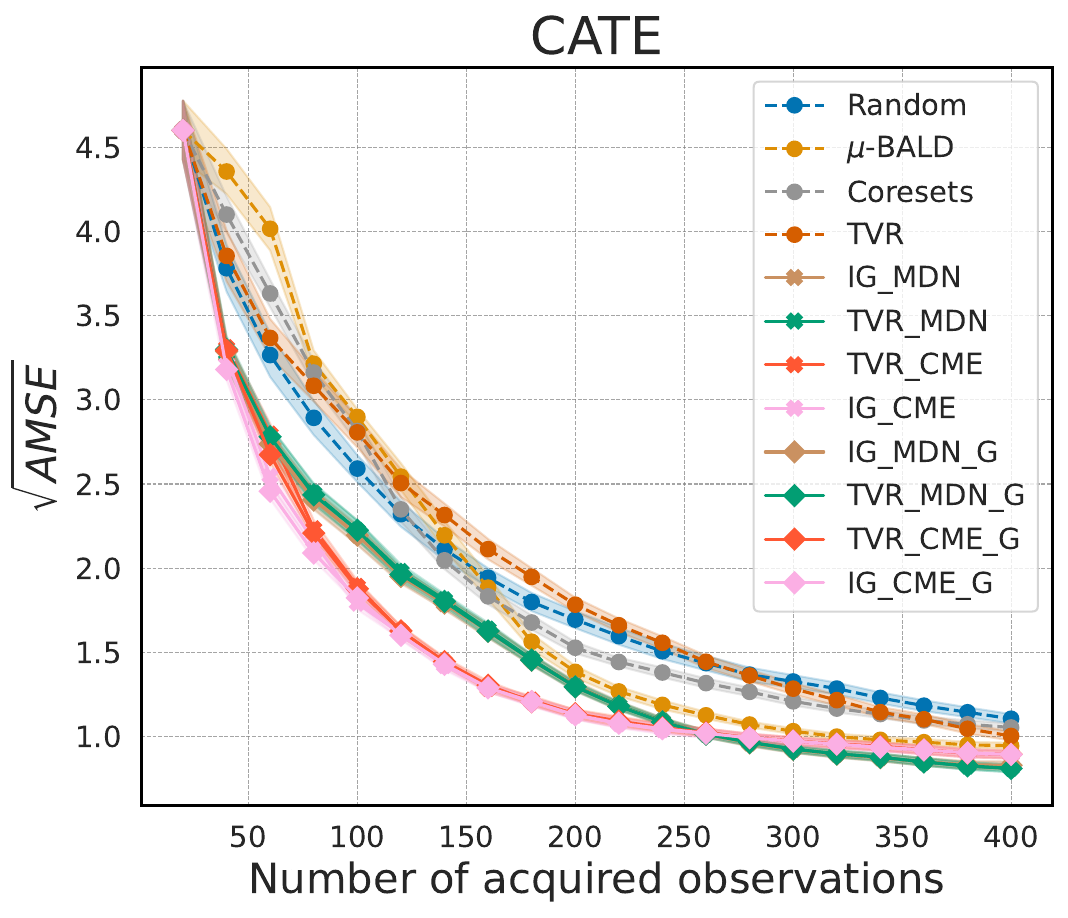}
    \end{minipage}
    \begin{minipage}{0.19\linewidth}
        \centering
        \includegraphics[width=\linewidth]{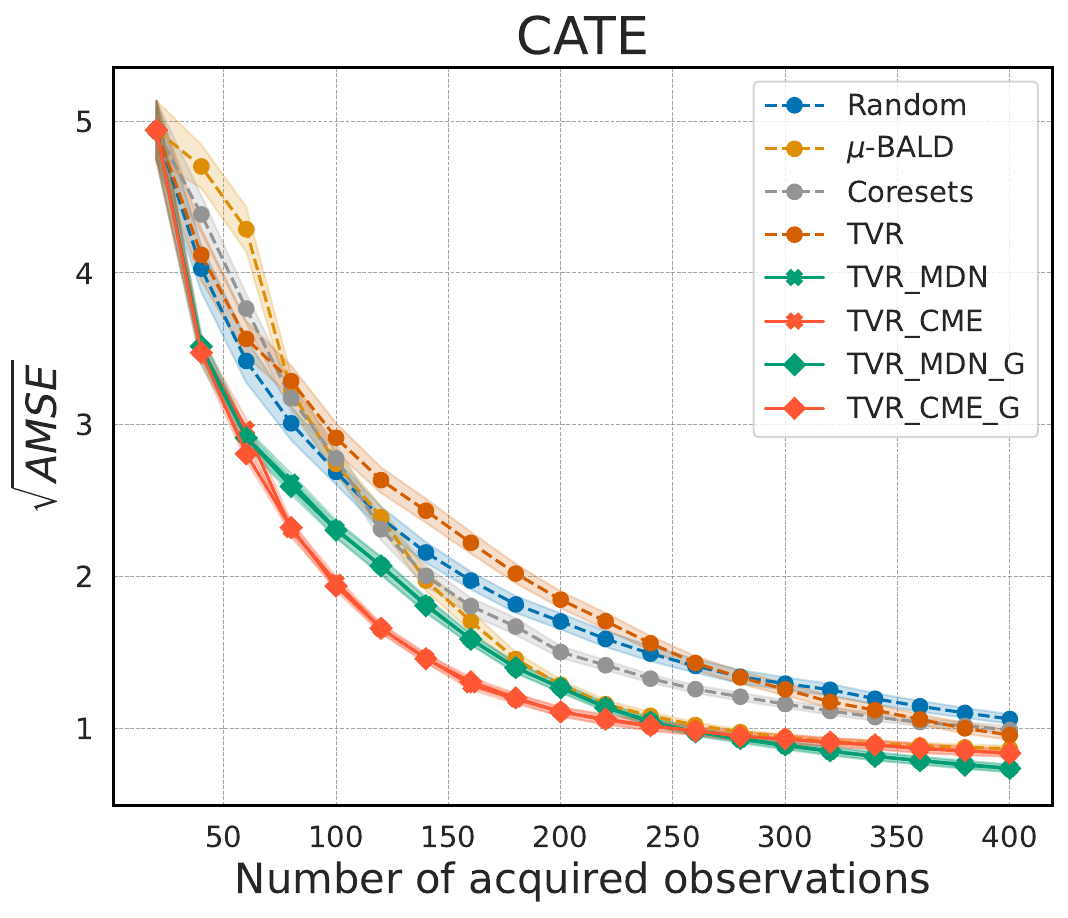}
    \end{minipage}

    \vspace{0.5em}

    \begin{minipage}{0.19\linewidth}
        \centering
        \includegraphics[width=\linewidth]{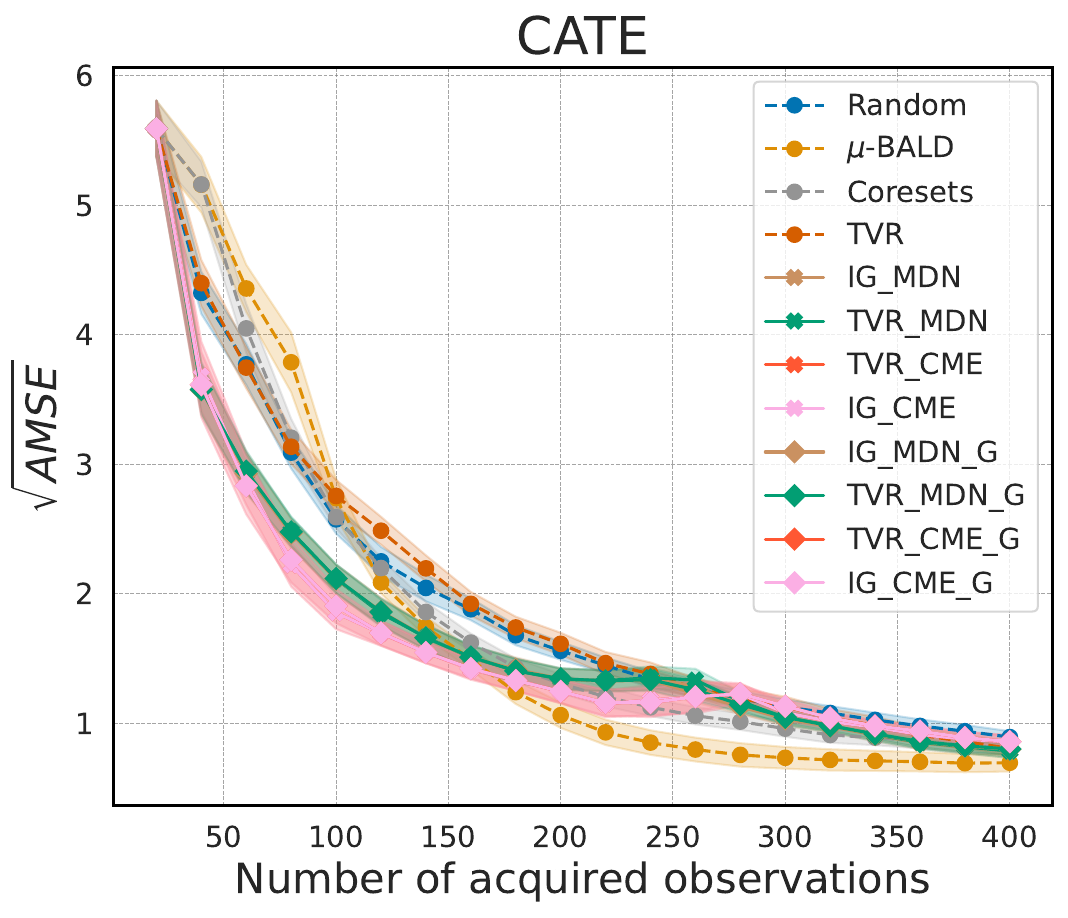}
    \end{minipage}
    \begin{minipage}{0.19\linewidth}
        \centering
        \includegraphics[width=\linewidth]{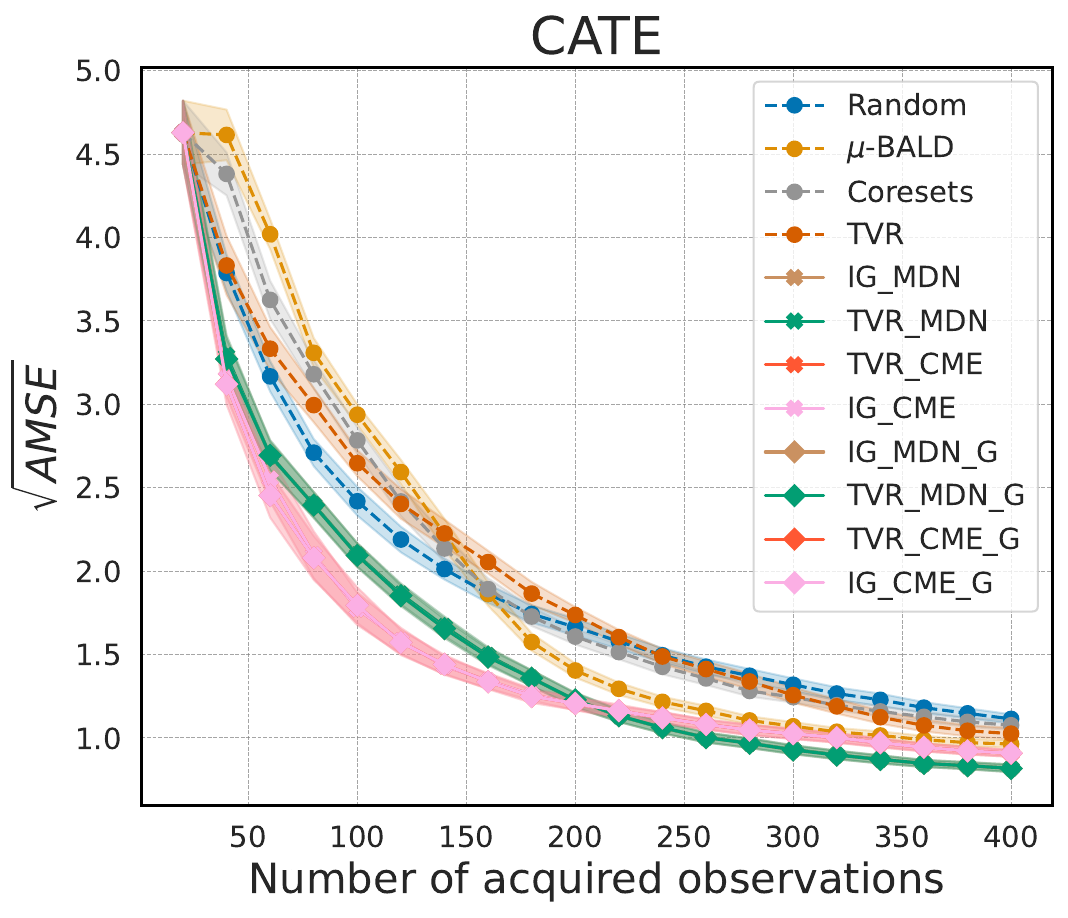}
    \end{minipage}
    \begin{minipage}{0.19\linewidth}
        \centering
        \includegraphics[width=\linewidth]{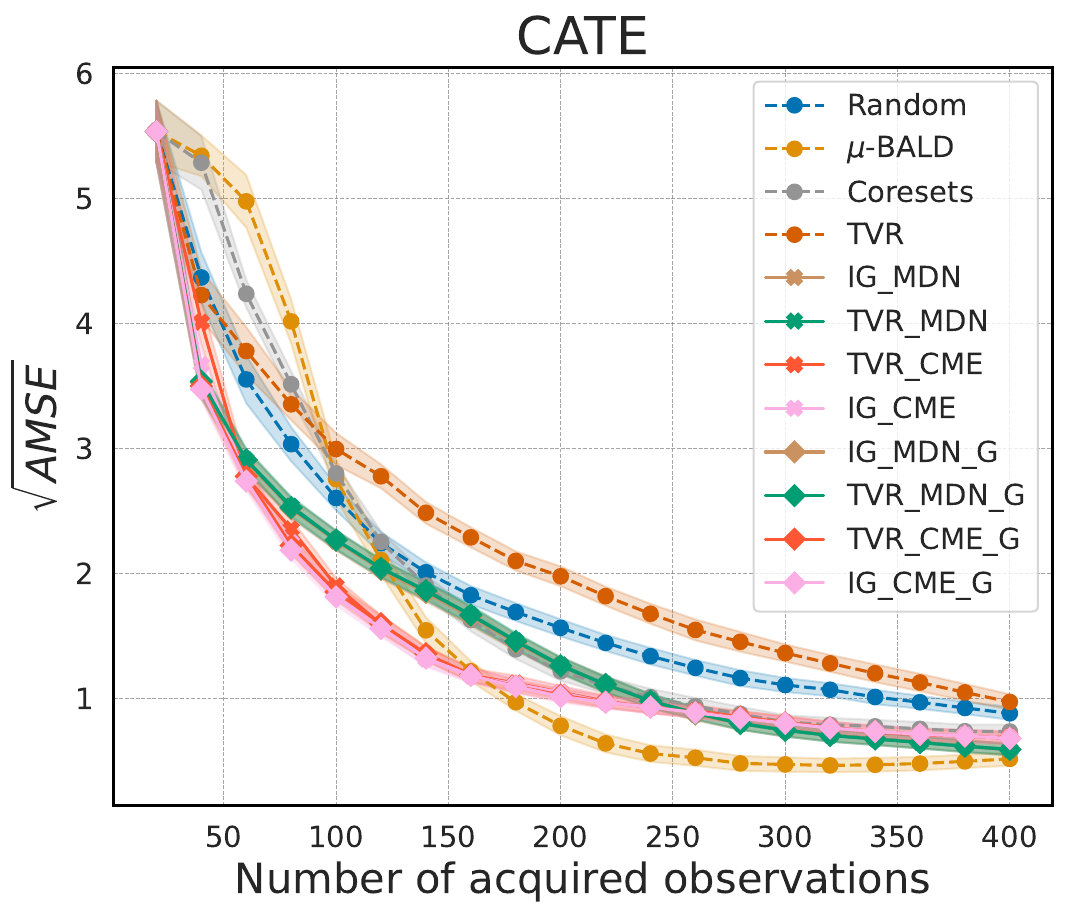}
    \end{minipage}
    \begin{minipage}{0.19\linewidth}
        \centering
        \includegraphics[width=\linewidth]{Figures/Main_paper/Experiments/Simulations/cate/regular/treatment-discrete/active_learning/All_value_out_convergence.pdf}
    \end{minipage}
    \begin{minipage}{0.19\linewidth}
        \centering
        \includegraphics[width=\linewidth]{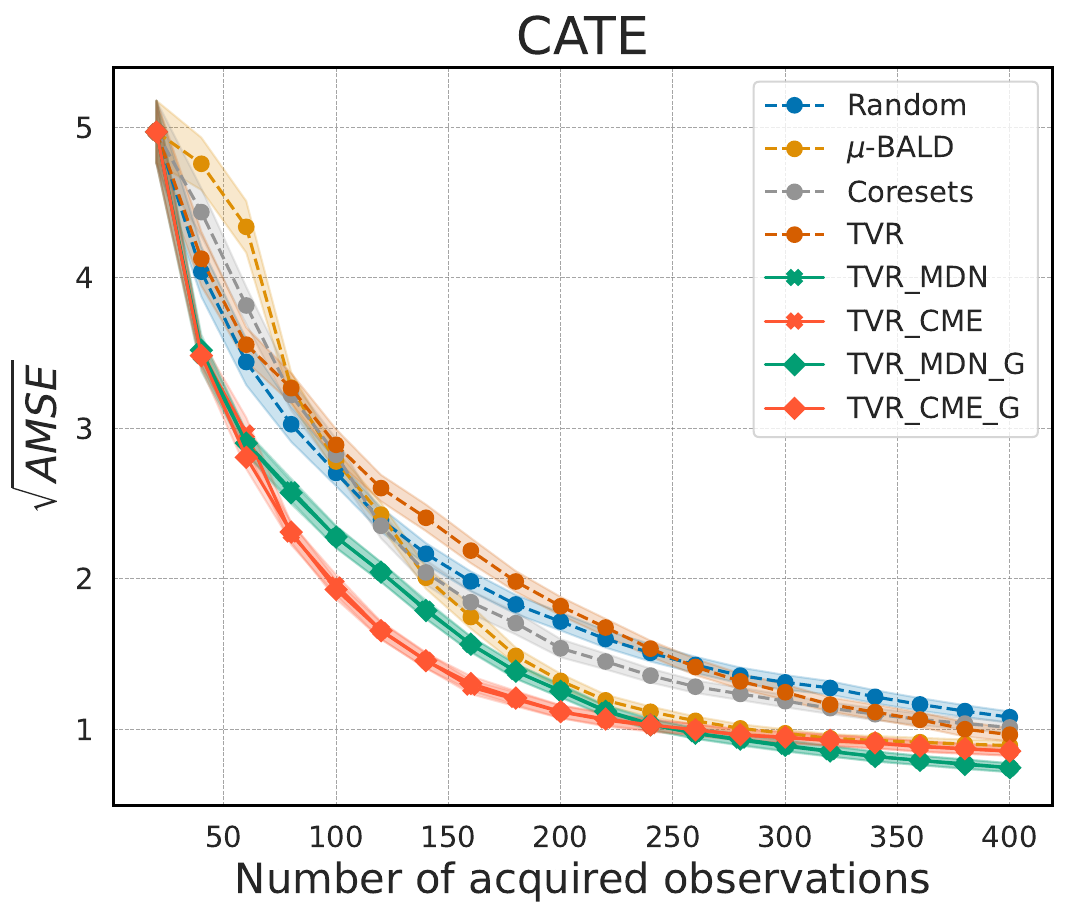}
    \end{minipage}
    
    \caption{The $\sqrt{\text{AMSE}}$ performance (with shaded standard error) for the CATE case on simulation data is presented. The first row shows the in-distribution performance, while the second row illustrates the out-of-distribution performance. From left to right, the settings include: fixed and binary treatment, fixed and discrete treatment, all with binary treatment, all with discrete treatment, and all with continuous treatment.}
\label{app_fig:simulation_CATE}
\end{figure}

\begin{wrapfigure}{r}{0.32\linewidth}
    \centering
    \vspace{-1.5em}
    \includegraphics[width=\linewidth]{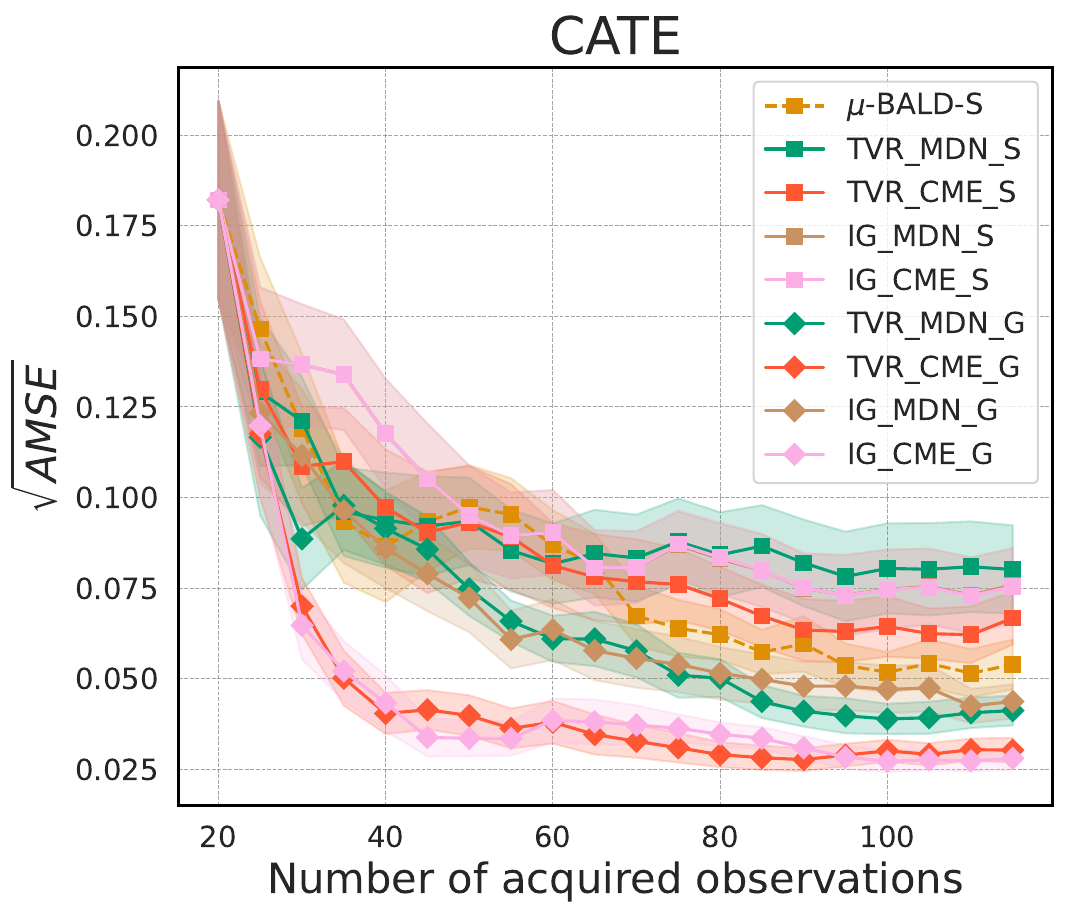}
    \caption{The $\sqrt{\text{AMSE}}$ performance (with shaded standard error) for different kernel functions. Methods with ``S'' use the soft-max acquisition trick.}
    \label{app_fig:softmax_acq}
    \vspace{-1.5em}
\end{wrapfigure}
\paragraph{CATE.} In the CATE case, the results are shown in Fig.~\ref{app_fig:simulation_CATE}. We can see that our CME-based methods consistently show the best performance, and the CDE with MC sampling methods usually show the second-tier best performance. The benefit of our proposed methods over the baseline methods is substantial, since we fixed one value of the conditioning variable, which means that the gap between the interested distribution and the distribution of the pool dataset is quite large. Moreover, we would like to explain why we need to learn the conditional distribution $\sP_{\rvs|\rvz}$.

Even for the CATE case, to construct the interested distribution, the joint distribution $\sP_{\rvs,\rvz}$ is decomposed as $\sP_{\rvs,\rvz} = \sP_{\rvs|\rvz} \times \sP_{\rvz}$, meaning that we first sample the marginal distribution of $\sP_{\rvz}$ to get the observations of the conditioning variable. This could be from a delta distribution if we only care about a fixed value of the conditioning variable, which implies that we only care about the causal effect over a specific subgroup, or from the observational distribution, which means that we care about all possible subgroups given different values of the conditioning variable. Then, we can sample from the conditional distribution $\sP_{\rvs|\rvz}$ to obtain the observations of the adjustment variables.

Notice that the conditional distribution remains unchanged in these cases, meaning that we can directly take observations from the dataset as a surrogate for sampling from the conditional distribution. However, there are two main reasons why this is not sufficient. The first reason is that if we want to evaluate a subgroup, say $\rvz = \vz^*$, and this value does not appear in the dataset, it is impossible to get any samples from the dataset. In this case, we need the flexible conditional distribution to sample from $\sP_{\rvs|\vz^*}$. The second reason is that we can sample more observations from the learned distribution, which helps reduce the variance and improve accuracy

Also, note that if one cares about all possible subgroups, it may not be strictly necessary to learn the conditional distribution. We believe that the primary benefit of our methods comes from the distribution shift caused by the treatment variable, from $\sP_{\ra}$ to the interested distribution $\sP^*_{\ra}$. This shift is determined by whether one cares about the performance over a fixed treatment or all possible treatments.

Here, we also incorporate the softmax-based batch acquisition strategy proposed by~\citep{kirsch2023stochastic}, which reweights the acquisition scores of candidate observations based on their sample importance within the pool dataset. This approach can accelerate the acquisition process and improve performance. We find that BALD-based methods benefit from this strategy, achieving stable performance and demonstrating compatibility. However, our method consistently outperforms BALD, whether using Top-b selection or greedy acquisition, as it directly targets uncertainty reduction in the target estimator. Notably, because our approach prioritizes predictive performance over reducing parameter uncertainty, the softmax-based reweighting appears to negatively affect its effectiveness.

\begin{figure}[h]
    \centering
    \begin{minipage}{0.19\linewidth}
        \centering
        \includegraphics[width=\linewidth]{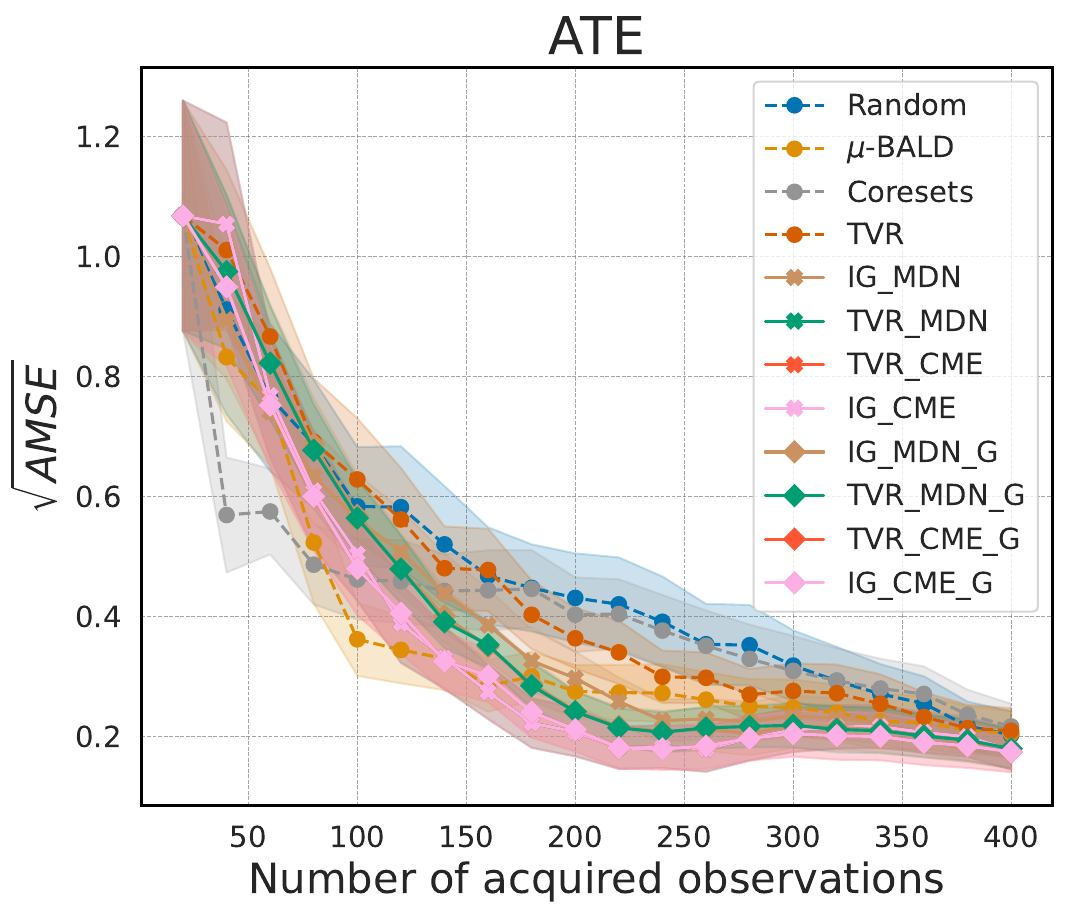}
    \end{minipage}
    \begin{minipage}{0.19\linewidth}
        \centering
        \includegraphics[width=\linewidth]{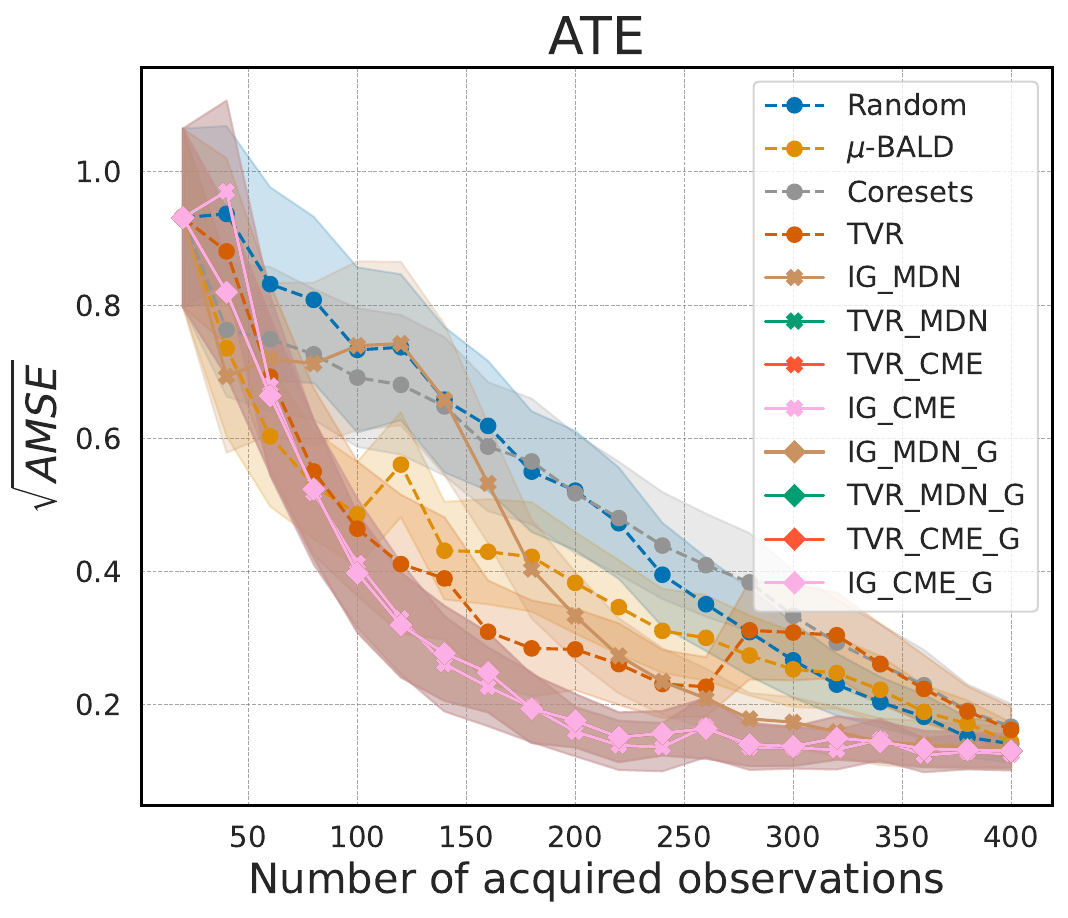}
    \end{minipage}
    \begin{minipage}{0.19\linewidth}
        \centering
        \includegraphics[width=\linewidth]{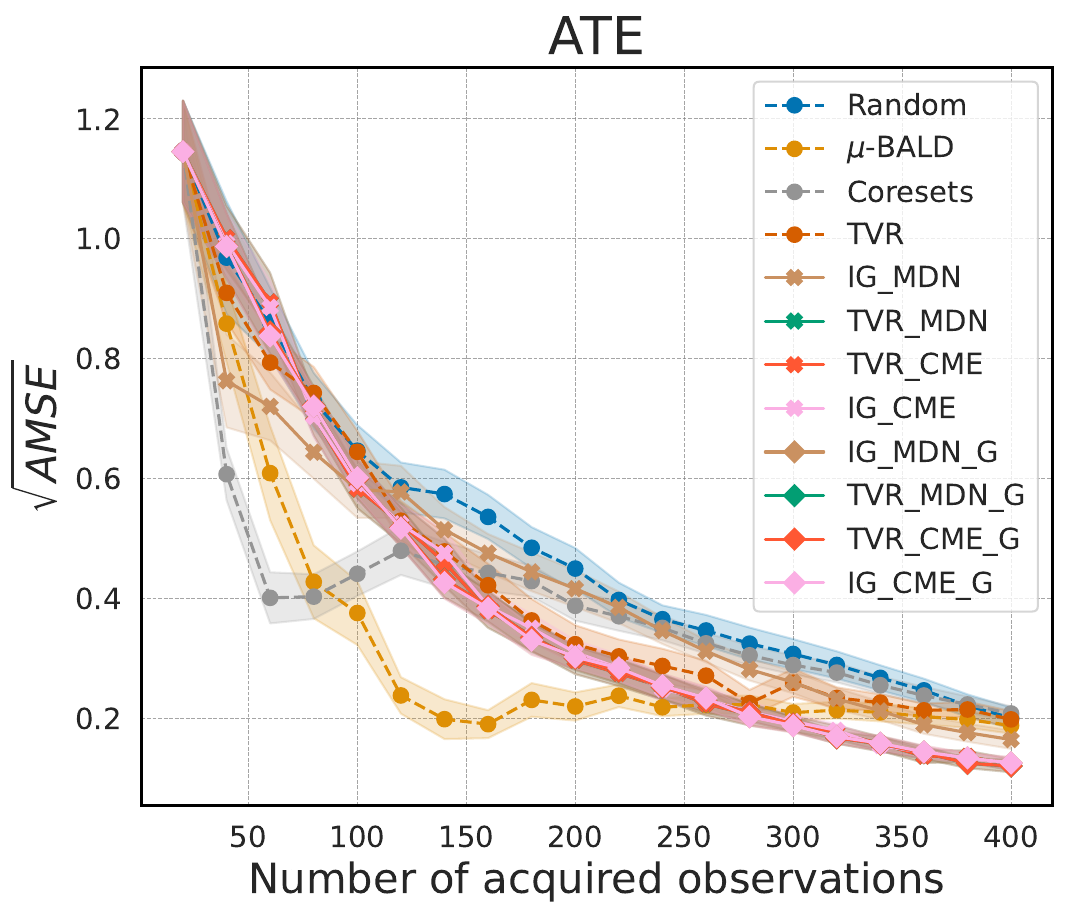}
    \end{minipage}
    \begin{minipage}{0.19\linewidth}
        \centering
        \includegraphics[width=\linewidth]{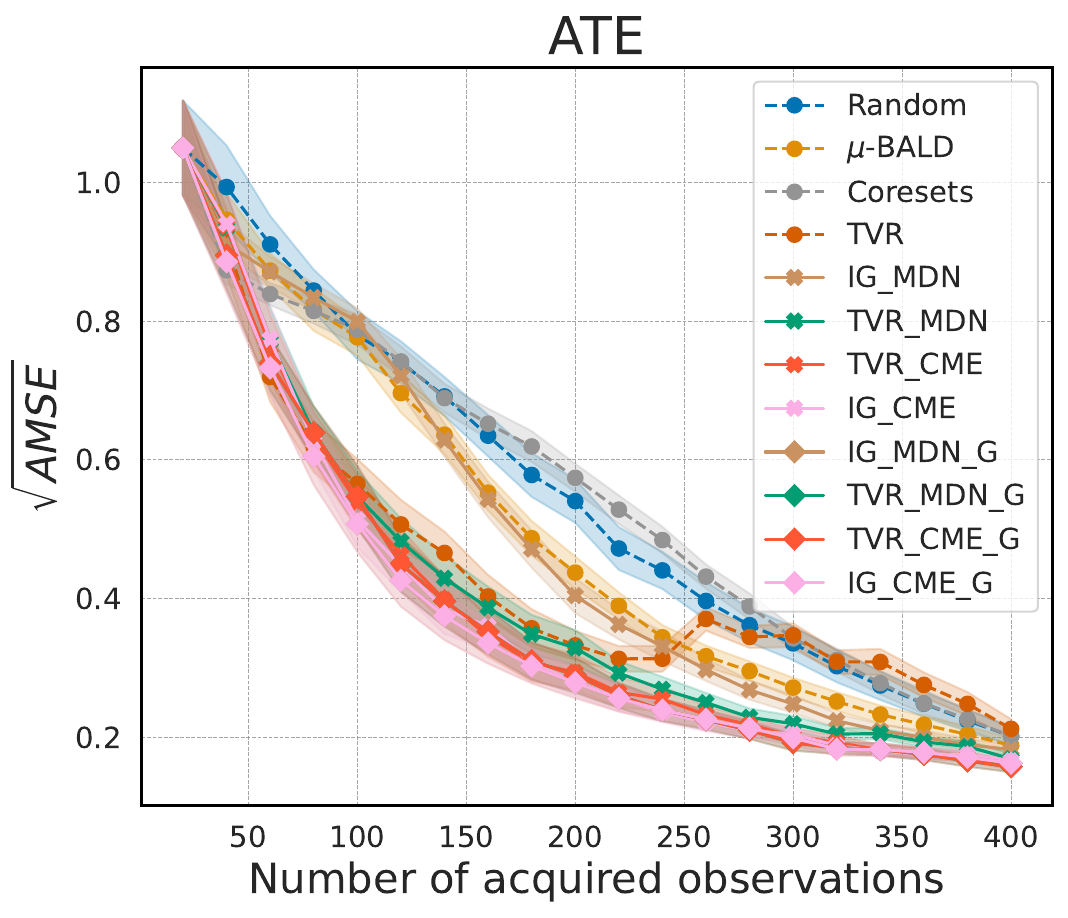}
    \end{minipage}
    \begin{minipage}{0.19\linewidth}
        \centering
        \includegraphics[width=\linewidth]{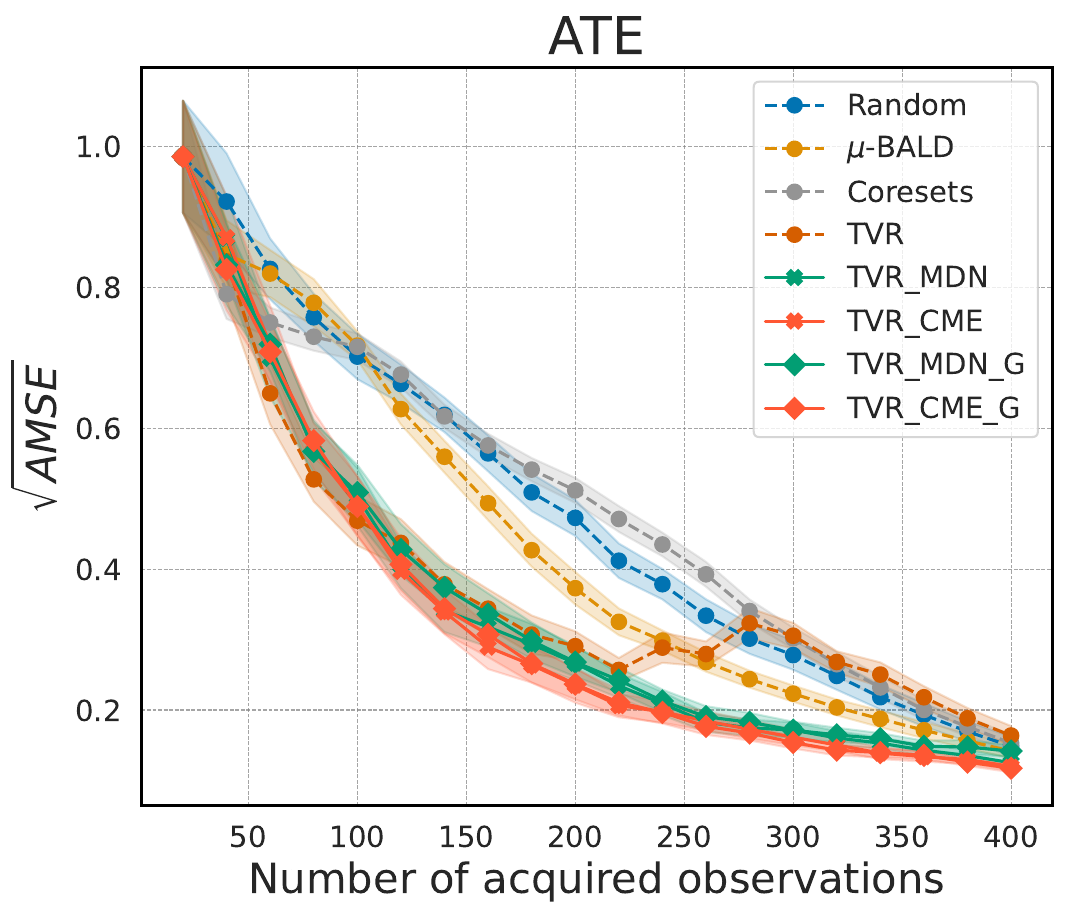}
    \end{minipage}

    \vspace{0.5em}

    \begin{minipage}{0.19\linewidth}
        \centering
        \includegraphics[width=\linewidth]{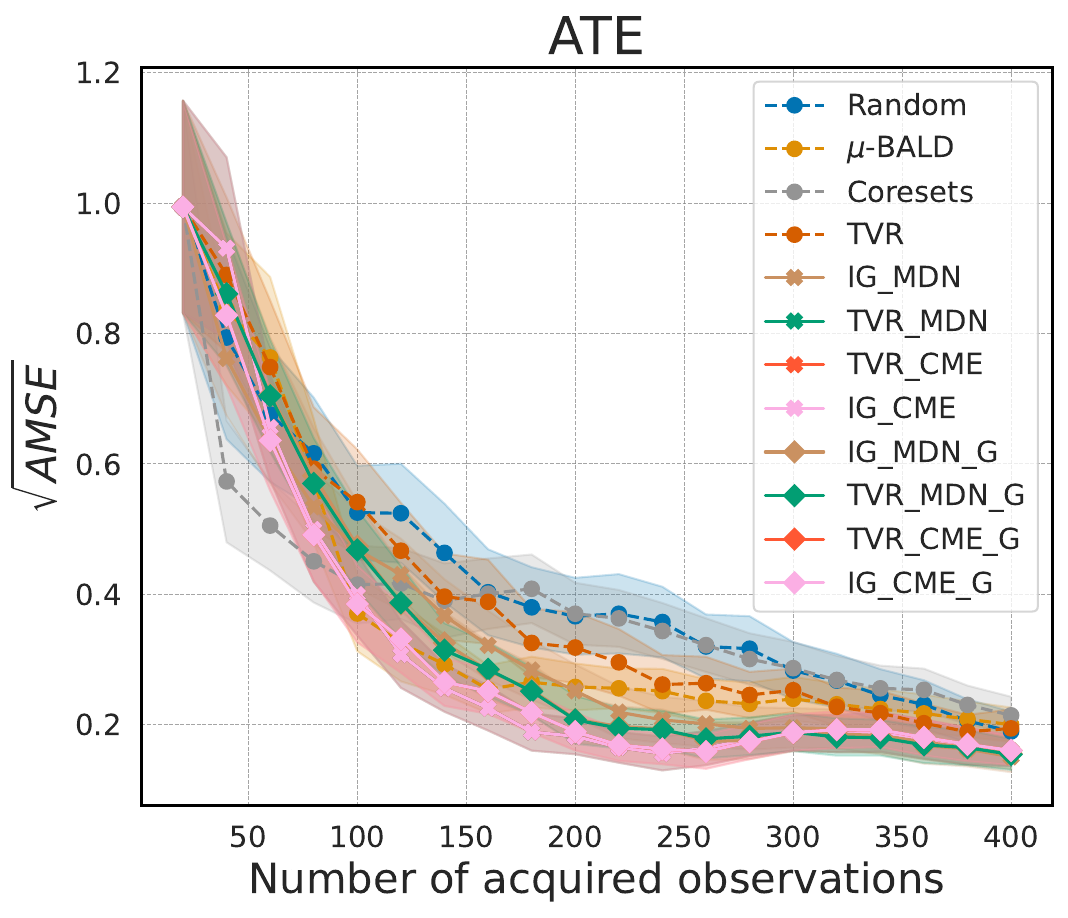}
    \end{minipage}
    \begin{minipage}{0.19\linewidth}
        \centering
        \includegraphics[width=\linewidth]{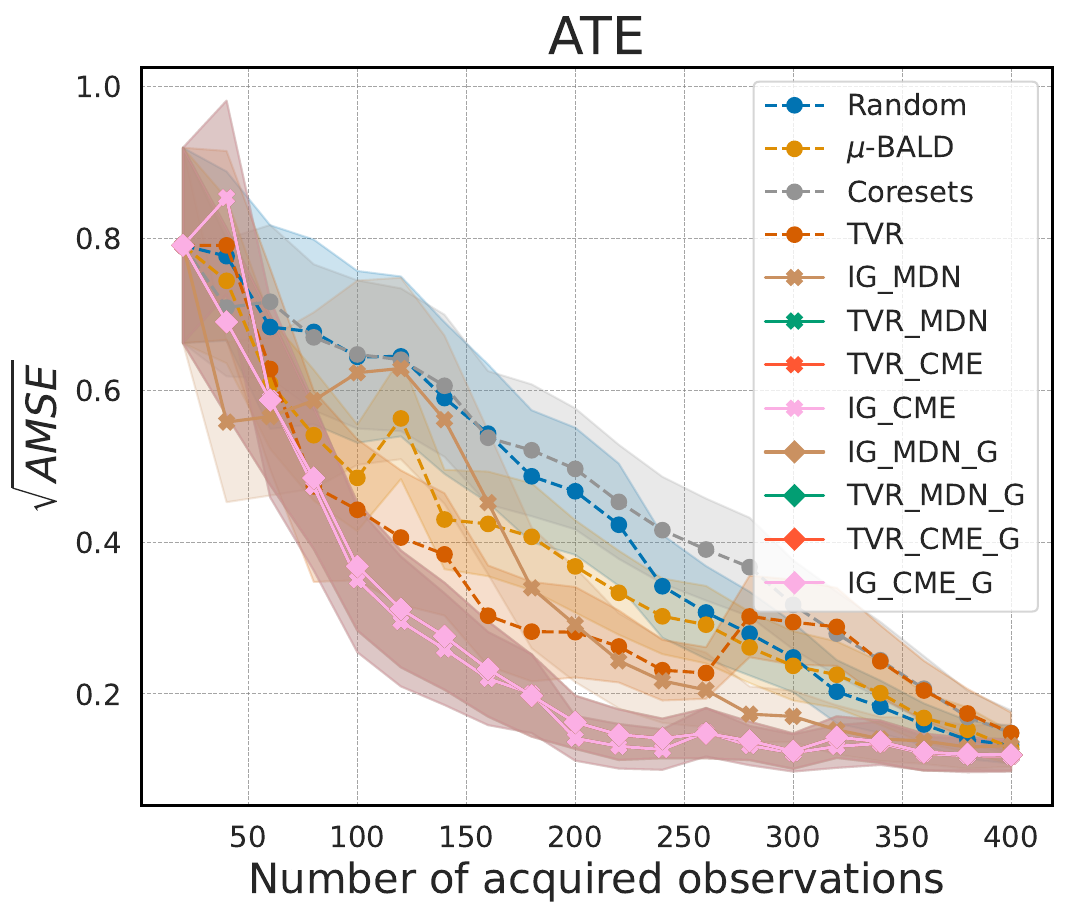}
    \end{minipage}
    \begin{minipage}{0.19\linewidth}
        \centering
        \includegraphics[width=\linewidth]{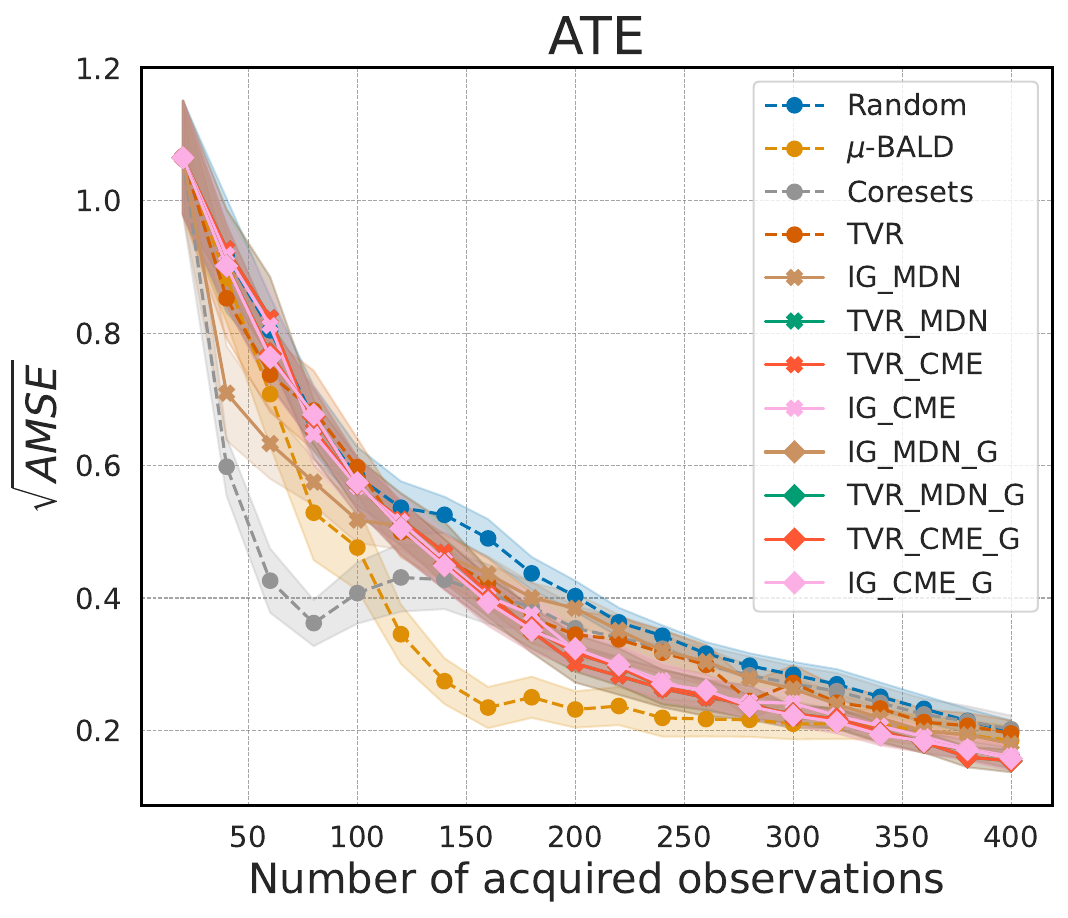}
    \end{minipage}
    \begin{minipage}{0.19\linewidth}
        \centering
        \includegraphics[width=\linewidth]{Figures/Main_paper/Experiments/Simulations/ate/regular/treatment-discrete/active_learning/All_value_out_convergence.pdf}
    \end{minipage}
    \begin{minipage}{0.19\linewidth}
        \centering
        \includegraphics[width=\linewidth]{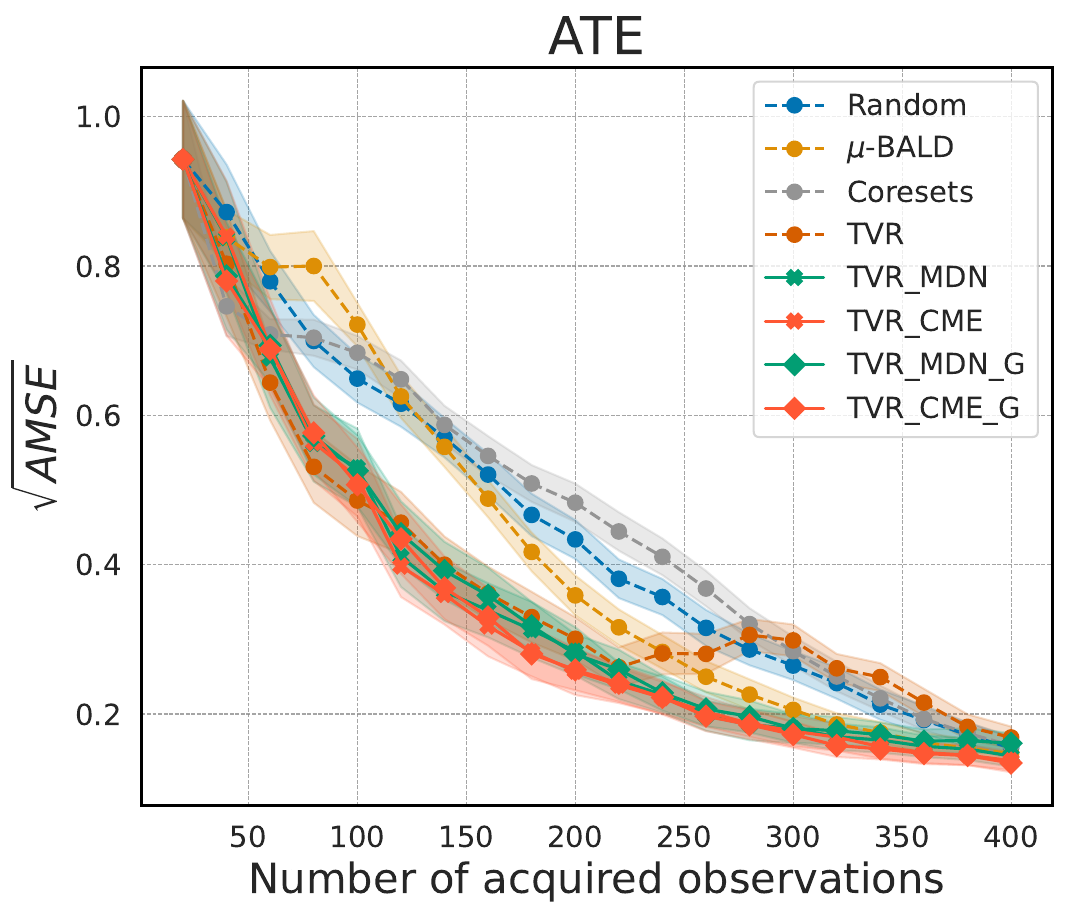}
    \end{minipage}
    
    \caption{The $\sqrt{\text{AMSE}}$ performance (with shaded standard error) for the ATE case on simulation data is presented. The first row shows the in-distribution performance, while the second row illustrates the out-of-distribution performance. From left to right, the settings include: fixed and binary treatment, fixed and discrete treatment, all with binary treatment, all with discrete treatment, and all with continuous treatment.}
    \label{app_fig:simulation_ATE}
\end{figure}

\paragraph{ATE.} For the ATE case, the results are shown in Fig.~\ref{app_fig:simulation_ATE}. We can observe that our methods consistently outperform the baseline methods across all different setups. As with the CATE case, we can conclude that when the interested distribution significantly differs from the observational distribution in the pool dataset, our methods perform much better than the baseline methods. Consequently, for all fixed treatment cases, our methods yield superior results. However, for the all-possible-treatment case in binary treatment scenarios, our methods show comparable performance to the baseline methods. This is because, to ensure that the overlap assumption holds, we do not modify $\sP_{\ra|\rvs}$ significantly. Moreover, in the ATE case, the distribution shift arises solely from the shift in the treatment variables $\sP_{\ra}$, which is why all methods show similar results in this case.

\begin{figure}[h]
    \centering
    \begin{minipage}{0.24\linewidth}
        \centering
        \includegraphics[width=\linewidth]{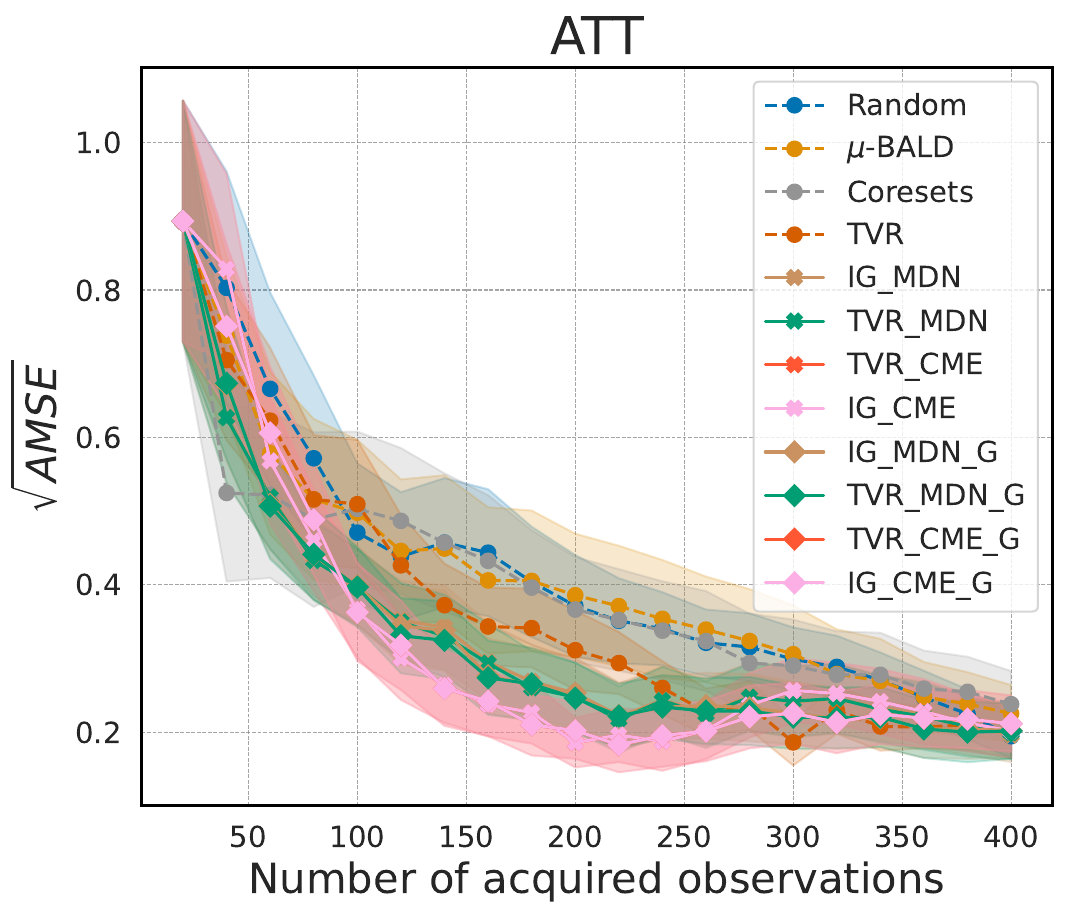}
    \end{minipage}
    \begin{minipage}{0.24\linewidth}
        \centering
        \includegraphics[width=\linewidth]{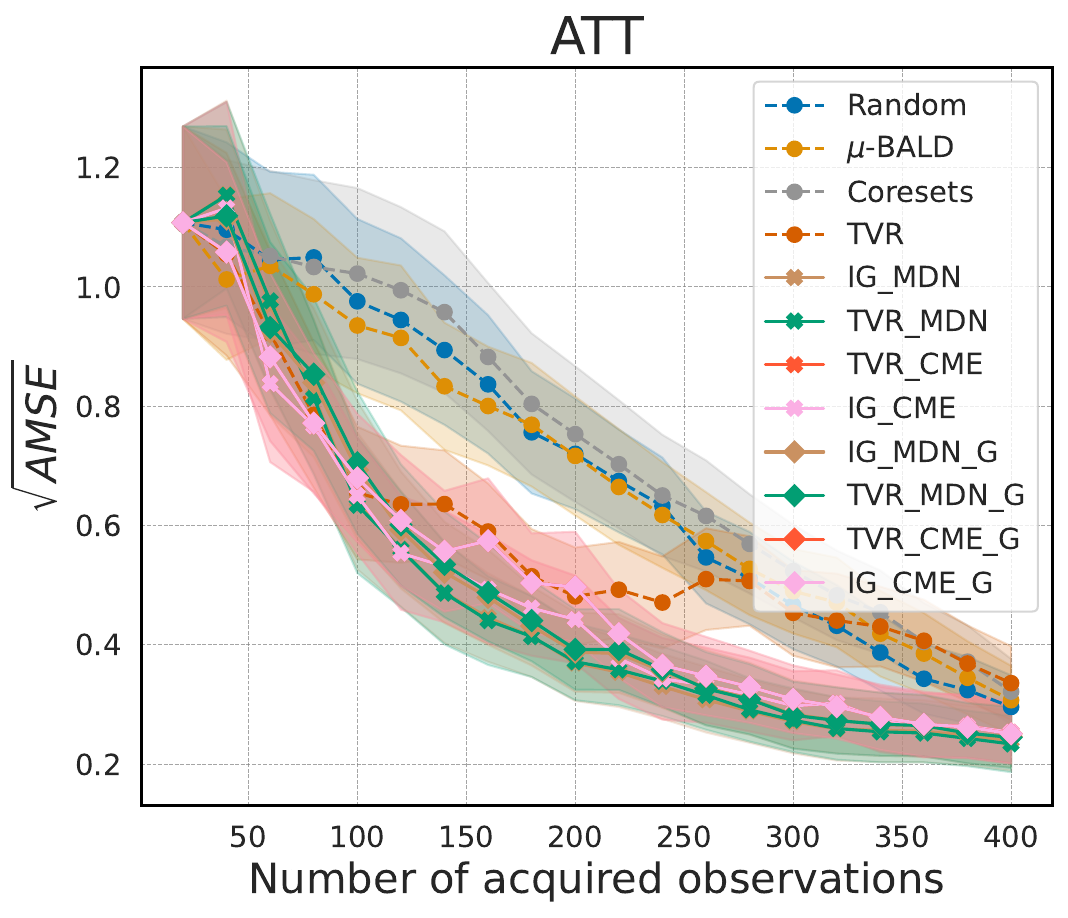}
    \end{minipage}
    \begin{minipage}{0.24\linewidth}
        \centering
        \includegraphics[width=\linewidth]{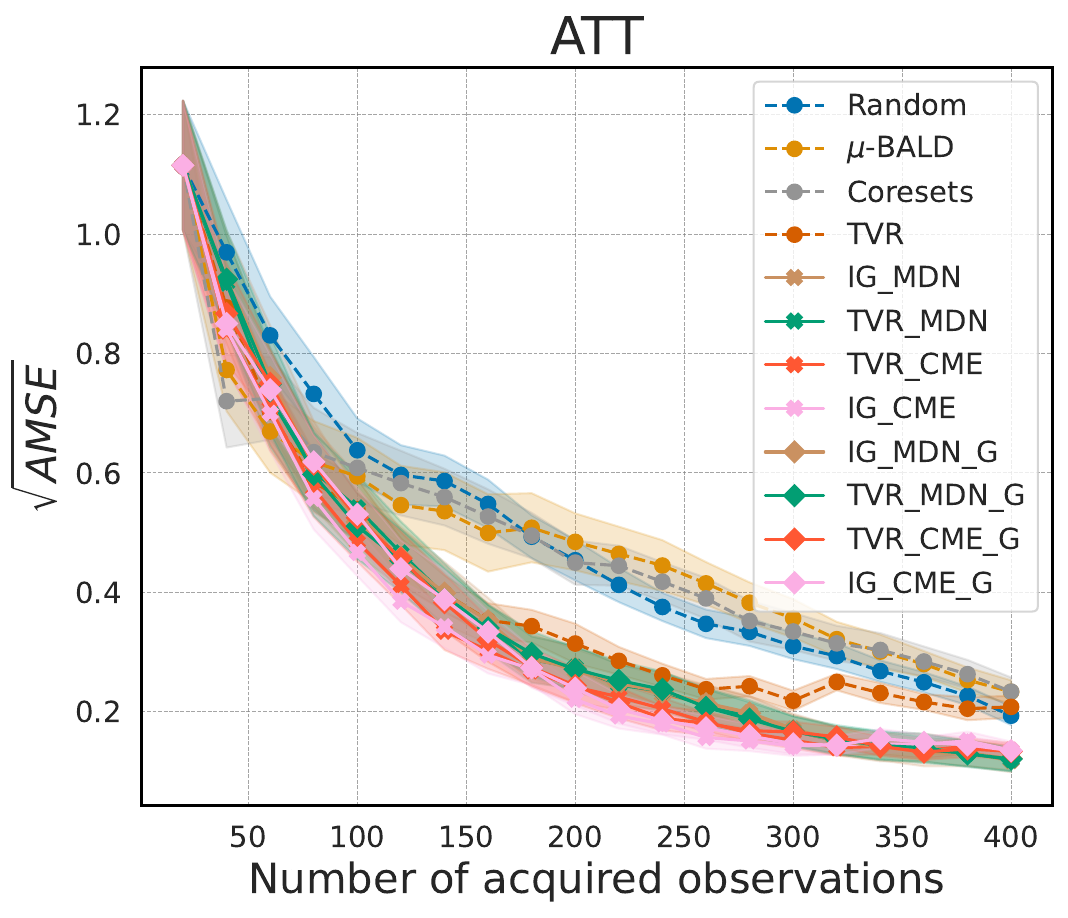}
    \end{minipage}
    \begin{minipage}{0.24\linewidth}
        \centering
        \includegraphics[width=\linewidth]{Figures/Main_paper/Experiments/Simulations/att/interest_shift/treatment-discrete/active_learning/All_value_in_convergence.pdf}
    \end{minipage}

    \vspace{0.5em}

    \begin{minipage}{0.24\linewidth}
        \centering
        \includegraphics[width=\linewidth]{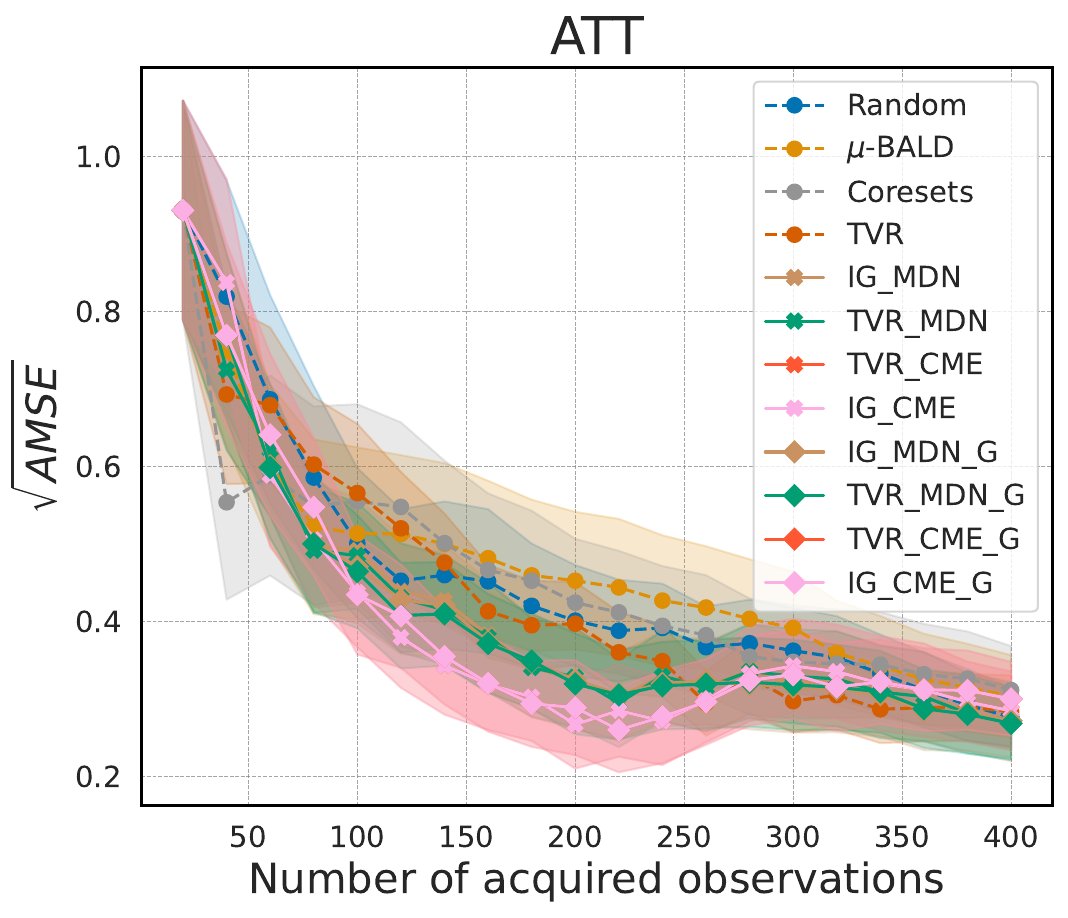}
    \end{minipage}
    \begin{minipage}{0.24\linewidth}
        \centering
        \includegraphics[width=\linewidth]{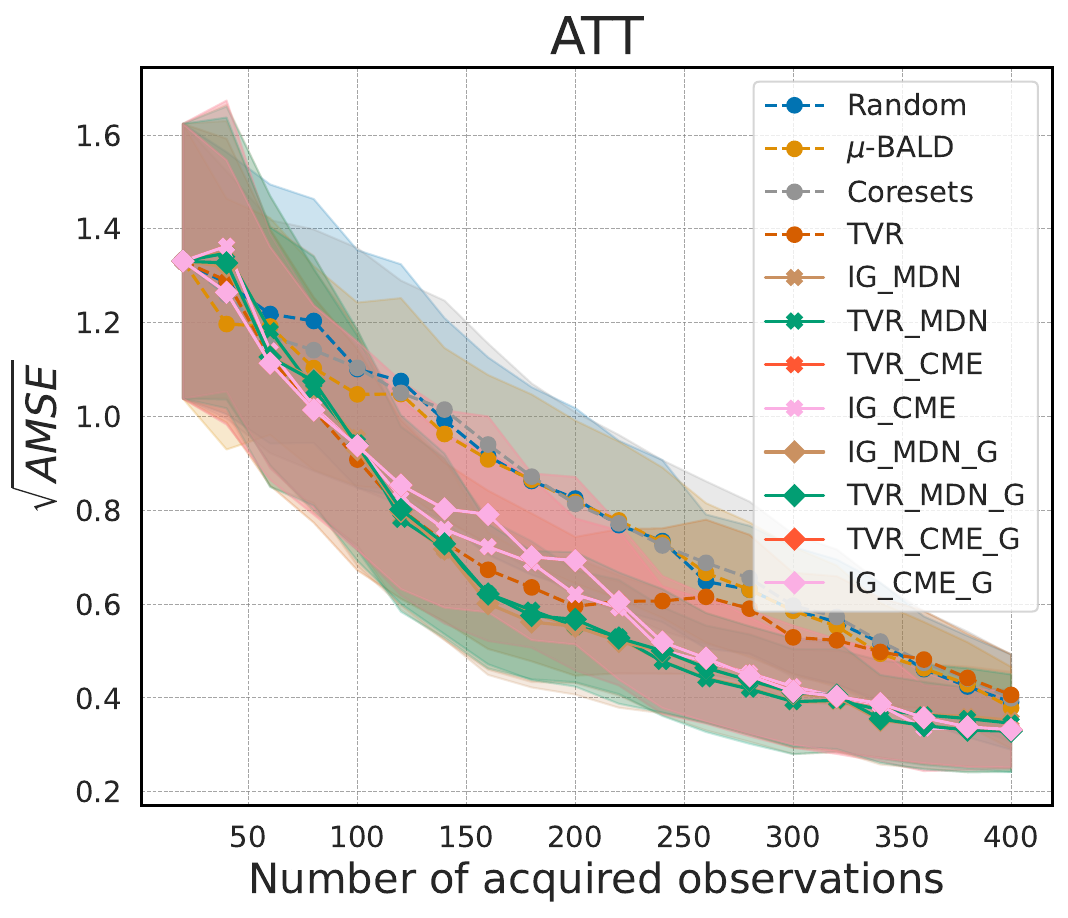}
    \end{minipage}
    \begin{minipage}{0.24\linewidth}
        \centering
        \includegraphics[width=\linewidth]{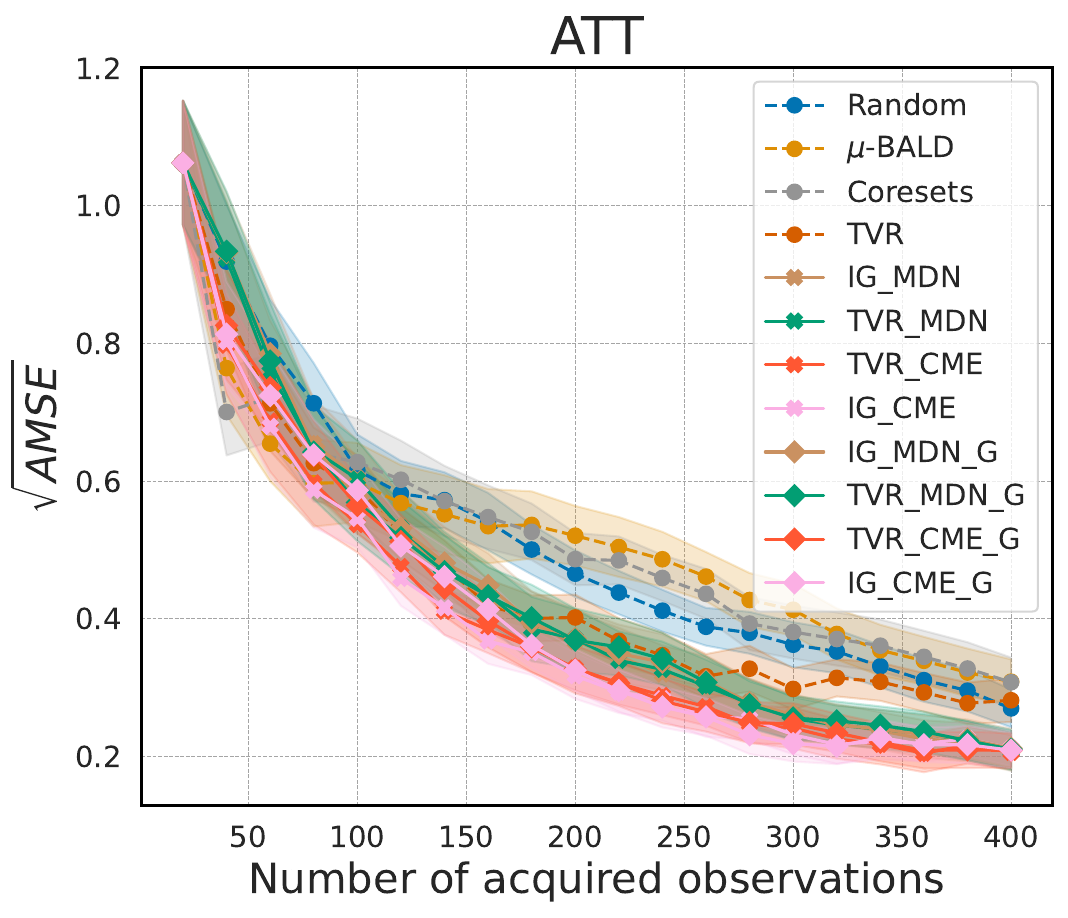}
    \end{minipage}
    \begin{minipage}{0.24\linewidth}
        \centering
        \includegraphics[width=\linewidth]{Figures/Main_paper/Experiments/Simulations/att/interest_shift/treatment-discrete/active_learning/All_value_out_convergence.pdf}
    \end{minipage}
    
    \caption{The $\sqrt{\text{AMSE}}$ performance (with shaded standard error) for the ATT case on simulation data is presented. The first row shows the in-distribution performance, while the second row illustrates the out-of-distribution performance. From left to right, the settings include: fixed and binary treatment, fixed and discrete treatment, all with binary treatment, all with discrete treatment.}
    \label{app_fig:simulation_ATT}
\end{figure}

\paragraph{ATT.} For the ATT case, the results are shown in Fig.~\ref{app_fig:simulation_ATT}. We draw a similar conclusion as before: our proposed methods consistently outperform the baseline methods, although the performance improvement is not as substantial. We believe this is due to the data simulation, where the covariates are high-dimensional, and to ensure the overlap assumption, we avoid making significant distribution shifts in the treatment selection. This is further supported by the visualizations of the datasets shown in Fig.~\ref{app_fig:vis_of_the_simulation_dataset} and Fig.~\ref{app_fig:vis_of_the_vis_dataset}, which illustrate that $\sP_{\rvs|\ra}$ does not change significantly when constructing the interested distribution.

\begin{figure}[h]
    \centering
    \begin{minipage}{0.19\linewidth}
        \centering
        \includegraphics[width=\linewidth]{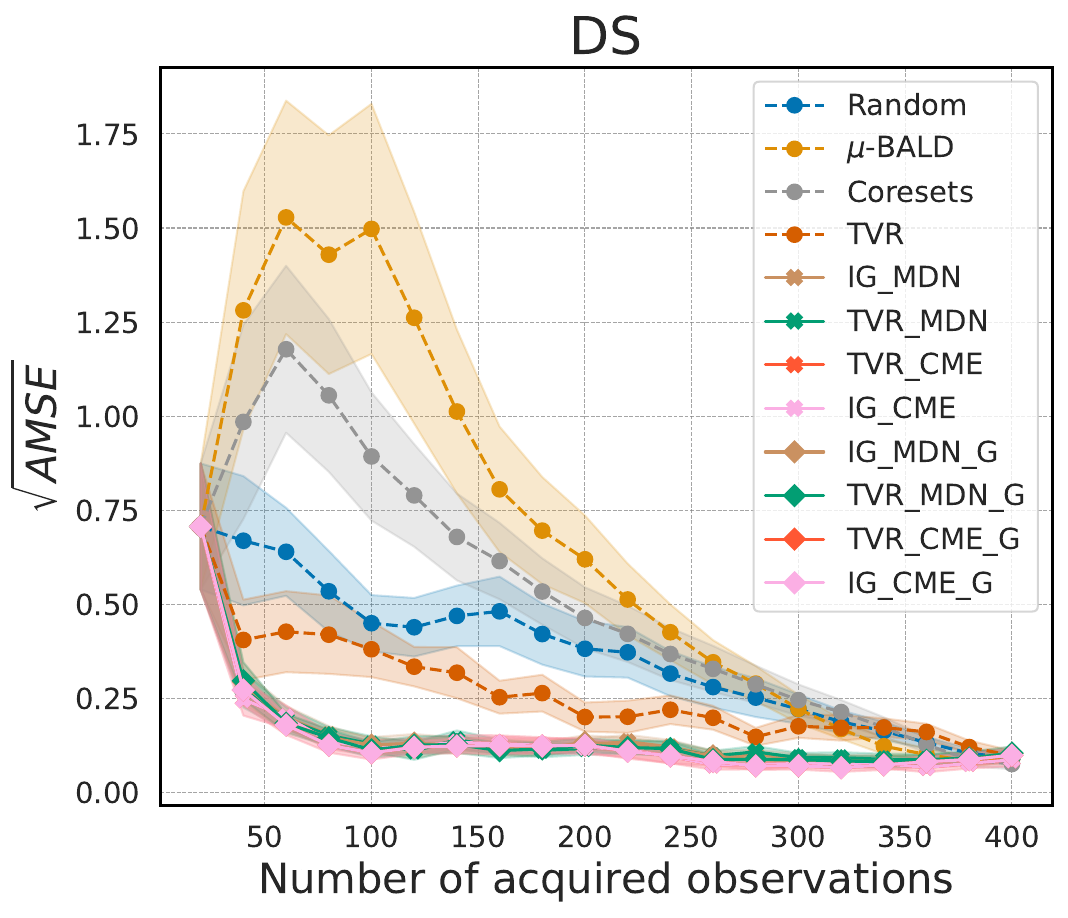}
    \end{minipage}
    \begin{minipage}{0.19\linewidth}
        \centering
        \includegraphics[width=\linewidth]{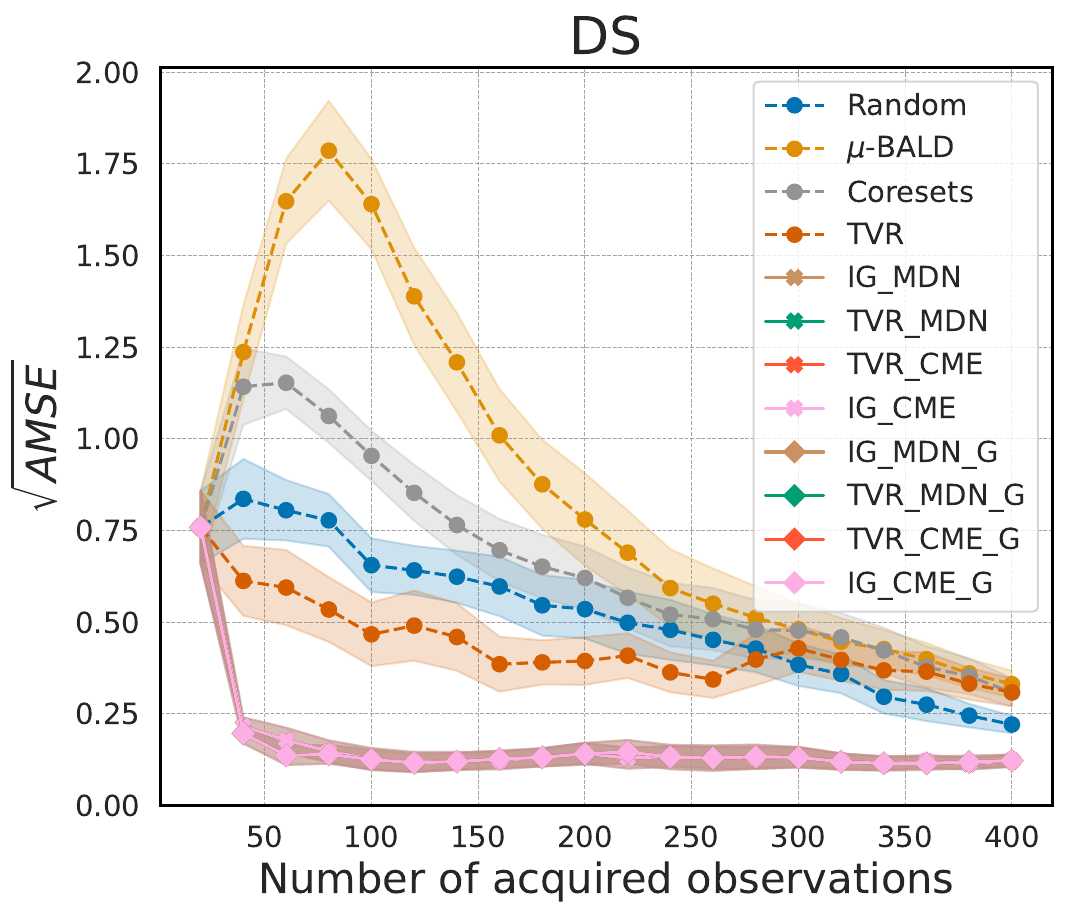}
    \end{minipage}
    \begin{minipage}{0.19\linewidth}
        \centering
        \includegraphics[width=\linewidth]{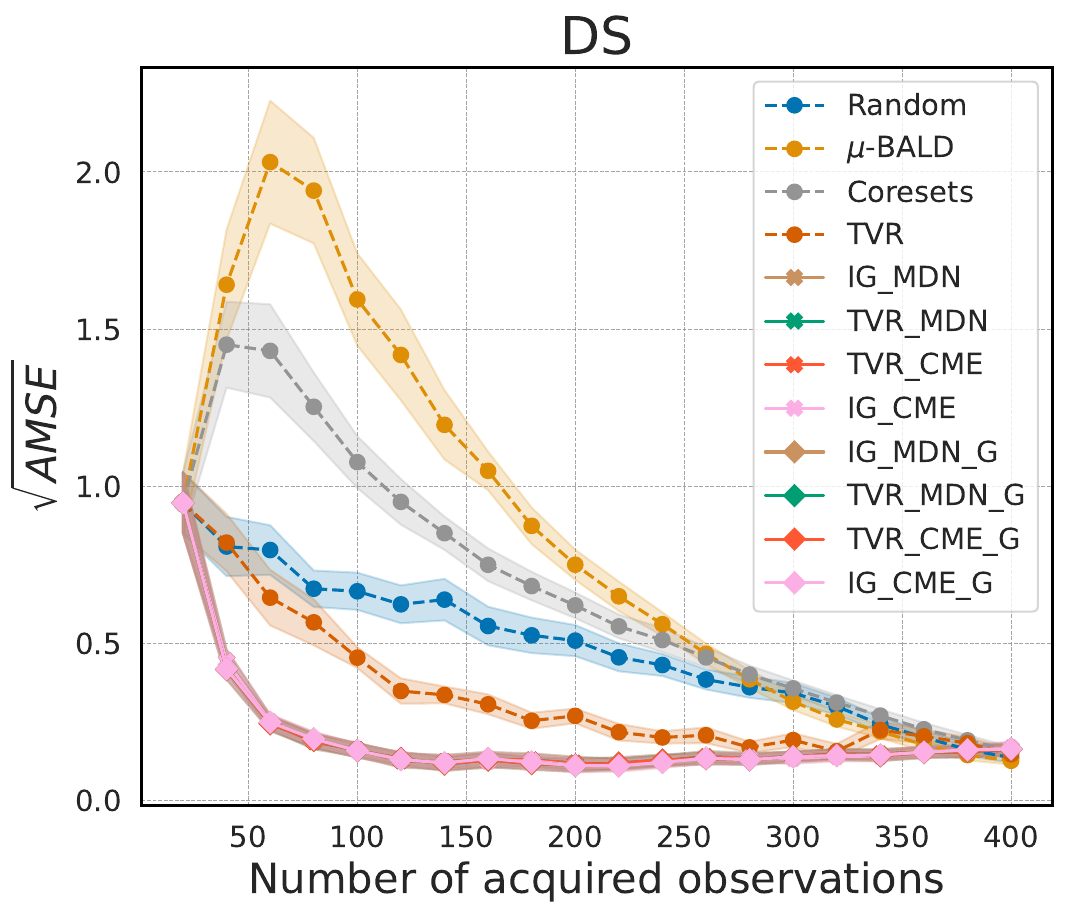}
    \end{minipage}
    \begin{minipage}{0.19\linewidth}
        \centering
        \includegraphics[width=\linewidth]{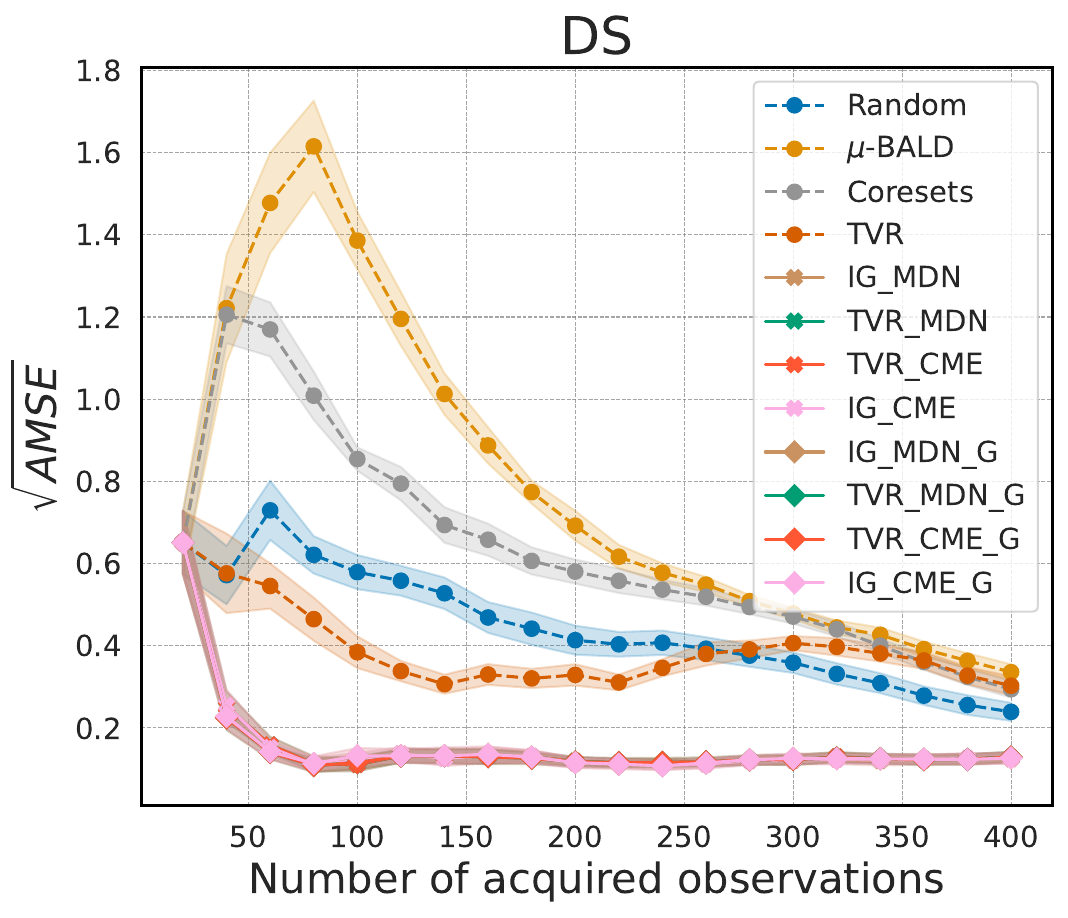}
    \end{minipage}
    \begin{minipage}{0.19\linewidth}
        \centering
        \includegraphics[width=\linewidth]{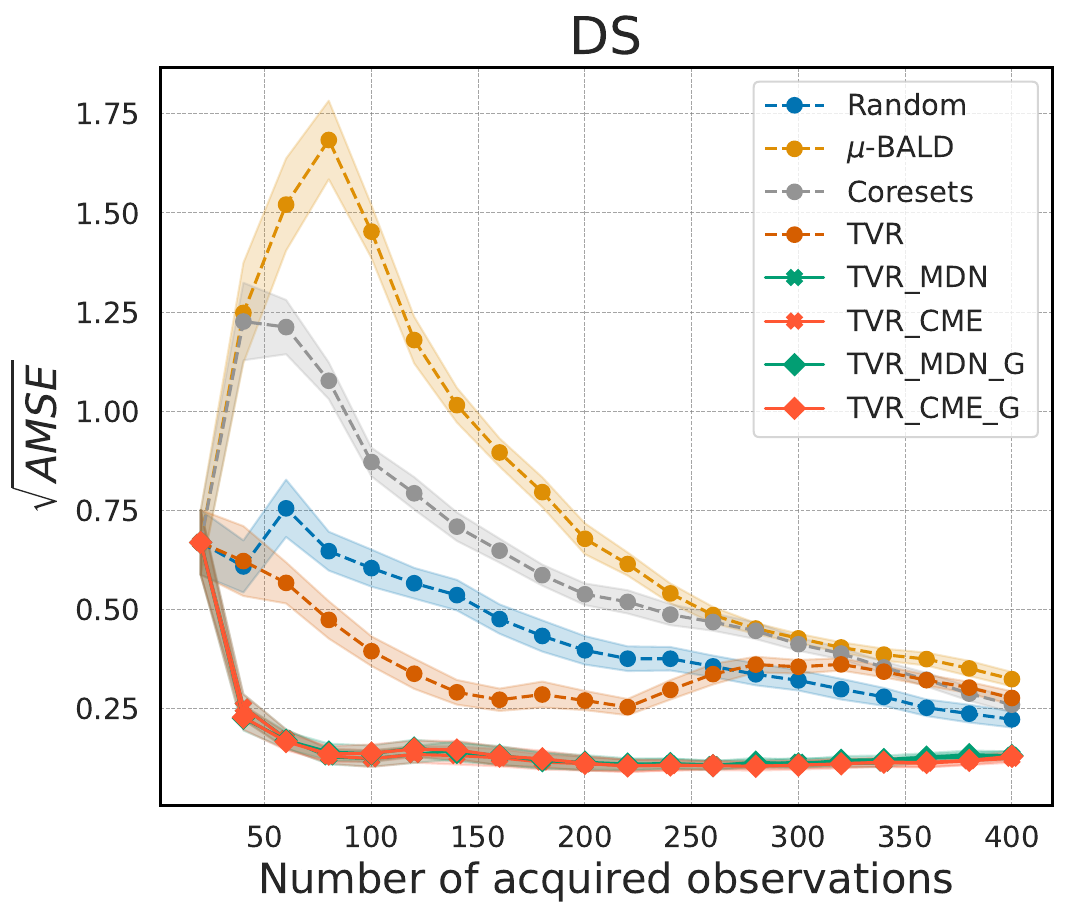}
    \end{minipage}

    \vspace{0.5em}

    \begin{minipage}{0.19\linewidth}
        \centering
        \includegraphics[width=\linewidth]{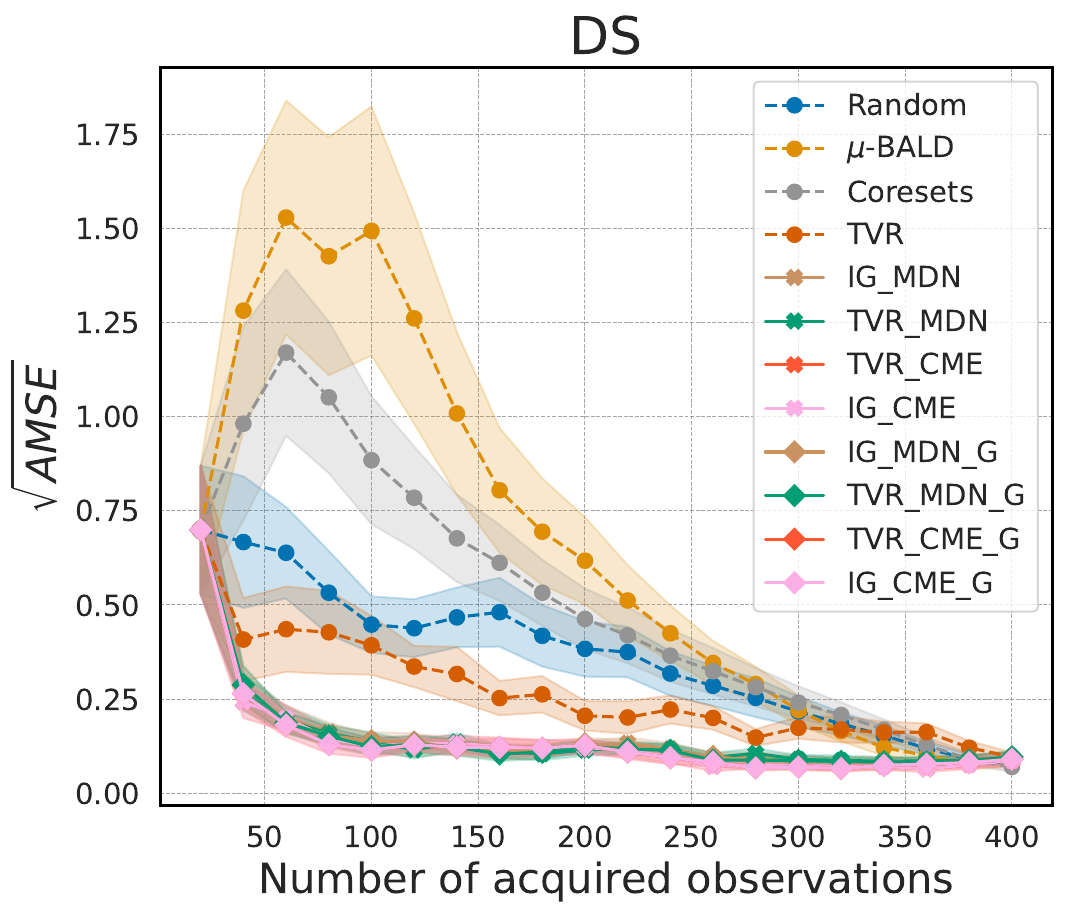}
    \end{minipage}
    \begin{minipage}{0.19\linewidth}
        \centering
        \includegraphics[width=\linewidth]{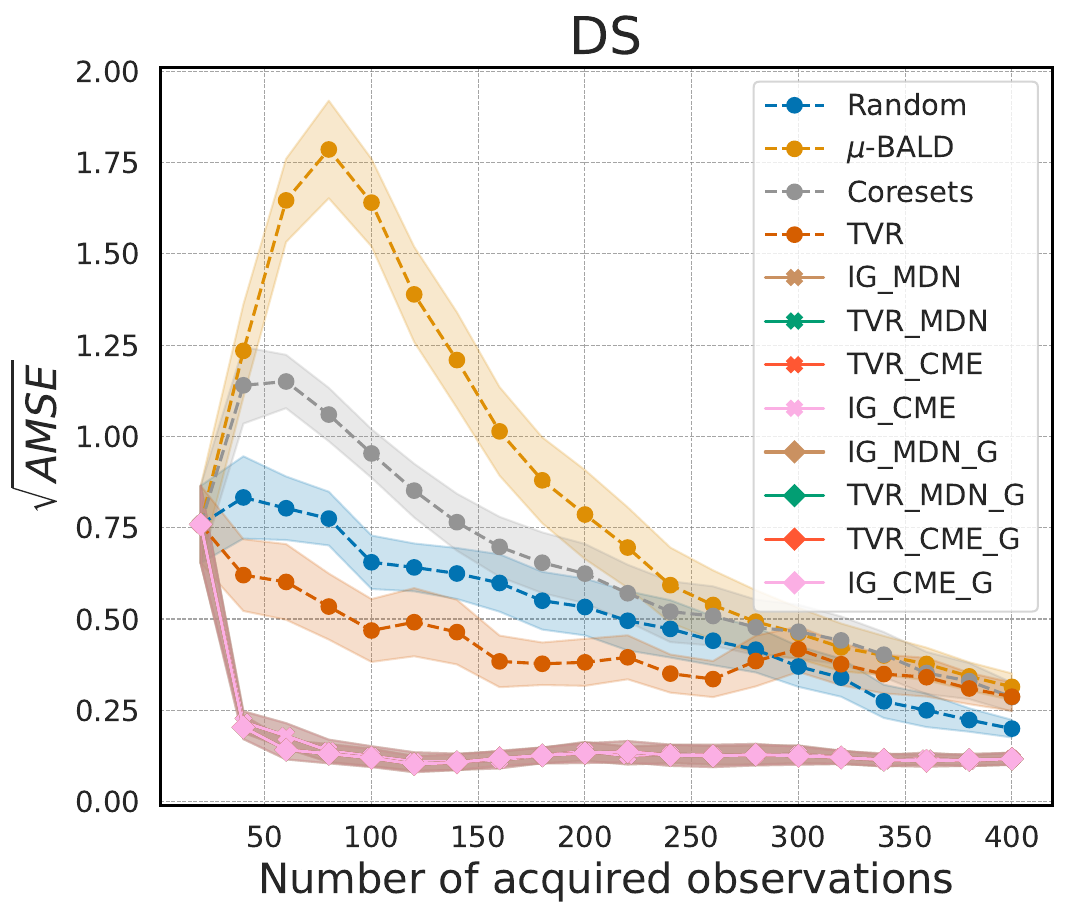}
    \end{minipage}
    \begin{minipage}{0.19\linewidth}
        \centering
        \includegraphics[width=\linewidth]{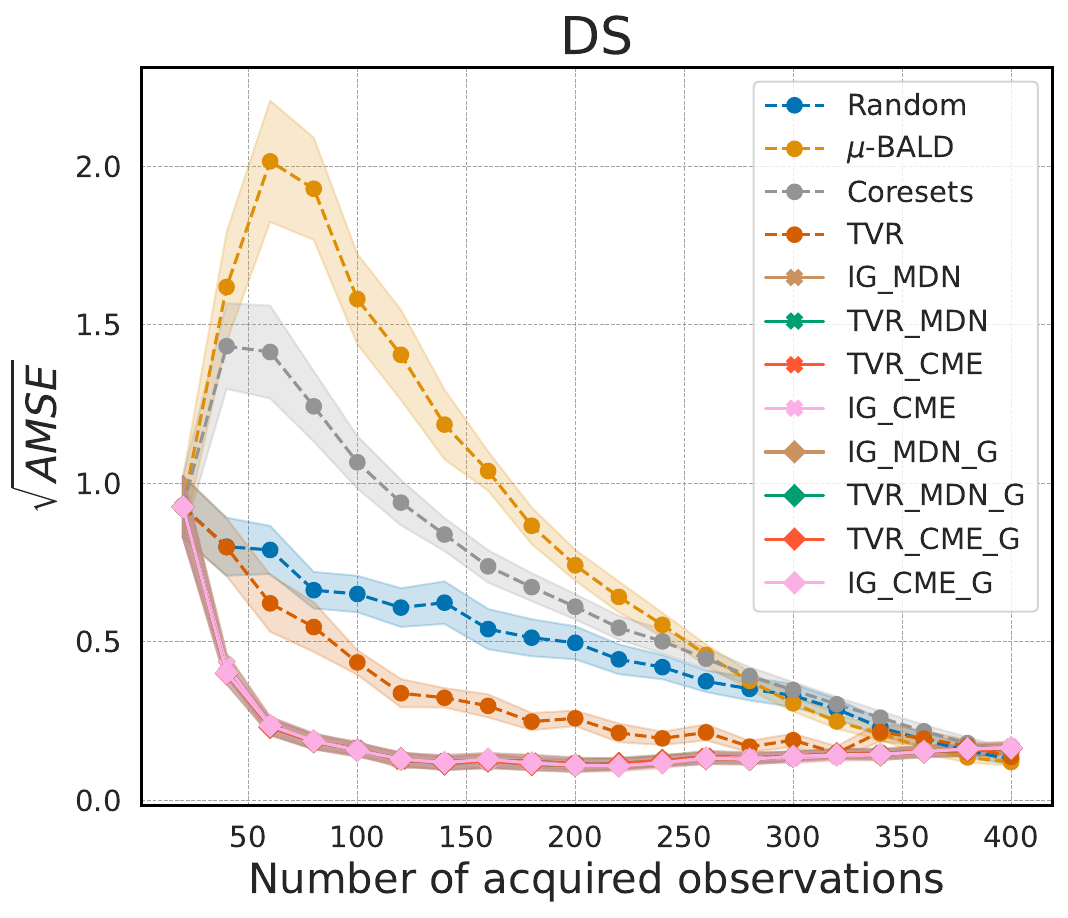}
    \end{minipage}
    \begin{minipage}{0.19\linewidth}
        \centering
        \includegraphics[width=\linewidth]{Figures/Main_paper/Experiments/Simulations/ds/regular/treatment-discrete/active_learning/All_value_out_convergence.pdf}
    \end{minipage}
    \begin{minipage}{0.19\linewidth}
        \centering
        \includegraphics[width=\linewidth]{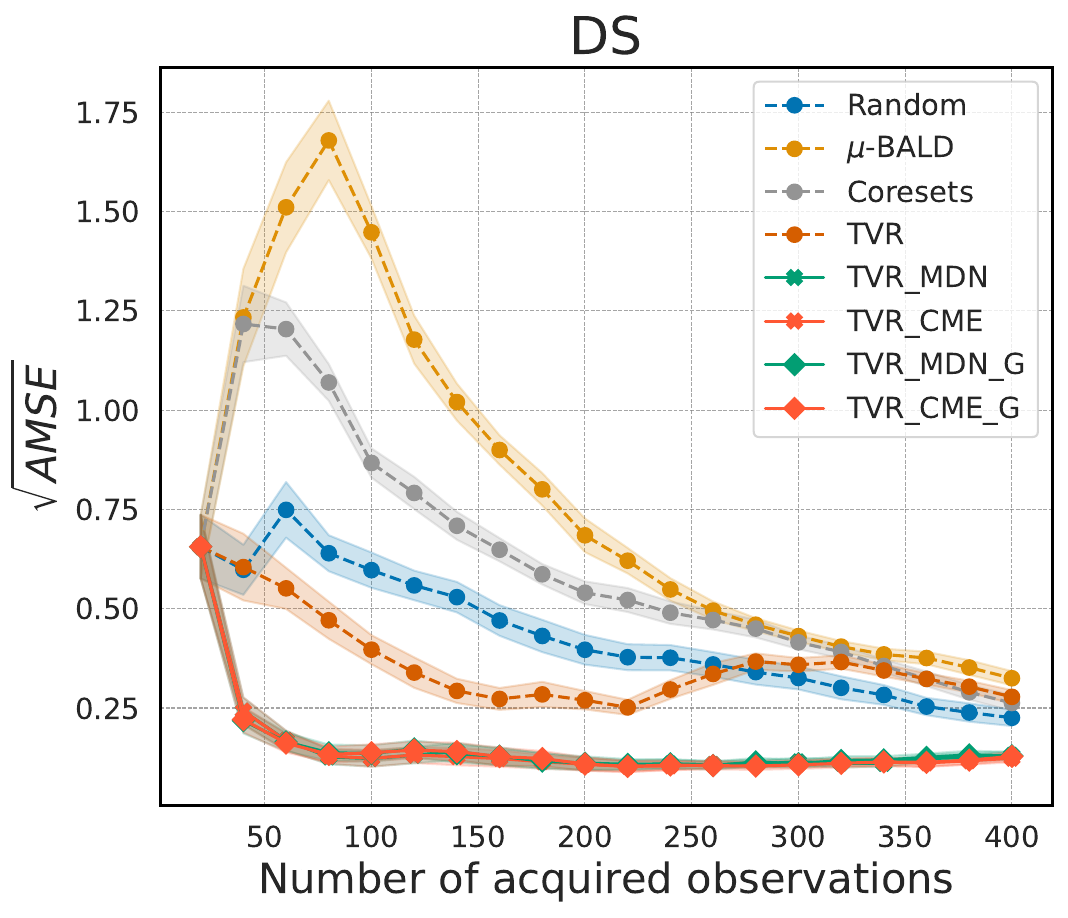}
    \end{minipage}
    
    \caption{The $\sqrt{\text{AMSE}}$ performance (with shaded standard error) for the DS case on simulation data is presented. The first row shows the in-distribution performance, while the second row illustrates the out-of-distribution performance. From left to right, the settings include: fixed and binary treatment, fixed and discrete treatment, all with binary treatment, all with discrete treatment, and all with continuous treatment.}
    \label{app_fig:simulation_DS}
\end{figure}

\paragraph{ATEDS.} For the ATEDS case (DS for short), the results are shown in Fig.~\ref{app_fig:simulation_DS}. We believe this case provides the most compelling evidence to support the benefits of our proposed methods. Across all setups, our methods consistently show significantly better results compared to the baseline methods. The performance improvement can be attributed not only to the changes in the treatment variable distribution $\sP_{\ra}$ but also to the distribution shifts in the covariates $\sP_{\rvs}$, which we have intentionally manipulated in this setting. Consequently, we observe a substantial performance improvement, which further highlights the importance of targeting the desired distribution when acquiring data points.

Also, note that for the ATE and DS cases, in the setup with fixed value treatment, the IG-based methods will perform identically to the TVR-based methods. This is because the target estimator collapses to the same formulation, where the target becomes a point estimate and the posterior of this estimator is a normal distribution. In this case, minimizing entropy and minimizing variance become equivalent.

In conclusion, from the above results, we can derive the following key takeaway:
\begin{center}
\begin{bluebox}{}
\faKey\quad\textbf{Takeaway Message:} The benefits of CQ-specific data acquisition strategies are more pronounced when there is a significant distribution shift between the target distribution and the pool dataset distribution..
\end{bluebox}
\label{take_home_message}
\end{center}

\subsubsection{IHDP results}
\label{app_subsubsec:results_IHDP}

\paragraph{CATE.} In this part, we present additional experimental results on the IHDP datasets, covering the CATE, ATE, ATT, and DS cases. Across all scenarios, we explore two primary treatment settings as shown before in the simulations. When studying the IHDP dataset, since all covariates in this dataset are fixed and the treatment assignment for the binary case is deterministic, it results in highly imbalanced data. However, for continuous treatment, we can design the specific form of the treatment assignment ourselves and must determine the outcome regression function.

\begin{figure}[h]
    \centering
    \begin{minipage}{0.19\linewidth}
        \centering
        \includegraphics[width=\linewidth]{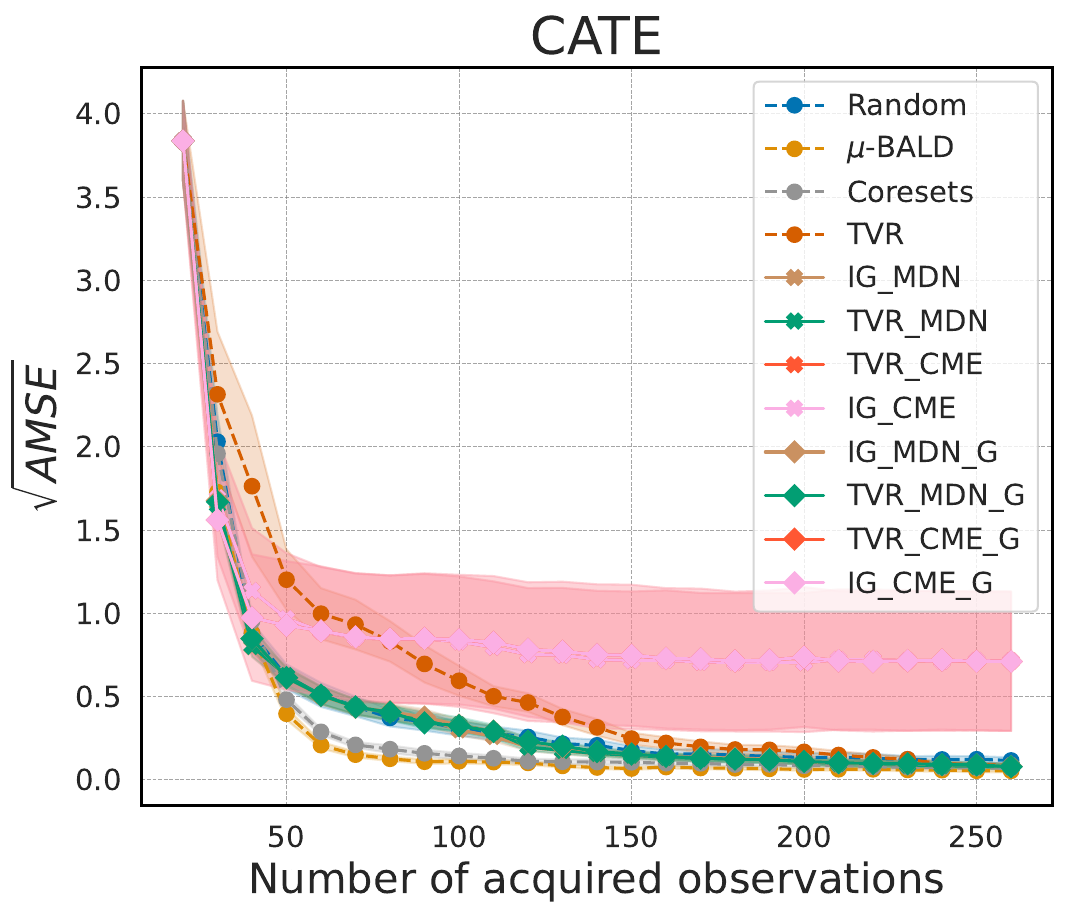}
    \end{minipage}
    \begin{minipage}{0.19\linewidth}
        \centering
        \includegraphics[width=\linewidth]{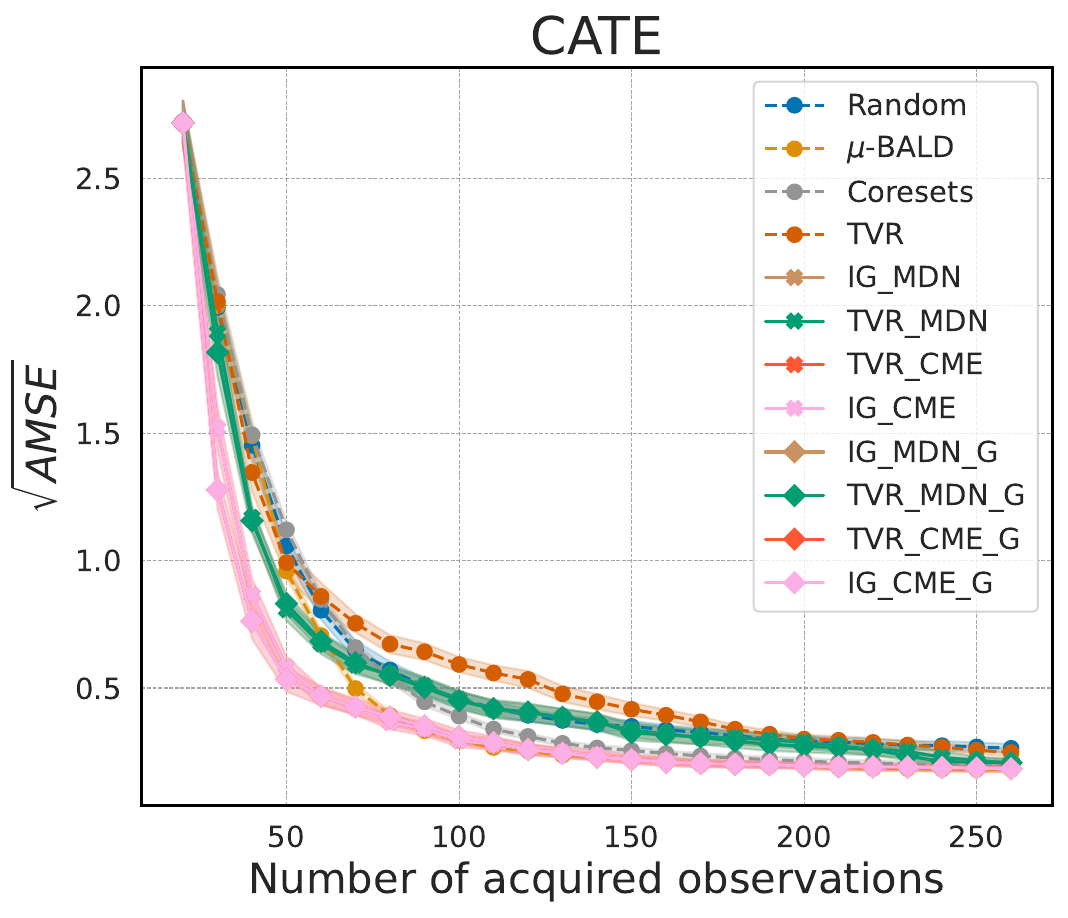}
    \end{minipage}
    \begin{minipage}{0.19\linewidth}
        \centering
        \includegraphics[width=\linewidth]{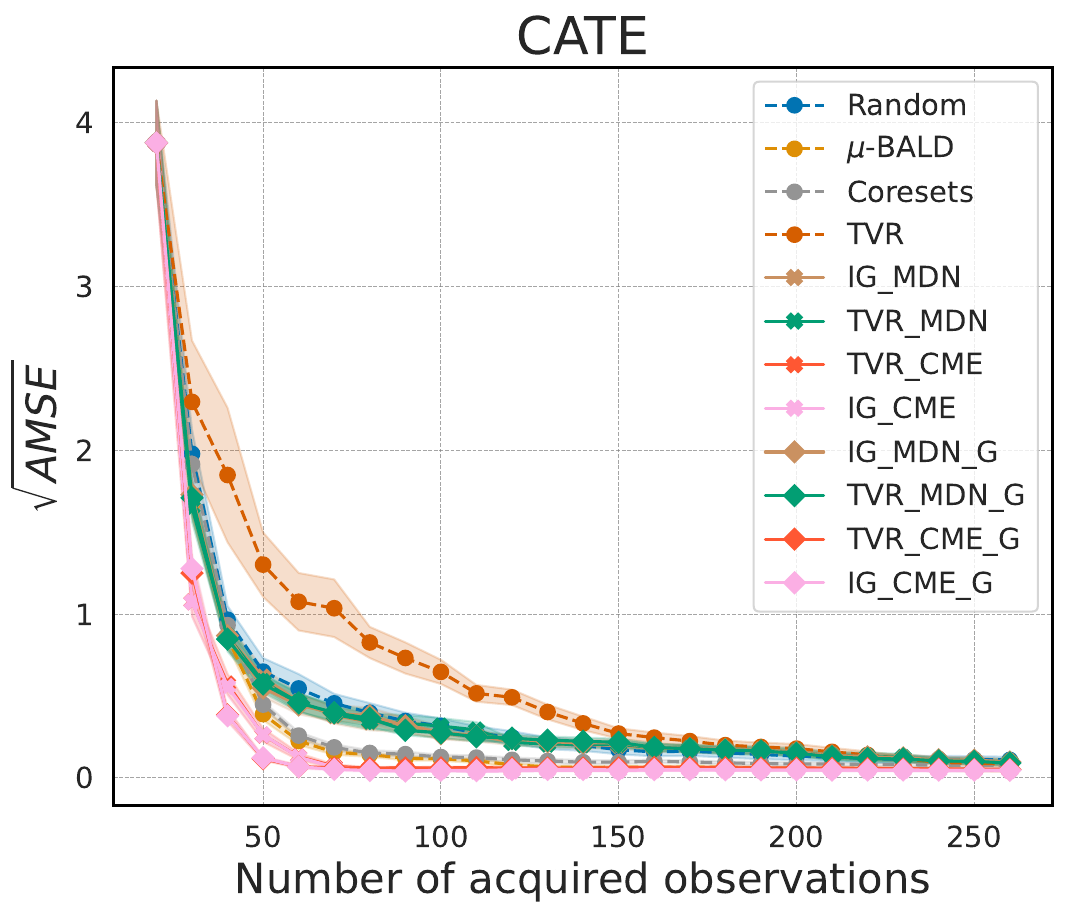}
    \end{minipage}
    \begin{minipage}{0.19\linewidth}
        \centering
        \includegraphics[width=\linewidth]{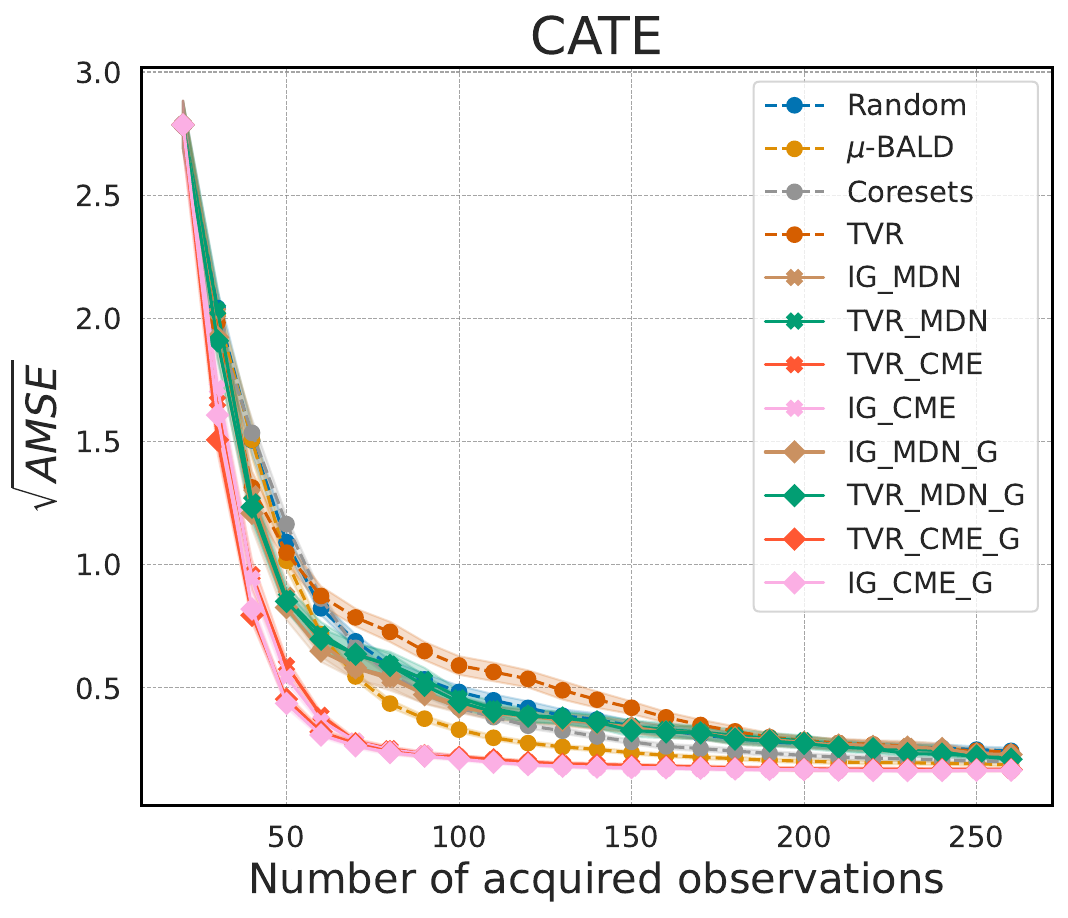}
    \end{minipage}
    \begin{minipage}{0.19\linewidth}
        \centering
        \includegraphics[width=\linewidth]{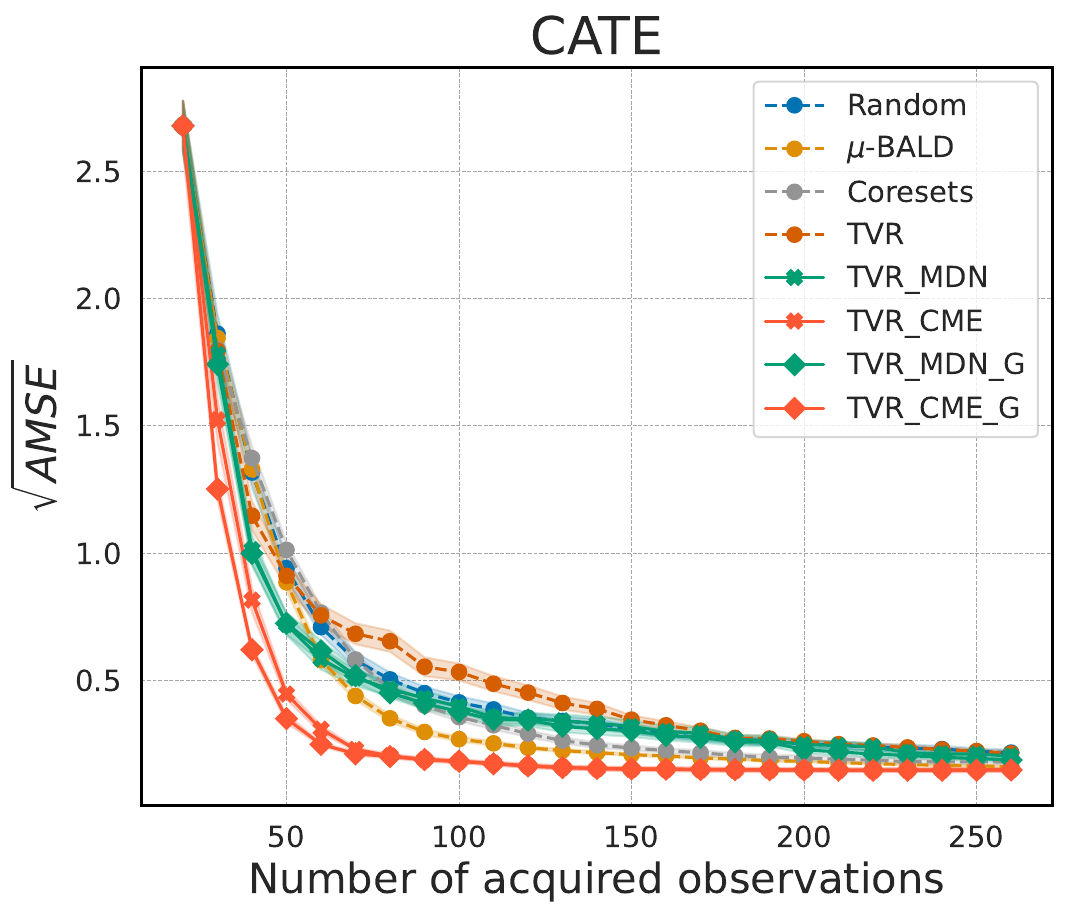}
    \end{minipage}

    \vspace{0.5em}

    \begin{minipage}{0.19\linewidth}
        \centering
        \includegraphics[width=\linewidth]{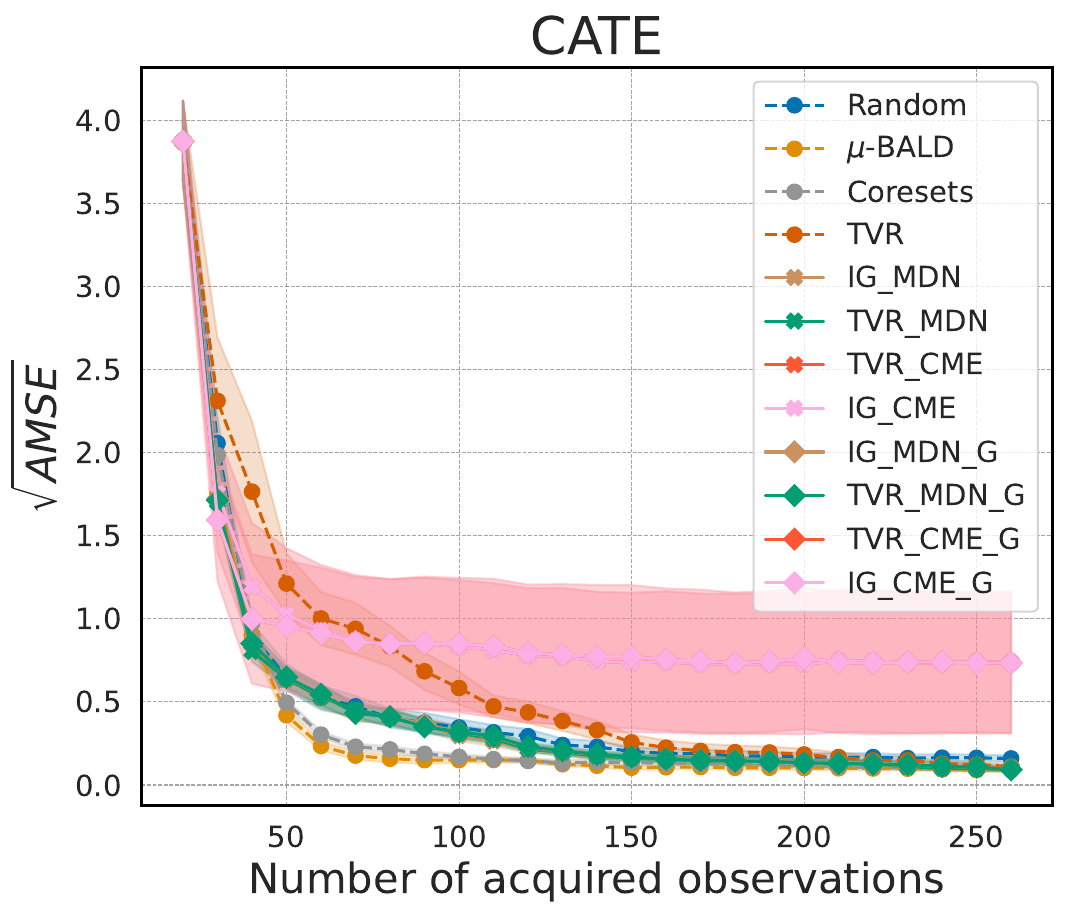}
    \end{minipage}
    \begin{minipage}{0.19\linewidth}
        \centering
        \includegraphics[width=\linewidth]{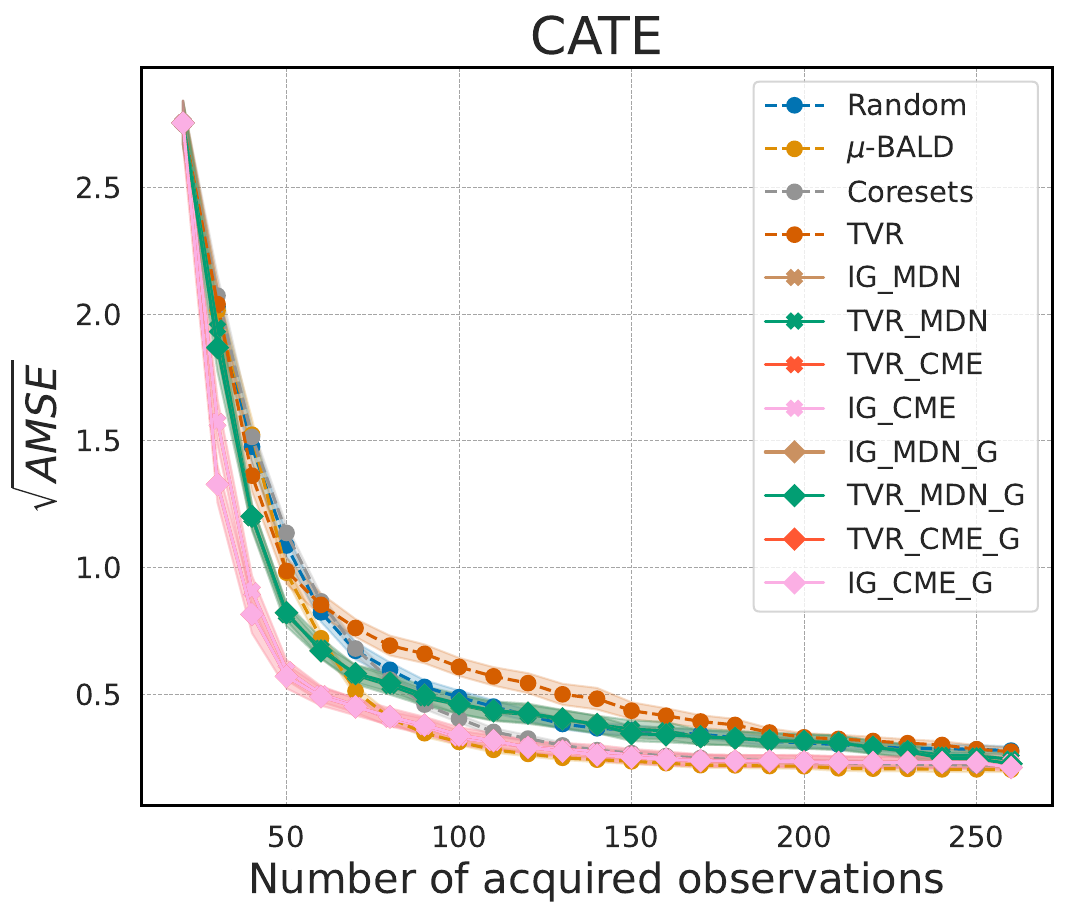}
    \end{minipage}
    \begin{minipage}{0.19\linewidth}
        \centering
        \includegraphics[width=\linewidth]{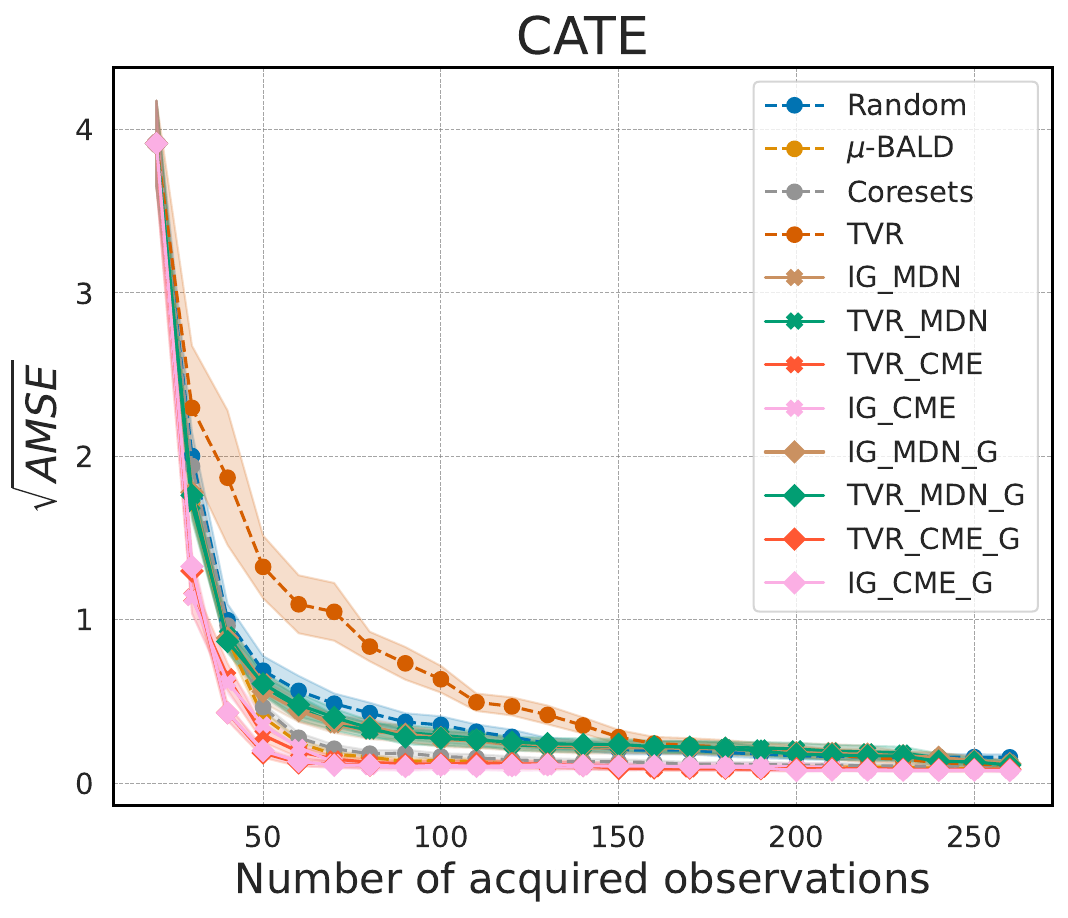}
    \end{minipage}
    \begin{minipage}{0.19\linewidth}
        \centering
        \includegraphics[width=\linewidth]{Figures/Main_paper/Experiments/Semisynthetic/ihdp/cate/regular/treatment-discrete/active_learning/All_value_out_convergence.pdf}
    \end{minipage}
    \begin{minipage}{0.19\linewidth}
        \centering
        \includegraphics[width=\linewidth]{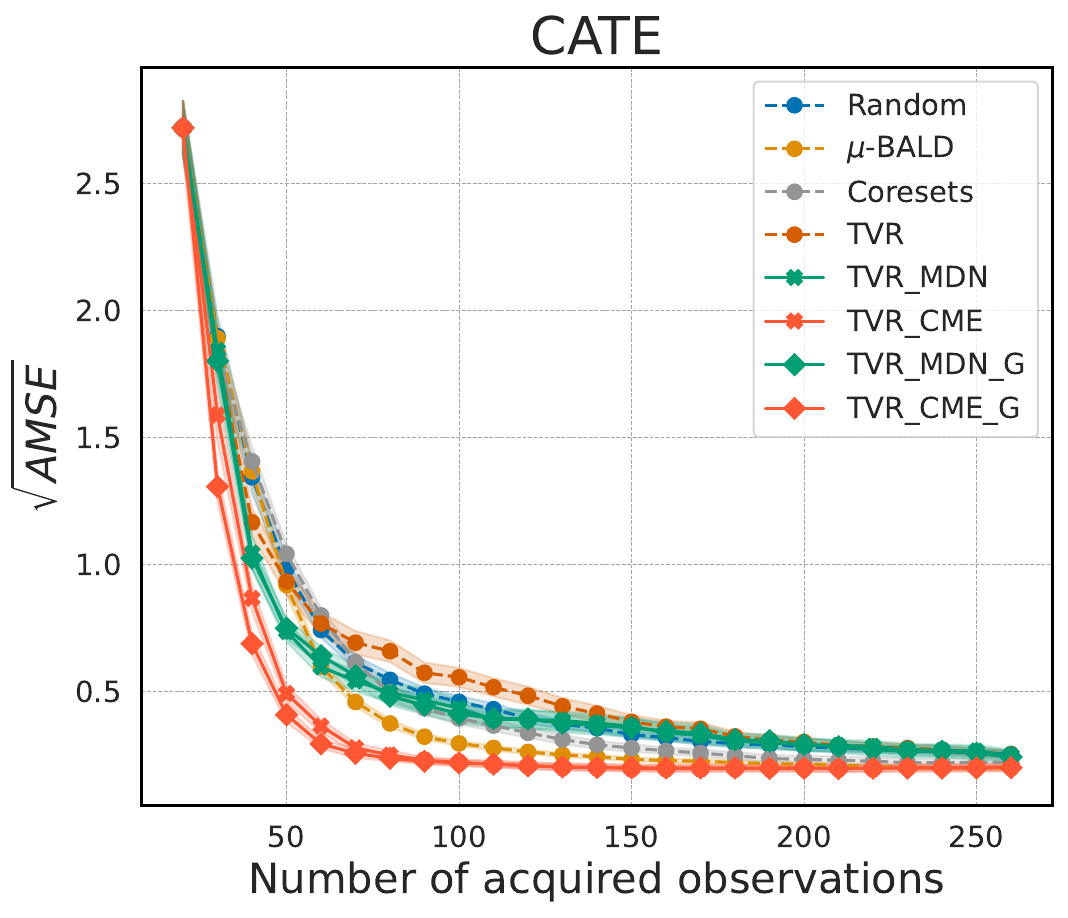}
    \end{minipage}
    
    \caption{The $\sqrt{\text{AMSE}}$ performance (with shaded standard error) for the CATE case on IHDP data is presented. The first row shows the in-distribution performance, while the second row illustrates the out-of-distribution performance. From left to right, the settings include: fixed and binary treatment, fixed and discrete treatment, all with binary treatment, all with discrete treatment, and all with continuous treatment.}
    \label{app_fig:IHDP_CATE}
\end{figure}

In the CATE case, the results are shown in Fig.~\ref{app_fig:IHDP_CATE}. We observe that, in most cases, our CME-based method consistently outperforms the baseline methods. However, for the fixed-value treatment case, our method performs poorly due to extreme data imbalance and the use of a binary kernel, which inherently prevents the exploration of shared correlations between the two subgroups.

\begin{figure}[h]
    \centering
    \begin{minipage}{0.19\linewidth}
        \centering
        \includegraphics[width=\linewidth]{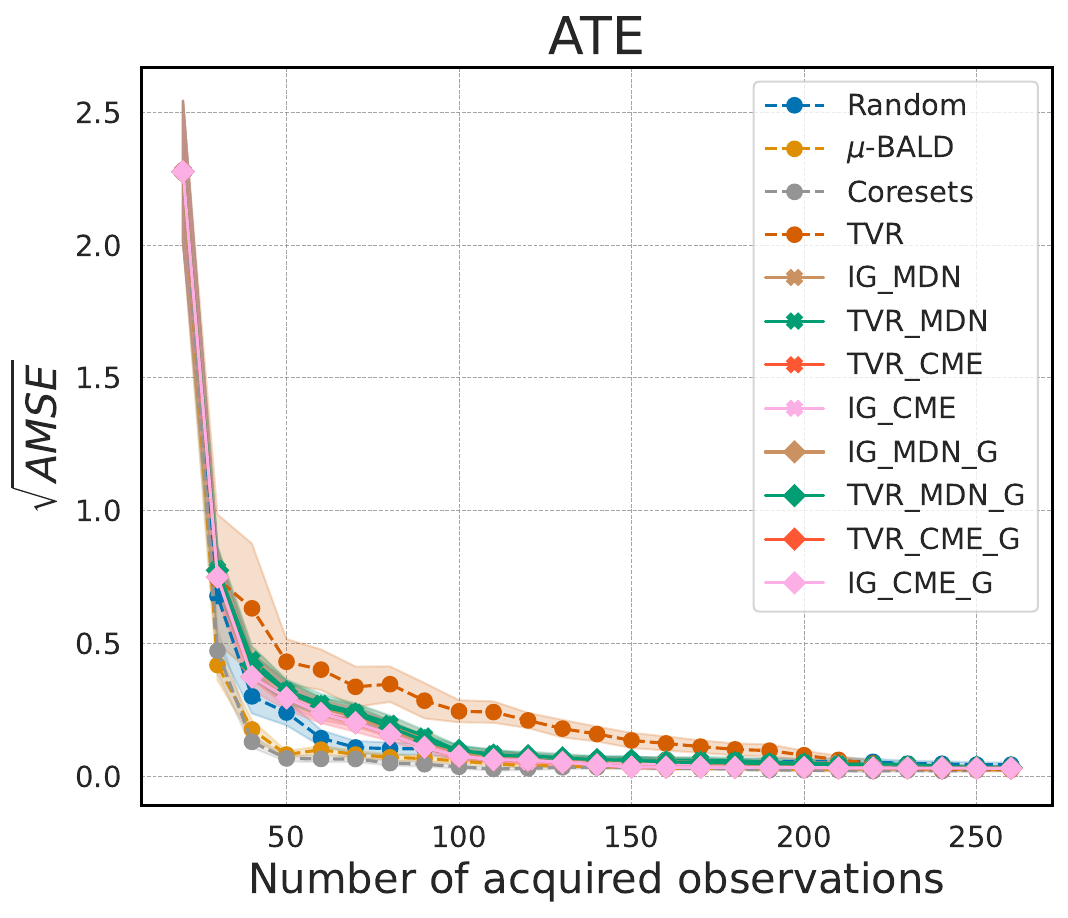}
    \end{minipage}
    \begin{minipage}{0.19\linewidth}
        \centering
        \includegraphics[width=\linewidth]{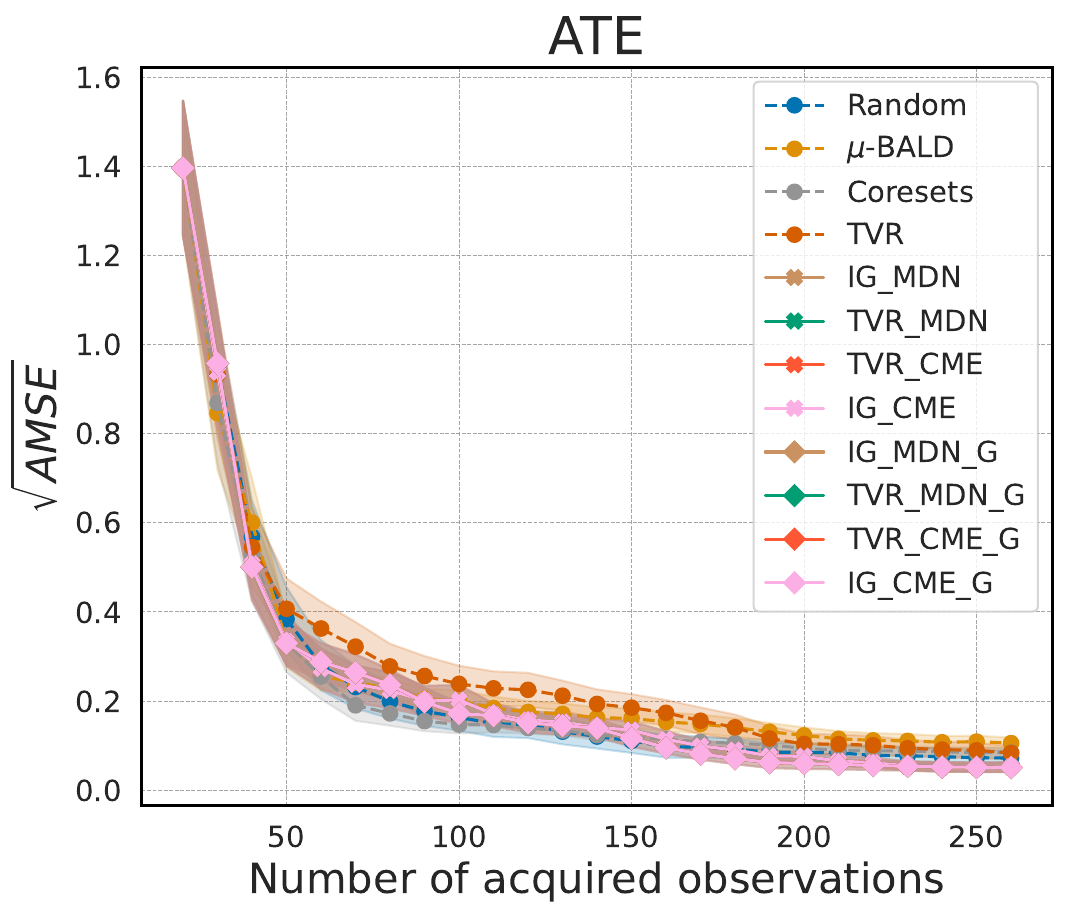}
    \end{minipage}
    \begin{minipage}{0.19\linewidth}
        \centering
        \includegraphics[width=\linewidth]{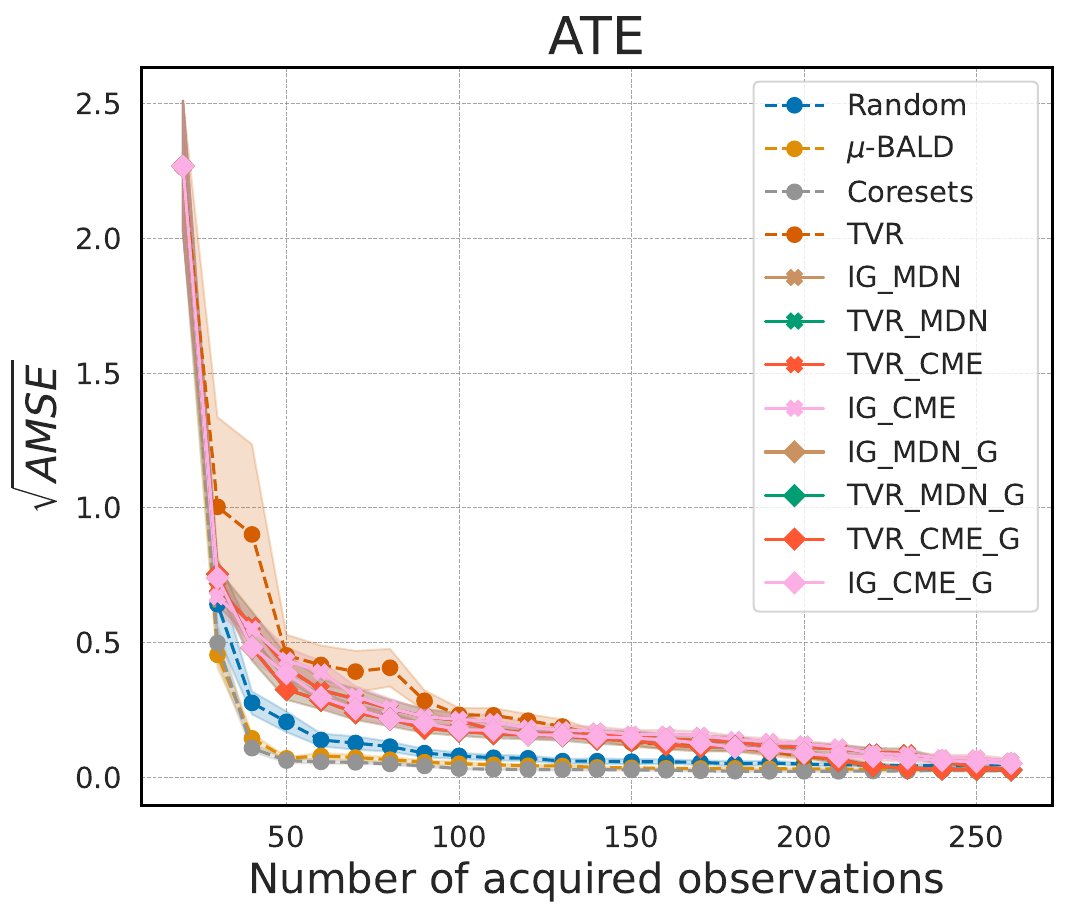}
    \end{minipage}
    \begin{minipage}{0.19\linewidth}
        \centering
        \includegraphics[width=\linewidth]{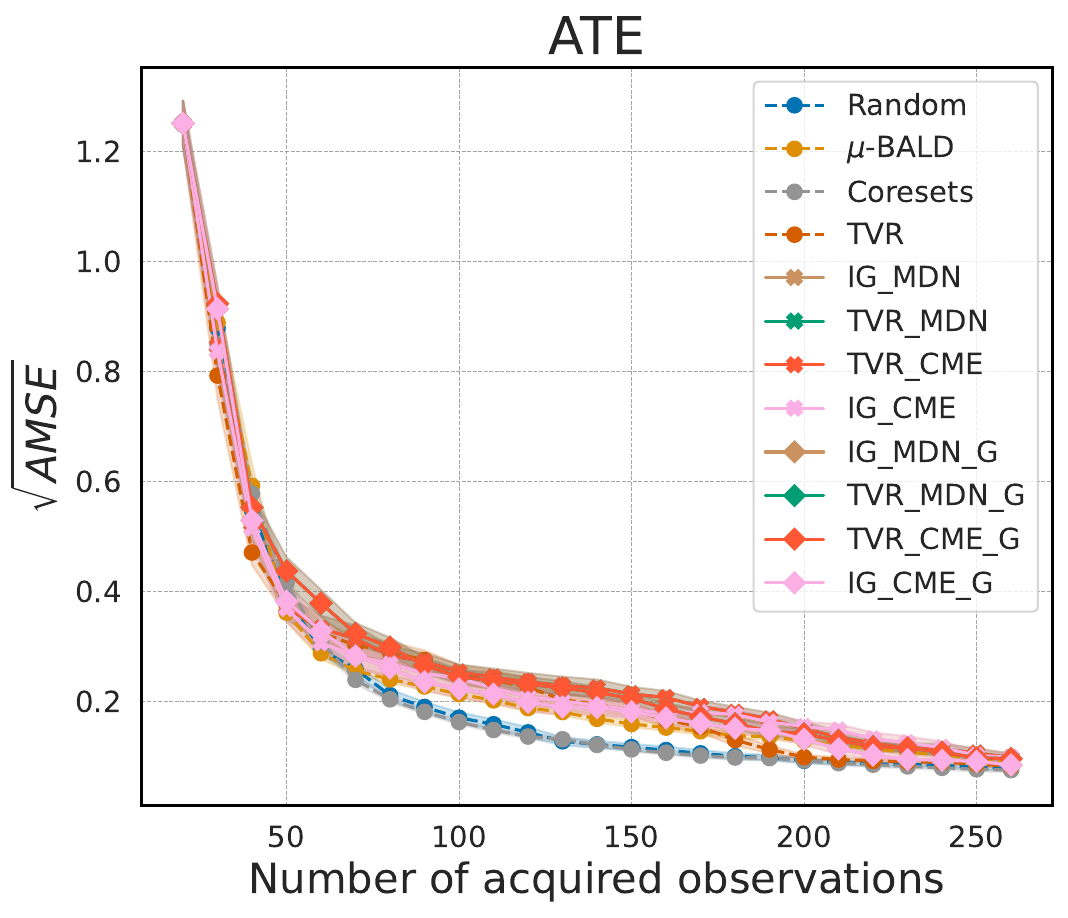}
    \end{minipage}
    \begin{minipage}{0.19\linewidth}
        \centering
        \includegraphics[width=\linewidth]{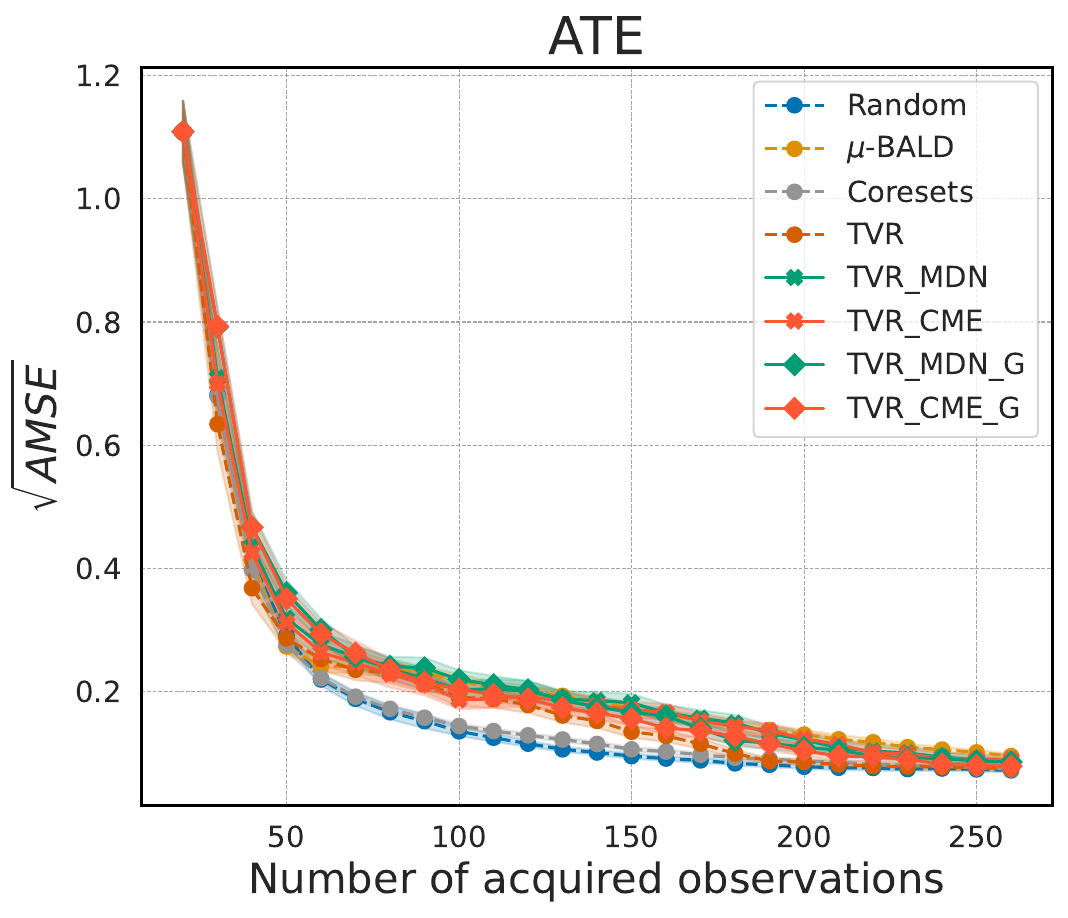}
    \end{minipage}

    \vspace{0.5em}

    \begin{minipage}{0.19\linewidth}
        \centering
        \includegraphics[width=\linewidth]{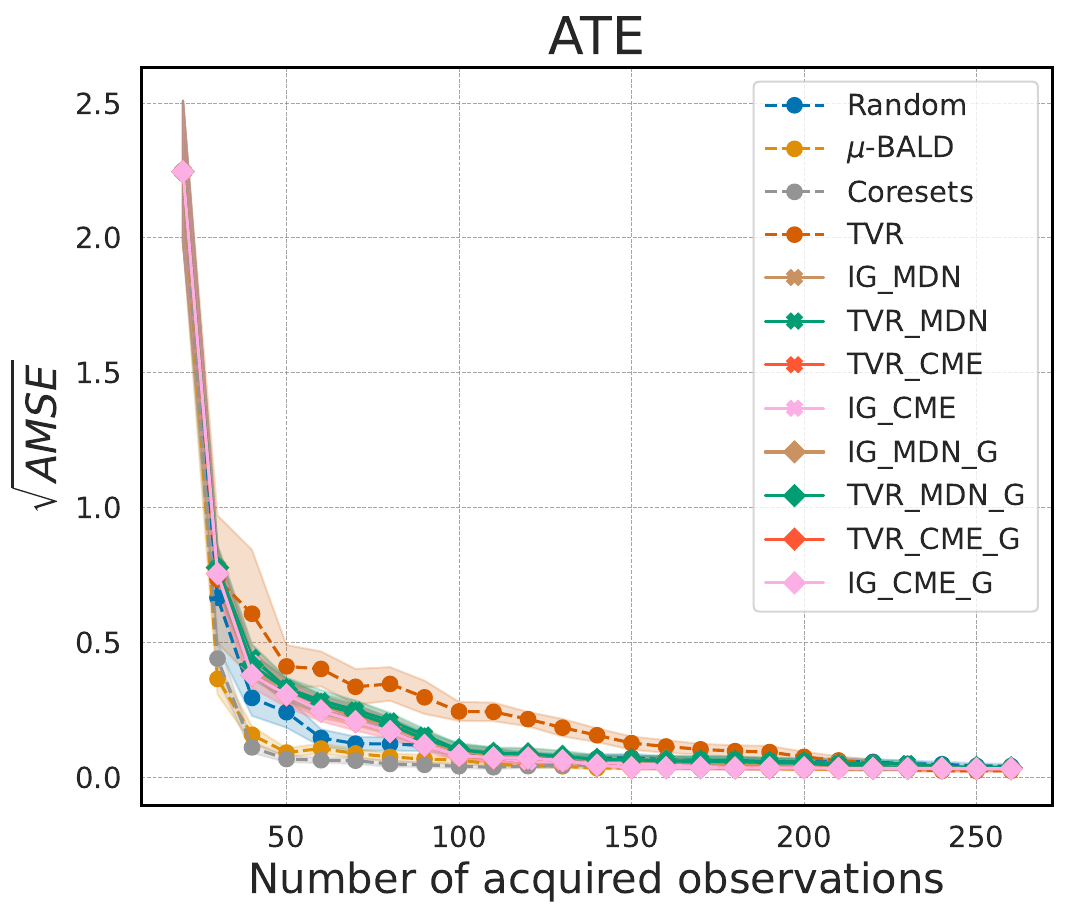}
    \end{minipage}
    \begin{minipage}{0.19\linewidth}
        \centering
        \includegraphics[width=\linewidth]{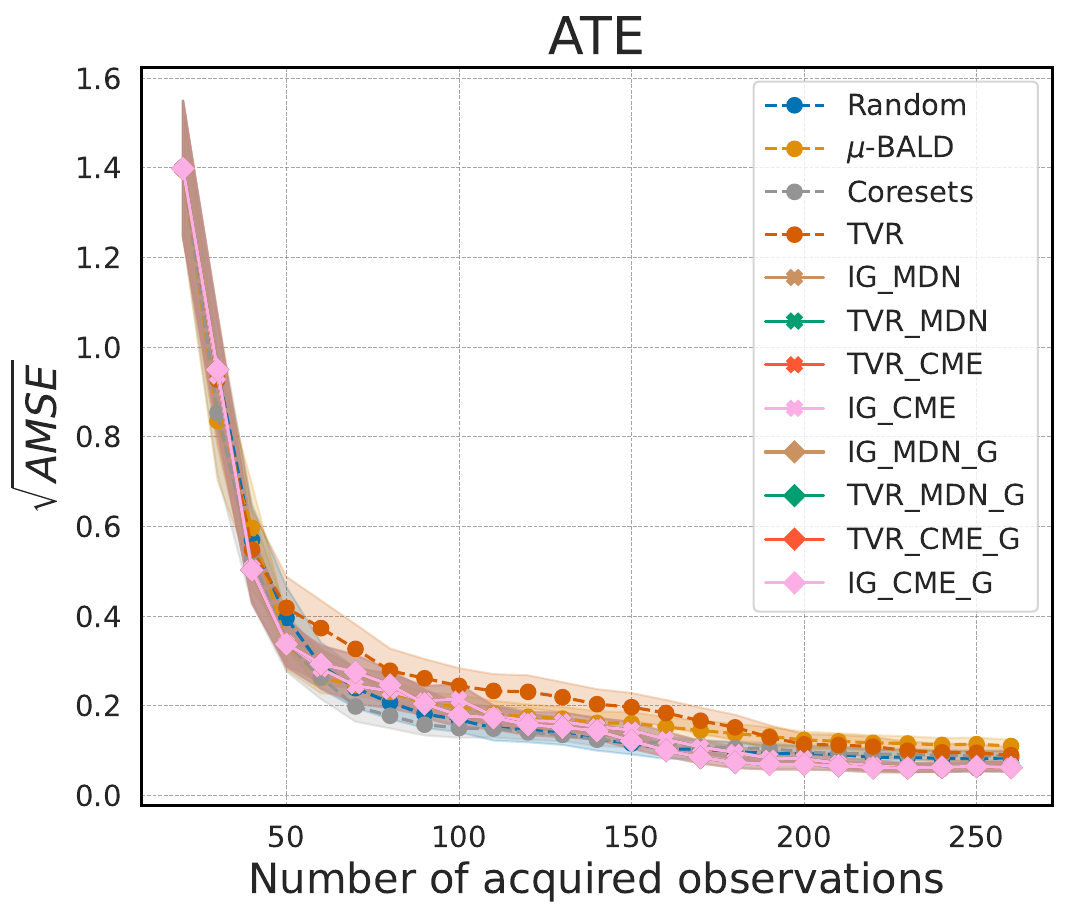}
    \end{minipage}
    \begin{minipage}{0.19\linewidth}
        \centering
        \includegraphics[width=\linewidth]{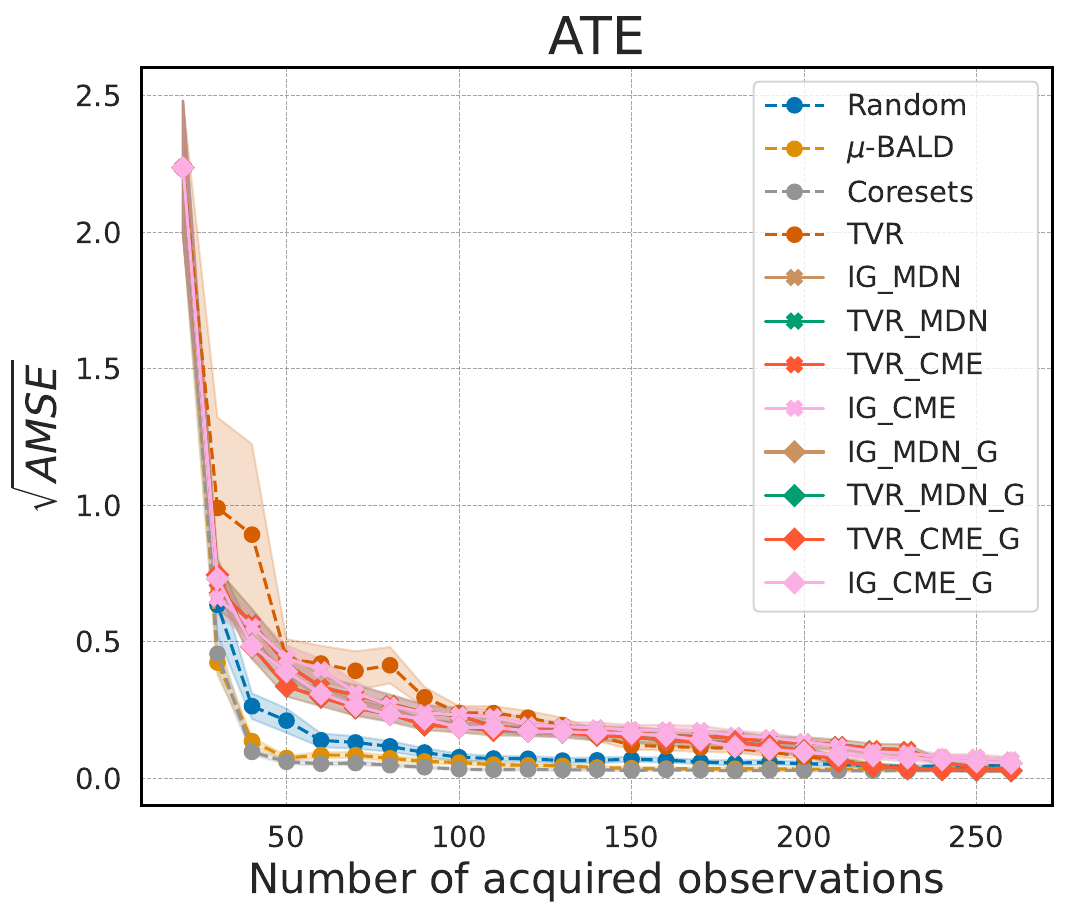}
    \end{minipage}
    \begin{minipage}{0.19\linewidth}
        \centering
        \includegraphics[width=\linewidth]{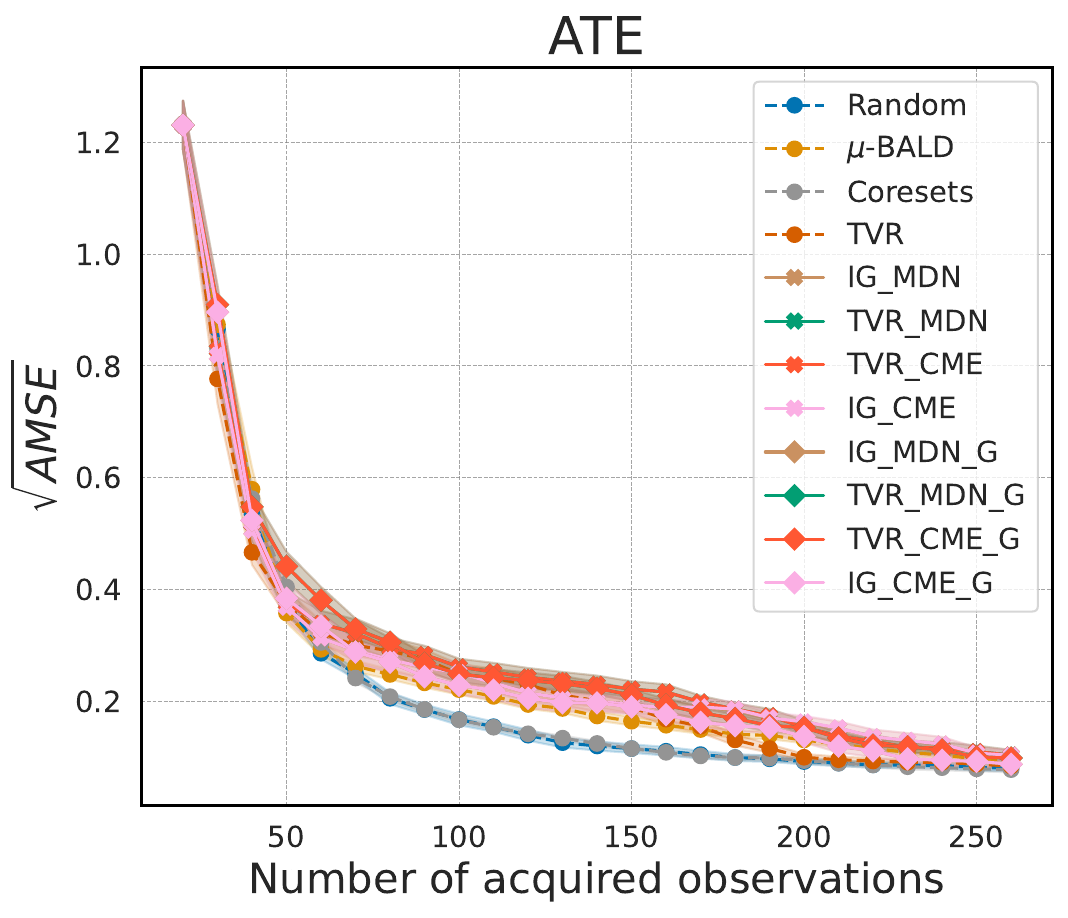}
    \end{minipage}
    \begin{minipage}{0.19\linewidth}
        \centering
        \includegraphics[width=\linewidth]{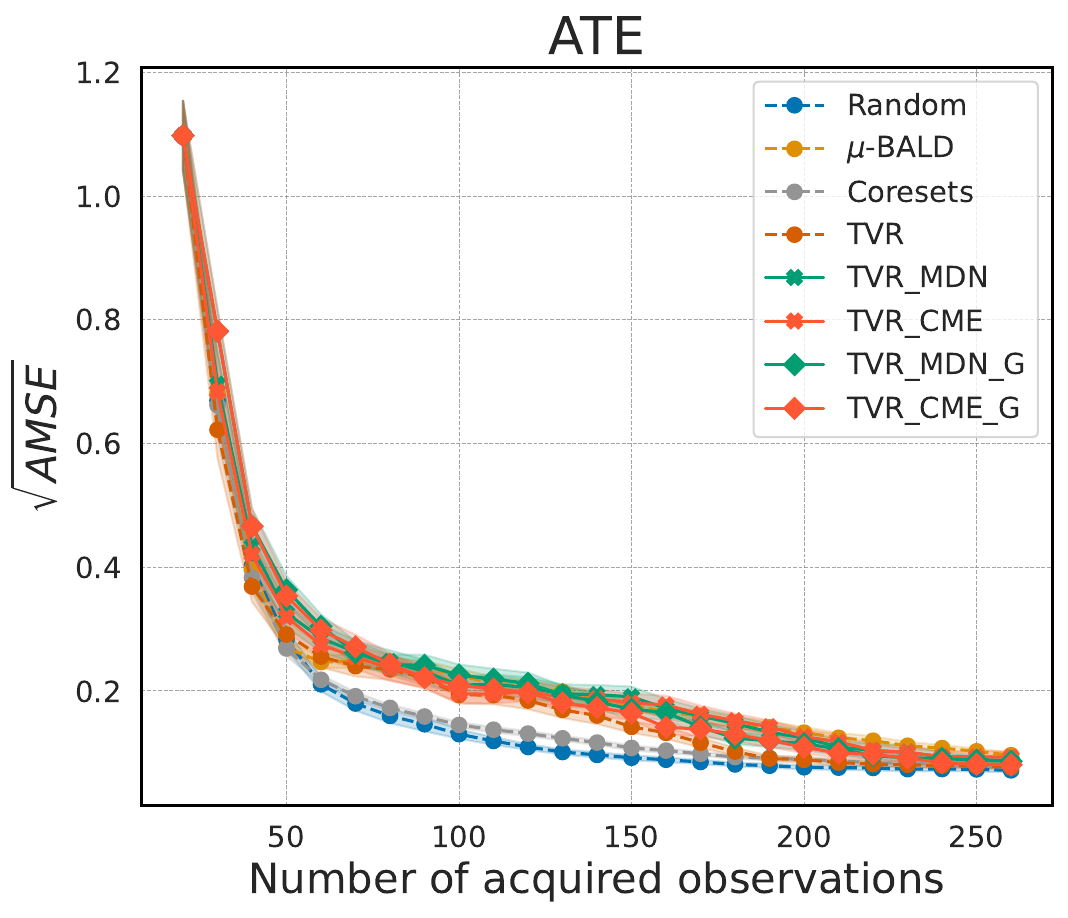}
    \end{minipage}
    
    \caption{The $\sqrt{\text{AMSE}}$ performance (with shaded standard error) for the ATE case on IHDP data is presented. The first row shows the in-distribution performance, while the second row illustrates the out-of-distribution performance. From left to right, the settings include: fixed and binary treatment, fixed and discrete treatment, all with binary treatment, all with discrete treatment, and all with continuous treatment.}
    \label{app_fig:IHDP_ATE}
\end{figure}

\begin{figure}[h]
    \centering
    \begin{minipage}{0.24\linewidth}
        \centering
        \includegraphics[width=\linewidth]{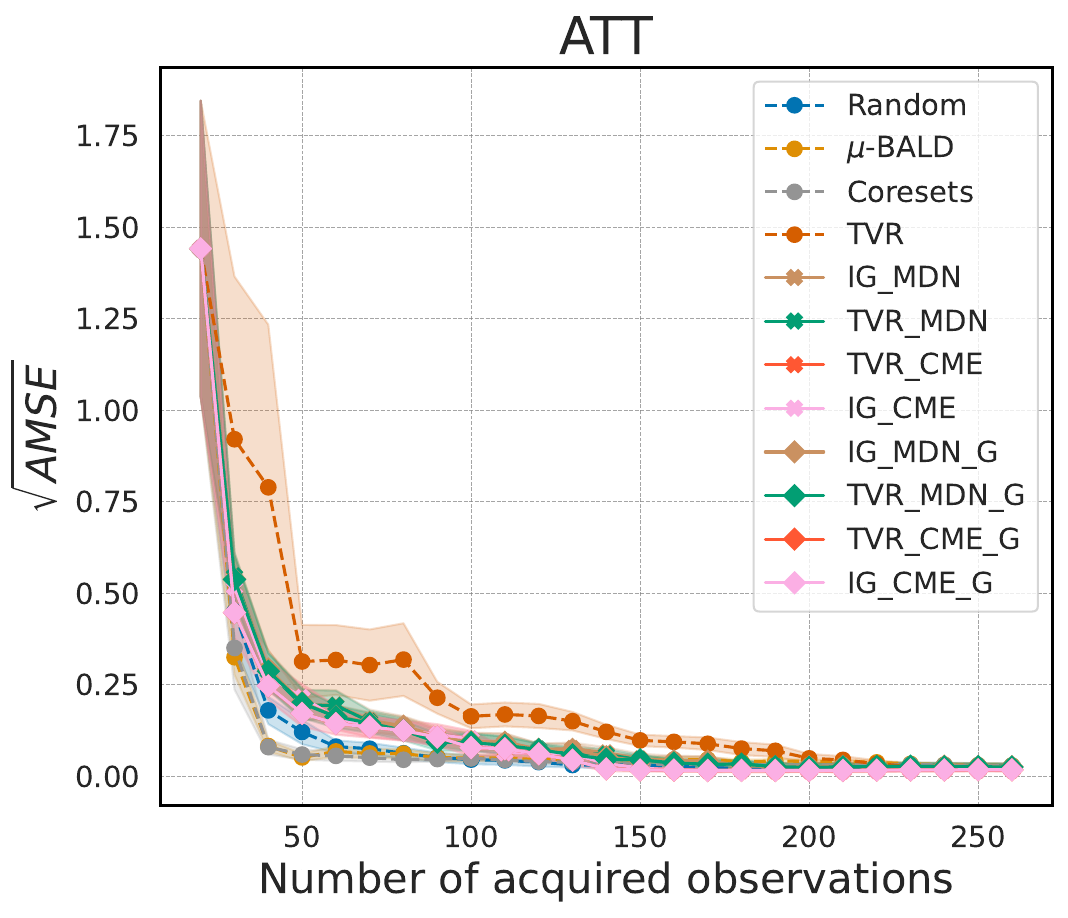}
    \end{minipage}
    \begin{minipage}{0.24\linewidth}
        \centering
        \includegraphics[width=\linewidth]{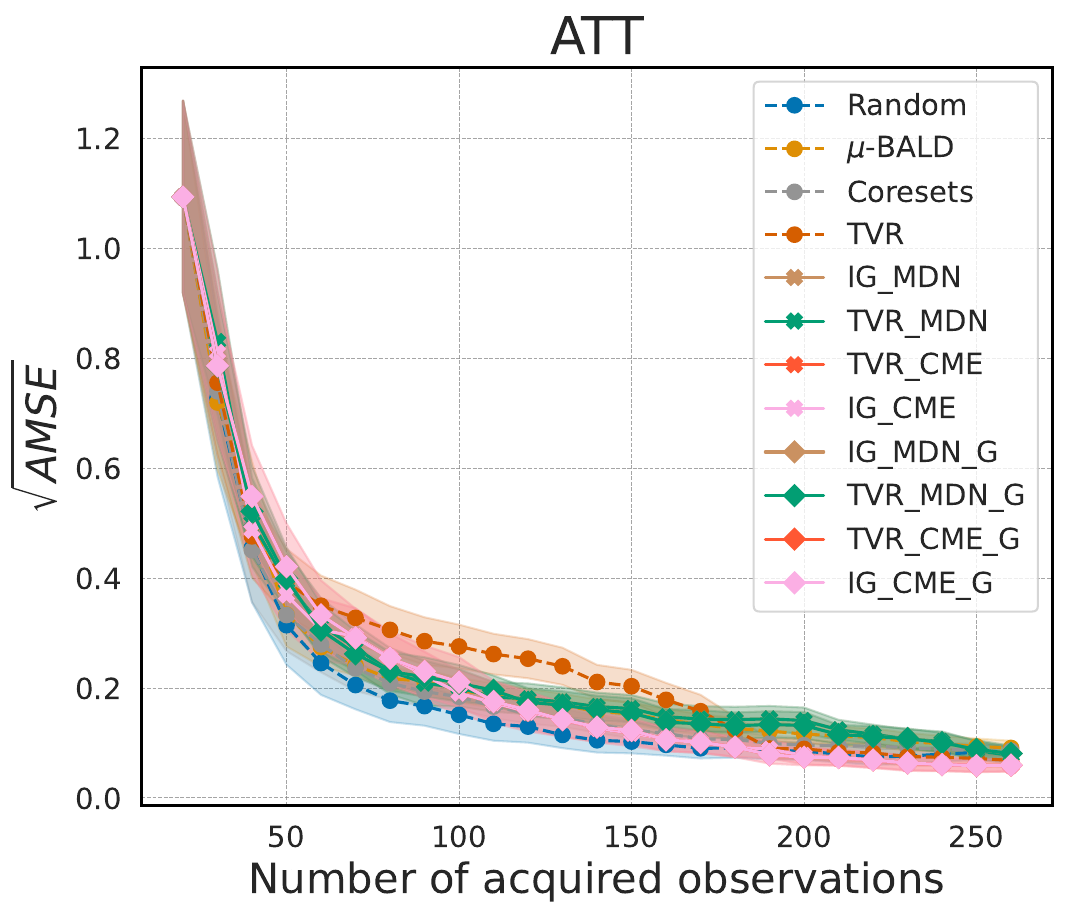}
    \end{minipage}
    \begin{minipage}{0.24\linewidth}
        \centering
        \includegraphics[width=\linewidth]{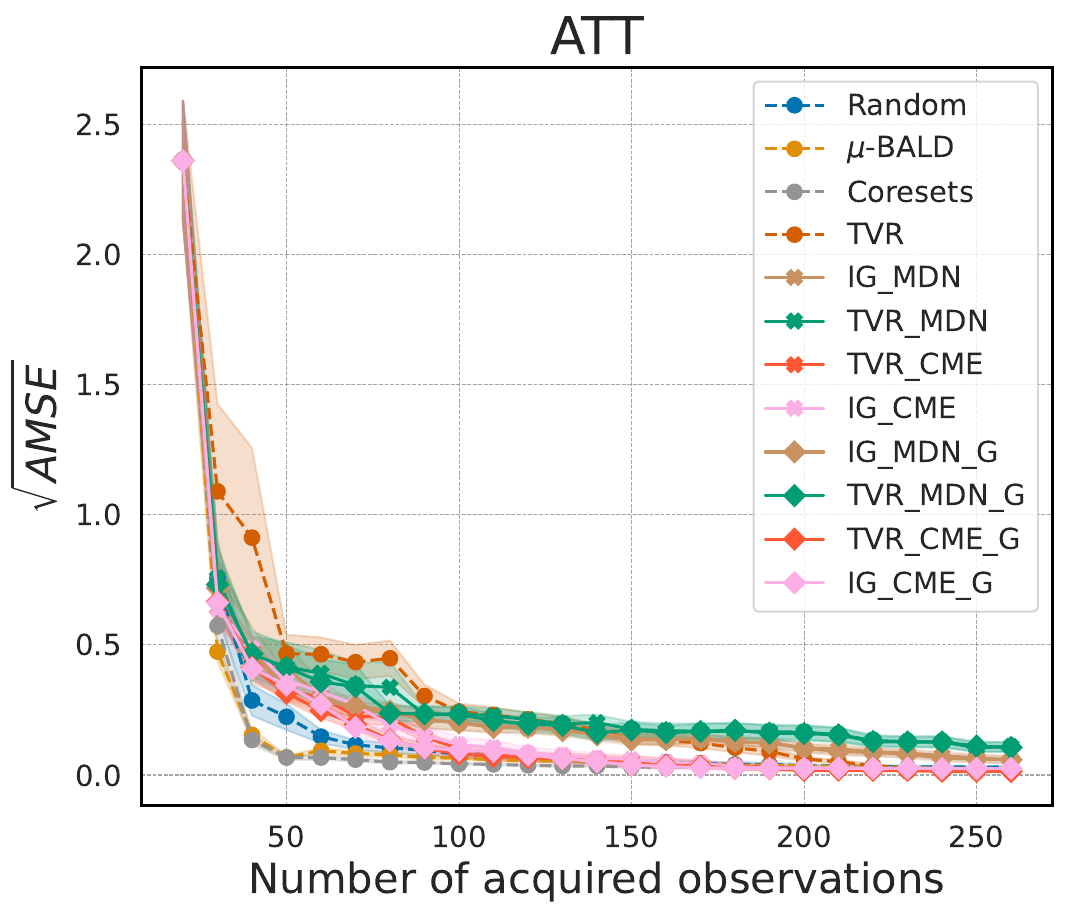}
    \end{minipage}
    \begin{minipage}{0.24\linewidth}
        \centering
        \includegraphics[width=\linewidth]{Figures/Main_paper/Experiments/Semisynthetic/ihdp/att/interest_shift/treatment-discrete/active_learning/All_value_in_convergence.pdf}
    \end{minipage}

    \vspace{0.5em}

    \begin{minipage}{0.24\linewidth}
        \centering
        \includegraphics[width=\linewidth]{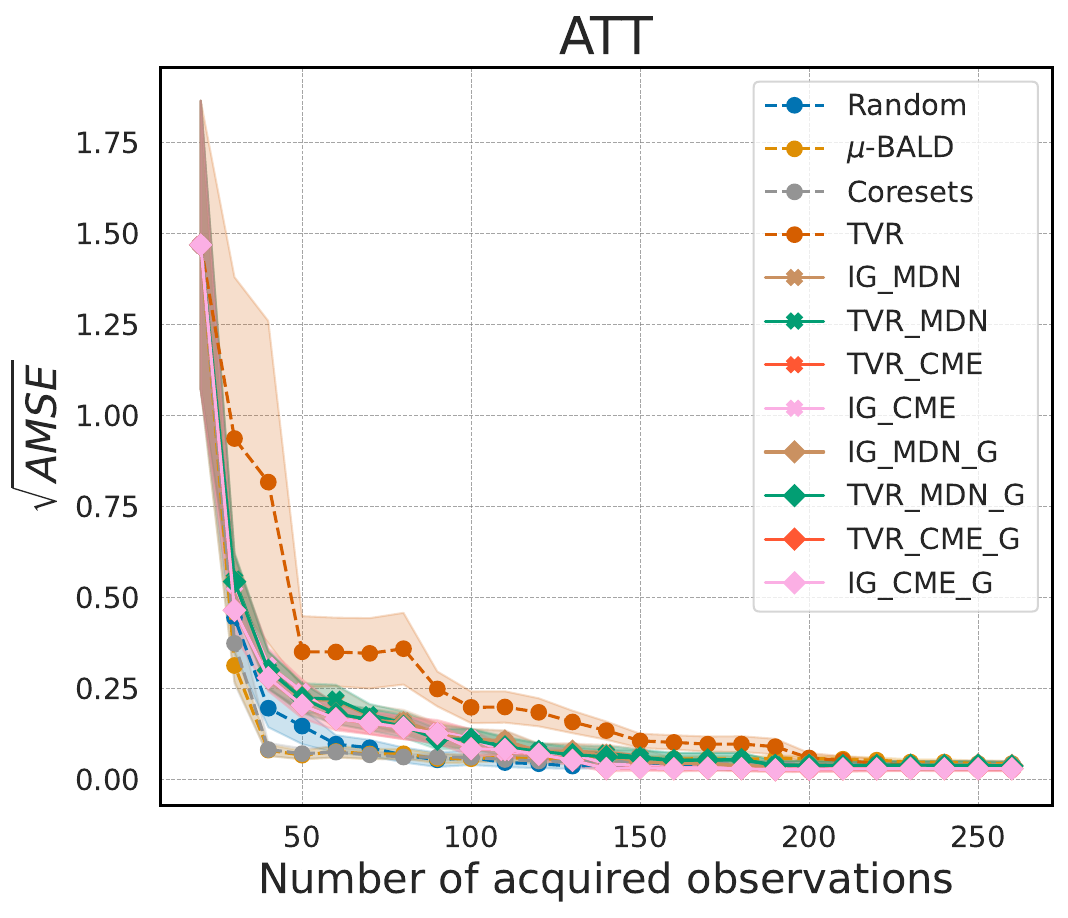}
    \end{minipage}
    \begin{minipage}{0.24\linewidth}
        \centering
        \includegraphics[width=\linewidth]{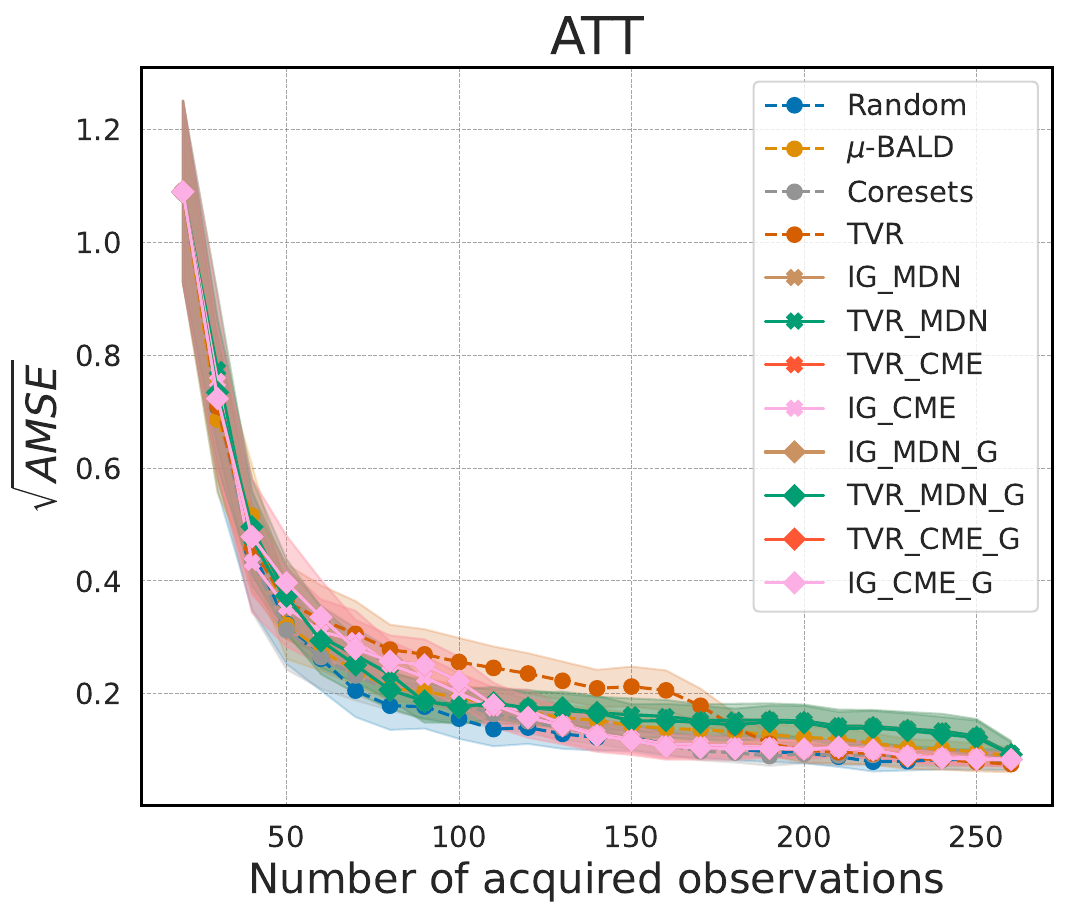}
    \end{minipage}
    \begin{minipage}{0.24\linewidth}
        \centering
        \includegraphics[width=\linewidth]{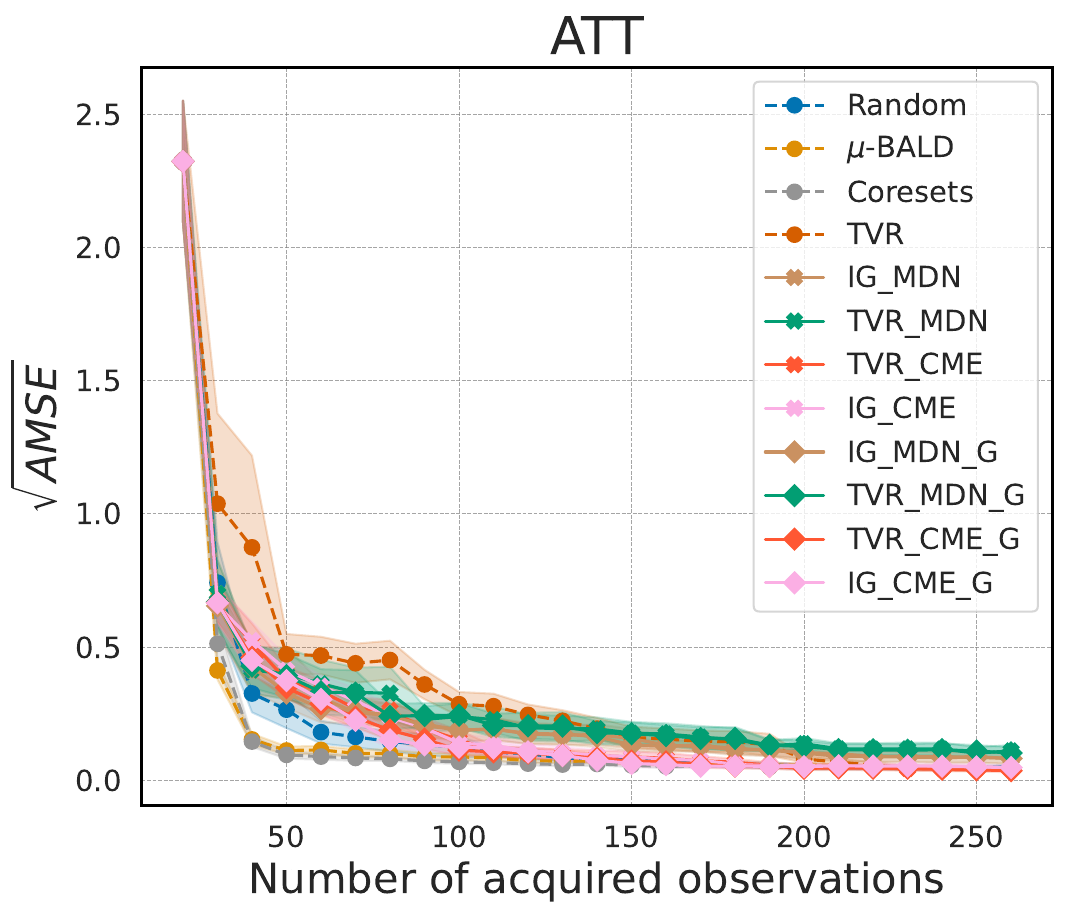}
    \end{minipage}
    \begin{minipage}{0.24\linewidth}
        \centering
        \includegraphics[width=\linewidth]{Figures/Main_paper/Experiments/Semisynthetic/ihdp/att/interest_shift/treatment-discrete/active_learning/All_value_out_convergence.pdf}
    \end{minipage}
    
    \caption{The $\sqrt{\text{AMSE}}$ performance (with shaded standard error) for the ATT case on IHDP data is presented. The first row shows the in-distribution performance, while the second row illustrates the out-of-distribution performance. From left to right, the settings include: fixed and binary treatment, fixed and discrete treatment, all with binary treatment, all with discrete treatment.}
    \label{app_fig:IHDP_ATT}
\end{figure}

\paragraph{ATE and ATT.} In the ATE and ATT cases, the results are presented in Fig.~\ref{app_fig:IHDP_ATE} and Fig.~\ref{app_fig:IHDP_ATT}. An interesting observation is that, across all scenarios, none of the methods consistently outperform random acquisition. This outcome can be attributed to the minimal distribution shift in these cases. Additionally, the combination of high-dimensional features and a limited number of observations hinders effective feature learning, which adversely impacts the precise representation of uncertainty in data points and, consequently, the overall performance.

\begin{figure}[h]
    \centering
    \begin{minipage}{0.19\linewidth}
        \centering
        \includegraphics[width=\linewidth]{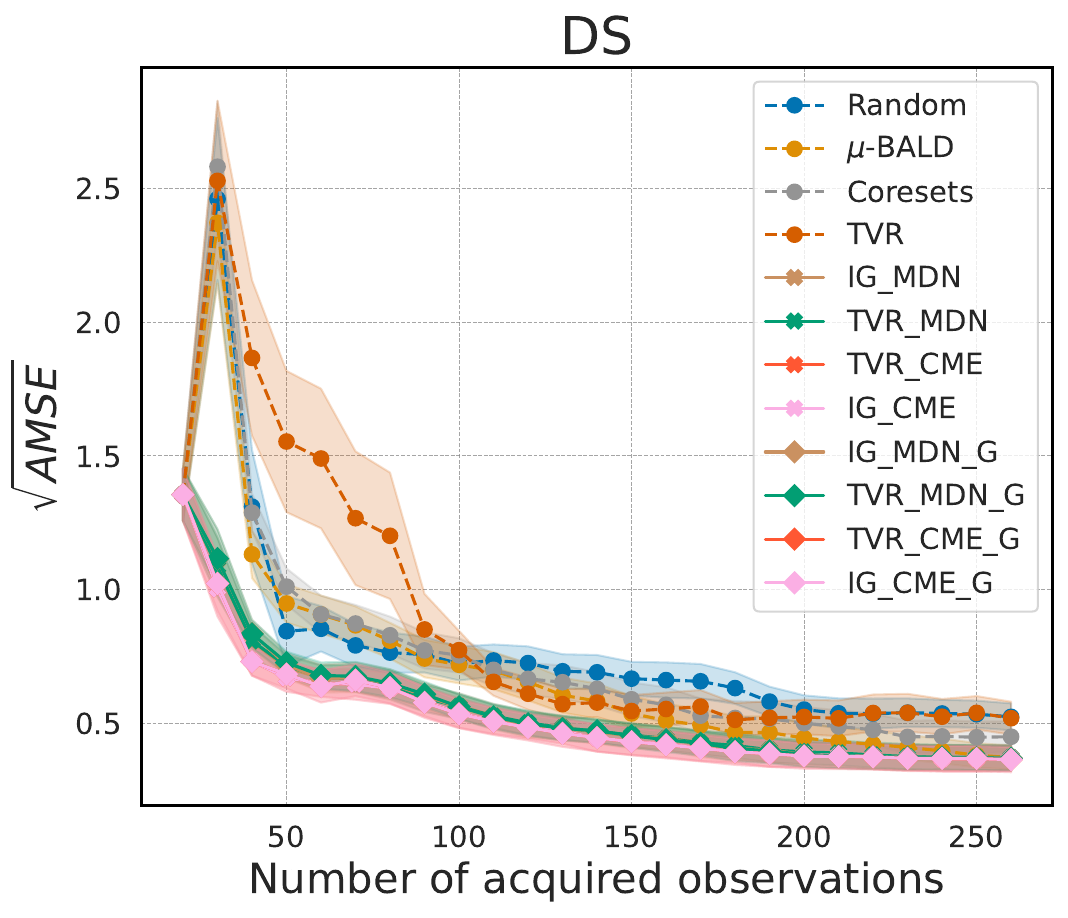}
    \end{minipage}
    \begin{minipage}{0.19\linewidth}
        \centering
        \includegraphics[width=\linewidth]{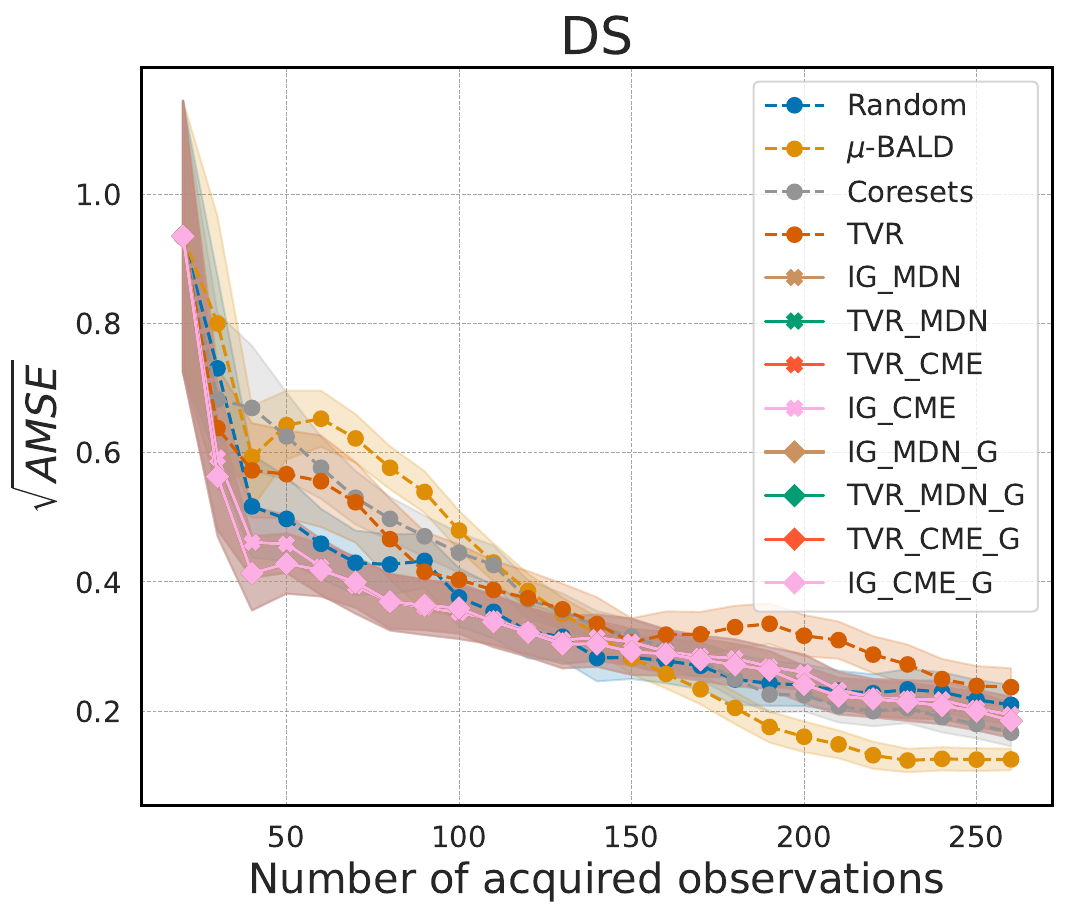}
    \end{minipage}
    \begin{minipage}{0.19\linewidth}
        \centering
        \includegraphics[width=\linewidth]{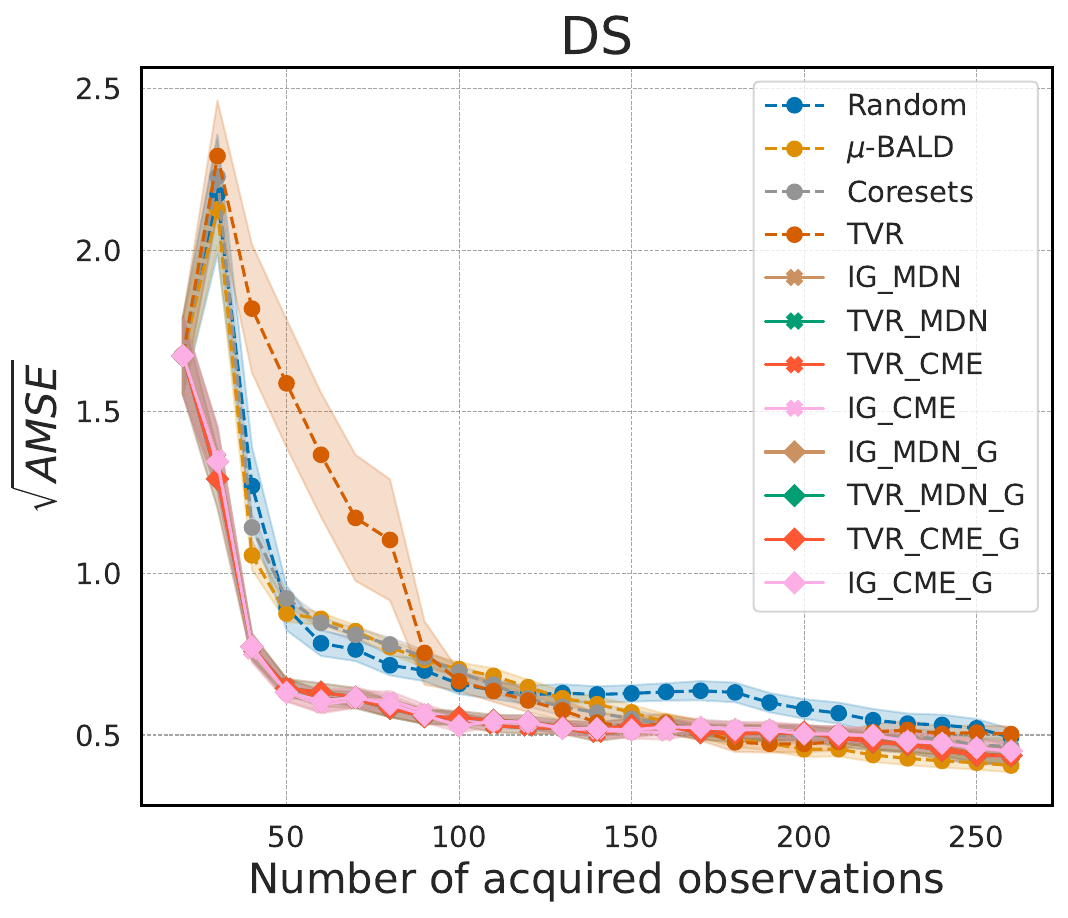}
    \end{minipage}
    \begin{minipage}{0.19\linewidth}
        \centering
        \includegraphics[width=\linewidth]{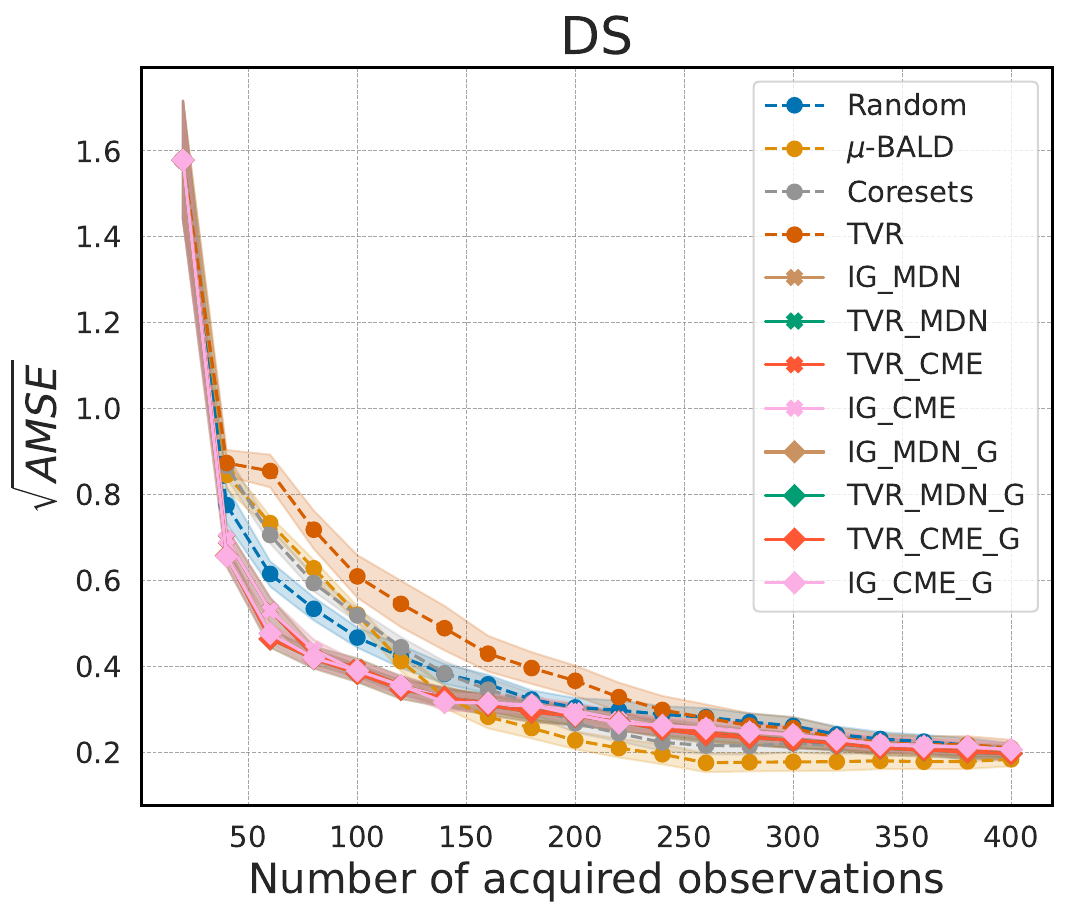}
    \end{minipage}
    \begin{minipage}{0.19\linewidth}
        \centering
        \includegraphics[width=\linewidth]{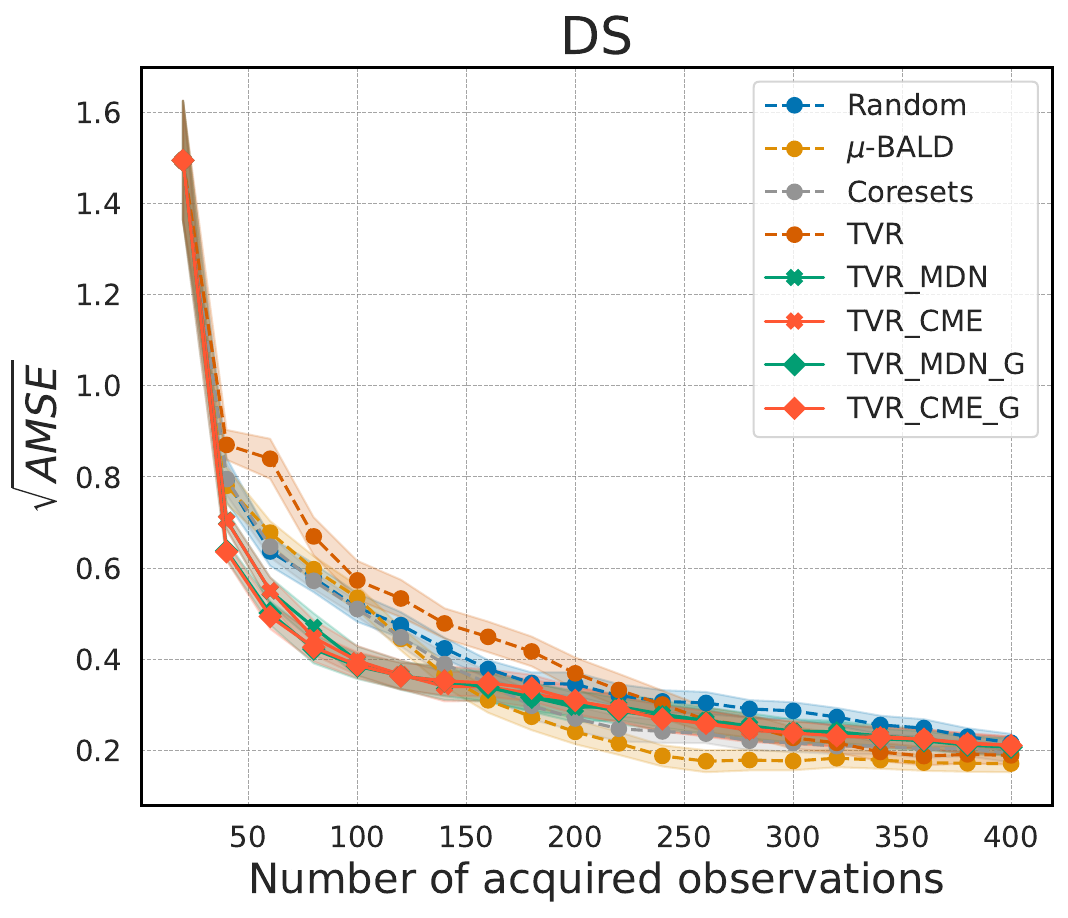}
    \end{minipage}

    \vspace{0.5em}

    \begin{minipage}{0.19\linewidth}
        \centering
        \includegraphics[width=\linewidth]{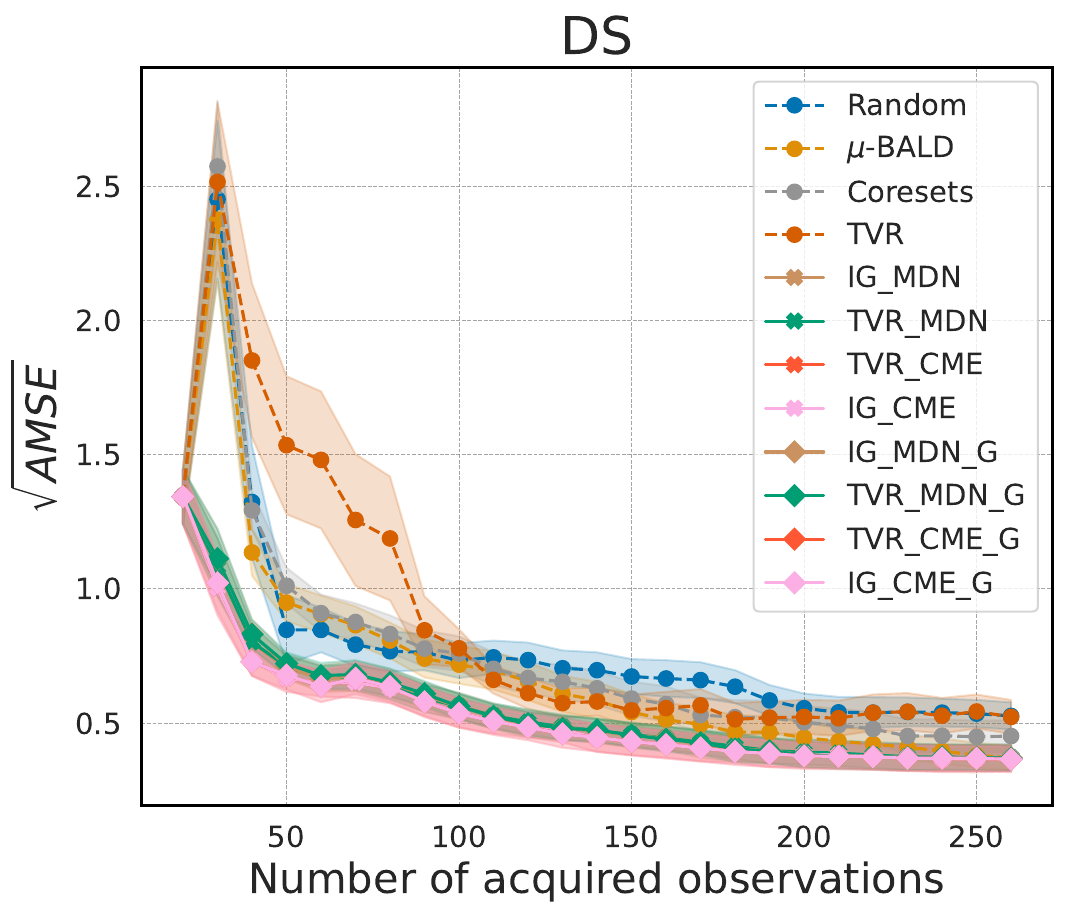}
    \end{minipage}
    \begin{minipage}{0.19\linewidth}
        \centering
        \includegraphics[width=\linewidth]{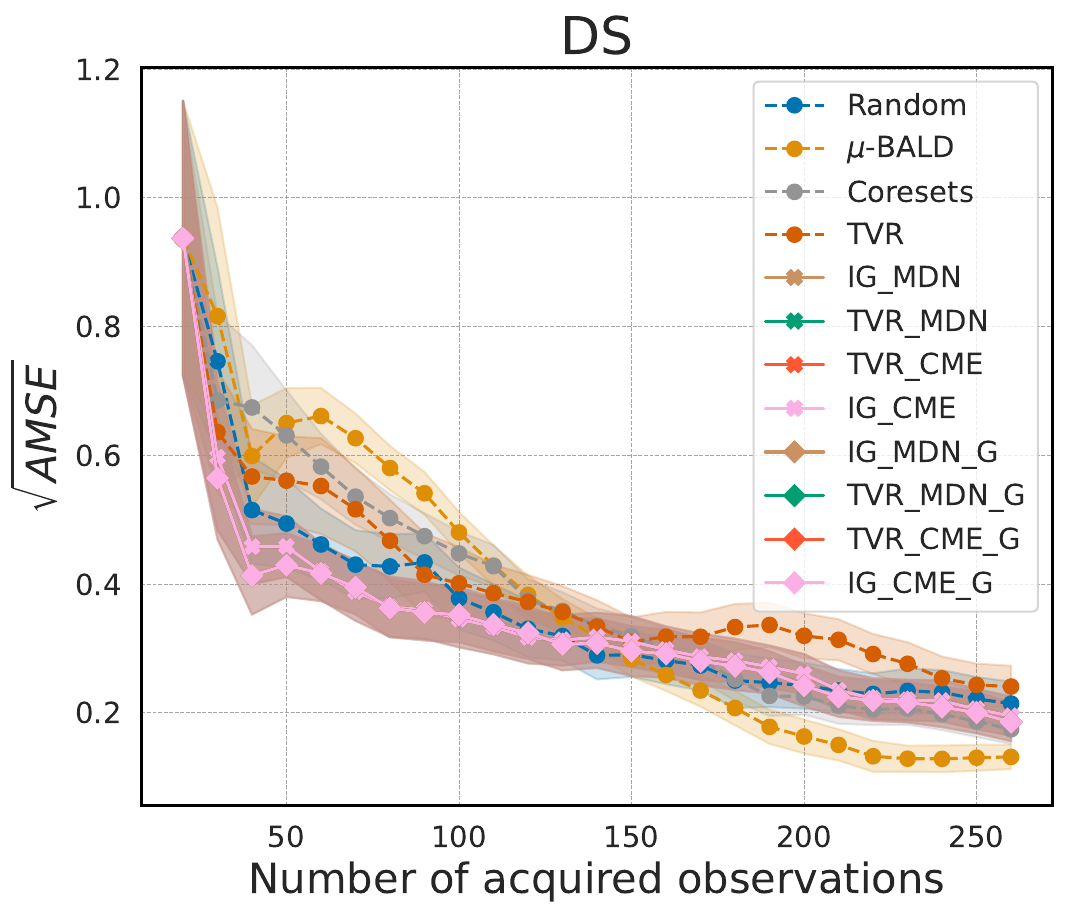}
    \end{minipage}
    \begin{minipage}{0.19\linewidth}
        \centering
        \includegraphics[width=\linewidth]{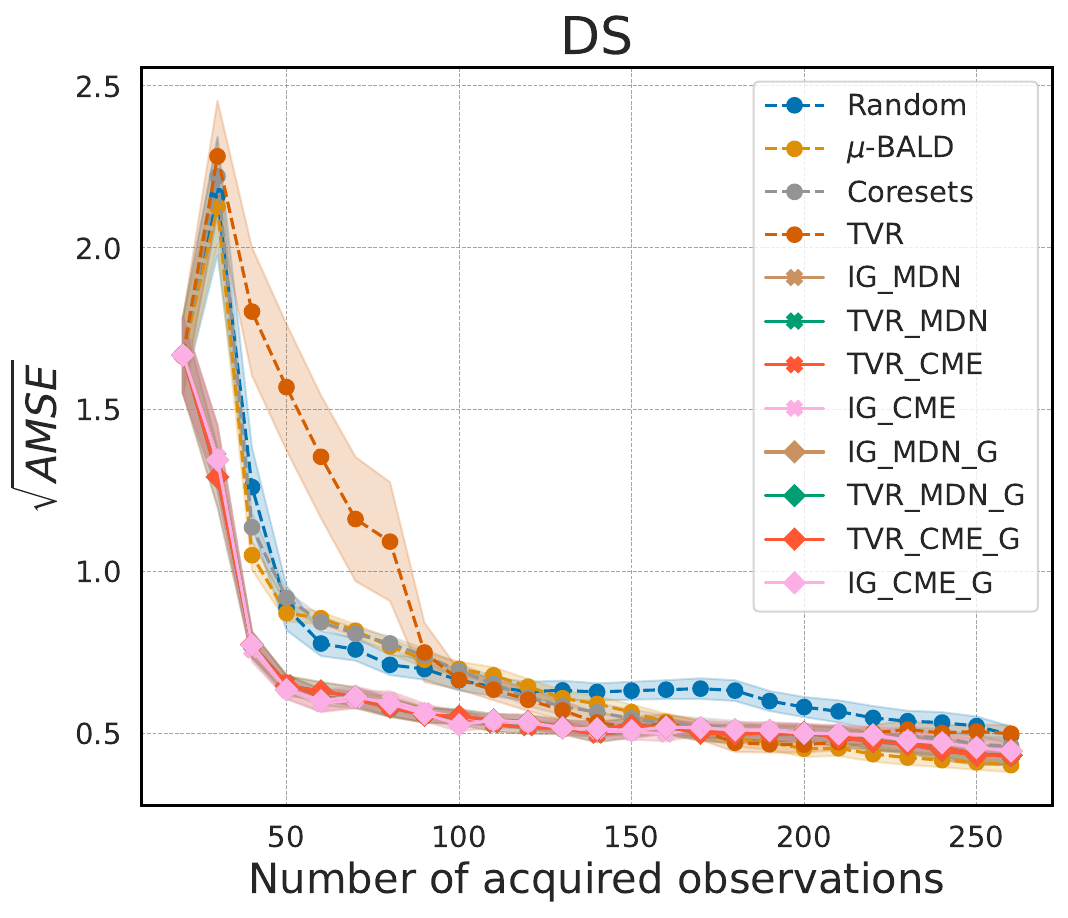}
    \end{minipage}
    \begin{minipage}{0.19\linewidth}
        \centering
        \includegraphics[width=\linewidth]{Figures/Main_paper/Experiments/Semisynthetic/ihdp/ds/regular/treatment-discrete/active_learning/All_value_out_convergence.pdf}
    \end{minipage}
    \begin{minipage}{0.19\linewidth}
        \centering
        \includegraphics[width=\linewidth]{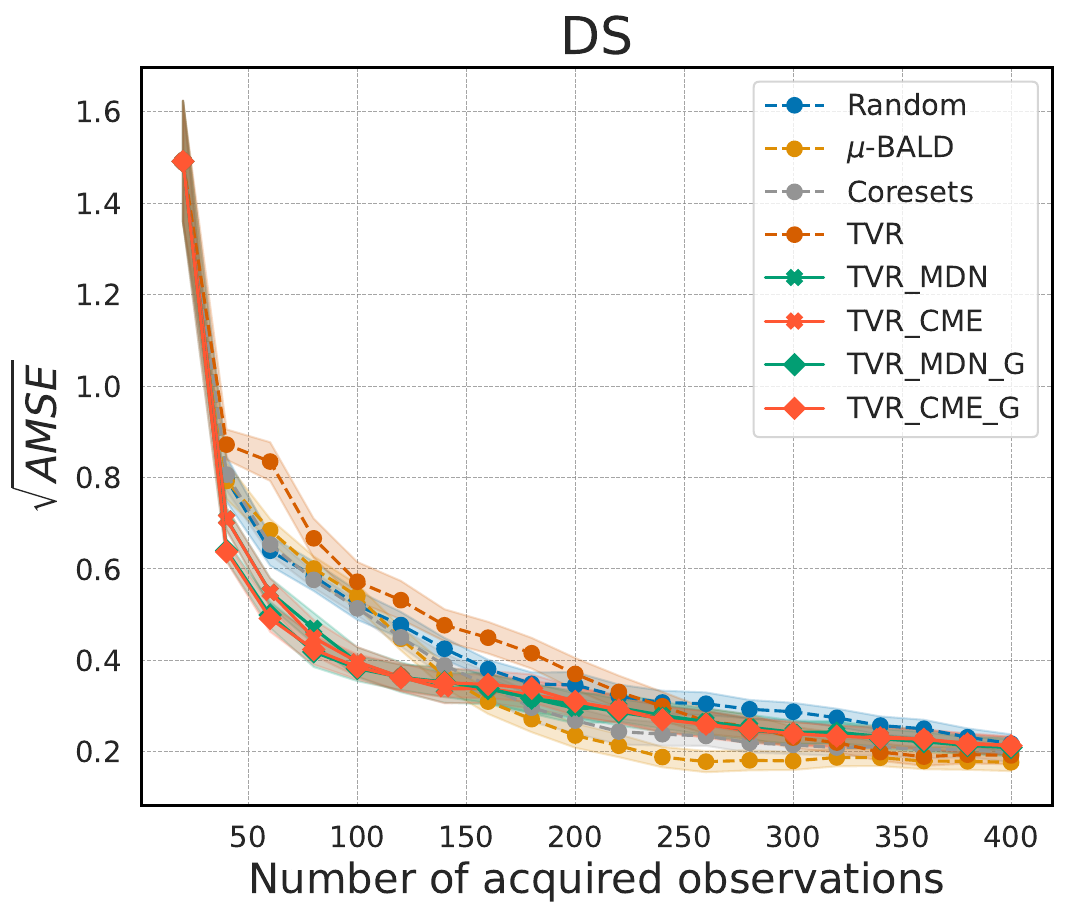}
    \end{minipage}
    
    \caption{The $\sqrt{\text{AMSE}}$ performance (with shaded standard error) for the DS case on IHDP data is presented. The first row shows the in-distribution performance, while the second row illustrates the out-of-distribution performance. From left to right, the settings include: fixed and binary treatment, fixed and discrete treatment, all with binary treatment, all with discrete treatment, and all with continuous treatment.}
    \label{app_fig:IHDP_DS}
\end{figure}

\paragraph{ATEDS.} For the ATE with distribution shift case, the results are shown in Fig.~\ref{app_fig:IHDP_DS}. Once again, our proposed methods demonstrate consistently better performance compared to all baseline methods. This improvement can be attributed to the large distribution shift, which amplifies the advantage of our methods over the baseline approaches.

\subsubsection{Experimental Results on LaLonde}
\label{app:lalonde}

Our first task was to actively estimate the CATE. Tab.~\ref{tab:lalonde_cate} presents the performance comparison between our proposed methods and several strong baselines. The results demonstrate that our methods, particularly the CME-based variants (IG-CME and TVR-CME), consistently and significantly outperform the baselines across all acquisition budgets. For instance, after acquiring only 60 labels, our TVR-CME method achieves an AMSE of $8.2 \pm 0.5$, whereas the best-performing baseline, Coresets, only reaches a similar error level after acquiring nearly 200 labels. This highlights our framework's superior sample efficiency in selecting informative individuals that are directly relevant to the CATE estimator.

\begin{table}[h!]
\centering
\caption{Performance comparison on the LaLonde dataset for active CATE estimation. Lower AMSE is better.}
\label{tab:lalonde_cate}
\resizebox{\textwidth}{!}{%
\begin{tabular}{lcccccc}
\toprule
\textbf{Method} & \textbf{20} & \textbf{60} & \textbf{100} & \textbf{140} & \textbf{180} & \textbf{200} \\
\midrule
Random & $21.7 \pm 0.7$ & $16.1 \pm 0.5$ & $13.6 \pm 0.5$ & $12.1 \pm 0.5$ & $11.0 \pm 0.5$ & $10.8 \pm 0.5$ \\
Variance Reduction & $21.7 \pm 0.7$ & $18.2 \pm 1.0$ & $16.0 \pm 1.2$ & $12.7 \pm 0.9$ & $11.1 \pm 0.3$ & $10.7 \pm 0.4$ \\
Coresets & $21.7 \pm 0.7$ & $14.2 \pm 0.8$ & $12.1 \pm 0.6$ & $10.1 \pm 0.8$ & $8.6 \pm 0.6$ & $8.2 \pm 0.5$ \\
\textmu-BALD & $21.7 \pm 0.7$ & $15.0 \pm 1.2$ & $9.0 \pm 0.6$ & $8.0 \pm 0.7$ & $7.0 \pm 0.6$ & $6.7 \pm 0.6$ \\
\midrule
IG-MDN (Ours) & $21.7 \pm 0.7$ & $11.9 \pm 0.6$ & $10.0 \pm 0.6$ & $8.9 \pm 0.6$ & $8.3 \pm 0.5$ & $8.1 \pm 0.5$ \\
TVR-MDN (Ours) & $21.7 \pm 0.7$ & $11.8 \pm 0.6$ & $9.9 \pm 0.6$ & $8.8 \pm 0.6$ & $8.2 \pm 0.5$ & $7.9 \pm 0.5$ \\
IG-CME (Ours) & $21.7 \pm 0.7$ & $8.3 \pm 0.5$ & $6.2 \pm 0.5$ & $5.9 \pm 0.5$ & $5.9 \pm 0.5$ & $5.9 \pm 0.5$ \\
TVR-CME (Ours) & $21.7 \pm 0.7$ & $8.2 \pm 0.5$ & $6.1 \pm 0.5$ & $5.8 \pm 0.5$ & $5.6 \pm 0.5$ & $5.8 \pm 0.4$ \\
\bottomrule
\end{tabular}
}
\end{table}

The second task evaluated the framework's ability to handle a distribution shift when estimating the ATE. As shown in Tab.~\ref{tab:lalonde_ateds}, our methods again demonstrate a clear advantage. They reduce the estimation error much more rapidly than the baselines in the initial stages of acquisition and maintain superior performance throughout the process. Interestingly, in this setting, both the MDN-based and CME-based variants of our approach perform comparably well after the first 60 samples, and both are significantly better than the baselines. This result validates our framework's ability to effectively target the relevant data distribution, even when it differs from the source population.

\begin{table}[h!]
\centering
\caption{Performance comparison on the LaLonde dataset for active ATE with distribution shift estimation. Lower AMSE is better.}
\label{tab:lalonde_ateds}
\resizebox{\textwidth}{!}{%
\begin{tabular}{lcccccc}
\toprule
\textbf{Method} & \textbf{20} & \textbf{60} & \textbf{100} & \textbf{140} & \textbf{180} & \textbf{200} \\
\midrule
Random & $22.6 \pm 0.8$ & $16.1 \pm 1.0$ & $13.9 \pm 1.0$ & $12.8 \pm 0.9$ & $11.3 \pm 1.0$ & $11.0 \pm 1.0$ \\
Variance Reduction & $22.6 \pm 0.8$ & $19.7 \pm 1.0$ & $14.7 \pm 1.5$ & $13.3 \pm 1.4$ & $12.2 \pm 1.2$ & $11.9 \pm 1.2$ \\
Coresets & $22.6 \pm 0.8$ & $15.2 \pm 1.2$ & $12.8 \pm 1.1$ & $10.6 \pm 1.1$ & $9.3 \pm 1.2$ & $8.9 \pm 1.1$ \\
\textmu-BALD & $22.6 \pm 0.8$ & $16.0 \pm 1.6$ & $8.2 \pm 1.1$ & $7.7 \pm 1.0$ & $7.5 \pm 1.1$ & $7.2 \pm 1.0$ \\
\midrule
IG-MDN (Ours) & $22.6 \pm 0.8$ & $9.4 \pm 1.0$ & $7.8 \pm 1.0$ & $7.5 \pm 1.0$ & $7.3 \pm 1.0$ & $7.2 \pm 1.0$ \\
TVR-MDN (Ours) & $22.6 \pm 0.8$ & $9.1 \pm 1.0$ & $7.3 \pm 1.0$ & $7.1 \pm 1.0$ & $7.1 \pm 1.0$ & $7.1 \pm 1.0$ \\
IG-CME (Ours) & $22.6 \pm 0.8$ & $9.3 \pm 1.1$ & $7.8 \pm 1.0$ & $7.5 \pm 1.0$ & $7.3 \pm 1.0$ & $7.2 \pm 1.0$ \\
TVR-CME (Ours) & $22.6 \pm 0.8$ & $9.1 \pm 1.0$ & $7.2 \pm 1.1$ & $7.1 \pm 1.0$ & $7.1 \pm 1.0$ & $7.1 \pm 1.0$ \\
\bottomrule
\end{tabular}
}
\end{table}

\section{Discussions}
\label{appsec:discussion}

\subsection{Summary of Contributions and Future Directions}

A key challenge in causal inference is that target CQs are often defined by integrating over specific sub-populations that differ from the overall observed population. For instance, CATE is defined by the conditional distribution $p(\rvs|\vz)$, while ATT is defined by $p(\rvs|\ra)$. Consequently, the central principle of our work is that an effective active learning strategy must target the uncertainty relevant to these specific sub-populations. Instead of selecting data to reduce the global uncertainty of the underlying outcome model $\rf$, our framework prioritizes samples that maximally reduce the posterior uncertainty of the final causal estimator $\hat{\tau}$ itself. While this principle of targeted uncertainty reduction establishes a strong foundation, our framework has several limitations that highlight promising avenues for future research.

\paragraph{Relaxing Identification Assumptions.}
Our current work operates under the standard assumption of no unmeasured confounders. A significant avenue for future work is to extend our active learning framework to more complex scenarios where this assumption is violated. This involves incorporating established identification strategies such as the front-door criterion~\citep{dance2024spectral, chau2021bayesimp}, instrumental variables~\citep{xu2021learning}, or proxy variables~\citep{mastouri2021proximal}. Since many Bayesian and GP-based methods already exist for these settings, our framework can be naturally extended.

\paragraph{Model Architecture and High-Dimensionality.}
From a meta-learning perspective, our current model is a T-learner. An important extension would be to explore architectures that share statistical strength across treatment groups, such as S-learners or multi-task GPs~\citep{alaa2017bayesian}. Furthermore, our current CME-based approach can be challenged by high-dimensional covariates. Integrating feature selection techniques~\citep{witte2020efficient} or neural network-based feature extractors~\citep{johansson2022generalization} would be a crucial step to enhance the framework's applicability to high-dimensional data.

\paragraph{Scalability with Bayesian Neural Networks.}
While GPs offer theoretical guarantees and closed-form posteriors, their cubic complexity limits scalability. Replacing the GP with a Bayesian Neural Network (BNN)~\citep{jesson2021causal} is a compelling direction for future work. BNNs offer superior scalability for large datasets and greater flexibility in handling complex, high-dimensional inputs. They can learn feature representations directly from data, removing the need for kernel engineering, while still providing the necessary uncertainty quantification for active learning.

\subsection{The Advantages of CME in the Active Learning Setup}
\label{app_subsec:advantage_CME}
The choice of CMEs over traditional CDEs offers several crucial advantages within our active learning framework. Foremost is its \textit{adaptability to learned features}. In each round of the AL loop, as the GP retrains on new data and refines its kernel-induced feature space, the CME can be re-estimated using this updated representation. This ensures the distribution embedding is always expressed in the most relevant, data-informed feature space, becoming progressively more aligned with the regression task.vSecond, CMEs exhibit superior \textit{task alignment}. Our ultimate goal is to compute an expectation (an integral), which is precisely what the CME is designed for, representing the distribution as a mean element in an RKHS. CDEs, in contrast, attempt to model the entire density function, a more general and often unnecessarily complex task, introducing extra overhead and potential for misspecification. The CME provides a more direct and efficient path to the quantity of interest.vFinally, CMEs are particularly well-suited for \textit{high-dimensional settings}. Unlike many density estimation techniques that suffer from the curse of dimensionality, CMEs can gracefully handle higher-dimensional data. By imposing fewer restrictive assumptions on the distribution's form, they offer a more flexible and robust alternative, making them a powerful and appropriate choice for the ActiveCQ task.



\end{document}